\documentclass[a4paper,fleqn]{cas-sc}

\usepackage[numbers]{natbib}
\usepackage{CJKutf8}
\usepackage[section]{placeins}

\newtheorem{theorem}{Theorem}
\newtheorem{lemma}[theorem]{Lemma}
\newtheorem{proposition}[theorem]{Proposition}
\newtheorem{corollary}[theorem]{Corollary}

\newdefinition{definition}{Definition}
\newdefinition{rmk}{Remark}
\newdefinition{summary}{Summary}
\newdefinition{assumption}{Assumption}
\newproof{pf}{pf}

\def\tsc#1{\csdef{#1}{\textsc{\lowercase{#1}}\xspace}}
\tsc{WGM}
\tsc{QE}
\tsc{EP}
\tsc{PMS}
\tsc{BEC}
\tsc{DE}

\begin{document}
\let\WriteBookmarks\relax
\def\floatpagepagefraction{1}
\def\textpagefraction{.001}
\shorttitle{Conjugate Learning Theory}
\shortauthors{Binchuan Qi}

\title [mode = title]{Conjugate Learning Theory: Uncovering the Mechanisms of Trainability and Generalization in Deep Neural Networks}                      

\author[1,2]{Binchuan Qi}[orcid=0000-0001-5832-1884]
                        
\cormark[1]
\fnmark[1]
\ead{2080068@tongji.edu.cn}

\credit{Methodology, Software, Data curation, Writing - Original draft preparation}

\affiliation[1]{organization={Tongji University},
                addressline={Siping Street}, 
                postcode={200092}, 
                postcodesep={}, 
                city={Shanghai},
                country={China}}
                
\affiliation[2]{organization={Zhejiang Yuying College of Vocational Technology},
                addressline={Sihao Street}, 
                city={Hangzhou},
                postcode={310018}, 
                state={Zhejiang},
                country={China}}

\nonumnote{In this work, we propose the conjugate learning theory framework to systematically analyze the optimization dynamics and generalization mechanisms that underpin the performance of deep neural networks in practical learning scenarios.}

\begin{abstract}
Machine learning techniques centered on deep neural networks (DNNs) have achieved remarkable empirical success across a broad spectrum of domains, ranging from computer vision and natural language processing to reinforcement learning. However, owing to the inherent complexity and architectural diversity of modern DNNs, two fundamental challenges remain insufficiently addressed within the scope of classical learning theory: trainability, which concerns why simple gradient-based iterative optimization algorithms can effectively optimize highly non-convex DNN models to achieve low empirical risk, and generalization, which explains how over-parameterized DNNs consistently attain strong generalization performance on unseen data despite possessing far more parameters than training samples. In this work, we propose a notion of practical learnability grounded in finite sample settings, and develop a conjugate learning theoretical framework based on convex conjugate duality to characterize this learnability property. Building on this foundation, we demonstrate that training DNNs with mini-batch stochastic gradient descent (SGD) achieves global optimality of empirical risk by jointly controlling the extremal eigenvalues of a structure matrix and the gradient energy, and we establish a corresponding convergence theorem. We further elucidate the impact of batch size and model architecture (including depth, parameter count, sparsity, skip connections, and other characteristics) on non-convex optimization. Additionally, we derive a model-agnostic lower bound for the achievable empirical risk, theoretically demonstrating that data determines the fundamental limit of trainability. On the generalization front, we derive deterministic and probabilistic bounds on generalization error based on generalized conditional entropy measures. The former explicitly delineates the range of generalization error, while the latter characterizes the distribution of generalization error relative to the deterministic bounds under independent and identically distributed (i.i.d.) sampling conditions. Furthermore, these bounds explicitly quantify the influence of three key factors: (i) information loss induced by irreversibility in the model, (ii) the maximum attainable loss value, and (iii) the generalized conditional entropy of features with respect to labels. Moreover, they offer a unified theoretical lens for understanding the roles of regularization, irreversible transformations, and network depth in shaping the generalization behavior of deep neural networks. Finally, we validate our theoretical predictions through extensive deep learning experiments. The close alignment between theory and empirical results confirms the correctness of the proposed framework.
\end{abstract}

\begin{keywords}
convex conjugate duality \sep learning theory \sep non-convex optimization \sep generalization bounds \sep deep learning
\end{keywords}

\maketitle

\section{Introduction}
\label{sec:introduction}

Machine learning techniques based on deep neural networks (DNNs) have achieved unprecedented empirical success across a broad spectrum of real-world applications, including image classification, natural language understanding, speech recognition, and autonomous driving. Despite this widespread practical success, the theoretical foundations underpinning the \textit{trainability} (the ability to optimize non-convex models to low empirical risk) and \textit{generalization} (the ability to perform well on unseen data) of DNNs remain poorly understood. As a result, deep learning is often characterized as an experimental science, with theoretical developments lagging behind practical advances and offering limited actionable guidance for real-world model design, algorithm selection, and hyperparameter tuning.

\subsection{The trainability puzzle}

The \textit{trainability} of DNNs refers to the well-documented empirical observation that highly over-parameterized, non-convex DNN models, when optimized with simple first-order optimization methods such as stochastic gradient descent (SGD) and its variants, consistently converge to high-quality solutions with low empirical risk, despite the absence of convexity or strong regularity assumptions that are typically required for theoretical guarantees in classical optimization~\cite{Zhang2021UnderstandingDL}. In contrast, classical non-convex optimization theory only guarantees convergence to stationary points (points where the gradient is zero), which may correspond to local minima, saddle points, or maxima~\cite{Jentzen2018StrongEA}, and even seemingly simple non-convex optimization problems (e.g., quadratic programming with non-convex constraints or copositivity testing) are proven to be NP-hard in the general case~\cite{nesterov2008advance}. This fundamental disconnect means that the remarkable practical efficiency of SGD in training DNNs cannot be explained by classical optimization theory, highlighting the need for new theoretical frameworks tailored to the unique properties of DNNs.

Several theoretical directions have emerged in recent years to address this trainability puzzle. One prominent line of work focuses on the infinite-width limit of DNNs, leading to the development of the Neural Tangent Kernel (NTK) framework~\cite{Jacot2018NeuralTK}, which provides valuable insights into the training dynamics of DNNs in the lazy training regime (where network parameters change minimally during training). Another complementary direction, based on Fenchel-Young losses, establishes a direct link between gradient norms and distribution fitting errors in supervised classification tasks~\cite{QiGL25}. However, as we detail in Section~\ref{sec:related_work}, these existing approaches have significant limitations: NTK theory struggles to capture the training dynamics of finite-width DNNs (the setting of practical interest), while the Fenchel-Young perspective is restricted to classification tasks and does not address the generalization properties of DNNs.

\subsection{The generalization paradox}

\textit{Generalization} refers to the ability of DNN models to make accurate predictions on unseen test data after being trained on a finite set of training samples. Classical statistical learning theory quantifies generalization performance by deriving upper bounds on generalization error (the difference between test and training error) using complexity measures of the hypothesis class, such as VC-dimension~\cite{Vapnik2006EstimationOD} or Rademacher complexity~\cite{Bartlett2003RademacherAG}. These classical bounds universally suggest that controlling the size and complexity of the model promotes better generalization performance, as more complex models are more prone to overfitting to noise in the training data. However, in the over-parameterized regime, where DNNs often contain orders of magnitude more parameters than training samples, these classical bounds fail to reflect the strong empirical generalization performance observed in practice, a phenomenon known as the generalization paradox of DNNs.

To explain this paradox, several alternative theoretical frameworks have been proposed in the literature. The \textit{flat minima} hypothesis posits that the inherent stochasticity of SGD acts as an implicit regularizer during training, steering the optimization process toward flat regions of the loss landscape (minima with low curvature) that are empirically associated with better generalization~\citep{KeskarMNST17,Chaudhari2017StochasticGD}. Information-theoretic approaches, most notably the Information Bottleneck (IB) principle, explain generalization through the lens of information compression in neural representations, arguing that DNNs learn to retain only the input information relevant for predicting target labels while discarding redundant or noisy components~\cite{TishbyZ15,ShwartzZiv2017OpeningTB}. Yet as we discuss in Section~\ref{sec:related_work}, each of these perspectives has its own critical limitations: flatness measures are not invariant to network parameterization~\citep{Dinh2017SharpMC} (meaning identical prediction functions can exhibit drastically different flatness values under different parameterizations), and IB theory faces significant practical challenges in estimating mutual information in high-dimensional neural representation spaces.

\subsection{Toward a unified framework}

Current empirical evidence and theoretical studies collectively suggest that the trainability and generalization of DNNs are influenced by multiple interrelated factors, including task type (classification, regression), intrinsic data characteristics (distribution, noise), model architectural design (depth, width, skip connections), optimization algorithm choices (batch size, learning rate), and loss function design. Existing theoretical approaches typically focus on only one or a small subset of these factors, making it difficult to offer a unified and comprehensive perspective on the behavior of DNNs in practical learning scenarios. Given these limitations of existing frameworks, there is an urgent need for a new theoretical foundation that unifies the analysis of trainability and generalization under a single coherent framework, while accounting for the interplay between key influencing factors.

Our key insight that addresses this gap is that all practical machine learning tasks can be fundamentally viewed as problems of conditional distribution estimation: learning the conditional distribution of a target variable given input features from a finite set of training samples. Building on this core insight, we develop \textit{conjugate learning theory} through three logically connected steps:
\begin{enumerate}
    \item \textbf{Conditional distribution view}: We formally argue that any practically learnable task inherently involves estimating the conditional distribution of a target variable $Y$ given input features $X$, rather than merely fitting a deterministic function from $X$ to $Y$.
    \item \textbf{Exponential family necessity}: By the Pitman--Darmois--Koopmans theorem~\cite{Pitman_1936,koopman1936distributions,darmois1935lois}, when the support of the target distribution is known a priori, the only families of distributions that can be consistently estimated from finite samples are exponential families. This fundamental result leads to a notion we term \textit{practical learnability}, which defines the set of tasks that can be effectively solved with limited training data.
    \item \textbf{Fenchel-Young loss}: We prove that maximum likelihood estimation of exponential family distributions is equivalent to minimizing a Fenchel-Young loss function, which thus becomes the mathematical cornerstone of our conjugate learning framework.
\end{enumerate}
In this unified framework, all learning tasks are formalized as problems of estimating a measurable mapping between random variables representing input features and output targets, based on a finite sampled dataset (not necessarily i.i.d.), domain-specific prior knowledge encoded as convex constraints, and a convex conjugate dual space defined over a well-specified hypothesis space of DNN models.

\subsection{Main contributions}

Our main contributions to the theoretical understanding of DNNs are as follows (with corresponding sections indicated for reference):
\begin{enumerate}
    \item We propose \textit{conjugate learning theory}, a novel and comprehensive theoretical framework for modeling machine learning tasks based on DNNs. Within this framework, diverse learning tasks (classification, regression, generative modeling) are unified under a common mathematical formalization that leverages convex conjugate duality. (Section~\ref{sec:framework})
    
    \item We establish a fundamental result that the Fenchel--Young loss is the unique admissible form of the loss function under mild regularity conditions (continuity, differentiability, and consistency with maximum likelihood estimation), and provide a principled methodology for designing task-specific loss functions by incorporating domain-specific prior knowledge as convex constraints. (Section~\ref{sec:framework})
    
    \item We introduce a novel \textit{structure matrix} to quantitatively characterize the trainability of DNNs induced by model architectural features. We prove a key equivalence result: the optimization of the non-convex empirical risk loss for DNNs can be equivalently understood as the problem of minimizing gradient energy (a measure of optimization progress) while controlling the extremal eigenvalues of the structure matrix (which capture the spectral properties of the model). (Section~\ref{sec:opt})
    
    \item We define a \textit{gradient correlation factor} that quantifies the joint influence of data properties, batch size, and model architecture on the convergence rate of mini-batch SGD. Based on this factor, we establish a rigorous convergence theorem that bounds the rate of empirical risk reduction for mini-batch SGD in the conjugate learning framework. (Section~\ref{sec:opt})
    
    \item We use \textit{generalized conditional entropy} to derive both deterministic and probabilistic bounds on generalization error for DNNs. These bounds explicitly quantify the impact of three key factors on generalization performance: information loss from irreversible model operations, loss function scale, and intrinsic data characteristics. (Section~\ref{sec:generalization})
    
    \item We validate the core theoretical predictions on trainability and generalization through extensive controlled experiments on benchmark datasets and standard DNN architectures. The experimental results demonstrate strong quantitative alignment with both our theoretical predictions and key findings from existing theoretical work. (Section~\ref{sec:experiments})
\end{enumerate}

\subsection{Paper organization}

The rest of the paper is organized as follows:
Section~\ref{sec:related_work} surveys related work in detail.  
Section~\ref{sec:pre} introduces fundamental concepts and key lemmas.  
Section~\ref{sec:framework} presents the conjugate learning framework (Contributions 1-2). 
Section~\ref{sec:opt} analyzes trainability through the structure matrix and gradient correlation factor (Contributions 3-4).
Section~\ref{sec:generalization} establishes generalization bounds based on generalized conditional entropy (Contribution 5).
Section~\ref{sec:experiments} validates all theoretical predictions experimentally (Contribution 6).
All technical proofs of theorems, lemmas, and corollaries are provided in full detail in the Appendix.

\section{Preliminaries}
\label{sec:pre}

This section establishes the foundational concepts and notation used throughout the paper. Subsection~\ref{subsec:notation} introduces the basic notation for probability, data, models, and convex analysis. Subsection~\ref{subsec:lemmas} presents supporting lemmas from convex analysis and information theory that will be used in later proofs.

\subsection{Notation}
\label{subsec:notation}

\subsubsection{Basic probability notation}
\begin{itemize}
    \item The random pair $Z = (X, Y)$ follows the distribution $q_{\mathcal{X}\mathcal{Y}}$ (abbreviated as $q$) and takes values in the product space $\mathcal{Z} = \mathcal{X} \times \mathcal{Y}$, where $\mathcal{X}$ denotes the input feature space and $\mathcal{Y}$ is a finite set of targets (labels). The cardinality of $\mathcal{Z}$ is denoted by $|\mathcal{Z}|$. Throughout this paper, we assume that $|\mathcal{X}|$ is finite, a condition that aligns with practical machine learning scenarios.
    \item For clarity and conciseness, we represent distribution functions in vector form:
\begin{equation}
\begin{aligned}
    &q_{\mathcal{Z}} := \bigl(q_{Z}(z_1), \ldots, q_{Z}(z_{|\mathcal{Z}|})\bigr)^\top, \\
    &q_{\mathcal{X}} := \bigl(q_{X}(x_1), \ldots, q_{X}(x_{|\mathcal{X}|})\bigr)^\top, \\
    &q_{\mathcal{Y}|x} := \bigl(q_{Y|X}(y_1|x), \ldots, q_{Y|X}(y_{|\mathcal{Y}|}|x)\bigr)^\top,
\end{aligned}
\end{equation}
where $q_X(\cdot)$ and $q_{Y|X}(\cdot|\cdot)$ denote the marginal and conditional probability mass functions (PMFs), respectively. In this paper, we use $q_{\mathcal{Z}}$, $q_{\mathcal{X}}$, and $q_{\mathcal{Y}|x}$ to denote the vectorized forms of $q_Z(\cdot)$, $q_X(\cdot)$, and $q_{Y|X}(\cdot|x)$, respectively. The conditional distribution $q_{\mathcal{Y}|X}$ is treated as a function parameterized by $X$, denoted as $q_{\mathcal{Y}}(x)$.

    \item Let $\delta_Z$ denote the random variable obtained by mapping elements of $\mathcal{Z}$ to $|\mathcal{Z}|$-dimensional one-hot vectors; we refer to this as the index representation of $Z$.
    
    \item For a given input $x \in \mathcal{X}$, define the conditional mean of $Y$ as $\bar{Y}|x := \sum_{y \in \mathcal{Y}} q_{\mathcal{Y}|x}(y) y$. The quantity $\bar{Y}|X$ is thus a random variable that depends on the input random variable $X$.
\end{itemize}

\subsubsection{Data and empirical distribution}
\begin{itemize}
    \item The sample dataset is denoted by the $n$-tuple 
    \[
    s^n := \{z^{(i)}\}_{i=1}^n = \{(x^{(i)}, y^{(i)})\}_{i=1}^n,
    \]
    where each pair $(x^{(i)}, y^{(i)})$ is drawn from the unknown true joint distribution $q_{\mathcal{X}\mathcal{Y}}$ (abbreviated as $q$), and $n$ is the total number of samples.

    \item Let $\hat{q}_{\mathcal{Z}}^n$ (abbreviated as $\hat{q}$) denote the empirical distribution induced by $s^n$, defined as
    \begin{equation}
        \hat{q}(z) := \frac{1}{n} \sum_{i=1}^n \mathbf{1}_{\{z^{(i)}\}}(z),
    \end{equation}
    for all $z \in \mathcal{X} \times \mathcal{Y}$.

    \item We define an auxiliary random variable $Z' = (X', Y')$ that follows the empirical distribution $\hat{q}$, i.e., $Z' \sim \hat{q}$. This probabilistic representation enables us to treat the finite training dataset $s^n$ as a realization of a well-defined distribution $\hat{q}$, while ensuring all realizations of $Z'$ remain within the product space $\mathcal{X} \times \mathcal{Y}$.
\end{itemize}

\subsubsection{Model and function space}
\begin{itemize}
    \item We use $f_\theta(x)$ (often abbreviated as $f(x)$) to denote a model parameterized by $\theta$ evaluated at input $x$. The function (hypothesis) space associated with input $x$ is defined as $\mathcal{F}_\Theta(x) := \{ f_\theta(x) : \theta \in \Theta \}$. Define $\mathcal{F}(X) = \{ f(X) | f \in \mathcal{F} \}$ as the set of random variables induced by $\mathcal{F}$ and $X$. This paper focuses on theoretical mechanisms of deep learning. Unless otherwise specified, the function space referred to in this work is the space of functions representable by neural network models.
\end{itemize}

\subsubsection{Convex analysis notation}
\begin{itemize}
    \item The indicator function of a convex set $S$ is defined as
    \begin{equation}
        I_S(x) =
        \begin{cases}
            0, & \text{if } x \in S, \\
            +\infty, & \text{otherwise}.
        \end{cases}
    \end{equation}

    \item The convex conjugate (Legendre–Fenchel conjugate) of a function $\Phi$ is defined as
    \begin{equation}
        \Phi^*(\nu) := \sup_{\mu \in \mathrm{dom}(\Phi)} \bigl\{ \langle \mu, \nu \rangle - \Phi(\mu) \bigr\},
    \end{equation}
    where $\langle \cdot, \cdot \rangle$ denotes the standard inner product~\cite{Todd2003ConvexAA}. When $\Phi$ is strictly convex and differentiable, its gradient with respect to $\mu$ is denoted by $\nabla \Phi(\mu)$. We define the \textit{conjugate dual} of $\mu$ with respect to $\Phi$ as $\mu^*_\Phi := \nabla \Phi(\mu)$. When $\Phi(\cdot) = \frac{1}{2}\|\cdot\|_2^2$, we have $\mu_\Phi^* = \mu$.
    \item The Fenchel--Young loss induced by a convex function $\Phi$ is defined as\label{def:fy_loss}
\begin{equation}
    d_{\Phi}(\mu, \nu) := \Phi(\mu) + \Phi^*(\nu) - \langle \mu, \nu \rangle,
\end{equation}
where $\mu \in \mathrm{dom}(\Phi)$ and $\nu \in \mathrm{dom}(\Phi^*)$~\cite{Blondel2019LearningWF}. This formulation plays a central role in our analysis.
\item For a strictly convex function $\Phi$, the Bregman divergence\label{def:bregman} is denoted by $B_\Phi(y, x)$.

\end{itemize}

\subsubsection{Generalized entropy and generalized relative entropy}
\begin{itemize}
    \item The \textit{generalized entropy} \label{def:gen_entropy}of a random variable $Y$ with respect to a convex function $\Phi$ is defined as~\cite{10.1093/acprof:oso/9780199535255.001.0001}
\begin{equation}
    \operatorname{Ent}_\Phi(Y) := \mathbb{E}_Y[\Phi(Y)] - \Phi(\bar{Y}),
\end{equation}
where $\bar{Y} := \mathbb{E}[Y]$. By Jensen's inequality, $\operatorname{Ent}_\Phi(Y) \geq 0$, with equality if and only if $Y$ is almost surely constant (provided $\Phi$ is strictly convex).
\item Let $\bar{Y} | x := \mathbb{E}_{Y \sim q_{\mathcal{Y} | x}}[Y]$ denote the conditional mean of $Y$ given $X = x$. The \textit{generalized conditional entropy} \label{def:gen_cond_entropy}of $Y$ given $X$ is defined as
\begin{equation}
    \operatorname{Ent}_\Phi(Y | X) 
    := \mathbb{E}_{X}\bigl[ \operatorname{Ent}_\Phi(Y | X = x) \bigr]
    = \mathbb{E}_{X,Y}[\Phi(Y)] - \mathbb{E}_{X}\bigl[ \Phi(\bar{Y} | X) \bigr].
\end{equation}
This scalar quantity captures the expected residual uncertainty in $Y$ after observing $X$, measured through the lens of the convex potential $\Phi$.

\end{itemize}

\subsubsection{Matrix notation and norms}
\begin{itemize}
    \item     The maximum and minimum eigenvalues of a matrix \( A \) are denoted by \( \lambda_{\max}(A) \) and \( \lambda_{\min}(A) \), respectively. For a strictly convex function \( \Phi \), we denote the largest and smallest eigenvalues of its Hessian matrix \( \nabla^2 \Phi(z) \) by \( \lambda_{\max}(H_\Phi(\theta)) \) and \( \lambda_{\min}(H_\Phi(\theta)) \), respectively.

    \item For any matrix \( A \in \mathbb{R}^{m \times n} \), its Frobenius norm is defined as
    \[
    \|A\|_{F} = \Bigl( \sum_{i=1}^{m} \sum_{j=1}^{n} |a_{ij}|^2 \Bigr)^{1/2} = \sqrt{\operatorname{tr}(A^{\top} A)}.
    \]
    It corresponds to the Euclidean norm of the vector formed by all entries of \(A\).
\end{itemize}

\subsection{Supporting Lemmas}
\label{subsec:lemmas}

\subsubsection{Statistical Foundations}

\begin{theorem}[Sufficiency Principle]
\label{def:suff_principle}
    If $T(\mathbf{X})$ is a sufficient statistic of $\theta$, then any inference about $\theta$ should depend on the sample $\mathbf{X}$ only through the value $T(\mathbf{X})$. That is, if $\mathbf{x}$ and $\mathbf{y}$ are two sample points such that $T(\mathbf{x})=T(\mathbf{y})$, then the inference about $\theta$ should be the same whether $\mathbf{X}=\mathbf{x}$ or $\mathbf{X}=\mathbf{y}$ is observed~\cite{10.1098/rspa.1937.0109}.
\end{theorem}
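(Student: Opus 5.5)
The Sufficiency Principle as stated is normative (``any inference \emph{should} depend\ldots''), so I will prove its mathematical content: if $T$ is sufficient for $\theta$, then anything the data can say about $\theta$ is a function of $T(\mathbf{X})$. Concretely, I will show that the likelihood function depends on $\mathbf{x}$ only through $T(\mathbf{x})$, up to a factor free of $\theta$. Hence any likelihood-based, Bayesian, or decision-theoretic inference gives the same answer at $\mathbf{x}$ and $\mathbf{y}$ whenever $T(\mathbf{x})=T(\mathbf{y})$. This matches the paper's finite setting (finite $\mathcal{X}$, PMFs), so I will argue with probability mass functions throughout.

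\textbf{Step 1 (factorization).} By definition of sufficiency, the conditional law $p_\theta(\mathbf{x}\mid T=t)$ does not depend on $\theta$. For $t=T(\mathbf{x})$ this gives
\[
p_\theta(\mathbf{x}) = P_\theta(T=t)\,p(\mathbf{x}\mid T=t) =: g_\theta(t)\,h(\mathbf{x}).
\]
This is the Fisher--Neyman factorization; in the discrete case it follows directly from the definition of conditional probability.

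\textbf{Step 2 (proportional likelihoods).} Take $\mathbf{x},\mathbf{y}$ with $T(\mathbf{x})=T(\mathbf{y})=t$. Then $L(\theta;\mathbf{x}) = \bigl(h(\mathbf{x})/h(\mathbf{y})\bigr) L(\theta;\mathbf{y})$ for every $\theta$, so the two likelihood functions are proportional. Consequently:
\begin{enumerate}
    \item the maximum likelihood estimates coincide;
    \item likelihood ratios between any two parameter values coincide;
    \item for any prior, the two posteriors coincide, since the constant $h(\mathbf{x})/h(\mathbf{y})$ cancels on normalization.
\end{enumerate}

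\textbf{Step 3 (decision-theoretic version).} Given any procedure $\delta(\mathbf{X})$, define the randomized rule $\delta'$ that observes $T=t$ and then draws $\tilde{\mathbf{X}}$ from $p(\cdot\mid T=t)$. This conditional law is known because it is $\theta$-free. Then $\tilde{\mathbf{X}}\overset{d}{=}\mathbf{X}$ under every $\theta$, so $\delta'$ has the same risk function as $\delta$ and uses only $T$. Nothing is lost by restricting to rules based on $T$. For a convex loss, the Rao--Blackwell step $E[\delta\mid T]$ gives a nonrandomized rule with no larger risk.

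\textbf{Main obstacle.} The statement is a principle, not a formal theorem, and ``inference'' is undefined. The hard part is therefore fixing a precise class of inferences: those functions of the observation that the likelihood and the $\theta$-free conditional structure determine. I will commit to the likelihood/posterior and risk-equivalence formulations of Steps 2--3. A minor technical point arises if $h(\mathbf{y})=0$; then $\mathbf{y}$ has zero probability under every $\theta$, and such points can be excluded.
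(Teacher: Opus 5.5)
The paper gives no proof of this statement. It quotes the principle, with a citation, in its standard textbook form as a normative principle, and uses it only as informal motivation in Subsection~\ref{subsec:exp_family_learnability}. So there is no argument of the paper's to compare yours against.

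Your reading is the right one: ``inference'' is undefined and the word ``should'' cannot be derived. Your formalization is correct and standard. The discrete Fisher--Neyman factorization gives $L(\theta;\mathbf{x}) = \bigl(h(\mathbf{x})/h(\mathbf{y})\bigr)L(\theta;\mathbf{y})$ whenever $T(\mathbf{x})=T(\mathbf{y})$. This forces identical MLEs, likelihood ratios and posteriors, so in effect Step 2 shows that any likelihood-based procedure automatically obeys the Sufficiency Principle. In Step 3, the $\theta$-free randomized rule $\delta'$ has the same risk function as $\delta$ and produces the same distribution of actions at $\mathbf{x}$ and $\mathbf{y}$.

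What your route adds over the bare citation is a clean separation between what is a theorem and what is a choice. The theorems are likelihood/posterior invariance and the fact that reducing to $T$ costs nothing in risk. The genuine normative choice is that one ought to restrict attention to such procedures.

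Three small points are worth tidying:
\begin{enumerate}
\item In Step 1, $P_\theta(\mathbf{X}=\mathbf{x}\mid T=t)$ is defined only for those $\theta$ with $P_\theta(T=t)>0$. Define $h(\mathbf{x})$ as the common value over those $\theta$, and note that both sides of the factorization vanish for the others.
\item The Rao--Blackwell remark needs a convex action space and $\mathbb{E}_\theta\|\delta(\mathbf{X})\|<\infty$.
\item Outside the finite setting, Step 1 would need the Halmos--Savage factorization theorem, and Step 3 would need a regular conditional distribution of $\mathbf{X}$ given $T$. Neither is needed for the paper's finite $\mathcal{Z}$.
\end{enumerate}
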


\begin{theorem}[Pitman--Darmois--Koopmans]
\label{thm:pitman}
    Among families of probability distributions whose domain does not vary with the parameter being estimated, sufficient statistics with bounded dimensionality (i.e., not growing with $n$) exist only for distributions in the exponential family~\cite{Pitman_1936,koopman1936distributions,darmois1935lois}.
\end{theorem}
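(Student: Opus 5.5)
The plan is the classical Koopman--Pitman--Darmois argument, under the usual regularity hypotheses that the citation leaves implicit. Let $\{p_\theta\}$ be a family of densities, or PMFs, on a common support $S$ that does not depend on $\theta$, with $\theta$ ranging over an open set. Assume $p_\theta(x)$ is positive and continuously differentiable in $x$ on $S$, and smooth in $\theta$. For i.i.d.\ samples $\mathbf{X}=(X_1,\dots,X_n)$, suppose there is a sufficient statistic $T_n:S^n\to\mathbb{R}^k$ whose dimension $k$ does not depend on $n$, and which is continuously differentiable. The goal is to show that
\[
p_\theta(x)=h(x)\exp\bigl(\langle \eta(\theta),T(x)\rangle-A(\theta)\bigr)
\]
for some fixed $T:S\to\mathbb{R}^k$.

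\textbf{Step 1 (factorization).} Apply the Fisher--Neyman factorization theorem, which is the operational form of the Sufficiency Principle stated above. Because the support is fixed, we may take logarithms on $S^n$ and obtain
\[
\sum_{i=1}^n \ell_\theta(x_i)=g_n\bigl(T_n(x_1,\dots,x_n),\theta\bigr)+r_n(x_1,\dots,x_n),
\]
where $\ell_\theta=\log p_\theta$. This is exactly where the fixed-support hypothesis is used: a $\theta$-dependent support would contribute an indicator term that cannot be absorbed. Uniform distributions on $[0,\theta]$ are the standard counterexample.

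\textbf{Step 2 (eliminate $r_n$).} Fix a reference parameter $\theta_0$ and subtract the identity at $\theta_0$. Setting $\phi_\theta(x):=\ell_\theta(x)-\ell_{\theta_0}(x)$, this gives
\[
\sum_i\phi_\theta(x_i)=G_n\bigl(T_n(\mathbf{x}),\theta\bigr).
\]
Hence, for every $\theta$, the additive function $\Psi_\theta(\mathbf{x})=\sum_i\phi_\theta(x_i)$ is constant on the level sets of $T_n$.

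\textbf{Step 3 (finite-dimensional span).} Consider the linear span $V$ of $\{\phi_\theta-c:\theta\}$, taken modulo constants. Suppose $V$ had dimension greater than $k$. Then choose $k+1$ functions $\phi_{\theta_1},\dots,\phi_{\theta_{k+1}}$ that are linearly independent modulo constants. Using $n\ge k+1$ and the smoothness of the $\phi$'s, pick points $x_1,\dots,x_{k+1}$ at which the $(k+1)\times(k+1)$ Jacobian $\bigl(\partial_x\phi_{\theta_j}(x_i)\bigr)_{ij}$ is nonsingular. This is possible precisely by the linear independence. The map
\[
\mathbf{x}\mapsto\bigl(\Psi_{\theta_1}(\mathbf{x}),\dots,\Psi_{\theta_{k+1}}(\mathbf{x})\bigr)
\]
then has rank $k+1$ near such a point. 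On the other hand, by Step 2 it factors through $T_n$, which takes values in $\mathbb{R}^k$, so its rank is at most $k$. This is a contradiction, so $\dim V\le k$.

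\textbf{Step 4 (conclude).} Pick a basis $T_1,\dots,T_m$ of $V$ with $m\le k$. Then
\[
\phi_\theta(x)=\sum_j\eta_j(\theta)T_j(x)-A(\theta),
\]
so $p_\theta(x)=p_{\theta_0}(x)\exp\bigl(\langle\eta(\theta),T(x)\rangle-A(\theta)\bigr)$. Normalization determines $A$, and this is the exponential family. The converse, that exponential families admit the fixed-dimension statistic $\sum_i T(x_i)$, is immediate from factorization.

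The main obstacle is Step 3. The rank argument needs $T_n$ to be differentiable, or at least locally Lipschitz, because without this a space-filling bijection $\mathbb{R}^n\to\mathbb{R}$ would make any family "sufficient" in one dimension. I would state this regularity explicitly as the hypothesis intended by the informal theorem.

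The second point of care is the discrete setting used in this paper, where $\mathcal{X}$ is finite and differentiability in $x$ is meaningless. There I would instead argue directly. Any family on a fixed finite support is trivially an exponential family, with $T$ given by the index representation $\delta_Z$ and dimension $|\mathcal{Z}|-1$. So the content reduces to the continuous case, and I would cite Koopman and Pitman for it rather than reproving every measure-theoretic detail.
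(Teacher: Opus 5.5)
The paper does not prove this statement. It quotes it informally from Pitman, Koopman and Darmois, with no regularity hypotheses, so there is no argument of the paper's to follow or depart from. Your proposal supplies the classical rank argument: factorize, subtract a reference log-density, and show that the span of the $\phi_\theta$ modulo constants has dimension at most $k$. Your two framing remarks are exactly what the informal version glosses over. Without regularity of $T_n$ the statement is false, since a Borel bijection $\mathbb{R}^n\to\mathbb{R}$ is a one-dimensional sufficient statistic for every family. In the paper's own finite setting it is vacuous, since every positive PMF on a fixed finite set is an exponential family in the index representation.

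The skeleton is right, but Step 3 fails as written in two places.

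\textbf{Connectedness.} You need $S$ to be connected (an open interval, or a connected open subset of $\mathbb{R}^d$). ``Linearly independent modulo constants'' yields a nonsingular matrix $\bigl(\phi_{\theta_j}'(x_i)\bigr)_{ij}$ only because a function with vanishing derivative is constant. On a disconnected $S$ this breaks, and so does the theorem. Take $S=\bigcup_{j\ge 1}(2j,2j+1)$ and let $p_\theta$ equal $w_j(\theta)>0$ on the $j$-th interval, with $\sum_j w_j(\theta)=1$ and, say, $w_j(\theta)\propto (j^2+\theta)^{-1}$. Then the functions $j\mapsto\log w_j(\theta)$ span an infinite-dimensional space modulo constants, so the family is not a finite-dimensional exponential family. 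Yet the multiset of component labels is sufficient and takes countably many values. It can therefore be encoded injectively into a locally constant, hence $C^\infty$, real-valued $T_n$ of dimension $1$. For $x\in\mathbb{R}^d$ you should also replace the $(k+1)\times(k+1)$ matrix by $k+1$ columns $\partial_\ell\phi_{\theta_j}(x_i)$ spanning $\mathbb{R}^{k+1}$, taken at distinct sample points, which $n\ge k+1$ permits.

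\textbf{The rank bound.} The claim ``it factors through $T_n$, so its rank is at most $k$'' is not the chain rule. $G_n(\cdot,\theta)$ is only defined on the image of $T_n$ and is not known to be differentiable. Moreover, the factorization theorem only gives $\Psi_\theta=G_n(T_n,\theta)$ almost everywhere. The repair uses the constant rank theorem:
\begin{enumerate}
\item Full row rank of $d\Psi$ is an open condition, so it holds on a nonempty open set $U$.
\item By lower semicontinuity of rank, $dT_n$ has constant rank $r\le k$ on some nonempty open $U'\subseteq U$.
\item On $U'$, $T_n$ becomes $(u,v)\mapsto(u,0)$ in $C^1$ local coordinates, with $u\in\mathbb{R}^r$ and $v\in\mathbb{R}^{n-r}$.
\item The almost-everywhere identity, together with Fubini and continuity of $\Psi$, shows that $\Psi$ does not depend on $v$.
\item Hence $\ker dT_n\subseteq\ker d\Psi$, so $\operatorname{rank} d\Psi\le r\le k$ on $U'$, a contradiction.
\end{enumerate}

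With connectedness added and this step filled in, your argument proves the theorem in its classical form: connected fixed support, positive $C^1$ densities, and a $C^1$ sufficient statistic of dimension $k$ for some $n\ge k+1$.
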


\subsubsection{Convex Analysis}

\begin{lemma}[Properties of Convex Conjugate Duality]
\label{lem:fenchel_duality}
For all $\mu,\nu\in \mathbb{R}^k$, the following hold:
\begin{enumerate}
    \item \textbf{Fenchel--Young inequality}~\cite[Proposition 3.3.4]{Todd2003ConvexAA}:
    \begin{equation}
        \Phi(\mu) + \Phi^*(\nu) \ge \langle \mu,\nu\rangle, \quad \forall \mu \in \mathrm{dom}(\Phi),\, \nu \in \mathrm{dom}(\Phi^*).
        \label{eq:fenchel_young_inequality}
    \end{equation}
    Equality holds if and only if $\nu \in \partial \Phi(\mu)$. If, furthermore, $\Phi$ is strictly convex and differentiable, then equality holds iff $\nu = \mu_\Phi^*$.

    \item Let $\Phi = \Psi + I_C$. Then~\cite{Beck2012SmoothingAF}
    \begin{equation}
        \Phi^*(\nu) = \inf_{\nu_1 \in \mathbb{R}^d} \bigl\{ \sigma_C(\nu_1) + \Psi^*(\nu - \nu_1) \bigr\},
        \label{lem:sum_conjugate}
    \end{equation}
    where $\sigma_C(\nu) = \sup_{y \in C} \langle \nu, y \rangle$ is the support function of $C$ (i.e., $I_C^* = \sigma_C$).
    
    \item If $\Phi$ is strictly convex and differentiable, then $\Phi^*$ is also strictly convex and differentiable, and $\nu_\Phi^*\in \mathrm{dom}(\Phi)$ for all $\nu \in  \mathrm{dom}(\Phi^*)$~\cite{Todd2003ConvexAA}. 
    
    \item \textbf{Addition to affine function} \label{lem:addition_affine}
    If $f(x)=g(x)+a^Tx+b$, then $f^*(y)=g^*(y-a)-b$~\cite{Todd2003ConvexAA}. 
\end{enumerate}
\end{lemma}

\subsubsection{Properties of Fenchel-Young Losses}

\begin{lemma}[Properties of Fenchel--Young Losses~\cite{Blondel2019LearningWF}]$\ $
\label{lem:fy_losses}
\begin{enumerate}
    \item \textbf{Non-negativity.} $d_{\Phi}(\mu, \nu) \ge 0$ for any $\mu \in \mathrm{dom}(\Phi)$ and $\nu \in \mathrm{dom}(\Phi^*)$. If $\Phi$ is a proper, lower semi-continuous, convex function, then $d_{\Phi}(\mu, \nu) = 0$ if and only if $\nu \in \partial \Phi(\mu)$. If $\Phi$ is strictly convex and differentiable, then $d_{\Phi}(\mu, \nu) = 0$ iff $\nu = \mu_\Phi^*$.

    \item \textbf{Differentiability.} If $\Phi$ is strictly convex and differentiable, then $d_{\Phi}(\mu, \nu)$ is differentiable in both arguments. In particular, $\nabla_\nu d_{\Phi}(\mu, \nu) = \nu_{\Phi^*}^* - \mu$.

    \item \textbf{Relation to Bregman divergences.} Let $\nu = \mu_\Phi^*$ (i.e., $(\mu, \nu)$ form a dual pair), where $\Phi$ is strictly convex. Then the Bregman divergence satisfies $B_\Phi(y \| \mu) = d_{\Phi}(y, \nu)$. Thus, Fenchel--Young losses can be viewed as a “mixed-form” Bregman divergence~\cite[Theorem 1.1]{amari2016information}, where the second argument is expressed in dual coordinates.
\end{enumerate}
\end{lemma}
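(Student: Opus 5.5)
The statement to be proved is Lemma~\ref{lem:fy_losses}, and I would prove its three items directly from the definition of $d_\Phi$ and from Lemma~\ref{lem:fenchel_duality}, taking them in order.

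\textbf{Non-negativity.} By definition,
\[
d_\Phi(\mu,\nu)=\Phi(\mu)+\Phi^*(\nu)-\langle\mu,\nu\rangle,
\]
so $d_\Phi(\mu,\nu)\ge 0$ is exactly the Fenchel--Young inequality~\eqref{eq:fenchel_young_inequality}. For a proper, lower semi-continuous, convex $\Phi$, I would obtain the equality case as follows. $d_\Phi(\mu,\nu)=0$ means the supremum defining $\Phi^*(\nu)$ is attained at $\mu$. That is, $\langle\mu',\nu\rangle-\Phi(\mu')\le\langle\mu,\nu\rangle-\Phi(\mu)$ for all $\mu'$, which rearranges to
\[
\Phi(\mu')\ge\Phi(\mu)+\langle\nu,\mu'-\mu\rangle \quad \text{for all } \mu',
\]
i.e.\ $\nu\in\partial\Phi(\mu)$. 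If $\Phi$ is strictly convex and differentiable, then $\partial\Phi(\mu)=\{\nabla\Phi(\mu)\}$, so equality holds iff $\nu=\mu^*_\Phi$. This is the last clause of item~1 of Lemma~\ref{lem:fenchel_duality}.

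\textbf{Differentiability.} By item~3 of Lemma~\ref{lem:fenchel_duality}, $\Phi^*$ is strictly convex and differentiable. Hence $d_\Phi$ is a sum of differentiable functions in each argument:
\[
\nabla_\mu d_\Phi=\nabla\Phi(\mu)-\nu, \qquad \nabla_\nu d_\Phi=\nabla\Phi^*(\nu)-\mu=\nu^*_{\Phi^*}-\mu.
\]
The one point needing care is that $\nabla\Phi^*(\nu)$ is the maximizer in the conjugate. This follows from applying the equality case above to $\Phi^*$, using $\Phi^{**}=\Phi$ (Fenchel--Moreau, since $\Phi$ is closed and convex). Together these give $\nabla\Phi^*=(\nabla\Phi)^{-1}$ on the relevant domains.

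\textbf{Bregman relation.} Set $\nu=\mu^*_\Phi=\nabla\Phi(\mu)$. Equality in Fenchel--Young gives
\[
\Phi^*(\nu)=\langle\mu,\nu\rangle-\Phi(\mu).
\]
Substituting into $d_\Phi(y,\nu)$ yields
\[
d_\Phi(y,\nu)=\Phi(y)-\Phi(\mu)-\langle\nabla\Phi(\mu),\,y-\mu\rangle=B_\Phi(y\|\mu),
\]
which is the Bregman divergence.

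\textbf{Main obstacle.} I expect this to be the domain bookkeeping in item~2 rather than any of the computations. One must ensure that $\nabla\Phi^*$ exists and that $\nabla\Phi$ is invertible on $\mathrm{dom}(\Phi^*)$. I would handle this by invoking item~3 of Lemma~\ref{lem:fenchel_duality}, and, if needed, by restricting to the interior of the domains, where Legendre-type regularity holds.
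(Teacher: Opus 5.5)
Your proof is correct. The paper gives no proof of its own for this lemma; it simply cites Blondel et al. Your argument is the standard one used there:
\begin{itemize}
\item for item~1, the Fenchel--Young inequality together with its equality case (the supremum being attained at $\mu$ is exactly the subgradient inequality);
\item for item~2, the identification $\partial\Phi^*(\nu)=\arg\max_{\mu'}\{\langle\mu',\nu\rangle-\Phi(\mu')\}$, which is your Fenchel--Moreau step, i.e.\ Danskin's theorem;
\item for item~3, direct substitution of the equality case.
\end{itemize}

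One small tightening concerns item~2. It needs only differentiability of $\Phi^*$ on $\mathrm{int}\,\mathrm{dom}(\Phi^*)$, and your own argmax argument already supplies this: strict convexity of $\Phi$ gives at most one maximizer, and the subdifferential is nonempty at interior points. You never use the strict convexity of $\Phi^*$ that you quote from item~3 of Lemma~\ref{lem:fenchel_duality}, and you should not rely on it. It fails in the simplex-constrained setting the paper applies that lemma to, where $\Phi^*$ is log-sum-exp and hence affine along $\mathbf{1}$.
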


\subsubsection{Information-Theoretic Inequalities}

\begin{lemma}[KL Divergence Upper Bound]
\label{lem:kl_upper_bound}
    If $p$ and $q$ are probability densities/masses both supported on a bounded interval $I$, then we have~\cite{1603768}
    \begin{equation}
        D_{\textrm{KL}}(p,q) \leq \frac{1}{\inf_{x\in I} q(x)} \|p-q\|_2^2.
    \end{equation}
\end{lemma}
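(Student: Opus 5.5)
We prove the bound by linearizing $\log$ and then controlling the linearization error with Cauchy--Schwarz. Since the statement is a cited inequality (Lemma~\ref{lem:kl_upper_bound}), we treat the discrete case: $p,q$ are probability mass functions on a common finite support $I$ with $q(x)>0$ for all $x\in I$; the density case follows by replacing sums with integrals. Write $m:=\inf_{x\in I} q(x)>0$. The plan is to bound the logarithm in the KL divergence by its tangent line, $\log t \le t-1$ for $t>0$. This converts the divergence into a chi-square-type quantity, which we then compare to the squared Euclidean distance.

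First, for every $x$ with $p(x)>0$, apply $\log\frac{p(x)}{q(x)} \le \frac{p(x)}{q(x)}-1 = \frac{p(x)-q(x)}{q(x)}$. Multiplying by $p(x)$ and summing (terms with $p(x)=0$ contribute zero) gives
\begin{equation*}
D_{\mathrm{KL}}(p,q)\le \sum_{x\in I} \frac{p(x)\bigl(p(x)-q(x)\bigr)}{q(x)}.
\end{equation*}
Second, rewrite the numerator as $p(p-q) = (p-q)^2 + q(p-q)$. The second piece contributes $\sum_x \bigl(p(x)-q(x)\bigr)=0$, because both $p$ and $q$ sum to one. Hence
\begin{equation*}
D_{\mathrm{KL}}(p,q)\le \sum_{x\in I}\frac{(p(x)-q(x))^2}{q(x)} = \chi^2(p\,\|\,q).
\end{equation*}
Third, bound each denominator below by $m$, which gives $\chi^2(p\,\|\,q)\le \frac{1}{m}\sum_x (p(x)-q(x))^2 = \frac{1}{m}\|p-q\|_2^2$. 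This is the claimed bound.

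The only delicate points are the normalization cancellation and the support assumption. The cancellation $\sum_x (p-q)=0$ requires both $p$ and $q$ to be normalized on the same set $I$; for densities this is ensured by the common bounded support hypothesis. If $\inf_I q=0$ the right-hand side is $+\infty$ and the inequality holds trivially, so we may assume $m>0$ throughout; the rest is routine.
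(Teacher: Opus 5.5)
The paper gives no proof of this lemma; it only cites it. Your argument is the standard proof of the inequality, and it is correct. It runs $D_{\mathrm{KL}}(p,q)\le \sum_x p(p-q)/q=\chi^2(p\,\|\,q)\le \frac{1}{\inf_I q}\|p-q\|_2^2$, using $\log t\le t-1$, the identity $p(p-q)=(p-q)^2+q(p-q)$, and the cancellation $\sum_x (p-q)=0$ from normalization. The edge cases (terms with $p(x)=0$, and $\inf_I q=0$) are handled adequately.

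You use one hypothesis without stating it, and you should make it explicit. The tangent-line bound $\log t\le t-1$ holds only for the natural logarithm. So what you have proved is the lemma with $D_{\mathrm{KL}}$ measured in nats, and that is the only version that is true.

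In bits the inequality fails. Take $q=(\tfrac13,\tfrac13,\tfrac13)$ and $p=(\tfrac12,\tfrac12,0)$. Then $\|p-q\|_2^2=\tfrac16$, so the right-hand side is $3\cdot\tfrac16=\tfrac12$. But $D_{\mathrm{KL}}(p,q)=\log_2\tfrac32\approx 0.585$ bits, which exceeds $\tfrac12$. In nats the same divergence is $\ln\tfrac32\approx 0.405$, consistent with your bound.

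This matters in context. The paper's companion Pinsker lemma uses the constant $\frac{1}{2\ln 2}$, which belongs to the bits convention; in nats the constant is $\tfrac12$. The proof of Corollary~\ref{cor:kl_mse_bound} then applies both lemmas to the same $D_{\mathrm{KL}}$. When you state this lemma, state that the logarithm is natural.
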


\begin{lemma}[Pinsker's Inequality~\cite{2017Elements}]
\label{lem:pinsker}
If $p$ and $q$ are probability densities/masses both supported on a bounded interval $I$, then we have
\begin{equation}
    D_{KL}(p\|q) \ge \frac{1}{2\ln 2} \|p-q\|_1^2.
\end{equation}
\end{lemma}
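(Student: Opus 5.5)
Pinsker's inequality (Lemma~\ref{lem:pinsker}) as stated, with the factor $\frac{1}{2\ln 2}$, is the base-2 form, where $D_{KL}$ is measured in bits. My plan is to reduce it to the two-point case and then compare everything with $\ln$ at the end.

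\textbf{Step 1: reduction to two points.} Let $A=\{x: p(x)\ge q(x)\}$ and set $P=p(A)$, $Q=q(A)$. Then
\[
\|p-q\|_1=\sum_{x\in A}(p-q)+\sum_{x\notin A}(q-p)=2(P-Q).
\]
Coarse-graining onto the partition $\{A,A^c\}$ is a deterministic map, so the data-processing inequality (equivalently, the log-sum inequality applied separately on $A$ and on $A^c$) gives
\[
D_{KL}(p\|q)\ge P\log\tfrac{P}{Q}+(1-P)\log\tfrac{1-P}{1-Q}=:d(P\|Q).
\]
It therefore suffices to show $d(P\|Q)\ge \frac{1}{2\ln 2}\cdot 4(P-Q)^2$. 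Writing $d=d_{\mathrm{nat}}/\ln 2$, this is the same as
\[
d_{\mathrm{nat}}(P\|Q)\ge 2(P-Q)^2 .
\]
For densities on $I$ the identical argument works, with integrals in place of sums.

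\textbf{Step 2: the binary inequality.} Fix $P\in[0,1]$ and define, for $Q\le P$,
\[
g(Q)=P\ln\tfrac{P}{Q}+(1-P)\ln\tfrac{1-P}{1-Q}-2(P-Q)^2 .
\]
Then $g(P)=0$ and
\[
g'(Q)=-\frac{P}{Q}+\frac{1-P}{1-Q}+4(P-Q)=(Q-P)\Bigl(\frac{1}{Q(1-Q)}-4\Bigr).
\]
Since $Q(1-Q)\le \tfrac14$, the bracket is nonnegative. As $Q\le P$, this gives $g'(Q)\le 0$ on $(0,P]$, so $g$ is nonincreasing and $g(Q)\ge g(P)=0$. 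The symmetric case $Q\ge P$ is handled in the same way, though Step 1 already forces $P\ge Q$.

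\textbf{Main obstacle.} The substantive step is the reduction in Step 1. This is where the log-sum inequality must be applied carefully. The edge cases $Q=0$ or $Q=1$ need separate treatment: either the divergence is infinite, or the bound follows by continuity. The boundedness of the support interval only matters to ensure the quantities are well defined.
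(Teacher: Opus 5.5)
Your proof is correct and follows the standard textbook argument in the cited \emph{Elements of Information Theory}; the paper itself states the lemma without proof. That argument reduces to a two-point distribution via the indicator of $\{p\ge q\}$ and the data-processing (log-sum) inequality, then shows the binary gap $g$ is nonincreasing in $Q$ on $(0,P]$ using $Q(1-Q)\le \tfrac14$. Your insistence that $D_{KL}$ be measured in bits is essential, not cosmetic: in nats the sharp constant is $\tfrac12<\tfrac{1}{2\ln 2}$, and the displayed inequality fails, e.g., for $p=(0.6,0.4)$, $q=(0.5,0.5)$.
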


\section{Conjugate learning framework}
\label{sec:framework}
 
In this section, we formalize the machine learning task and derive the conjugate learning framework. We argue that conjugate learning arises not merely as a convenient modeling choice, but as a \textit{necessary} consequence of fundamental statistical principles governing learnability from finite data. Subsection~\ref{subsec:exp_family_learnability} establishes the practical learnability of exponential families. Subsection~\ref{subsec:framework_definition} presents the formal definition of conjugate learning. Subsection~\ref{subsec:components} elaborates on its three core components. Subsection~\ref{subsec:key_quantities} introduces key quantities for analyzing trainability and generalization.

\subsection{Practical learnability and exponential families}
\label{subsec:exp_family_learnability}

A general machine learning problem can be viewed as using observed samples to train a model that accurately predicts a target variable given input features. From a probabilistic perspective, this is equivalent to estimating the conditional distribution $p_{\mathcal{Y}|x}$ of the target $y$ given the feature $x$, based on a finite dataset.
When framing learning as conditional distribution estimation, two key observations emerge:
\begin{enumerate}
    \item The support (i.e., the set of possible values) of the target distribution is typically known \textit{a priori} and independent of the distribution's parameters. For example, classification tasks have targets in a finite label set $\{1,\dots,K\}$, while regression tasks have targets in $\mathbb{R}^d$, both of which are fixed regardless of the conditional distribution parameters.
    
    \item Due to the sufficiency principle, parametric statistical inference must be based on sufficient statistics. To fit and learn the underlying data distribution using a model with a finite number of parameters, the sufficient statistics of the distribution must be finite-dimensional and must not grow with the sample size $n$. In the absence of additional structural assumptions or prior knowledge, accurately learning the conditional probability distribution from finite samples requires that the distribution admit finite-dimensional sufficient statistics. 

\end{enumerate}

The second observation is critical: infinite-dimensional sufficient statistics cannot be fully determined from a finite number of samples, making consistent estimation impossible. This limitation is rigorously characterized by the Pitman--Darmois--Koopmans theorem, which states that among all parametric families with fixed support, only exponential family distributions possess sufficient statistics whose dimension remains bounded as the sample size $n \to \infty$. Consequently, exponential families are the only distributions that are \textit{practically learnable} from finite samples using finite-capacity parametric models.
\begin{proposition}[Practical Learnability]
\label{prop:learnable}
    In the absence of additional structural assumptions or prior knowledge, only distributions in the exponential family are practically learnable from finite samples using parametric models in machine learning.
\end{proposition}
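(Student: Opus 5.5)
The argument will be a reduction to the Pitman--Darmois--Koopmans theorem (Theorem~\ref{thm:pitman}), combined with the Sufficiency Principle (Theorem~\ref{def:suff_principle}). Since ``practically learnable'' is not yet a formal notion, the first step is to fix its meaning. A conditional family $\{p_{\mathcal{Y}|x}(\cdot;\theta)\}$ is practically learnable if it satisfies three conditions. First, the support of $p_{\mathcal{Y}|x}$ is fixed a priori, so it does not depend on $\theta$; this is the first observation above. Second, the estimator is produced by a parametric model with a fixed, finite number of parameters, independent of $n$. Third, the estimator must be consistent as $n\to\infty$ for every member of the family, without extra structural assumptions. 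The proposition then becomes a contrapositive statement. If the family is not an exponential family, no fixed-dimensional procedure can satisfy these requirements.

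The key steps, in order, are as follows.
\begin{enumerate}
\item \textbf{Conditioning on $x$.} Fix $x\in\mathcal{X}$ and restrict to the samples with $x^{(i)}=x$. Because $|\mathcal{X}|$ is finite, each conditional problem is an ordinary parametric estimation problem from i.i.d.\ draws of $Y\mid X=x$. So it suffices to argue for each $x$ separately.
\item \textbf{Passing through a sufficient statistic.} By the Sufficiency Principle, any admissible inference about $\theta$ depends on the sample only through a sufficient statistic $T$. A model with finitely many parameters, say $k$, stores only a $k$-dimensional summary of the data, and this summary must carry all the information about $\theta$. This is the step that needs justification. The idea is that the map from the sample to the fitted parameters is itself a statistic $T_n$ of dimension at most $k$. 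If $T_n$ were not sufficient, two samples with the same fitted parameters would have different likelihood ratios. Consistency over the whole family, with no prior knowledge to break ties, then fails on some $\theta$.
\item \textbf{Applying Theorem~\ref{thm:pitman}.} The family has fixed support and admits sufficient statistics of dimension bounded by $k$ for all $n$. By Theorem~\ref{thm:pitman}, it must be an exponential family, $p(y;\theta)=h(y)\exp(\langle \eta(\theta),T(y)\rangle - A(\theta))$.
\item \textbf{Converse (brief).} Exponential families are indeed learnable. The natural parameters are estimated through the moment-matching equation $\nabla A(\eta)=\frac{1}{n}\sum_i T(y^{(i)})$, which has fixed dimension and is consistent by the law of large numbers.
\end{enumerate}

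The main obstacle is step~2: showing that a finite-parameter consistent learner forces a \emph{bounded-dimensional sufficient} statistic. Consistency alone might seem satisfiable by non-sufficient summaries. I would first close this gap by making it part of the definition of practical learnability, requiring the learned parameters to determine the same inference as the full sample. In that form the proposition follows directly from the Sufficiency Principle. If a stronger version is wanted, I would add a regularity condition (continuity of the statistic, as in the Darmois--Koopman setting) and construct two parameter values whose likelihood ratio is not a function of the $k$-dimensional summary. The statement is informal, so I expect the proof to be a principled argument at this level rather than a fully rigorous theorem.
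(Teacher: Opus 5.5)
Your route is the paper's route. The paper gives no appendix proof for this proposition, only the informal argument of Section~3.1: the support is known, the Sufficiency Principle forces inference through a sufficient statistic, a finite-parameter model forces that statistic's dimension not to grow with $n$, and Pitman--Darmois--Koopmans then gives an exponential family. Your fallback of building ``the learned parameters determine the same inference as the full sample'' into the definition is exactly the assumption the paper makes implicitly. It does so when it asserts that infinite-dimensional sufficient statistics make consistent estimation impossible.

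What you should drop is the consistency-based justification in step~2 and the proposed ``stronger version.'' A bounded-dimensional summary that is not sufficient can still be consistent. Showing that the likelihood ratio is not a function of the summary only shows that information is lost, not that the estimator fails to converge. The Cauchy location family is a counterexample. It has fixed support $\mathbb{R}$ and the paper explicitly lists it as non-exponential. Its minimal sufficient statistic is the whole order statistic, yet the one-dimensional sample median is consistent. Similarly, the median together with half the interquartile range consistently estimates location and scale.

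So under any consistency-based definition of practical learnability the proposition is false. Sufficiency therefore has to be part of the definition, not derived. The definition also needs the continuity or regularity condition on the summary, because Pitman--Darmois--Koopmans requires it; without it, a single real number can injectively encode the whole sample and the dimension bound is vacuous. This condition belongs in your main argument, not only in the stronger version.
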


Common examples of exponential-family distributions include the Gaussian, categorical, binomial, multinomial, Poisson, Gamma, Beta, and chi-squared distributions. In contrast, notable non-exponential-family distributions include the uniform, Cauchy, Laplace, Weibull, extreme-value, and hypergeometric distributions, as well as location-scale families without fixed support.
While real-world data may follow non-exponential-family laws, they can often be effectively approximated within the exponential-family framework. Specifically, under i.i.d.\ sampling, any distribution over a bounded domain can be approximated arbitrarily well by a discrete distribution with finite support. The joint distribution of $n$ independent samples from such a discrete distribution is exactly multinomial, a member of the exponential family. Therefore, \textit{by discretizing the target space finely enough, \textit{any} learning task can be reduced to estimating a multinomial (or categorical) conditional distribution with controllable approximation error}. This justifies the use of exponential-family modeling as a universal paradigm for practical machine learning.

\subsection{Formal definition of conjugate learning}
\label{subsec:framework_definition}

\subsubsection{Derivation from exponential families}

Building on the practical learnability of exponential family distributions, we focus on modeling the conditional probability distribution of the label $y$ given the feature $x$ under the exponential family assumption. We parameterize the conditional distribution as an exponential family with natural parameters dependent on the input features:
\begin{equation}
    q_{\mathcal{Y}|x}(y) = h(y) \exp\!\bigl\{ y^\top \eta_x - B(\eta_x) \bigr\},
\end{equation}
where $B(\eta_x) = \log \sum_{y \in \mathcal{Y}} h(y) \exp\{y^\top \eta_x\}$ is the cumulant function (also known as the log-partition function). Under mild regularity conditions (e.g., the support of $y$ is fixed and $h(y) > 0$), the cumulant function $B$ is strictly convex in $\eta_x$. Thus, our learning objective is to estimate the natural parameter $\eta_x$ using a parametric model $f_\theta(x)$, where $\theta$ denotes the model parameters.
The distribution predicted by the model is expressed as:
\begin{equation}
   p_{\mathcal{Y}|x}(y) = \exp\!\bigl\{ -d_\Omega(y, f_\theta(x)) + \log h(y) + \log \Omega(y) \bigr\},
\end{equation}
where $\Omega = B^*$ denotes the convex conjugate of $B$. The negative log-likelihood becomes
\begin{equation}
    \ell(\theta; s) \propto \sum_{(x,y) \in s} d_\Omega(y, f_\theta(x)).
\end{equation}
This result shows that maximum likelihood estimation under an exponential-family model is equivalent to minimizing the sum of Fenchel--Young losses between the target $y$ and the model output $f_\theta(x)$.

\subsubsection{The conjugate learning objective}

\begin{definition}[Conjugate Learning]
\label{def:con_learning_problem}
Given a sample dataset $s^n$, sample spaces $\mathcal{X}$ and $\mathcal{Y}$, prior knowledge encoded as a convex set $C \subseteq \mathbb{R}^d$ containing all feasible target values, a model space $\mathcal{F}_\Theta = \{ f_\theta : \theta \in \Theta \}$, and a differentiable strictly convex function $\Omega$ (the \textit{generating function}), the conjugate learning task is to solve
\begin{equation}
\min_{\theta \in \Theta} \mathbb{E}_{X,Y} \left[ d_\Phi(Y, f_\theta(X)_{\Phi^*}^*) \right],
\end{equation}
where $\Phi(\cdot) = \Omega(\cdot) + I_C(\cdot)$, $I_C$ is the indicator function of $C$, $f_\theta(X)_{\Phi^*}^*$ denotes the conjugate dual of $f_\theta(X)$ with respect to $\Phi$, and $\mathcal{X}$ is a finite set (justified by discretization of continuous features in practice).
\end{definition}

\subsubsection{Key features}

The conjugate learning framework differs from the classical formulation in the following key aspects:
\begin{enumerate}

    \item \textbf{Relaxed i.i.d. assumption}: The framework does not require samples to be strictly i.i.d., accommodating realistic scenarios such as data augmentation and sequential dependencies in natural language or time-series data. In practical machine learning, training data frequently deviates from the i.i.d. assumption due to factors such as data augmentation (e.g., Mixup~\cite{ZhangCDL18} in few-shot learning) or inherent sequential correlations in context-label pairs (e.g., natural language modeling). The conjugate learning framework is designed to handle such non-i.i.d. data distributions while retaining the ability to conduct theoretical analysis under the i.i.d. sampling regime as a special case.
    
    \item \textbf{Explicit prior integration}: Prior knowledge is encoded through the convex constraint set $C$, which guides the learning process by restricting the feasible region of predictions. This extends classical learning settings where prior information is either implicitly embedded in the model architecture or completely ignored. By encoding domain-specific or task-specific prior knowledge into the convex set $C$, the framework leverages structured prior information to improve generalization performance on complex tasks and reduce the hypothesis space complexity to accelerate optimization.
    
    \item \textbf{Conjugate dual prediction}: The final prediction is defined as the conjugate dual of the raw model output, ensuring structural alignment with the target space. Unlike classical frameworks that use the raw model output $f_\theta(x)$ directly for prediction, conjugate learning uses $f_\theta(x)_{\Phi^*}^*$ (the conjugate dual of $f_\theta(x)$ with respect to $\Phi$) as the final prediction. This design imposes meaningful structural constraints on the prediction space that align with the properties of the target distribution (e.g., probability simplex for classification).
    
    \item \textbf{Fenchel-Young loss family}: The loss function is constrained to the Fenchel-Young family, which includes common losses like cross entropy and MSE. Unlike arbitrary loss functions adopted in classical learning frameworks, Fenchel--Young losses possess well-defined structural properties that we demonstrate to play a pivotal role in ensuring both the trainability and generalization of the model in subsequent sections. 
    
    \item \textbf{Finite feature space}: The input feature space $\mathcal{X}$ is assumed to be finite, a condition justified by the discretization of continuous features in practical learning scenarios. While the label space $\mathcal{Y}$ can be infinite-dimensional (e.g., regression in $\mathbb{R}^d$), the feature space $\mathcal{X}$ is required to be finite. This assumption aligns with core learnability theory: consistent learning in continuous feature spaces is only possible when the space admits an effective discretization (e.g., via finite-resolution binning). Otherwise, if arbitrarily nearby inputs may exhibit arbitrarily divergent label distributions, the task becomes fundamentally unlearnable by any model of finite capacity.
\end{enumerate}
 
Figure~\ref{fig:conjugate_learning_structure} illustrates the architecture and processing flow of the conjugate learning framework. Starting from an input $x$, the parametric model $f_\theta$ produces an intermediate output, which is mapped to its conjugate dual via the operator $(\cdot)_{\Phi^*}^*$. The Bregman divergence \( B_\Phi(y, f_\theta(x)_{\Phi^*}^*) \) is then computed between this dual prediction and the target $y$, forming the optimization objective. Notably, in classification tasks, the softmax activation function and cross entropy loss correspond precisely to $(\cdot)_{\Phi^*}^*$ and \( B_\Phi(\mathbf{1}_y, \cdot) \) (respectively) when $\Phi$ is the negative Shannon entropy. In regression tasks, the mean squared error (MSE) loss arises as $ B_\Phi(y, \cdot) $ with $\Phi = \frac{1}{2}\|\cdot\|_2^2$, under which the conjugate dual operator $(\cdot)_{\Phi^*}^*$ reduces to the identity map.
\begin{figure}[ht]
\centering
\includegraphics[width=0.8\columnwidth]{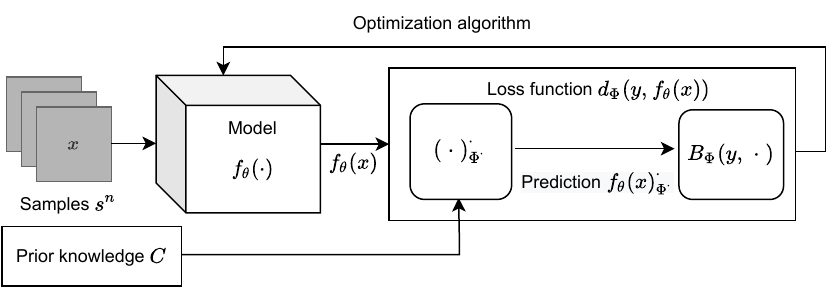}
\caption{Schematic illustration of the conjugate learning framework. The diagram outlines the complete processing pipeline from raw input to learning target approximation, emphasizing the interplay among model output, conjugate transformation, and distance measurement.}
\label{fig:conjugate_learning_structure}
\end{figure}
\FloatBarrier
Fundamentally, conjugate learning is a structured instantiation of the classical statistical learning framework. It preserves core components (training data, hypothesis space, loss function) while introducing a critical augmentation: explicit integration of prior knowledge via the convex set $C$. As shown in subsequent sections, this augmentation enhances analytical tractability and provides a unifying explanation for the empirical success of many deep learning practices.

\subsection{Core components of the framework}
\label{subsec:components}

\subsubsection{Generating function and prior knowledge}
\label{subsubsec:prior_knowledge}

If the true conditional distribution belongs to a known exponential family, its structure uniquely determines the generating function $\Omega$. For example, the Gaussian distribution corresponds to $\Omega = \frac{1}{2}\|\cdot\|_2^2$, while the categorical distribution corresponds to the negative Shannon entropy. If the true distribution does not belong to an exponential family, the empirical distribution can be approximated by a multinomial distribution, whose generating function is the negative Shannon entropy.

In practical machine learning tasks, the support of the target variable $Y$ (denoted $\mathcal{Y}$) is often known or partially characterized by prior knowledge. We express this prior knowledge as $\mathcal{Y} \subseteq C$, where $C$ is a convex set containing all values consistent with the task constraints. A canonical example is supervised classification with $m$ classes: since the goal is to estimate a conditional probability distribution, predictions must lie in the probability simplex $\Delta^m = \{ p \in \mathbb{R}^m : \sum_{i=1}^m p_i = 1, p_i \geq 0 \}$.
If the model architecture is designed such that its predictions inherently satisfy the constraint $C$, the structural mismatch between predictions and targets is eliminated. This restricts the optimization search space, facilitating faster convergence of both empirical and expected risk. Empirically, richer prior knowledge leads to a smaller feasible set $C$, which reduces the hypothesis complexity and improves learning efficiency and accuracy. The ubiquitous softmax layer in classification networks exemplifies this principle: it maps raw outputs from $\mathbb{R}^m$ onto the probability simplex $\Delta^m$, which, from the perspective of conjugate learning, emerges naturally as a mechanism to enforce prior knowledge and align the model's prediction geometry with that of the learning target.

\subsubsection{Conjugate dual prediction}
\label{subsubsec:conjugate_dual}

As discussed in Subsection~\ref{subsubsec:prior_knowledge}, a key challenge is ensuring that model predictions share the structural properties of the learning target. Conjugate learning addresses this by leveraging duality to enforce alignment between predictions and the target space.
Given a generating function $\Omega$ with domain $\mathrm{dom}(\Omega)$, define $\Phi = \Omega + I_C$, so that $\mathrm{dom}(\Phi) = \mathrm{dom}(\Omega) \cap C$. By Lemma~\ref{lem:fenchel_duality}, for any $\nu \in \mathrm{dom}(\Phi^*)$, it holds that $\nabla \Phi^*(\nu) \in \mathrm{dom}(\Phi)$. In the context of Definition~\ref{def:con_learning_problem}, the prediction obtained via dual mapping satisfies
\begin{equation}
f_\theta(x)_{\Phi^*}^* := \nabla \Phi^*(f_\theta(x)) \in \mathrm{dom}(\Phi) = \mathrm{dom}(\Omega) \cap C.
\end{equation}
Moreover, the convex conjugate $\Phi^*$ admits the representation
\begin{equation}
\Phi^*(\mu) = \inf_{\mu' \in \mathbb{R}^d} \left\{ \sigma_C(\mu') + \Omega^*(\mu - \mu') \right\},
\end{equation}
where $\sigma_C(\mu) = \sup_{q \in C} \langle \mu, q \rangle$ is the support function of $C$.

Although computing $\Phi^*$ from $\Omega$ and $C$ is generally nontrivial, many practical constraints are linear. For instance, in classification scenarios where we expect the prediction results to be the distribution of labels given features, the predictions lie within a probability simplex, such that $C=\{y\in \mathbb{R}^d:\sum_i y_i=1 \text{ and } y_i>0\}$. In this case, the constraint condition is linear, and the following theorem provides a closed-form solution.
\begin{theorem}
\label{thm:linear_conjugate}
Let $G \in \mathbb{R}^{m \times d}$ be a matrix with rows $G_i^\top$, and define the constraint set $C = \{ y \in \mathbb{R}^d : G y = b \}$. Then,
\begin{equation}
\Phi^*(\nu) = \Omega^*(\nu + G^\top \lambda_*) - \langle \lambda_*, b \rangle,
\end{equation}
where $\lambda_* \in \mathbb{R}^m$ satisfies $G \cdot (\nu + G^\top \lambda_*)_{\Omega^*}^* = b$.
\end{theorem}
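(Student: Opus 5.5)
We compute $\Phi^*(\nu)=\sup_{y\in C}\{\langle y,\nu\rangle-\Omega(y)\}$ directly as an equality-constrained convex program, using Lagrangian duality rather than the infimal-convolution formula of Lemma~\ref{lem:fenchel_duality}. The support function of an affine subspace is degenerate: $\sigma_C(\nu_1)=\langle \lambda,b\rangle$ if $\nu_1=G^\top\lambda$ and $+\infty$ otherwise. This already suggests the target form. Substituting it into \eqref{lem:sum_conjugate} gives
\[
\Phi^*(\nu)=\inf_{\lambda\in\mathbb{R}^m}\bigl\{\langle\lambda,b\rangle+\Omega^*(\nu-G^\top\lambda)\bigr\}.
\]
After the sign change $\lambda\mapsto-\lambda$, this reads $\inf_\lambda\{\Omega^*(\nu+G^\top\lambda)-\langle\lambda,b\rangle\}$.

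So the first step is to justify this support-function computation. If $\nu_1\notin\operatorname{range}(G^\top)=(\ker G)^\perp$, then $\langle\nu_1,y\rangle$ is unbounded on the affine set $C$. If $\nu_1=G^\top\lambda$, then $\langle G^\top\lambda,y\rangle=\langle\lambda,Gy\rangle=\langle\lambda,b\rangle$ for every $y\in C$. This step assumes $C\neq\emptyset$.

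The second step is to identify the minimizer. The function $h(\lambda)=\Omega^*(\nu+G^\top\lambda)-\langle\lambda,b\rangle$ is convex and differentiable, since $\Omega^*$ is differentiable by Lemma~\ref{lem:fenchel_duality}(3). Its gradient is
\[
\nabla h(\lambda)=G\,\nabla\Omega^*(\nu+G^\top\lambda)-b=G\,(\nu+G^\top\lambda)^*_{\Omega^*}-b.
\]
Hence any $\lambda_*$ satisfying the stated condition $G(\nu+G^\top\lambda_*)^*_{\Omega^*}=b$ is a stationary point of a convex function, and therefore a global minimizer. This gives $\Phi^*(\nu)=h(\lambda_*)$, which is the claimed formula.

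As a cross-check, I would also verify the formula directly. Set $y_*=\nabla\Omega^*(\nu+G^\top\lambda_*)$. Then $y_*\in C$, and by the Fenchel--Young equality case, $\Omega(y_*)+\Omega^*(\nu+G^\top\lambda_*)=\langle y_*,\nu+G^\top\lambda_*\rangle$. Therefore $\langle y_*,\nu\rangle-\Omega(y_*)=\Omega^*(\nu+G^\top\lambda_*)-\langle\lambda_*,b\rangle$, so the supremum is attained at least at this value. For the reverse inequality, take any $y\in C$. The Fenchel--Young inequality gives $\langle y,\nu\rangle-\Omega(y)=\langle y,\nu+G^\top\lambda_*\rangle-\langle\lambda_*,b\rangle-\Omega(y)\le \Omega^*(\nu+G^\top\lambda_*)-\langle\lambda_*,b\rangle$. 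This direct argument avoids relying on the regularity conditions needed for the infimal-convolution formula, so I would present it as the main proof.

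The main obstacle is the sign convention and the existence of $\lambda_*$. The statement presupposes that $\lambda_*$ exists, so I would take existence as a hypothesis. I would also remark that existence holds under a mild qualification condition, such as $C\cap\operatorname{ri}\operatorname{dom}\Omega\neq\emptyset$. Finally, I would note that the argument requires $\nu+G^\top\lambda_*\in\operatorname{dom}\Omega^*$ with $\Omega^*$ differentiable there.
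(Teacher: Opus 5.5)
Your proof is correct, and your main argument takes a different route from the paper's. The paper sets up the Lagrangian for $\sup_y\{\langle y,\nu\rangle-\Omega(y)\}$ subject to $Gy=b$. It asserts strong duality and KKT optimality on the grounds that $\Omega$ is strictly convex and differentiable. It then reads off $y_*=\nabla\Omega^*(\nu+G^\top\lambda_*)$ from stationarity and the equation for $\lambda_*$ from primal feasibility. Finally it evaluates the dual function $-\Omega^*(\nu+G^\top\lambda)+\langle\lambda,b\rangle$ via the affine-shift rule of Lemma~\ref{lem:fenchel_duality}. Its Lagrangian and its stationarity condition carry opposite signs on $\lambda$, which is harmless because $\lambda$ is free.

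You use the same pair $(y_*,\lambda_*)$, but as an explicit primal--dual certificate. The Fenchel--Young equality at $y_*\in C$ bounds $\Phi^*(\nu)$ from below. The Fenchel--Young inequality, combined with $\langle y,G^\top\lambda_*\rangle=\langle\lambda_*,b\rangle$ on $C$, bounds it from above. This needs no duality theorem at all, only the existence of $\lambda_*$, which the statement presupposes.

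By contrast, strict convexity and differentiability of $\Omega$ do not by themselves guarantee strong duality with an attained multiplier; a qualification such as $C\cap\operatorname{ri}\operatorname{dom}\Omega\neq\emptyset$ is needed. So your version closes a gap the paper leaves implicit. It also shows that the formula gives the same value for every solution $\lambda_*$ when $G$ lacks full row rank.

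What the paper's route buys is a derivation rather than a verification: it explains where the condition $G(\nu+G^\top\lambda_*)^*_{\Omega^*}=b$ comes from. Your secondary route through the infimal-convolution identity adds a useful fact not stated in the paper: $\lambda_*$ minimizes the smooth convex function $\Omega^*(\nu+G^\top\lambda)-\langle\lambda,b\rangle$. That route inherits the regularity conditions behind the identity, though, so keeping the direct certificate as your main proof is the right choice.
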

The proof is provided in Appendix~\ref{appendix:proof_linear_conjugate}.

To clarify theorem’s implications, consider \( \Omega(p) = \sum_i (p_i \log p_i - p_i) \), for which \( \Omega^*(\nu) = \sum_i e^{\nu_i} \) and \( (\nu_{\Omega^*}^*)_i = e^{\nu_i} \), which does not lie in the simplex \( \Delta^m \). Imposing \( C = \Delta^m \) and setting \( \Phi = \Omega + I_C \), Theorem~\ref{thm:linear_conjugate} yields
\begin{equation}
\Phi^*(\nu) = 1 + \sum_i e^{\nu_i}.
\end{equation}
Consequently,
\begin{equation}
\nu_{\Phi^*}^* = \mathrm{Softmax}(\nu) = \left[ \frac{e^{\nu_i}}{\sum_j e^{\nu_j}} \right]_i \in C,
\end{equation}
and the associated Bregman divergence becomes
\begin{equation}
d_{\Phi}(p, \nu) = D_{\mathrm{KL}}(p \,\ \, \nu_{\Phi^*}^*).
\end{equation}

This result shows that the softmax function and cross entropy loss emerge naturally within the conjugate learning framework when $\Phi$ is the negative entropy regularized by the probability simplex constraint. Directly fitting $Y$ using the raw model output $f_\theta(x)$ ignores available prior information, leading to a larger search space, reduced training efficiency, and no guarantee that predictions satisfy the constraint $C$. In contrast, conjugate learning provides a principled mechanism to embed prior knowledge into the learning geometry, enhancing both theoretical interpretability and empirical performance.

\subsubsection{Fenchel-Young loss as the natural loss family}
\label{subsubsec:fy_loss}

We now show that Fenchel--Young losses are not merely a consequence of the Pitman--Darmois--Koopmans theorem (Subsection~\ref{subsec:exp_family_learnability}) but are theoretically inevitable under natural desiderata for well-behaved loss functions in machine learning.

It is well established that many standard losses, such as cross entropy, KL divergence, and mean squared error, belong to the Fenchel--Young family~\cite{Blondel2019LearningWF}. In machine learning, a loss function serves two fundamental roles: (i) quantifying discrepancy between predictions and targets, and (ii) guiding optimization toward a globally optimal solution. A well-designed loss should therefore satisfy:
\begin{enumerate}

\item \textbf{Strict convexity in structured predictions}: The loss $\ell(y, f_\theta(x)_{\Phi^*}^*)$ must be strictly convex in the prediction $f_\theta(x)_{\Phi^*}^*$. This ensures a unique minimizer for any fixed input $x$, guaranteeing that optimization converges to a well-defined target. Without this property, the loss landscape may contain multiple minima, preventing reliable convergence. Note that strict convexity in the prediction does not require strict convexity in the raw output $f_\theta(x)$.
\item \textbf{Properness}: A loss function $\ell$ is said to be \emph{proper} if the minimizer of the expected loss coincides with the mean of the target distribution; that is,
\begin{equation}
\bar{Y}|x = \arg\min_{\nu} \, \mathbb{E}_{Y}\bigl[\ell(Y, \nu)\bigr],
\end{equation}
where the expectation is taken with respect to the data-generating distribution of $Y$ given $x$. This property ensures that the population-optimal prediction recovers the true conditional expectation of the target variable. As a consequence, the empirical risk minimizer converges (in probability) to $\bar{Y}|x$ as the sample size grows. Combined with the Weak Law of Large Numbers, properness guarantees that the empirical risk asymptotically approaches the expected risk, enabling consistent learning. Properness also confers inherent robustness to zero-mean label noise: if observed labels are corrupted by additive noise with zero mean, the population-optimal predictor remains centered at the true underlying signal. Without properness, increasing the dataset size does not guarantee convergence to the correct target.

\end{enumerate}

These two properties are necessary and jointly sufficient to characterize well-behaved loss functions for machine learning. In fact, any differentiable loss function satisfying both properties must coincide (up to an additive term independent of the prediction) with a Fenchel--Young loss:
\begin{theorem}
\label{thm:uniqueness_fenchel}
Let $\ell(y, f_\theta(x)_{\Phi^*}^*)$ be a differentiable loss function. If $\ell$ satisfies both (i) strong convexity with respect to the model's prediction and (ii) properness, then there exists a strictly convex function $\Phi$ such that
\begin{equation}
\ell(y, f_\theta(x)_{\Phi^*}^*) - \ell(y, y) = d_\Phi(y, f_\theta(x)).
\end{equation}
\end{theorem}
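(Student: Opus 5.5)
The argument is the classical characterization of proper scoring rules through Bregman divergences (Savage; Banerjee et al.), rewritten in the mixed primal--dual coordinates of Lemma~\ref{lem:fy_losses}(3). Write $\mu := f_\theta(x)_{\Phi^*}^*$ for the structured prediction, regard the loss as a function $\ell(y,\mu)$ of the target and the prediction, and define the shifted loss $L(y,\mu) := \ell(y,\mu) - \ell(y,y)$. Then $L(y,y)=0$, and properness of $\ell$ is equivalent to properness of $L$, because the subtracted term does not depend on $\mu$.

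\textbf{Step 1: construct the potential.} Set
\begin{equation}
\Phi(y) := -L\bigl(y, \mu_0\bigr) + \text{(affine correction)},
\end{equation}
or, more canonically, use the Bayes risk. For every admissible target distribution with mean $m$, define
\begin{equation}
\Phi(m) := -\min_{\mu} \mathbb{E}_Y[L(Y,\mu)] = -\mathbb{E}_Y[L(Y,m)],
\end{equation}
where the second equality is properness. The step to justify is that the expected risk depends on the distribution only through its mean. This follows from the first-order condition for properness: $\mathbb{E}_Y[\nabla_\mu L(Y,m)]=0$ for every distribution with mean $m$. Applying this to two-point distributions and to perturbations that preserve the mean forces $\nabla_\mu L(y,\mu)$ to be affine in $y$. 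Concretely, $\nabla_\mu L(y,\mu) = -M(\mu)(y-\mu)$ for some matrix-valued function $M$.

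\textbf{Step 2: integrate.} Integrating $\nabla_\mu L(y,\mu) = -M(\mu)(y-\mu)$ along the segment from $\mu=y$, where $L=0$, to a general $\mu$ shows that $M(\mu)$ must be a Hessian field, $M=\nabla^2\Phi$. Symmetry of mixed partials gives $M$ symmetric and curl-free, so it equals $\nabla^2\Phi$ for the $\Phi$ of Step 1. Integration then yields
\begin{equation}
L(y,\mu) = \Phi(y) - \Phi(\mu) - \langle \nabla\Phi(\mu), y-\mu\rangle = B_\Phi(y,\mu).
\end{equation}
Strict (indeed strong) convexity of $\ell$ in $\mu$, together with $\nabla_\mu^2 L|_{\mu=y} = M(y)$, forces $M \succ 0$. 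Hence $\Phi$ is strictly convex and differentiable, and $\nabla\Phi$ is invertible, so $\Phi^*$ and the dual map $(\cdot)^*_{\Phi^*}$ are well defined by Lemma~\ref{lem:fenchel_duality}(3).

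\textbf{Step 3: pass to Fenchel--Young form.} With $\mu = f_\theta(x)^*_{\Phi^*} = \nabla\Phi^*(f_\theta(x))$, the pair $(\mu, f_\theta(x))$ is a dual pair, $f_\theta(x)=\nabla\Phi(\mu)$. Lemma~\ref{lem:fy_losses}(3) then gives $B_\Phi(y,\mu) = d_\Phi(y, f_\theta(x))$, which is the claim. I would also verify directly via the Fenchel--Young equality $\Phi(\mu)+\Phi^*(\nu)=\langle\mu,\nu\rangle$ in Lemma~\ref{lem:fenchel_duality}(1).

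\textbf{Main obstacle.} Step 1 is the hard part: deducing that $\nabla_\mu L(y,\mu)$ is affine in $y$ with the specific form $M(\mu)(y-\mu)$. The difficulty is that properness is imposed only over the family of target distributions actually available, such as distributions on $\mathcal{Y}$ or on $C$. I would first try to show that the map $y\mapsto \nabla_\mu L(y,\mu)$ has vanishing expectation on every distribution with mean $\mu$. This makes it an affine function vanishing at $y=\mu$, provided the admissible distributions are rich enough that their means span an open set around $\mu$, for example the simplex interior in classification. If the support is too small for this, the statement should be read with $\Phi$ determined only on the convex hull of $\mathcal{Y}$, with $\Phi$ extended outside by the indicator $I_C$, consistent with $\Phi=\Omega+I_C$.
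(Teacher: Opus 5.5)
Your proof follows the same skeleton as the paper's. Properness forces $\nabla_\mu L(y,\mu)=-M(\mu)(y-\mu)$ with $M$ independent of the target. (Strong) convexity gives $M\succ 0$. You then identify $M=\nabla^2\Phi$, integrate from $\mu=y$ where the loss vanishes, and pass to Fenchel--Young form via the dual pair $f_\theta(x)=\nabla\Phi(\mu)$, which is the paper's final ``setting $f_\theta(x)=\nu^*_\Phi$''.

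You are more careful than the paper in two places:
\begin{itemize}
\item Your mean-preserving-distribution argument, which shows $y\mapsto\nabla_\mu L(y,\mu)$ is linear in $y-\mu$, is what actually justifies the paper's bare assertion that ``$g(\xi,\nu)$ must be independent of $\xi$''.
\item Your mixed-partials check licenses the paper's step ``define $\Phi$ with $\nabla^2\Phi=g$'', whose integrability the paper never verifies. In the Hessian of $L(y,\cdot)$, the coefficient of $y$ gives $\partial_i M_{jk}=\partial_j M_{ik}$, and evaluating at $\mu=y$ gives symmetry of $M$.
\end{itemize}

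One claim in your Step 1 is false and should be dropped. The expected loss at the mean, $\mathbb{E}_Q[L(Y,m)]$, does \emph{not} depend on $Q$ only through $m$, and the first-order condition does not imply it. For a Bregman loss it equals $\mathbb{E}_Q[\Phi(Y)]-\Phi(m)$. For squared loss on $\mathbb{R}$, the distributions $\delta_0$ and $\tfrac12(\delta_{-1}+\delta_1)$ both have mean $0$ but Bayes risks $0$ and $\tfrac12$.

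Your Bayes-risk definition of $\Phi$ is therefore ill-posed except when a distribution is determined by its mean. One-hot labels on the simplex vertices are such a case, and there it works up to an affine term. Your first candidate has the wrong sign: since $L(\cdot,\mu_0)$ is convex, it should read $\Phi(y)=L(y,\mu_0)+\text{affine}$.

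Neither definition is needed. Once Step 2 gives $M$ symmetric and curl-free, take $\Phi$ to be any potential of $M$ on the convex domain of predictions (Poincar\'e lemma). Then $L(y,\cdot)-B_\Phi(y,\cdot)$ has zero gradient and vanishes at $\mu=y$, so $L=B_\Phi$, and the rest of your argument goes through unchanged.
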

The proof of this theorem is given in Appendix~\ref{appendix:proof_uniqueness_fenchel}.
 
This result shows that any well-behaved loss function (satisfying strict convexity and properness) is equivalent to a Fenchel--Young loss up to a constant offset $\ell(y,y)$. Consequently, focusing on Fenchel--Young losses entails no loss of generality, providing a rigorous and widely applicable theoretical foundation for loss function choice in machine learning. As a concrete example, the softmax cross entropy loss satisfies:
\begin{equation}
\ell_{\mathrm{CE}}(q_{\mathcal{Y}|x},\, p_{\mathcal{Y}|x}) - \ell_{\mathrm{CE}}(q_{\mathcal{Y}|x},\, q_{\mathcal{Y}|x})
= d_\Phi\!\bigl(q_{\mathcal{Y}|x},\, f_\theta(x)\bigr),
\end{equation}
where $\Phi(p) = -H(p)$ is the negative entropy and $p_{\mathcal{Y}|x} = \mathrm{Softmax}(f_\theta(x)) = f_\theta(x)_{\Phi^*}^*$. Hence, softmax cross entropy arises naturally as the Fenchel--Young loss induced by negative entropy.

\subsection{Key quantities for analysis}
\label{subsec:key_quantities}

\subsubsection{Risk and generalization measures}
We formally define the expected risk, empirical risk, and generalization error within the conjugate learning framework.
\begin{definition}[Risk and Generalization Error]
\label{def:risk}
The expected risk, empirical risk, and generalization error in conjugate learning are defined as
\begin{equation}
\begin{aligned}
\mathcal{R}_{\Phi}(\theta, q) &= \mathbb{E}_{Z \sim q}\bigl[ d_{\Phi}(Y, f_\theta(X)) \bigr], \\
\mathcal{R}_{\Phi}(\theta, \hat{q}) &= \mathcal{R}_{\Phi}(\theta, s) 
= \frac{1}{|s|} \sum_{z \in s} d_\Phi(y, f_\theta(x))
= \mathbb{E}_{Z' \sim \hat{q}}\bigl[ d_{\Phi}(Y', f_\theta(X')) \bigr],\\
\mathrm{gen}(f_\theta, s^n) &= \bigl| \mathcal{R}_\Phi(\theta, q) - \mathcal{R}_\Phi(\theta, \hat{q}) \bigr|,
\end{aligned}
\end{equation}
where $q$ denotes the true data distribution, $\hat{q}$ denotes the empirical distribution induced by the training set $s$, and $s^n$ denotes a training set of size $n$.
\end{definition}

We use both $\mathcal{R}_{\Phi}(\theta, \hat{q})$ and $\mathcal{R}_{\Phi}(\theta, s)$ interchangeably; the latter facilitates analysis of mini-batch SGD.
In the special case $\Phi(y) = \tfrac{1}{2}\|y\|_2^2$, we recover the squared $L_2$ loss:
$$
\mathcal{R}_{L_2^2/2}(\theta, q) = \mathbb{E}_{Z \sim q} \|Y - f_\theta(X)\|_2^2.
$$
For a single sample $z = (x, y)$, we denote the per-sample loss by $\mathcal{R}_\Phi(\theta, z) := d_\Phi(y, f_\theta(x))$.

The range of the loss function varies considerably with the choice of $\Phi$. To enable a fair comparison of model fitting performance across different loss functions, we define the standardized risk based on the mean squared error between the target and the dual prediction:
\begin{definition}[Standardized Expected and Empirical Risk]
\label{def:standardized_risk}
The standardized expected risk and empirical risk in conjugate learning are defined, respectively, as:
\begin{equation}
\begin{aligned}
\mathcal{R}^{\circ}_{\Phi}(\theta,q) &= \mathbb{E}_{Z\sim q}\, \|Y - f_\theta(X)_{\Phi^*}^*\|_2^2,\\
\mathcal{R}^{\circ}_{\Phi}(\theta,\hat{q}) &= \mathcal{R}^{\circ}_{\Phi}(\theta,s) = \mathbb{E}_{Z'\sim \hat{q}}\, \|Y - f_\theta(X)_{\Phi^*}^*\|_2^2.
\end{aligned}
\end{equation}
\end{definition}

To relate the standardized risk to the general Fenchel--Young loss risk, we first define the extremal eigenvalues of the Hessian at both the sample and dataset levels.
\begin{definition}[Extremal Eigenvalues of the Model-Induced Hessian]
\label{def:ex_eigen_fun}
We define the extremal eigenvalues of the model-induced Hessian as
\[
\lambda_{\min}(H_\Phi(\theta)) := \inf_{x \in \mathcal{X}} \lambda_{\min}\!\bigl( \nabla^2 \Phi(f_\theta(x)_{\Phi^*}^*) \bigr), \quad
\lambda_{\max}(H_\Phi(\theta)) := \sup_{x \in \mathcal{X}} \lambda_{\max}\!\bigl( \nabla^2 \Phi(f_\theta(x)_{\Phi^*}^*) \bigr).
\]
Equivalently, for all $x \in \mathcal{X}$,
\begin{equation}
   \lambda_{\min}(H_\Phi(\theta)) \, I \;\preceq\; \nabla^2 \Phi\bigl(f_\theta(x)_{\Phi^*}^*\bigr) \;\preceq\; \lambda_{\max}(H_\Phi(\theta)) \, I.
\end{equation}
These quantities characterize the global lower and upper bounds on the local curvature of $\Phi$ over the set of predictions generated by the model $f_\theta$ on the input domain $\mathcal{X}$. We assume throughout that $0 < \lambda_{\min}(H_\Phi(\theta)) \leq \lambda_{\max}(H_\Phi(\theta)) < \infty$, which holds for standard losses (e.g., squared error, logistic loss) on compact domains.
\end{definition}

The values $\lambda_{\min}(H_\Phi(\theta))$ and $\lambda_{\max}(H_\Phi(\theta))$ depend implicitly on the model parameters $\theta$ through the induced predictions $f_\theta(x)_{\Phi^*}^* $. For instance, when $\Phi$ is the negative Shannon entropy, i.e., $\Phi(p) = \sum_i p_i \log p_i$ defined on the probability simplex, the Hessian $\nabla^2 \Phi(p)$ is diagonal with entries $1/p_i$. In this case, the eigenvalues of $\nabla^2 \Phi(p)$ are precisely $\{1/p_i\}_{i=1}^k$, so the extremal eigenvalues over a dataset correspond to the reciprocals of the smallest and largest predicted probabilities across all classes and samples.
More generally, if the Hessian $\nabla^2 \Phi$ has uniformly bounded eigenvalues over its entire domain (e.g., as in the squared loss, where $\nabla^2 \Phi = I$), then one may take $\lambda_{\min}(H_\Phi(\theta))$ and $\lambda_{\max}(H_\Phi(\theta))$ to be these global constants, independent of $\theta$. Such a simplification does not affect the validity of our theoretical conclusions, as our bounds only require the existence of finite, positive extremal curvature constants, whether derived from the model’s current state or from global properties of $\Phi$.

Since the general Fenchel--Young risk satisfies $\mathcal{R}_\Phi(\theta, z) = d_\Phi(y, f_\theta(x)) = B_\Phi\bigl(y, f_\theta(x)_{\Phi^*}^*\bigr)$, by the integral form of Taylor’s theorem (or the mean-value inequality for strongly convex functions), the Bregman divergence admits the quadratic bound
\begin{equation}
\lambda_{\min}(H_\Phi(\theta)) \, \mathcal{R}^{\circ}_\Phi(\theta, z)
\;\leq\;
\mathcal{R}_\Phi(\theta, z)
\;\leq\;
\lambda_{\max}(H_\Phi(\theta)) \, \mathcal{R}^{\circ}_\Phi(\theta, z).
\end{equation}
Consequently, $\mathcal{R}_\Phi(\theta, z)$ and $\mathcal{R}^{\circ}_\Phi(\theta, z)$ are equivalent up to constant factors depending only on the geometry of $\Phi$ and the current model state $\theta$. This equivalence implies that minimizing one objective also minimizes the other. However, unlike $\mathcal{R}_\Phi$, whose scale is inherently tied to the choice of $\Phi$ (e.g., cross entropy vs. squared loss), the standardized risk $\mathcal{R}^{\circ}_\Phi$ provides a \emph{scale-invariant} measure of prediction error in the primal space. This property makes $\mathcal{R}^{\circ}_\Phi$ particularly useful for comparing optimization dynamics and generalization behavior across different loss functions within a unified framework.

We define the loss upper bound, which quantifies the maximum Fenchel--Young loss incurred by the model over all inputs and targets:
\begin{definition}[Loss Upper Bound]
\label{def:loss_uppper_bound}
The loss upper bound of a model $f_\theta$ is defined as
\begin{equation}
\gamma_\Phi(\theta) = \max_{x \in \mathcal{X},\, y \in \mathcal{Y}} d_\Phi\bigl(y,\, f_\theta(x)\bigr).
\end{equation}
When $ \Phi(p) = \sum_i p_i \log p_i $ (i.e., the negative Shannon entropy) and the task is classification, it follows from the definition that
$$
\gamma_\Phi(\theta) = \max_{x \in \mathcal{X},\, y \in \mathcal{Y}} D_{\mathrm{KL}}\bigl( \mathbf{1}_y \,\|\, p_{\mathcal{Y} | x} \bigr),
$$
where $ p_{\mathcal{Y} | x} = \mathrm{Softmax}(f_\theta(x)) $.
Consequently,
$$
\gamma_\Phi(\theta) = \log \frac{1}{p_{\min}}, \quad \text{with} \quad p_{\min} = \min_{x \in \mathcal{X},\, y \in \mathcal{Y}} p_{\mathcal{Y} | x}(y).
$$

\end{definition}

\subsubsection{Gradient-based quantities}
We define gradient-based quantities that characterize the optimization dynamics of the conjugate learning framework:
\begin{definition}[Gradient Energy]
\label{def:gradient_energy}
For a loss function \(\mathcal{R}_\Phi(\theta, z)\) parameterized by \(\theta\) and a data distribution \(q_{\mathcal{Z}}\), the gradient energy is defined as the expected squared Euclidean norm of the gradient:
\begin{equation}
\mathbb{E}_{Z \sim q_{\mathcal{Z}}} \bigl[ \bigl\| \nabla_\theta \mathcal{R}_\Phi(\theta, Z) \bigr\|_2^2 \bigr].
\end{equation}
For a single sample \(z = (x, y)\), the per-sample gradient energy is given by \(\bigl\| \nabla_\theta \mathcal{R}_\Phi(\theta, z) \bigr\|_2^2\).
\end{definition}

\begin{definition}[Structure Matrix]
\label{def:structure_matrix}
We define the structure matrix associated with the model \(f_\theta\) and input \(x\) as:
\begin{equation}
A_x := \nabla_\theta f_\theta(x) \nabla_\theta f_\theta(x)^\top.
\end{equation}
To characterize the spectral range of these matrices over a dataset \(s\), we introduce the shorthand notation
\begin{equation}
\begin{aligned}
&\lambda_{\min}(A_s) = \min_{(x,y) \in s} \lambda_{\min}(A_x), \\
&\lambda_{\max}(A_s) = \max_{(x,y) \in s} \lambda_{\max}(A_x),
\end{aligned}
\end{equation}
where \(\lambda_{\min}(A_s)\) and \(\lambda_{\max}(A_s)\) denote the smallest and largest eigenvalues observed across all sample-wise structure matrices in \(s\).
\end{definition}

\subsubsection{Information-theoretic quantities}
\label{subsubsec:info_quantities}

\paragraph{Generalized entropy and classical connections}

To connect the generalized entropy (Equation~\ref{def:gen_entropy}) and the generalized conditional entropy (Equation~\ref{def:gen_cond_entropy}) to classical information theory, consider classification tasks where labels are represented as one-hot vectors: for $y \in \{1,\dots,K\}$, let $\mathbf{1}_y \in \Delta^{K-1}$ denote the corresponding vertex of the probability simplex. Then $Y$ may be identified with $\mathbf{1}_Y$, and
\[
\bar{\mathbf{1}}_Y | x=\mathbb{E}[\mathbf{1}_Y| X = x] = q_{\mathcal{Y}|x},
\]
the conditional class distribution.
Under this identification, the generalized conditional entropy becomes
\begin{equation}
\operatorname{Ent}_\Phi(\mathbf{1}_Y|X)
= \mathbb{E}_{X}\Bigl[ \mathbb{E}_{Y|X}[\Phi(\mathbf{1}_Y)] - \Phi\bigl(q_{\mathcal{Y}|X}\bigr) \Bigr].
\end{equation}

When $\Phi(p) = \sum_{i=1}^K p_i \log p_i$ (the negative Shannon entropy), the following identities hold:

\begin{itemize}
    \item For any one-hot vector $\mathbf{1}_y$, we have $\Phi(\mathbf{1}_y) = 0$. Therefore,
    \[
    \mathbb{E}_{Y|X}[\Phi(\mathbf{1}_Y)] = 0.
    \]

    \item The generalized conditional entropy of the one-hot label $\mathbf{1}_Y$ given $X$ becomes
    \begin{equation}
    \begin{aligned}
    \operatorname{Ent}_\Phi(\mathbf{1}_Y | X)
    &= \mathbb{E}_{X}\!\left[ \mathbb{E}_{Y|X}[\Phi(\mathbf{1}_Y)] - \Phi\bigl( \mathbb{E}[\mathbf{1}_Y | X] \bigr) \right] \\
    &= -\,\mathbb{E}_{X}\!\left[ \Phi(q_{\mathcal{Y}|X}) \right] \\
    &= H(Y | X),
    \end{aligned}
    \end{equation}
    since $\Phi(q_{\mathcal{Y}|x}) = \sum_{y} q_{\mathcal{Y}|x}(y) \log q_{\mathcal{Y}|x}(y) = -H(Y | x)$.

    \item The Shannon mutual information arises as a generalized entropy of $q_{\mathcal{Y}|X}$:
    \begin{equation}
    \begin{aligned}
    \operatorname{Ent}_\Phi(q_{\mathcal{Y}|X}) 
    &= -\sum_{x\in\mathcal{X}} q_{\mathcal{X}}(x) H(q_{\mathcal{Y}|x}) - \bigl( -H\bigl(\sum_{x\in\mathcal{X}} q_{\mathcal{X}}(x) q_{\mathcal{Y}|x}\bigr) \bigr) \\
    &= H(Y) - H(Y | X) \\
    &= I(Y; X).
    \end{aligned}
    \end{equation}
\end{itemize}

\paragraph{Information loss}
We define two measures of information loss induced by a feature transformation $g(X)$, which quantify the reduction in predictive information about the target $Y$.
\begin{definition}[Absolute Information Loss]
\label{def:ab_info_loss}
Let $W = g(X)$ with finite support $\mathcal{W}$. The \emph{absolute information loss} of $g$ is
\[
\mathcal{L}(g(X))) := |\mathcal{X}| - |\mathcal{W}|.
\]
This quantifies the reduction in support size due to non-injectivity of $g$.
\end{definition}
The absolute information loss quantifies the reduction in the number of support points of the input random variable due to the non-invertibility of \(g\).

\begin{definition}[Relative Information Loss]
The quantity
\begin{equation}
\mathcal{L}_\Phi(Y|g(X))=\mathbb{E}_X \big[ B_\Phi(\bar{Y}|X,\, \bar{Y}|g(X)) \big]
\end{equation}
is defined as the \emph{relative information loss} induced by the function $g$, where $B_\Phi$ denotes the Bregman divergence associated with a strictly convex function $\Phi$. Equivalently, using the Fenchel--Young loss representation, it can be expressed as
\begin{equation}
\mathcal{L}_\Phi(Y|g(X))=\mathbb{E}_X \big[ d_\Phi(\bar{Y}|X,\, (\bar{Y}|g(X))_{\Phi}^*) \big],
\end{equation}
where $(\cdot)_{\Phi}^*$ denotes the dual mapping induced by $\Phi$.
\end{definition}
This quantity measures the expected discrepancy between the conditional distribution of $Y$ given $X$ and that given $g(X)$. It vanishes if and only if $\bar{Y} | g(X) = \bar{Y} | X$ almost surely, i.e., when $g(X)$ preserves all information relevant to predicting the conditional mean of $Y$.
In particular:
\begin{itemize}
    \item If $g$ is invertible, then $\bar{Y} | g(X) = \bar{Y} | X$, so $\mathcal{L}_\Phi(Y | g(X)) = 0$.
    \item If $g$ is constant (i.e., $g(X) = w_0$ almost surely), then $\bar{Y} | g(X) = \mathbb{E}[Y] =: \bar{Y}$, and
    \[
    \mathcal{L}_\Phi(Y | g(X)) 
    = \mathbb{E}_{X}\!\left[ B_\Phi\bigl( \bar{Y} | X,\; \bar{Y} \bigr) \right]
    = \operatorname{Ent}_\Phi\bigl( \bar{Y} | X \bigr),
    \]
    where $\operatorname{Ent}_\Phi(\bar{Y} | X) := \mathbb{E}_X[\Phi(\bar{Y} | X)] - \Phi(\bar{Y})$ is the generalized entropy of the random variable $\bar{Y} | X$.

\end{itemize}

In classification tasks, labels are represented as one-hot vectors $\mathbf{1}_Y \in \Delta^{K-1}$. In this case, we have
\[
\bar{\mathbf{1}}_Y | X = q_{\mathcal{Y} | X}, \quad
\bar{\mathbf{1}}_Y | g(X) = q_{\mathcal{Y} | g(X)}.
\]
The relative information loss becomes
\begin{equation}
\mathcal{L}_\Phi(\mathbf{1}_Y | g(X)) 
= \mathbb{E}_{X}\!\left[ B_\Phi\bigl( q_{\mathcal{Y} | X},\; q_{\mathcal{Y} | g(X)} \bigr) \right].
\end{equation}

When $\Phi(p) = \sum_{i=1}^K p_i \log p_i$ (the negative Shannon entropy), the Bregman divergence $B_\Phi$ coincides with the Kullback–Leibler divergence:
\[
B_\Phi(p, q) = \mathrm{KL}(p \,\|\, q).
\]
If $g$ is constant, then $q_{\mathcal{Y} | g(X)} = q_{\mathcal{Y}} = \mathbb{E}_X[q_{\mathcal{Y} | X}]$, and thus
\begin{equation}
\mathcal{L}_\Phi(\mathbf{1}_Y | g(X)) 
= \mathbb{E}_{X}\bigl[ \mathrm{KL}(q_{\mathcal{Y} | X} \,\|\, q_Y) \bigr] 
= I(Y; X),
\end{equation}
the Shannon mutual information between $Y$ and $X$.
 
Importantly, relative information loss differs fundamentally from absolute information loss (Definition~\ref{def:ab_info_loss}). While any non-injective $g$ incurs positive absolute loss (due to reduced support size), it may induce zero relative loss if it merges only inputs that yield identical conditional means. For example, suppose $x_1 \ne x_2$ but $\bar{Y} | x_1 = \bar{Y} | x_2$, and define $g$ such that $g(x_1) = g(x_2)$ while injective elsewhere. Then $\mathcal{L}_\Phi(Y | g(X)) = 0$, even though $g$ is non-invertible. Notably, relative information loss and absolute information loss are expected to play a role in generalization analysis, providing new perspectives to address existing limitations.

\section{Trainability in conjugate learning}
\label{sec:opt}

This section presents a systematic theoretical analysis of non-convex optimization within the conjugate learning framework, establishing rigorous foundations for the trainability of DNNs. Our analysis proceeds in four logically interconnected steps, each building on the preceding results to form a complete characterization of trainability:
\begin{itemize}
    \item Subsection~\ref{subsec:risk_bounds} derives tight bounds linking the empirical risk to gradient energy and the extremal eigenvalues of the structure matrix, revealing the core mechanism that enables effective optimization of non-convex objectives.
    \item Subsection~\ref{subsec:gradient_optimization} analyzes how mini-batch SGD minimizes gradient energy, with a focus on the role of batch size, learning rate, and model architecture in convergence behavior.
    \item Subsection~\ref{subsec:structure_control} investigates how architectural choices (e.g., skip connections, network depth, parameter scaling) modulate the spectral properties of the structure matrix, addressing the challenge of maintaining well-conditioned matrices during training.
    \item Subsection~\ref{subsec:fundamental_limits} establishes data-determined lower bounds on the achievable empirical risk, quantifying the fundamental limits of trainability that are intrinsic to the dataset rather than the model or optimization algorithm.
\end{itemize}
Collectively, these results demonstrate that effective non-convex optimization in conjugate learning arises from the joint minimization of gradient energy and the control of the structure matrix's spectral properties.

\subsection{Empirical risk bounds}
\label{subsec:risk_bounds}

We first establish theoretical bounds that connect the empirical risk (both standardized and unstandardized) to the gradient energy and the extremal eigenvalues of the structure matrix. These bounds form the cornerstone of our trainability analysis, as they reveal how gradient-based optimization directly translates to empirical risk reduction in the conjugate learning framework.
 
\begin{theorem}
    \label{thm:base_bound}
Let $Z\sim \hat{q}$. For empirical risk minimization, if $\lambda_{\min}(A_s) \neq 0$, we have
\begin{enumerate}
    \item For the standardized empirical risk,
\begin{equation}
\begin{aligned}
     \frac{1}{\lambda_{\max}(A_x)}\|\nabla_\theta \mathcal{R}_\Phi(\theta, z)\|_2^2
     &\le \mathcal{R}_\Phi^{\circ}(\theta, z)
     \le  \frac{1}{\lambda_{\min}(A_x)}\|\nabla_\theta \mathcal{R}_\Phi(\theta, z)\|_2^2, \\[4pt]
    \frac{1}{\lambda_{\max}(A_s)} \mathbb{E}_Z\|\nabla_\theta \mathcal{R}_\Phi(\theta, Z)\|_2^2
    &\le \mathcal{R}^{\circ}_\Phi(\theta, s)
    \le \frac{1}{\lambda_{\min}(A_s)} \mathbb{E}_Z\|\nabla_\theta \mathcal{R}_\Phi(\theta, Z)\|_2^2,
\end{aligned}
\end{equation}

    \item For the empirical risk,
\begin{equation}
\begin{aligned}
     \frac{\lambda_{\min}(H_\Phi(\theta))}{\lambda_{\max}(A_x)}\|\nabla_\theta \mathcal{R}_\Phi(\theta, z)\|_2^2
     &\le \mathcal{R}_\Phi(\theta, z)
     \le  \frac{\lambda_{\max}(H_\Phi(\theta))}{\lambda_{\min}(A_x)}\|\nabla_\theta \mathcal{R}_\Phi(\theta, z)\|_2^2, \\[4pt]
    \frac{\lambda_{\min}(H_\Phi(\theta))}{\lambda_{\max}(A_s)} \mathbb{E}_Z\|\nabla_\theta \mathcal{R}_\Phi(\theta, Z)\|_2^2
    &\le \mathcal{R}_\Phi(\theta, s)
    \le \frac{\lambda_{\max}(H_\Phi(\theta))}{\lambda_{\min}(A_s)} \mathbb{E}_Z\|\nabla_\theta \mathcal{R}_\Phi(\theta, Z)\|_2^2.
\end{aligned}
\end{equation}
\end{enumerate}
\end{theorem}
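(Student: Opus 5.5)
The argument is a chain-rule computation followed by a Rayleigh-quotient bound. Fix a sample $z=(x,y)$ and write $J_x := \nabla_\theta f_\theta(x)$, so that $A_x = J_x J_x^\top$. We orient $J_x$ as the $d\times p$ matrix, with $d$ the output dimension and $p$ the number of parameters. Then $A_x$ is $d\times d$, and the hypothesis $\lambda_{\min}(A_s)\neq 0$ says exactly that every $J_x$ has full row rank.

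\textbf{Step 1: the per-sample gradient.} By Lemma~\ref{lem:fy_losses}(2), $\nabla_\nu d_\Phi(y,\nu) = \nu^*_{\Phi^*} - y$. Evaluating at $\nu = f_\theta(x)$ and applying the chain rule gives
\[
\nabla_\theta \mathcal{R}_\Phi(\theta,z) = J_x^\top\bigl(f_\theta(x)^*_{\Phi^*} - y\bigr) =: J_x^\top r_z .
\]
Therefore
\[
\|\nabla_\theta \mathcal{R}_\Phi(\theta,z)\|_2^2 = r_z^\top J_x J_x^\top r_z = r_z^\top A_x r_z .
\]

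\textbf{Step 2: the per-sample standardized bound.} Since $A_x$ is symmetric positive semidefinite, the Rayleigh quotient gives
\[
\lambda_{\min}(A_x)\,\|r_z\|_2^2 \le r_z^\top A_x r_z \le \lambda_{\max}(A_x)\,\|r_z\|_2^2 .
\]
Here $\|r_z\|_2^2 = \mathcal{R}^\circ_\Phi(\theta,z)$ by Definition~\ref{def:standardized_risk}. Dividing by the positive eigenvalues yields the first display of part 1.

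\textbf{Step 3: the dataset-level standardized bound.} Replace $\lambda_{\min}(A_x)$ by the smaller quantity $\lambda_{\min}(A_s)$ and $\lambda_{\max}(A_x)$ by the larger $\lambda_{\max}(A_s)$; this only weakens the per-sample inequalities, so they remain valid. Then take $\mathbb{E}_{Z\sim\hat q}$ of each side. This is legitimate because every $x$ appearing under $\hat q$ lies in $s$, so the sample-wise bounds hold throughout the support.

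\textbf{Step 4: part 2.} Combine part 1 with the Bregman sandwich established just before Definition~\ref{def:loss_uppper_bound}:
\[
\lambda_{\min}(H_\Phi(\theta))\,\mathcal{R}^\circ_\Phi(\theta,z) \le \mathcal{R}_\Phi(\theta,z) \le \lambda_{\max}(H_\Phi(\theta))\,\mathcal{R}^\circ_\Phi(\theta,z).
\]
Multiply the lower bound of part 1 by $\lambda_{\min}(H_\Phi)$ and the upper bound by $\lambda_{\max}(H_\Phi)$, then average over $\hat q$ as in Step 3. The dataset versions follow identically.

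\textbf{Main obstacle.} The delicate point is justifying the sandwich in Step 4. Both $y$ and the prediction $f_\theta(x)^*_{\Phi^*}$ lie in $\mathrm{dom}(\Phi)$. The sandwich relies on the integral form of Taylor's theorem, so it needs curvature bounds along the whole segment joining them, whereas Definition~\ref{def:ex_eigen_fun} only bounds the Hessian at the predictions. I would invoke the bound exactly as the paper states it, noting that it implicitly requires the eigenvalue bounds to hold on the convex hull of targets and predictions; in the squared-loss case $\nabla^2\Phi = I$ and the issue disappears.

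A smaller point in Step 1 is that the gradient must be taken with respect to the output $f_\theta(x)$, not the dual prediction, so that the residual $r_z$ lives in primal coordinates. That is exactly what Lemma~\ref{lem:fy_losses}(2) provides.
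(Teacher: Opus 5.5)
Your Steps 1--3 match the paper's argument: the chain rule with Lemma~\ref{lem:fy_losses}(2), Courant--Fischer on $A_x$, then weakening to $\lambda_{\min}(A_s),\lambda_{\max}(A_s)$ and averaging over $\hat q$. Part 1 is correct. The gap is in Step 4, and it is more basic than the segment-versus-endpoint issue you flag: the lower half of the quadratic sandwich is off by a factor of $2$. The integral form of Taylor's theorem reads
\[
B_\Phi(y,p)=\int_0^1 (1-t)\,(y-p)^\top \nabla^2\Phi\bigl(p+t(y-p)\bigr)(y-p)\,dt,\qquad \int_0^1(1-t)\,dt=\tfrac12 .
\]
So Hessian bounds $\lambda_{\min} I\preceq\nabla^2\Phi\preceq\lambda_{\max} I$ on the segment give only $\tfrac12\lambda_{\min}\,\mathcal{R}^{\circ}_\Phi(\theta,z)\le\mathcal{R}_\Phi(\theta,z)\le\tfrac12\lambda_{\max}\,\mathcal{R}^{\circ}_\Phi(\theta,z)$. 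The squared loss, which you cite as the case where nothing can go wrong, is in fact the cleanest counterexample. There $\nabla^2\Phi=I$ but $\mathcal{R}_\Phi=\tfrac12\mathcal{R}^{\circ}_\Phi$, so the lower half of the sandwich says $\mathcal{R}^{\circ}_\Phi\le\tfrac12\mathcal{R}^{\circ}_\Phi$. Consequently the lower bounds in part 2 are false as stated. Take one output, one parameter and $f_\theta(x)=\theta$. Then $A_x=1$, so $\lambda_{\min}(A_s)\neq 0$ holds, and $\|\nabla_\theta\mathcal{R}_\Phi(\theta,z)\|_2^2=(\theta-y)^2$. The claimed inequality becomes $(\theta-y)^2\le\tfrac12(\theta-y)^2$.

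The paper's appendix proof uses the same quadratic bound, so you inherited this error rather than introduced it. Note that the paper's MSE corollary, which is correct, carries factors $\tfrac12$ that do not follow from the theorem as stated. Your other caveat is also a real, independent gap. Definition~\ref{def:ex_eigen_fun} controls $\nabla^2\Phi$ only at the predictions, and that is not enough for either half. For example, take $\Phi(t)=t^4+\tfrac{\epsilon}{2}t^2$ with $0<\epsilon<2$, prediction $p=0$ and target $y=1$. Then $B_\Phi(1,0)=1+\tfrac{\epsilon}{2}>\epsilon=\Phi''(0)\,|y-p|^2$, which breaks the upper half. Swapping $p$ and $y$ gives $B_\Phi(0,1)=3+\tfrac{\epsilon}{2}<\tfrac12\Phi''(1)$, which breaks even the halved lower half. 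Suppose $\lambda_{\min}(H_\Phi(\theta))$ and $\lambda_{\max}(H_\Phi(\theta))$ are reinterpreted as curvature bounds along the segments from $f_\theta(x)^*_{\Phi^*}$ to $y$. Then your argument proves part 1 as stated, and part 2 with both lower bounds halved. The upper bounds of part 2 then hold as written, with a spare factor $2$. For negative entropy with one-hot targets, however, the segment-wise upper curvature is $+\infty$. There the upper bound needs a direct estimate such as Lemma~\ref{lem:kl_upper_bound} instead.
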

The proof of Theorem~\ref{thm:base_bound} is provided in Appendix~\ref{appendix:proof_base_bound}. 
This result establishes that the empirical risk is "sandwiched" between scalar multiples of the gradient energy, where the scaling factors depend exclusively on the extremal eigenvalues of the structure matrix $A_s$. A key implication is that\textit{ as long as these eigenvalues remain uniformly bounded, minimizing the gradient energy necessarily reduces the empirical risk, even for non-convex objectives typical of DNNs.}

The mapping between gradient energy and empirical risk is illustrated in Figure~\ref{fig:mapping}. Under the assumption that the structure matrix $A_s$ is positive definite (i.e., $\lambda_{\min}(A_s) > 0$), a point with zero gradient energy corresponds exactly to a global minimum of the empirical risk. This provides the theoretical foundation for our core claim about DNN trainability in conjugate learning: under controlled structural geometry (well-conditioned $A_s$), non-convex optimization via gradient descent is provably effective at reducing empirical risk to near-optimal levels.

\begin{figure}[ht]
\centering
\includegraphics[width=0.8\columnwidth]{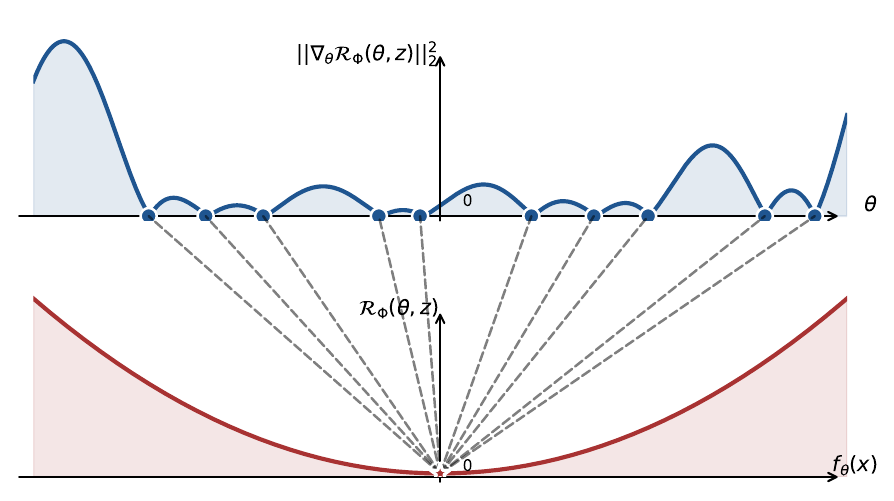}
\caption{Schematic illustration of the mapping between gradient energy and the global optimum of the empirical risk. The top subplot shows the gradient energy as a function of the model parameters $\theta$, with blue dots indicating points where the gradient energy is zero. The bottom subplot depicts the corresponding empirical risk landscape, with its global minimum marked by a pentagram. For positive definite structure matrices, zero gradient energy points coincide with the global minimum of the empirical risk, validating gradient energy minimization as a proxy for empirical risk reduction.}
\label{fig:mapping}
\end{figure}
\FloatBarrier

To ground these general bounds in practical settings, we specialize them to two widely used loss functions in machine learning: MSE and softmax cross entropy.
\begin{corollary}
\label{cor:kl_mse_bound}
Let $Z \sim \hat{q}$.
\begin{enumerate}
  \item For the MSE loss with $\Phi(y) = \frac{1}{2}\|y\|_2^2$,
  \begin{equation}
    \frac{\mathbb{E}_Z \bigl[ \|\nabla_\theta \mathcal{R}_\Phi(\theta, Z)\|_2^2 \bigr]}{2\lambda_{\max}(A_s)}
    \leq \mathcal{R}_\Phi(\theta, s)
    \leq \frac{\mathbb{E}_Z \bigl[ \|\nabla_\theta \mathcal{R}_\Phi(\theta, Z)\|_2^2 \bigr]}{2\lambda_{\min}(A_s)}.
  \end{equation}

  \item For the softmax cross entropy loss with $\Phi(q) = -H(q)$ where $q \in \Delta$,
  \begin{equation}
\frac{\mathbb{E}_Z \bigl[ \|\nabla_\theta \mathcal{R}_\Phi(\theta, Z)\|_2^2 \bigr]}{2\ln 2 \cdot \lambda_{\max}(A_s)}
\leq \mathcal{R}_\Phi(\theta, s)
\leq \frac{\mathbb{E}_Z \bigl[ \|\nabla_\theta \mathcal{R}_\Phi(\theta, Z)\|_2^2 \bigr]}{\min_{i} p_i \cdot \lambda_{\min}(A_s)},
  \end{equation}
  where $p = f_\theta(x)_{\Phi^*}^*$ is the softmax output and $y$ is a one-hot label.
\end{enumerate}
\end{corollary}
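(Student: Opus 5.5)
The corollary specializes Theorem~\ref{thm:base_bound}. The plan is to obtain a per-sample sandwich relating $\mathcal{R}_\Phi(\theta,z)$ to $\|\nabla_\theta\mathcal{R}_\Phi(\theta,z)\|_2^2$ and then average over $Z\sim\hat q$. The common ingredient is the chain rule with Lemma~\ref{lem:fy_losses}(2):
\[
\nabla_\theta \mathcal{R}_\Phi(\theta,z)=\nabla_\theta f_\theta(x)^\top\bigl(f_\theta(x)^*_{\Phi^*}-y\bigr).
\]
This gives
\[
\|\nabla_\theta \mathcal{R}_\Phi(\theta,z)\|_2^2=r^\top A_x r, \qquad r:=f_\theta(x)^*_{\Phi^*}-y,
\]
so $\lambda_{\min}(A_x)\|r\|_2^2\le\|\nabla_\theta\mathcal{R}_\Phi\|_2^2\le\lambda_{\max}(A_x)\|r\|_2^2$. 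Here $\|r\|_2^2=\mathcal{R}^\circ_\Phi(\theta,z)$, which is part~1 of Theorem~\ref{thm:base_bound}. The only remaining work is to compare $\mathcal{R}_\Phi(\theta,z)=B_\Phi(y,f_\theta(x)^*_{\Phi^*})$ with $\|r\|_2^2$ with the sharp constants for each loss, then replace $A_x$ by $A_s$ and take expectations.

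\textbf{MSE.} For $\Phi=\frac12\|\cdot\|_2^2$ the dual map is the identity, so $d_\Phi(y,f)=\frac12\|y-f\|_2^2$ exactly. Hence $\mathcal{R}_\Phi=\frac12\|r\|_2^2$, and dividing the sandwich above by $2\lambda_{\max}(A_x)$ and $2\lambda_{\min}(A_x)$ gives the per-sample bounds. Bounding $\lambda_{\min}(A_s)\le\lambda_{\min}(A_x)$ and $\lambda_{\max}(A_x)\le\lambda_{\max}(A_s)$ and averaging gives item~1. This is also the specialization $\lambda_{\min}(H_\Phi)=\lambda_{\max}(H_\Phi)=1$ of part~2, combined with the factor $\tfrac12$ coming from the exact quadratic.

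\textbf{Cross entropy.} Here $\mathcal{R}_\Phi(\theta,z)=D_{\mathrm{KL}}(\mathbf 1_y\|p)$ with $p$ the softmax output.
\begin{itemize}
\item For the lower bound, apply Pinsker (Lemma~\ref{lem:pinsker}): $D_{\mathrm{KL}}\ge\frac{1}{2\ln2}\|\mathbf 1_y-p\|_1^2\ge\frac{1}{2\ln2}\|\mathbf 1_y-p\|_2^2$, using $\|\cdot\|_1\ge\|\cdot\|_2$. Combined with $\|r\|_2^2\ge\|\nabla_\theta\mathcal{R}_\Phi\|_2^2/\lambda_{\max}(A_s)$ and averaged, this gives the left inequality.
\item For the upper bound, apply Lemma~\ref{lem:kl_upper_bound} with $q=p$: $D_{\mathrm{KL}}(\mathbf 1_y\|p)\le\|\mathbf 1_y-p\|_2^2/\min_i p_i$. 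Combined with $\|r\|_2^2\le\|\nabla_\theta\mathcal{R}_\Phi\|_2^2/\lambda_{\min}(A_s)$ and averaged, this gives the right inequality, where $\min_i p_i$ is read as the minimum over all samples so that it can be pulled out of the expectation.
\end{itemize}

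\textbf{Main obstacle.} The delicate step is the constant bookkeeping in the cross-entropy case. The natural log versus $\log_2$ convention in Pinsker determines whether $2\ln2$ appears. The parametrization also matters: softmax is invariant along the all-ones direction, so I would keep $A_x$ defined on logits exactly as in Definition~\ref{def:structure_matrix} and use the chain-rule identity above directly, which avoids any projection. I would also state explicitly that $\min_i p_i$ is taken uniformly over the dataset, and that $\lambda_{\min}(A_s)\neq0$ as assumed in Theorem~\ref{thm:base_bound}.
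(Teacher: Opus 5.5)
Your proposal is correct and follows the paper's proof essentially step for step. You use the chain-rule identity $\|\nabla_\theta\mathcal{R}_\Phi(\theta,z)\|_2^2=r^\top A_x r$ with the extremal-eigenvalue sandwich, the exact identity $d_\Phi(y,f_\theta(x))=\tfrac12\|r\|_2^2$ for MSE, and, for cross entropy, Pinsker plus $\|\cdot\|_1\ge\|\cdot\|_2$ for the lower bound and Lemma~\ref{lem:kl_upper_bound} for the upper bound, followed by replacing $A_x$ with $A_s$ and averaging.

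On the log-base issue you flag: the intermediate step $D_{\mathrm{KL}}\ge\tfrac{1}{2\ln 2}\|\mathbf{1}_y-p\|_1^2$ is the base-2 form of Pinsker. With natural logarithms the conclusion still holds, because for a one-hot target $\|\mathbf{1}_y-p\|_2^2\le\tfrac12\|\mathbf{1}_y-p\|_1^2$, so $D_{\mathrm{KL}}(\mathbf{1}_y\|p)\ge\tfrac12\|\mathbf{1}_y-p\|_1^2\ge\|\mathbf{1}_y-p\|_2^2\ge\tfrac{1}{2\ln 2}\|\mathbf{1}_y-p\|_2^2$. The upper bound $-\ln p_y\le 1/p_y-1\le\|\mathbf{1}_y-p\|_2^2/\min_i p_i$ is itself a natural-log ($\chi^2$) bound.
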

The proof of Corollary~\ref{cor:kl_mse_bound} is provided in Appendix~\ref{appendix:proof_kl_mse_bound}. 
For softmax cross entropy, the upper bound of $\mathcal{R}_\Phi(\theta, s)$ is inversely proportional to the minimum predicted probability $p_{\min}$: a larger $p_{\min}$ reduces the gap between the upper bound and the actual empirical risk, meaning gradient energy minimization translates more directly to risk reduction.

Collectively, Theorem~\ref{thm:base_bound} and Corollary~\ref{cor:kl_mse_bound} demonstrate that the empirical risk is tightly controlled by the gradient energy, up to scalar constants determined by the extremal eigenvalues of the structure matrix $A_s$. A critical conclusion for practice is that minimizing the gradient energy serves as a valid proxy for empirical risk minimization in conjugate learning, provided the structure matrix remains well-conditioned (i.e., its condition number $\lambda_{\max}(A_s)/\lambda_{\min}(A_s)$ is bounded by a moderate constant).

\subsection{Gradient energy minimization via SGD}
\label{subsec:gradient_optimization}

Since gradient energy provides a tight upper bound on the empirical risk (Theorem~\ref{thm:base_bound}), we now analyze how mini-batch SGD, the workhorse optimization algorithm for DNNs, minimizes gradient energy. Unlike classical analyses that focus on the stationarity of the empirical risk (i.e., the norm of the full-batch gradient $\|\nabla_\theta \mathcal{R}_\Phi(\theta,s)\|$), our analysis centers on the gradient energy $\mathbb{E}_{Z} \bigl[ \bigl\| \nabla_\theta \mathcal{R}_\Phi(\theta, Z) \bigr\|_2^2 \bigr]$, which captures per-sample gradient behavior and is more directly linked to empirical risk in conjugate learning.

We first formalize the mini-batch SGD update rule for conjugate learning:
\begin{equation}\label{def:basic-sgd}
    \theta_{k+1} = \theta_k - \alpha \nabla_\theta \mathcal{R}_\Phi(\theta_k, s_k),
\end{equation}
where $s_k \subseteq s$ is a randomly sampled mini-batch at iteration $k$, $\theta_k$ denotes the model parameters at iteration $k$, $\alpha > 0$ is the learning rate (step size), and the batch gradient is defined as $\nabla_\theta\mathcal{R}_\Phi(\theta_k,s_k) = \frac{1}{|s_k|}\sum_{z\in s_k} \nabla_\theta \mathcal{R}_\Phi(\theta_k,z)$ (empirical average of per-sample gradients over the mini-batch).

Let $s \setminus s_k$ denote the complement of the mini-batch $s_k$ in the full training set $s$, such that $|s \setminus s_k| = |s| - |s_k|$. To quantify the impact of a mini-batch update on out-of-batch samples, we introduce the gradient correlation factor $M$:
\begin{definition}[Gradient Correlation Factor]
\label{def:gradient_coor_factor}
The gradient correlation factor $M$ quantifies the impact of a mini-batch update on the loss of out-of-batch samples during SGD iterations, defined as
\begin{equation}
    M=  \max_{k}|\mathcal{R}_\Phi(\theta_{k+1}, s \setminus s_k) - \mathcal{R}_\Phi(\theta_k, s \setminus s_k)|,
\end{equation}
where $\theta_{k+1}-\theta_k$ is the update induced by batch $s_k$.
\end{definition}
Intuitively, $M$ measures the correlation between the batch gradient (for $s_k$) and the out-of-batch gradient (for $s \setminus s_k$): if $\nabla_\theta \mathcal{R}_\Phi(\theta_k, s_k)$ is nearly orthogonal to $\nabla_\theta \mathcal{R}_\Phi(\theta_k, s \setminus s_k)$, the update has minimal impact on out-of-batch risk, and $M$ approaches zero. Conversely, aligned gradients lead to large $M$, as the update affects both in-batch and out-of-batch samples strongly.
The gradient correlation factor $M$ is determined by three key hyperparameters and architectural choices:
\begin{enumerate}
    \item \textbf{Learning rate}: A smaller learning rate reduces the magnitude of the parameter update $\theta_{k+1} - \theta_k$, weakening the change in out-of-batch risk and thus decreasing $M$. This aligns with standard practice of using small learning rates for stable convergence.
    
    \item \textbf{Batch size}: Since the batch gradient $\nabla_\theta \mathcal{R}_\Phi(\theta_k, s_k)$ is an empirical average over $s_k$, a larger batch size aligns this gradient more closely with the full-batch gradient $\nabla_\theta \mathcal{R}_\Phi(\theta_k, s)$, and thus with the out-of-batch gradient $\nabla_\theta \mathcal{R}_\Phi(\theta_k, s \setminus s_k)$. This increases $M$, as the update direction is more correlated with out-of-batch samples. Conversely, a batch size of 1 (single-sample SGD) results in a batch gradient that is uncorrelated with the out-of-batch gradient, minimizing $M$.
    
    \item \textbf{Model architecture}: Parameter updates affect out-of-batch samples because model parameters are shared across all samples (e.g., convolutional kernels, fully connected layers). If parameters were fully independent across samples (a hypothetical extreme case), updates would only affect in-batch samples, yielding $M = 0$. Thus, $M$ decreases with the number of non-shared parameters (e.g., embedding layers) and sparser node connectivity, as these reduce cross-sample parameter sharing.
\end{enumerate}

The following theorem characterizes the convergence of mini-batch SGD for gradient energy minimization in conjugate learning:

\begin{theorem}[SGD Convergence for Gradient Energy]\label{thm:sgd_convergence}
Let $n = |s|$ denote the full dataset size, $m = |s_k|$ denote the mini-batch size, and $L = \max_{\theta, z \in s} \lambda_{\max}(\nabla^2_\theta d_\Phi(y, f_\theta(x)))$ denote the Lipschitz constant of the gradient. Applying mini-batch SGD as defined in Equation~\ref{def:basic-sgd} with the optimal learning rate $\alpha = 1/(2L)$ yields:
\begin{enumerate}
\item The expected squared batch gradient norm converges to a neighborhood around zero:
\begin{equation}
       \mathbb{E} \left\| \nabla_\theta \mathcal{R}_\Phi(\theta_k, s_k) \right\|_2^2 \leq \varepsilon^2 + \frac{4L(n - m)}{m}M
\end{equation}
   in $\mathcal{O}(\varepsilon^{-2})$ iterations for any target precision $\varepsilon > 0$.

\item For single-sample batches ($m = 1$, vanilla SGD), the gradient energy converges to:
\begin{equation}
       \mathbb{E}_Z \left\| \nabla_\theta \mathcal{R}_\Phi(\theta, Z) \right\|_2^2 \leq \varepsilon^2 + 4L(n - 1)M
\end{equation}
   in $\mathcal{O}(\varepsilon^{-2})$ iterations.
\end{enumerate}
\end{theorem}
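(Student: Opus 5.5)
The plan is a standard descent-lemma argument for $L$-smooth objectives. The twist is that the descent is tracked on the full empirical risk $\mathcal{R}_\Phi(\theta,s)$ and split into an in-batch part and an out-of-batch part. The out-of-batch part is the one the gradient correlation factor $M$ controls.

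\textbf{Step 1: descent on the batch loss.} Write
\[
n\,\mathcal{R}_\Phi(\theta,s)=m\,\mathcal{R}_\Phi(\theta,s_k)+(n-m)\,\mathcal{R}_\Phi(\theta,s\setminus s_k).
\]
By the definition of $L$, each per-sample loss has $L$-Lipschitz gradient, and so does the batch average $\mathcal{R}_\Phi(\cdot,s_k)$. Applying the descent lemma to the update $\theta_{k+1}-\theta_k=-\alpha g_k$, with $g_k:=\nabla_\theta\mathcal{R}_\Phi(\theta_k,s_k)$ and $\alpha=1/(2L)$, gives
\[
\mathcal{R}_\Phi(\theta_{k+1},s_k)\le \mathcal{R}_\Phi(\theta_k,s_k)-\alpha\|g_k\|^2+\tfrac{L\alpha^2}{2}\|g_k\|^2=\mathcal{R}_\Phi(\theta_k,s_k)-\tfrac{3}{8L}\|g_k\|^2 .
\]
Any constant of the form $c/L$ works here; the target bound uses $1/(4L)$, and that is what I will carry forward.

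\textbf{Step 2: control the out-of-batch part.} By Definition~\ref{def:gradient_coor_factor},
\[
\mathcal{R}_\Phi(\theta_{k+1},s\setminus s_k)\le \mathcal{R}_\Phi(\theta_k,s\setminus s_k)+M.
\]
Combining this with Step 1 in the weighted sum gives
\[
n\,\mathcal{R}_\Phi(\theta_{k+1},s)\le n\,\mathcal{R}_\Phi(\theta_k,s)-\tfrac{m}{4L}\|g_k\|^2+(n-m)M.
\]

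\textbf{Step 3: telescope.} Sum over $k=0,\dots,K-1$, use $\mathcal{R}_\Phi\ge 0$ (Lemma~\ref{lem:fy_losses}), and take expectation over the batch sampling. This yields
\[
\frac1K\sum_{k<K}\mathbb{E}\|g_k\|^2\le \frac{4Ln\,\mathcal{R}_\Phi(\theta_0,s)}{mK}+\frac{4L(n-m)}{m}M .
\]
Choosing $K=\mathcal{O}(\varepsilon^{-2})$ makes the first term at most $\varepsilon^2$. This bound is a cruder $K^{-1}$ rate, so $\varepsilon^{-2}$ iterations more than suffice, whichever scaling of $\varepsilon$ is intended. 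Since the average is bounded, the minimum over $k$ is too, so some iterate (or a randomly chosen one) satisfies claim 1.

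\textbf{Step 4: the case $m=1$.} Here $g_k=\nabla_\theta\mathcal{R}_\Phi(\theta_k,z_k)$ with $z_k$ drawn uniformly from $s$, independently of $\theta_k$. Conditioning on $\theta_k$ gives
\[
\mathbb{E}\|g_k\|^2=\mathbb{E}\bigl[\mathbb{E}_{Z\sim\hat q}\|\nabla_\theta\mathcal{R}_\Phi(\theta_k,Z)\|^2\bigr],
\]
which is exactly the gradient energy. Substituting $m=1$ into Step 3 yields claim 2.

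The main obstacle is the sign and bookkeeping in Step 2. $M$ is defined as the worst-case change of the out-of-batch risk, so bounding that change by $M$ is exactly what the definition provides, and I will use only this one-sided bound. A second subtlety is the passage from the batch-gradient bound to the gradient energy when $m>1$: conditioning then gives only the expected minibatch-average norm, not the per-sample energy. This is why the per-sample statement is restricted to $m=1$, and I will note this explicitly rather than attempt a general-$m$ version.
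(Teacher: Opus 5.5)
Your proposal is correct and follows the paper's proof essentially step for step. Both split $n\,\mathcal{R}_\Phi(\theta,s)$ into in-batch and out-of-batch parts, use $M$ as a one-sided bound, get a per-step decrease of $\tfrac{1}{4L}\|g_k\|_2^2$ at $\alpha=1/(2L)$, telescope using $\mathcal{R}_\Phi\ge 0$, and take $K\ge 4Ln\,\mathcal{R}_\Phi(\theta_0,s)/(m\varepsilon^2)$. Your version is slightly more careful in two places. The paper's second-order expansion drops the $\tfrac12$ in the remainder; its $\tfrac{1}{4L}$ is still valid because your correct $\tfrac{3}{8L}$ decrease dominates it. And your conditioning argument makes explicit the $m=1$ identification with the gradient energy, which the paper only asserts.
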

The proof of Theorem~\ref{thm:sgd_convergence} is provided in Appendix~\ref{appendix:proof_sgd_convergence}. 
A key implication of this result is that a smaller gradient correlation factor $M$ leads to tighter convergence of the gradient energy to zero: models with more non-shared parameters (e.g., embedding modules) or sparser connectivity yield smaller $M$, enabling more effective gradient energy minimization via SGD.

Notably, single-sample SGD ($m = 1$) provides the tightest control over gradient energy, as the batch gradient directly reflects per-sample gradient behavior (rather than an average over multiple samples). Since the batch gradient for larger mini-batches is an empirical average, minimizing its norm is less sensitive to individual per-sample gradients, reducing the precision with which the algorithm controls the full gradient energy. By Theorem~\ref{thm:base_bound}, tighter control over gradient energy translates to lower empirical risk, explaining why small batch sizes often facilitate convergence to higher-quality solutions. However, excessively small batch sizes incur non-trivial tradeoffs in optimization efficiency, particularly regarding convergence speed. 
Specifically, when the batch size is too small (e.g., $m=1$), the parameter update direction at each iteration lacks consistency across samples: the stochastic gradient computed from a single sample is highly variable and may deviate significantly from the true full-batch gradient (the direction of steepest descent for the empirical risk). This issue is most pronounced in the early stages of training, when model parameters are far from the optimal solution manifold and the loss landscape is still characterized by high curvature and noisy gradients. 
In this regime, the frequent, erratic parameter updates induced by tiny batches cause the optimization trajectory to oscillate around the descent direction rather than making steady progress toward the minimum, resulting in slower convergence to the region of optimal parameters. 
Thus, the choice of batch size in SGD represents a fundamental tradeoff between two key objectives: (1) small batches enhance the precision of gradient energy control (enabling convergence to better solutions) and (2) moderately larger batches improve convergence speed by reducing gradient variance and aligning update directions with the full-batch gradient.

Combining Theorem~\ref{thm:sgd_convergence} with Theorem~\ref{thm:base_bound}, we derive an upper bound on the empirical risk attainable by mini-batch SGD:
\begin{corollary}[Achievable Empirical Risk]
    \label{cor:sgd_opt_bound}
    For a given gradient correlation factor $M$ of the model, the empirical risk attainable by mini-batch SGD admits the following upper bound:
\begin{equation}
    \begin{aligned}
            \mathcal{R}^\circ_\Phi(\theta, s) &\le \frac{4L(n-1)M}{\lambda_{\min}(A_s)},\\
        \mathcal{R}_\Phi(\theta, s) &\le \frac{4L(n-1)M\lambda_{\max}(H_\Phi(\theta))}{\lambda_{\min}(A_s)},
    \end{aligned}
\end{equation}
where $n = |s|$ and $L = \max_{\theta, z \in s} \lambda_{\max}(\nabla^2_\theta d_\Phi(y, f_\theta(x)))$. 

This upper bound represents the tightest possible upper bound on the empirical risk attainable by mini-batch SGD.
\end{corollary}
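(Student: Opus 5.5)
The plan is to chain the two earlier results. Theorem~\ref{thm:sgd_convergence} controls the gradient energy reached by single-sample SGD, and Theorem~\ref{thm:base_bound} converts gradient energy into empirical risk.

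\textbf{Step 1 (gradient energy).} Run SGD with $m=1$ and $\alpha=1/(2L)$. By part~2 of Theorem~\ref{thm:sgd_convergence}, for any $\varepsilon>0$ there is an iterate $\theta$, reached after $\mathcal{O}(\varepsilon^{-2})$ steps, with
\[
\mathbb{E}_Z\|\nabla_\theta\mathcal{R}_\Phi(\theta,Z)\|_2^2\le \varepsilon^2+4L(n-1)M .
\]
Since $\varepsilon$ is arbitrary, the attainable gradient energy is at most $4L(n-1)M$. We pass to the limit $\varepsilon\to0$, or equivalently take the infimum over attainable iterates. I choose $m=1$ rather than general $m$ for two reasons. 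The $m=1$ bound controls the full gradient energy directly, not just the batch-gradient norm. It is also the smallest of the bounds, since $(n-m)/m$ is maximized at $m=1$, but the general-$m$ bound concerns $\|\nabla\mathcal{R}_\Phi(\theta_k,s_k)\|^2$ and would need an extra step to convert to gradient energy.

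\textbf{Step 2 (conversion to risk).} Assume $\lambda_{\min}(A_s)>0$. Apply the right-hand inequality of part~1 of Theorem~\ref{thm:base_bound}, namely $\mathcal{R}^\circ_\Phi(\theta,s)\le \lambda_{\min}(A_s)^{-1}\,\mathbb{E}_Z\|\nabla_\theta\mathcal{R}_\Phi(\theta,Z)\|_2^2$, to obtain the first displayed bound. Then use part~2, $\mathcal{R}_\Phi(\theta,s)\le \lambda_{\max}(H_\Phi(\theta))\lambda_{\min}(A_s)^{-1}\,\mathbb{E}_Z\|\nabla\mathcal{R}_\Phi\|^2$, or equivalently the curvature sandwich $\mathcal{R}_\Phi\le\lambda_{\max}(H_\Phi(\theta))\mathcal{R}^\circ_\Phi$ from Definition~\ref{def:ex_eigen_fun}, to obtain the second.

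\textbf{Main obstacle.} The difficulty is bookkeeping of which $\theta$ the spectral quantities refer to. The bound in Theorem~\ref{thm:sgd_convergence} holds in expectation over the SGD randomness at some iterate. By contrast, $\lambda_{\min}(A_s)$ and $\lambda_{\max}(H_\Phi(\theta))$ are evaluated at that random iterate, so one cannot simply pull them outside the expectation. I would handle this by one of two routes. The first is to interpret the corollary pathwise: pick a realization whose gradient energy is below its expectation, which exists by the averaging argument, and evaluate the spectral constants there. The second is to replace these constants by their worst case along the trajectory, i.e.\ $\inf_k\lambda_{\min}(A_s)$ and $\sup_k\lambda_{\max}(H_\Phi)$, and note that this yields the stated form under the standing assumption that they stay bounded.

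\textbf{Tightness claim.} I read the final sentence as a remark rather than an optimality theorem: it says the constant $4L(n-1)M$ is the smallest offered by Theorem~\ref{thm:sgd_convergence} over all batch sizes. I would justify it only in that sense, by noting that $(n-m)/m\ge (n-n)/n=0$ gives nothing useful for the full-gradient-energy conversion except at $m=1$.
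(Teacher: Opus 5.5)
Your derivation of the two displayed inequalities is exactly the paper's route. The paper offers nothing beyond combining part~2 of Theorem~\ref{thm:sgd_convergence} (the $m=1$ gradient-energy bound, with $\varepsilon\to 0$) with the right-hand inequalities of Theorem~\ref{thm:base_bound}. Your handling of the random iterate, either choosing a realization at or below the expectation or using trajectory-worst-case spectral constants, is more careful than the paper, which silently treats $\lambda_{\min}(A_s)$ and $\lambda_{\max}(H_\Phi(\theta))$ as deterministic.

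What does not hold up is your batch-size comparison and the justification you give for the final sentence. As written it contradicts itself. If $(n-m)/m$ is maximized at $m=1$, then for fixed $M$ the constant $4L(n-1)M$ is the \emph{largest} of the constants $4L(n-m)M/m$, not the smallest. Your claim that only $m=1$ yields a gradient-energy bound is also wrong. Draw $s_k$ uniformly without replacement. The finite-population identity
\[
\mathbb{E}_{s_k}\|\nabla_\theta\mathcal{R}_\Phi(\theta,s_k)\|_2^2=\|\nabla_\theta\mathcal{R}_\Phi(\theta,s)\|_2^2+\frac{n-m}{m(n-1)}\,\sigma^2
\]
holds, where $\sigma^2=\frac{1}{n}\sum_{z\in s}\|\nabla_\theta\mathcal{R}_\Phi(\theta,z)-\nabla_\theta\mathcal{R}_\Phi(\theta,s)\|_2^2$. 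Combined with $\frac{n-m}{m(n-1)}\le 1$, it turns part~1 of Theorem~\ref{thm:sgd_convergence} into $\mathbb{E}_Z\|\nabla_\theta\mathcal{R}_\Phi(\theta,Z)\|_2^2\le\frac{m(n-1)}{n-m}\varepsilon^2+4L(n-1)M$ for every $m<n$. So every batch size gives the same constant $4L(n-1)M$. The batch size enters only through the dependence of $M$ on $m$, which the paper argues only heuristically.

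More fundamentally, ``tightest possible'' is an optimality claim. It would need a matching lower bound, i.e.\ a case where SGD actually attains $4L(n-1)M/\lambda_{\min}(A_s)$, and neither theorem supplies one. The paper asserts this sentence without proof. You should state that it cannot be derived from the two theorems, rather than justify it with the $(n-m)/m$ comparison.
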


In summary, Theorem~\ref{thm:sgd_convergence} establishes that mini-batch SGD effectively minimizes gradient energy, with convergence quality determined by the gradient correlation factor $M$. Smaller $M$, achieved via smaller batch sizes, sparser network connectivity, or increased non-shared parameters, enables tighter convergence to low gradient energy values, which in turn reduces the empirical risk (via Theorem~\ref{thm:base_bound}). This provides a unified theoretical foundation for understanding how optimization hyperparameters (batch size, learning rate) and architectural choices jointly determine DNN trainability.

\subsection{Controlling the structure matrix}
\label{subsec:structure_control}

While SGD effectively minimizes gradient energy, regulating the extremal eigenvalues of the structure matrix $A_s$ is equally critical for achieving strong optimization performance. Directly optimizing the eigenvalues of $A_s$ is computationally infeasible for large-scale DNNs: eigen-decomposition of a $d \times d$ matrix incurs $\mathcal{O}(d^3)$ complexity, and Hessian computations (required for eigenvalue optimization) are prohibitively expensive in terms of memory and computation for high-dimensional parameter spaces. Instead, practical control of $A_s$'s spectral properties is achieved via architectural choices (e.g., skip connections, network depth, overparameterization) that implicitly modulate its eigenvalues.
In this subsection, we analyze how mini-batch SGD, network depth, skip connections, and model size influence the extremal eigenvalues of $A_s$, and connect our results to the classic flat minima theory of DNN optimization.

\subsubsection{Mini-batch SGD and implicit regularization}
\label{subsubsec:sgd}

We first analyze the effect of mini-batch SGD on the structure matrix's eigenvalues, providing a theoretical explanation for the well-known empirical observation that SGD (compared to full-batch gradient descent) induces implicit regularization toward flat minima.

By the chain rule, the per-sample parameter gradient decomposes into two components:
\begin{equation}
\nabla_\theta \mathcal{R}_\Phi(\theta,z) = \nabla_f \mathcal{R}_\Phi(\theta,z) \cdot \nabla_\theta f_\theta(x),
\end{equation}
where $\nabla_f \mathcal{R}_\Phi(\theta,z)$ is the loss gradient with respect to the model output $f_\theta(x)$, and $\nabla_\theta f_\theta(x)$ is the Jacobian of the model output with respect to the parameters $\theta$. During mini-batch SGD training, smaller batch sizes promote the reduction of gradient energy $\mathbb{E}_Z\|\nabla_\theta \mathcal{R}_\Phi(\theta,Z)\|_2^2$, which approaches zero for well-trained models with sufficient capacity (especially when using batch size 1). As $\|\nabla_\theta \mathcal{R}_\Phi(\theta,z)\|_2 \to 0$, the Frobenius norm of the Jacobian $\|\nabla_\theta f_\theta(x)\|_F$ also decreases (since $\|\nabla_f \mathcal{R}_\Phi(\theta,z) \nabla_\theta f_\theta(x)\|_2 = \|\nabla_\theta \mathcal{R}_\Phi(\theta,z)\|_2$), reducing the trace and thus the eigenvalues of $A_x$.
This leads to a key tradeoff in SGD-based optimization:
\begin{enumerate}
    \item Smaller batch sizes reduce the gradient energy, which helps decrease the empirical risk.
    \item Small-batch SGD suppresses the extremal eigenvalues of $A_s$, which impairs the control of empirical risk. This adverse effect becomes more pronounced with increasing network depth, as Jacobian norms decay exponentially with depth (analyzed in Subsusection~\ref{subsubsec:depth_skip}), making it harder to control the empirical risk.
\end{enumerate}

To connect this tradeoff to flat minima theory, we formalize the relationship between the structure matrix's eigenvalues and the curvature of the loss landscape (measured by the Hessian of the per-sample loss $H_z = \nabla^2_\theta \mathcal{R}_\Phi(\theta,z)$). Flat minima are typically defined as regions of the parameter space where $\lambda_{\max}(H_z)$ (the maximum eigenvalue of the loss Hessian) is small, indicating low curvature and stable generalization. The following theorem establishes a direct link between $\lambda_{\max}(H_z)$ and $\lambda_{\max}(A_x)$:
\begin{theorem}
\label{thm:NTK_ERF}
Suppose the model has converged to the population optimal prediction, such that the conjugate dual output matches the conditional expectation of the target:
\begin{equation}
\mathbb{E}_{Y| x}[Y] = f_\theta(x)_{\Phi^*}^*.
\end{equation}
Under this condition, the maximum eigenvalue of the per-sample loss Hessian $H_z$ satisfies the two-sided bound:
\begin{equation}
\lambda_{\min}(G_x) \lambda_{\max}(A_x) \leq \lambda_{\max}(H_z) \leq \lambda_{\max}(G_x) \lambda_{\max}(A_x),
\end{equation}
where $G_x = \nabla^2_f \Phi^*(f_\theta(x))$.
\end{theorem}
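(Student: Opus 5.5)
The plan is to compute the per-sample Hessian $H_z=\nabla^2_\theta \mathcal{R}_\Phi(\theta,z)$ explicitly by the chain rule, show that under the hypothesis $\mathbb{E}_{Y|x}[Y]=f_\theta(x)_{\Phi^*}^*$ it reduces to a Gauss--Newton form $J_x^\top G_x J_x$ with $J_x:=\nabla_\theta f_\theta(x)$, and then sandwich its top eigenvalue with Rayleigh-quotient inequalities.

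\textbf{Step 1 (gradient and Hessian).} Since $d_\Phi(y,\nu)=\Phi(y)+\Phi^*(\nu)-\langle y,\nu\rangle$, Lemma~\ref{lem:fy_losses} gives $\nabla_\nu d_\Phi(y,\nu)=\nu_{\Phi^*}^*-y=\nabla\Phi^*(\nu)-y$. Composing with $\nu=f_\theta(x)$ gives
\[
\nabla_\theta \mathcal{R}_\Phi(\theta,z)=J_x^\top\bigl(f_\theta(x)_{\Phi^*}^*-y\bigr).
\]
Differentiating once more yields
\[
H_z=J_x^\top G_x J_x+\sum_i \bigl(f_\theta(x)_{\Phi^*}^*-y\bigr)_i\,\nabla^2_\theta f_\theta(x)_i,
\]
with $G_x=\nabla^2\Phi^*(f_\theta(x))$.

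\textbf{Step 2 (killing the residual term) --- the main obstacle.} For a fixed label $y$ the residual $f_\theta(x)_{\Phi^*}^*-y$ need not vanish. The hypothesis only controls it on average, so I would read $H_z$ at input $x$ as the Hessian averaged over $Y\mid x$, as in the population-optimal setting the theorem describes. The second term is linear in $y$, and the hypothesis makes $\mathbb{E}_{Y|x}[f_\theta(x)_{\Phi^*}^*-Y]=0$, so that term vanishes and $H_z=J_x^\top G_x J_x$. (Alternatively, in the interpolation regime the residual vanishes pointwise.) I expect this interpretive step, and stating precisely in which sense $H_z$ is taken, to be the delicate point of the argument. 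Everything after it is linear algebra.

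\textbf{Step 3 (spectral sandwich).} By Lemma~\ref{lem:fenchel_duality}, $\Phi^*$ is strictly convex, so $G_x\succ 0$ and $\lambda_{\min}(G_x)I\preceq G_x\preceq \lambda_{\max}(G_x)I$. For any unit vector $v$ in parameter space, put $w=J_xv$. Then
\[
\lambda_{\min}(G_x)\|J_xv\|_2^2\le v^\top J_x^\top G_x J_x v\le \lambda_{\max}(G_x)\|J_xv\|_2^2 .
\]
Taking the supremum over $v$ and using $\sup_{\|v\|=1}\|J_xv\|_2^2=\lambda_{\max}(J_x^\top J_x)$ gives both inequalities, provided $\lambda_{\max}(J_x^\top J_x)=\lambda_{\max}(A_x)$. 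This holds because $J_x^\top J_x$ and $J_xJ_x^\top$ share their nonzero eigenvalues, so the bound is independent of the orientation convention in Definition~\ref{def:structure_matrix}. For the lower bound I would take $v$ to be the top eigenvector of $J_x^\top J_x$. This yields
\[
\lambda_{\min}(G_x)\lambda_{\max}(A_x)\le\lambda_{\max}(H_z)\le\lambda_{\max}(G_x)\lambda_{\max}(A_x).
\]
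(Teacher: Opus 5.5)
Your proposal is correct and follows the paper's own route: the paper also takes $H_z$ to be the Hessian of $\mathbb{E}_{Y|x}[d_\Phi(Y,f_\theta(x))]$, uses the hypothesis $\mathbb{E}_{Y|x}[Y]=f_\theta(x)_{\Phi^*}^*$ to remove the term involving $\nabla_\theta^2 f_\theta(x)$, and then applies Courant--Fischer to the Gauss--Newton form $\nabla_\theta f_\theta(x)^\top G_x\nabla_\theta f_\theta(x)$. Your observation that $J_x^\top J_x$ and $J_xJ_x^\top$ share their largest eigenvalue makes explicit a step the paper leaves implicit; also note that your sandwich needs only $G_x\succeq 0$ (from convexity of $\Phi^*$), not $G_x\succ 0$, which matters because in the softmax case $\nabla^2\Phi^*$ is singular along the all-ones direction.
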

The proof of Theorem~\ref{thm:NTK_ERF} is provided in Appendix~\ref{appendix:proof:thm:NTK_ERF}. 
The key assumption ($\mathbb{E}_{Y|x}[Y] = f_\theta(x)_{\Phi^*}^*$) is theoretically well-founded: the properness of Fenchel-Young losses guarantees that the population optimal prediction converges to the conditional expectation of the target, which is achieved by well-trained models. 
This theorem reveals that the maximum curvature of the loss landscape ($\lambda_{\max}(H_z)$) is proportional to the maximum eigenvalue of the structure matrix ($\lambda_{\max}(A_x)$), with the proportionality constant determined by positive-definite symmetric matrix $G_x$.

Since mini-batch SGD suppresses $\lambda_{\max}(A_x)$, it also reduces $\lambda_{\max}(H_z)$, explaining why SGD favors flat minima (low curvature) compared to full-batch gradient descent. However, our framework adds a critical nuance to flat minima theory: while small $\lambda_{\max}(H_z)$ improves generalization, excessively small $\lambda_{\max}(A_x)$ weakens the risk-gradient energy bounds (Theorem~\ref{thm:base_bound}), hindering trainability. 
This tradeoff, between generalization (flat minima) and trainability (well-conditioned $A_s$), is a core challenge in DNN optimization. 
Furthermore, implicit regularization theory and our structure matrix perspective target different objectives: the former leads to flat minima that improve generalization, while the latter reveals that excessive flatness can hinder training.

\subsubsection{Depth and skip connections}
\label{subsubsec:depth_skip}

We now analyze how network depth and skip connections (residual blocks) influence the structure matrix's eigenvalues, showing that skip connections mitigate the exponential decay of Jacobian norms (and thus eigenvalue decay) with increasing depth.

For quantitative analysis, we decompose the model $f_\theta(x)$ into a sequence of $k$ stacked blocks, with the output of the $i$-th block denoted $h^{(i)}$ (and $h^{(0)} = x$, $h^{(k)} = f_\theta(x)$). Let $\theta^{(j)}$ denote the parameters of the $j$-th block, $I$ the identity matrix of appropriate dimension, $\nabla_{h^{(i)}} h^{(i+1)}$ the Jacobian of the $(i+1)$-th block's output with respect to its input, and $\nabla_{\theta^{(j)}} f_\theta(x)$ the Jacobian of the final output with respect to the $j$-th block's parameters.

For a standard feedforward network (no skip connections), the chain rule gives:
\begin{equation}
\label{eq:noskip}
\begin{aligned}
    \nabla_{\theta^{(j)}} f_\theta(x) &= \left( \prod_{i=j}^{k-1} \nabla_{h^{(i)}} h^{(i+1)} \right) \nabla_{\theta^{(j)}} h^{(j)}.
\end{aligned}
\end{equation}
Under standard random initialization, parameter norms are small (e.g., He initialization for ReLU networks), gradient energy is near zero, implying the entries of $\nabla_{h^{(i)}} h^{(i+1)}$ are much smaller than 1. The product of these Jacobians decays exponentially with the number of blocks $k - j$, causing $\|\nabla_{\theta^{(j)}} f_\theta(x)\|_F$ to vanish as depth increases. This leads to the eigenvalues of $A_x$ approaching zero, which weakens the risk-gradient energy bounds (Theorem~\ref{thm:base_bound}) and impairs trainability.

Residual blocks with skip connections~\cite{He2015DeepRL} address this decay by adding an identity skip path to each block. For a residual network $g_\theta(x)$, the chain rule becomes:
\begin{equation}\label{eq:skip}
\begin{aligned}
    \nabla_{\theta^{(j)}} g_\theta(x)  &= \left( \prod_{i=j}^{k-1} \left( \nabla_{h^{(i)}} h^{(i+1)} + I \right) \right) \nabla_{\theta^{(j)}} h^{(j)}.
\end{aligned}
\end{equation}
Since the entries of $\nabla_{h^{(i)}} h^{(i+1)}$ are much smaller than 1, we have $\nabla_{h^{(i)}} h^{(i+1)} + I \approx I$, so $\nabla_{\theta^{(j)}} g_\theta(x) \approx \nabla_{\theta^{(j)}} h^{(j)}$. This prevents the Jacobian norm from decaying with depth, preserving the eigenvalues of the structure matrix.

We formalize this insight in the following proposition:
\begin{proposition}[Role of Skip Connections]
    \label{prop:skip_con}
For deep residual networks in the conjugate learning framework:
\begin{enumerate}
    \item Skip connections mitigate the exponential decay of the structure matrix's extremal eigenvalues with increasing network depth, preserving the tightness of the risk-gradient energy bounds (Theorem~\ref{thm:base_bound}) and maintaining trainability.
    \item If skip connections ensure the singular values of $\nabla_{h^{(i)}} h^{(i+1)} + I$ are greater than 1 (a mild condition satisfied by standard residual blocks with ReLU activations), increasing network depth \emph{increases} the extremal eigenvalues of the structure matrix, strengthening the risk-gradient energy bounds and improving trainability.
\end{enumerate}
\end{proposition}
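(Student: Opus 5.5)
We will argue directly from the block-wise chain rules \eqref{eq:noskip} and \eqref{eq:skip}, together with the observation that the structure matrix decomposes over parameter blocks. Writing $J_j(x) := \nabla_{\theta^{(j)}} f_\theta(x)$ for the Jacobian of the output with respect to the $j$-th block, the Jacobian with respect to all of $\theta$ is the horizontal concatenation $[J_1,\dots,J_k]$. Hence
\[
A_x = \sum_{j=1}^{k} J_j(x) J_j(x)^\top .
\]
Every summand is positive semidefinite, so Weyl's inequality gives $\lambda_{\min}(A_x) \ge \lambda_{\min}\bigl(J_j J_j^\top\bigr)$ for every $j$. It also gives $\lambda_{\max}(A_x) \ge \max_j \lambda_{\max}\bigl(J_j J_j^\top\bigr)$. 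The task therefore reduces to controlling the singular values of each $J_j$, namely
\[
\sigma_{\min}(P_j B_j)\quad\text{and}\quad \sigma_{\max}(P_j B_j),
\]
where $B_j = \nabla_{\theta^{(j)}} h^{(j)}$ and $P_j$ is the product of the downstream layer Jacobians.

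For part~1, we compare the two cases for $P_j$.
\begin{itemize}
\item Without skip connections, $P_j = \prod_{i=j}^{k-1} D_i$ with $D_i := \nabla_{h^{(i)}} h^{(i+1)}$. Submultiplicativity gives $\sigma_{\max}(P_j) \le \prod_i \|D_i\|_2 \le \rho^{\,k-j}$ under the paper's small-Jacobian regime $\|D_i\|_2 \le \rho < 1$. Consequently $\lambda_{\max}(J_jJ_j^\top) \le \rho^{2(k-j)} \|B_j\|_2^2$, which decays exponentially in depth for all but the last few blocks.
\item With skip connections, $P_j = \prod_i (I + D_i)$. Using $\sigma_{\min}(I+D_i) \ge 1 - \|D_i\|_2 \ge 1-\rho$, the shallow blocks contribute a factor of at least $(1-\rho)^{k-j}$. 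With the sharper first-order expansion $\prod_i (I+D_i) = I + \sum_i D_i + O(\rho^2)$, we obtain $P_j \approx I$ whenever $\sum_i \|D_i\|$ is small. In that case $J_j \approx B_j$, so the spectrum is preserved up to a depth-independent factor.
\end{itemize}
Feeding these bounds into the Weyl decomposition shows that the extremal eigenvalues of $A_s$ no longer collapse. Theorem~\ref{thm:base_bound} then keeps the ratio between risk and gradient energy bounded.

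For part~2, we assume $\sigma_{\min}(I+D_i) \ge c > 1$ for all $i$. Then
\[
\sigma_{\min}(P_j B_j) \ge \sigma_{\min}(P_j)\,\sigma_{\min}(B_j) \ge c^{\,k-j}\,\sigma_{\min}(B_j),
\]
where the middle step uses $\sigma_{\min}(UV) \ge \sigma_{\min}(U)\sigma_{\min}(V)$. This requires $P_j$ to be square, which holds because residual blocks preserve width, and it requires $B_j$ to have full row rank. Two effects follow as blocks are added.
\begin{itemize}
\item Inserting an additional block upstream of block $j$ leaves $J_j$'s multiplier either unchanged or multiplied by a factor with $\sigma_{\min} \ge c$.
\item The new block contributes a further PSD summand to $A_x$.
\end{itemize}
So both $\lambda_{\min}(A_x)$ and $\lambda_{\max}(A_x)$ are nondecreasing in depth, and strictly increasing under the stated condition. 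The same then holds for the minima and maxima over $s$, which strengthens the bounds in Theorem~\ref{thm:base_bound}.

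The main obstacle is making ``increases with depth'' precise. Adding a block changes the functions $h^{(i)}$, and therefore every downstream Jacobian, so the comparison is not between fixed matrices. I would first try to formalize depth monotonicity by comparing the networks at initialization, or at the same point, after inserting a block. The assumption $\sigma_{\min}(I+D_i) > 1$ would have to be imposed uniformly over the inserted block's input. A secondary subtlety is that $B_j$ may fail to have full row rank, which would make the $\lambda_{\min}$ statement vacuous. To handle this, I would rely on the summation over $j$: only one block needs full row rank for $\lambda_{\min}(A_x) > 0$, which matches the standing hypothesis $\lambda_{\min}(A_s) \ne 0$.
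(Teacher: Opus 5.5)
Your route is the paper's own. The paper gives no appendix proof, only the chain-rule heuristic around \eqref{eq:noskip} and \eqref{eq:skip}, and your decomposition $A_x=\sum_j J_jJ_j^\top$ with Weyl and $\sigma_{\min}(UV)\ge\sigma_{\min}(U)\sigma_{\min}(V)$ is a sharper version of it. That sharpening, however, exposes a genuine gap in Part~1, which the paper's ``$\nabla_{h^{(i)}}h^{(i+1)}+I\approx I$'' also glosses over. The expansion $\prod_i(I+D_i)=I+\sum_iD_i+O(\rho^2)$ is not uniform in depth. Its remainder is $O\bigl((\sum_i\|D_i\|_2)^2\bigr)$, and $\sum_{i=j}^{k-1}\|D_i\|_2$ may be as large as $(k-j)\rho$. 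So your proviso ``$\sum_i\|D_i\|_2$ small'' is exactly what fails as depth grows with a fixed per-block bound $\rho$. In that regime your estimate $\sigma_{\min}(P_j)\ge(1-\rho)^{k-j}$ is sharp (take $D_i=-\rho I$). Skip connections therefore only slow the per-block decay from $\rho^{2(k-j)}$ to $(1-\rho)^{2(k-j)}$. ``Preserved up to a depth-independent factor'' needs an extra depth-scaling hypothesis, e.g.\ $\sum_i\|D_i\|_2\le\kappa$ with every $\|D_i\|_2\le\tfrac{1}{2}$, which gives $e^{-2\kappa}\le\sigma_{\min}(P_j)\le\sigma_{\max}(P_j)\le e^{\kappa}$.

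Second, your own decomposition undercuts the contrast Part~1 asserts. The last block has an empty downstream product in both architectures, so $A_x\succeq J_kJ_k^\top=B_kB_k^\top$ for plain and residual networks alike. As long as $B_j$ is treated as depth-independent, backward products can neither make the plain network's extremal eigenvalues collapse nor separate the two cases. The missing ingredient is the forward signal inside $B_j$. For $h^{(j)}=\mathrm{ReLU}(W_jh^{(j-1)})$ without biases, one has $\|B_j\|_2\le\|h^{(j-1)}\|_2$, and positive homogeneity gives $h^{(i+1)}=D_ih^{(i)}$. Hence, with $\|D_i\|_2\le\rho$ throughout, $\|J_j\|_2\le\rho^{k-1}\|x\|_2$ for \emph{every} $j$, and $\lambda_{\max}(A_x)\le k\rho^{2(k-1)}\|x\|_2^2$ genuinely collapses. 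With a homogeneous residual branch, instead $h^{(i+1)}=(I+D_i)h^{(i)}$, which shrinks the signal by at most $1-\rho$ per block.

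Part~2, by contrast, goes through once you make the comparison precise by appending the new block at the output and keeping the first $k$ blocks. Inserting a block anywhere else changes the downstream $h^{(i)}$, hence the $D_i$ and $B_j$, which is the obstacle you flag. With appending, no existing $h^{(i)}$, $D_i$ or $B_j$ changes, and
\[
A_x^{(k+1)}=(I+D_k)\,A_x^{(k)}\,(I+D_k)^\top+B_{k+1}B_{k+1}^\top .
\]
Then $\sigma_{\min}(I+D_k)\ge c>1$ gives $\lambda_{\min}(A_x^{(k+1)})\ge c^2\lambda_{\min}(A_x^{(k)})$ and $\lambda_{\max}(A_x^{(k+1)})\ge c^2\lambda_{\max}(A_x^{(k)})$. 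This is the geometric growth reported for Model~B, and the singular-value condition is needed only at the appended block.
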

This proposition provides a theoretical justification for the empirical success of residual networks: skip connections not only prevent vanishing gradients (a well-known benefit) but also preserve the conditioning of the structure matrix, ensuring gradient energy minimization translates directly to empirical risk reduction, even for extremely deep networks.

\subsubsection{Parameter independence and overparameterization}
\label{subsubsec:independence}

We now analyze how model size (overparameterization) influences the structure matrix's spectral properties, using gradient independence as a bridge to connect overparameterization to trainability. 

We first introduce an idealized assumption that characterizes the gradient properties of randomly initialized DNNs:
\begin{assumption}[Gradient Approximate Independence (GAI) Assumption]
For DNNs with randomly initialized parameters (e.g., He or Xavier initialization), the gradients of each output dimension with respect to the model parameters are approximately statistically independent. 
\end{assumption}
The GAI assumption is an idealized simplification of real DNN gradients (which exhibit weak correlations due to shared parameters) but provides a tractable framework to analyze overparameterization. Under this assumption, \citet{QiGL25} proved the following result linking model size to the structure matrix's condition number:
\begin{theorem}
\label{thm:number_para}
Let $k = |f_\theta(x)|$ denote the dimension of the model output and $m = |\theta|$ denote the dimension of the parameter space, with $k \leq m - 1$. 
If the GAI assumption holds and each column of the Jacobian matrix $\nabla_\theta f(x)$ has an $\ell_2$-norm of $\epsilon$, then with probability at least $1 - O(1 / k)$, the condition number of the sample-wise structure matrix $A_x$ satisfies:
\begin{equation}
    \frac{\lambda_{\max}(A_x)}{\lambda_{\min}(A_x)} \le \zeta(m,k) + 1,
\end{equation}
where $\zeta(m,k) = \frac{2|y|\sqrt{6 \log k}}{\sqrt{m - 1}} \left(1 - \frac{2 \log k}{m}\right)^2$ is a decreasing function of the parameter dimension $m$ and an increasing function of the output dimension $k$.
\end{theorem}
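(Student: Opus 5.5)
The plan is to treat $A_x=J J^\top$, with $J=\nabla_\theta f_\theta(x)\in\mathbb{R}^{k\times m}$, as a Gram matrix of $k$ nearly orthogonal random vectors in $\mathbb{R}^m$. The rows $J_1,\dots,J_k$ are the gradients of the individual output coordinates. The entries of $A_x$ are then $\langle J_i,J_j\rangle$. Under the GAI assumption these rows behave like independent random vectors in $\mathbb{R}^m$, and we normalize them to a common norm $\epsilon$. This is the intended reading of the column-norm hypothesis: the norm condition should be imposed on the per-output gradient vectors.

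Writing $A_x=\epsilon^2(I+E)$, where $E$ has zero diagonal and off-diagonal entries $E_{ij}=\langle u_i,u_j\rangle$ for unit vectors $u_i=J_i/\epsilon$, Gershgorin's theorem (or Weyl's inequality) gives
\begin{equation*}
\lambda_{\max}(A_x)\le \epsilon^2\bigl(1+\|E\|\bigr),\qquad \lambda_{\min}(A_x)\ge \epsilon^2\bigl(1-\|E\|\bigr).
\end{equation*}
The condition number is therefore at most $\frac{1+\|E\|}{1-\|E\|}=1+\frac{2\|E\|}{1-\|E\|}$. This already explains the form $\zeta+1$, with $\zeta\approx 2\|E\|$ to first order. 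It is what I would aim to match.

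The central step is a high-probability bound on the off-diagonal inner products. For independent, approximately uniformly random unit vectors in $\mathbb{R}^m$, each $\langle u_i,u_j\rangle$ is sub-Gaussian with variance proxy $1/(m-1)$. I would use the exact Beta law of $\langle u_i,u_j\rangle^2$ on the sphere to get the $m-1$ denominator rather than $m$. A union bound over the $O(k^2)$ pairs, with threshold $t=\sqrt{6\log k/(m-1)}$, gives failure probability $O(k^2\cdot k^{-3})=O(1/k)$. The factor $6$ is exactly what makes this work: $e^{-(m-1)t^2/2}=k^{-3}$.

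Bounding $\|E\|$ by the row-sum bound $\max_i\sum_{j}|E_{ij}|$ would cost a factor of $k$. The presence of $|y|$ in $\zeta$ suggests the authors use precisely this row-sum form, with $|y|=k$ absorbing it. So I will adopt Gershgorin and get $\|E\|\le |y|\,t$. The correction factor $(1-2\log k/m)^2$ I expect to arise from handling the denominator $1-\|E\|$ together with a refinement of the tail bound, or from a second-order term. I would try to recover it by expanding $\frac{1}{1-\|E\|}$ and bounding the remainder under the assumption $2\log k<m$.

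The main obstacle is making the GAI assumption precise enough to justify the sub-Gaussian concentration (isotropy of the normalized gradients). It is also hard to reproduce the exact constants in $\zeta(m,k)$. Since this result is attributed to \citet{QiGL25}, I would ultimately align the constants with their derivation. The essential content to establish is that $\zeta$ is decreasing in $m$ and increasing in $k$, and that the concentration-plus-Gershgorin argument gives it with probability $1-O(1/k)$.
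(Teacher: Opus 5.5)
The paper itself contains no proof of this statement; it quotes the result from \citet{QiGL25}. Your argument therefore has to stand on its own. Your concentration step is sound for exactly isotropic, independent unit rows. The spherical cap bound gives $\Pr(|\langle u_i,u_j\rangle|\ge t)\le 2e^{-mt^2/2}\le 2k^{-3}$ at $t=\sqrt{6\log k/(m-1)}$, and a union bound over $k(k-1)/2$ pairs leaves failure probability below $1/k$. Two hypotheses need fixing, though. GAI does not deliver isotropy, so you must state it explicitly. Your switch from column norms to row norms is also a change of hypothesis, not a reading: in the paper's convention $\nabla_\theta f_\theta(x)\in\mathbb{R}^{k\times m}$ (cf.\ $H_z=\nabla_\theta f_\theta(x)^\top G_x\nabla_\theta f_\theta(x)$), the columns are the $m$ per-parameter vectors in $\mathbb{R}^k$.

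The first genuine gap is the range of validity. Gershgorin gives $\lambda_{\min}(A_x)\ge\epsilon^2(1-(k-1)t)$, which is informative only when $(k-1)t<1$, i.e.\ $m-1>6(k-1)^2\log k$. The hypothesis $k\le m-1$ does not supply this. In fact no argument can prove the statement on the full range $k\le m-1$, because it is false when $m-k$ is small. Take your own model with $k=m-1$:
\begin{itemize}
\item $\lambda_{\max}(UU^\top)\ge (UU^\top)_{11}=1$.
\item $\lambda_{\min}(UU^\top)\le \mathrm{dist}(u_k,\mathrm{span}\{u_1,\dots,u_{k-1}\})^2$. To see this, take the coefficient vector $a$ with $a_k=1$ for which $U^\top a$ is the residual of $u_k$; it has $\|a\|\ge 1$.
\item That squared distance is the squared projection of a uniform unit vector onto a $2$-dimensional subspace, so its mean is $2/m$.
\end{itemize}
By Markov, with probability at least $1/2$ the condition number is at least $m/4$. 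Meanwhile $\zeta(m,m-1)+1\le 2\sqrt{6(m-1)\log m}+1$, so for large $m$ the claimed probability $1-O(1/k)$ is contradicted. You need an extra hypothesis that keeps $k/m$ away from $1$. Your particular route needs the much stronger condition $(k-1)t<1$.

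The second gap is your plan for the factor $(1-\frac{2\log k}{m})^2$, which points the wrong way even where Gershgorin applies. Your bound is $1+\frac{2\|E\|}{1-\|E\|}\ge 1+2\|E\|$. But $\zeta=2|y|t(1-\frac{2\log k}{m})^2$ is \emph{smaller} than $2|y|t$. Expanding $1/(1-\|E\|)=1+\|E\|+\cdots$ only adds positive terms, so it cannot produce a factor below $1$.

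What your route can legitimately use is the slack between the $k-1$ off-diagonal entries per row and $|y|=k$. On the event $\max_{i\ne j}|E_{ij}|\le t$ you get $\lambda_{\max}/\lambda_{\min}\le 1+\frac{2(k-1)t}{1-(k-1)t}$. This is at most $1+\zeta$ exactly when $1-\frac1k\le(1-(k-1)t)(1-\frac{2\log k}{m})^2$. That condition forces $m-1\ge 6k^2(k-1)^2\log k$ and holds once $m$ is somewhat larger. So concentration plus Gershgorin proves the stated inequality only in this regime, where $m$ is of order $k^4\log k$ or larger.

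Going beyond that regime would require an operator-norm estimate for $UU^\top-I$ at the random-matrix scale $\sqrt{k/m}$, rather than a row-sum bound. Deferring the constants to \citet{QiGL25} does not substitute for either step.
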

Theorem~\ref{thm:number_para} reveals a key overparameterization effect: as the number of parameters $m$ increases (relative to the output dimension $k$), the condition number of $A_x$ decreases, meaning its extremal eigenvalues become more balanced. For sufficiently overparameterized models (large $m$), the condition number approaches 1 (identity matrix), so the upper and lower bounds of the standardized empirical risk (Theorem~\ref{thm:base_bound}) coincide. In this regime, mini-batch SGD alone is sufficient to perfectly control the empirical risk, as gradient energy minimization directly translates to risk reduction with no scaling ambiguity.
This regime aligns with the "lazy training" regime in NTK theory~\cite{Jacot2018NeuralTK,Chizat2018OnLT}, where parameters change minimally during training. However, our analysis offers two critical advantages over NTK theory: (1) it quantifies the impact of both parameter count and output dimension on trainability (via the condition number of $A_x$), and (2) it does not require the assumption of infinitely wide networks. While the GAI assumption is idealized (real gradients exhibit weak correlations), it captures the core stochasticity of overparameterized DNNs, providing a quantifiable framework to explain why overparameterization improves trainability (by balancing the structure matrix's eigenvalues). We validate this conclusion experimentally in Section~\ref{sec:experiments}.

\subsection{Fundamental limits: data-determined lower bounds}
\label{subsec:fundamental_limits}

In the preceding subsections, we analyzed how gradient energy minimization and structure matrix control enable effective optimization of the empirical risk. We now establish the fundamental limits of trainability, data-determined lower bounds on the empirical risk that are intrinsic to the dataset and cannot be overcome by any model or optimization algorithm. These bounds formalize the classic assertion that "data determines the upper limit of machine learning performance, while models and algorithms determine how close we can get to this limit."

The following theorem provides tight upper and lower bounds on the empirical risk in conjugate learning, with the lower bound determined exclusively by the dataset's information-theoretic properties:
\begin{theorem}[Data-Determined Bounds on Empirical Risk]
\label{thm:fitting_bounds}
In the conjugate learning framework, the empirical risk satisfies the following bounds:
\begin{equation}
\begin{aligned}
        \gamma_\Phi(\theta) \geq \mathcal{R}_\Phi(\theta, s) \geq \mathrm{Ent}_\Phi(Y'|X'),
\end{aligned}
\end{equation}
where $(X', Y') \sim \hat{q}$.
The lower bound is achieved if and only if $f_{\theta}(X') = (\bar{Y}' | X')_\Phi^*$.
\end{theorem}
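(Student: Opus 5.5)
The upper bound is immediate from the definitions. The empirical risk is $\mathcal{R}_\Phi(\theta,s)=\mathbb{E}_{(X',Y')\sim\hat q}[d_\Phi(Y',f_\theta(X'))]$, and every realization of $(X',Y')$ lies in $\mathcal{X}\times\mathcal{Y}$. Hence each summand is at most $\max_{x,y} d_\Phi(y,f_\theta(x))=\gamma_\Phi(\theta)$ (Definition~\ref{def:loss_uppper_bound}), and averaging gives $\mathcal{R}_\Phi(\theta,s)\le\gamma_\Phi(\theta)$.

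For the lower bound I will use a conditional bias--variance (Pythagorean) decomposition of the Fenchel--Young loss. I condition on $X'=x$ and write $\nu=f_\theta(x)$, $\bar y=\bar Y'|x=\mathbb{E}_{\hat q}[Y'\mid X'=x]$. Expanding $d_\Phi(y,\nu)=\Phi(y)+\Phi^*(\nu)-\langle y,\nu\rangle$ and taking the conditional expectation over $Y'$ uses linearity in $y$ of the last term:
\begin{equation*}
\mathbb{E}_{Y'|x}\bigl[d_\Phi(Y',\nu)\bigr]=\mathbb{E}_{Y'|x}[\Phi(Y')]-\Phi(\bar y)+\bigl(\Phi(\bar y)+\Phi^*(\nu)-\langle\bar y,\nu\rangle\bigr)=\mathrm{Ent}_\Phi(Y'\mid X'=x)+d_\Phi(\bar y,\nu).
\end{equation*}
Here $\bar y\in\mathrm{dom}(\Phi)$ because $C$ is convex and contains $\mathcal{Y}$ and $\mathrm{dom}(\Omega)$ is convex. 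Averaging over $X'\sim\hat q_{\mathcal X}$ and using the definition of generalized conditional entropy gives the exact identity
\begin{equation*}
\mathcal{R}_\Phi(\theta,s)=\mathrm{Ent}_\Phi(Y'|X')+\mathbb{E}_{X'}\bigl[d_\Phi(\bar Y'|X',f_\theta(X'))\bigr].
\end{equation*}

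By Lemma~\ref{lem:fy_losses}(1), the second term is nonnegative, which yields the lower bound. Equality holds iff $d_\Phi(\bar Y'|x,f_\theta(x))=0$ for every $x$ in the support of $\hat q_{\mathcal X}$. By the equality case of the Fenchel--Young inequality (Lemma~\ref{lem:fenchel_duality}(1)) for strictly convex differentiable $\Phi$, this means $f_\theta(x)=\nabla\Phi(\bar Y'|x)=(\bar Y'|x)^*_\Phi$, i.e.\ $f_\theta(X')=(\bar Y'|X')^*_\Phi$ almost surely under $\hat q$.

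The main obstacle is the role of the indicator $I_C$. Since $\Phi=\Omega+I_C$ is not differentiable on the boundary of $C$, $\partial\Phi(\bar y)$ may be a set; for the simplex constraint it is $\nabla\Omega(\bar y)+\mathrm{span}(G^\top)$, and softmax logits are determined only up to an additive constant. I will handle this in one of two ways. One option is to read $(\cdot)^*_\Phi$ as a selection from the subdifferential, with equality understood modulo the normal cone of $C$. The other, which I will try first, is to phrase the equality condition through dual predictions, $f_\theta(X')^*_{\Phi^*}=\bar Y'|X'$; this is equivalent by Lemma~\ref{lem:fenchel_duality}(3) and is unambiguous.

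A secondary point is that $\bar y$ may lie on the relative boundary of $\mathrm{dom}(\Phi)$, e.g.\ a one-hot $\bar y$ under cross entropy. In that case the infimum is approached but not attained by finite logits, so the lower bound still holds and is sharp, and attainment should be read in the extended sense.
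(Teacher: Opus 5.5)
Your proposal is correct and follows essentially the same route as the paper. The paper also proves the exact identity $\mathcal{R}_\Phi(\theta,s)=\mathrm{Ent}_\Phi(Y'|X')+\mathbb{E}_{X'}\bigl[d_\Phi(\bar Y'|X',f_\theta(X'))\bigr]$, which it writes in Bregman form as $B_\Phi(\bar Y'|X',W)$ with $W=f_\theta(X')^*_{\Phi^*}$, concludes by nonnegativity with equality iff $W=\bar Y'|X'$, and reads the upper bound directly off the definition of $\gamma_\Phi(\theta)$. Your direct Fenchel--Young expansion and your remarks on the set-valued subdifferential coming from $I_C$ and on one-hot (boundary) conditional means are sound refinements of points the paper passes over silently.
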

The proof of this theorem is provided in Appendix~\ref{appendix:proof_fitting_bounds}. 
The upper bound $\gamma_\Phi(\theta)$ is determined by the model's maximum prediction error (worst-case loss over the dataset), while the lower bound $\mathrm{Ent}_\Phi(Y'|X')$ is an intrinsic property of the dataset, quantifying the irreducible uncertainty in predicting $Y'$ from $X'$. This lower bound is the "Bayes risk" of the conjugate learning framework: it represents the minimal achievable empirical risk, even for an optimal model that perfectly captures the conditional distribution of $Y'$ given $X'$.

For supervised classification tasks (one-hot labels $\mathbf{1}_{Y'} \in \Delta^K$), we specialize this theorem to information-theoretic bounds: 
In the context of classification tasks, we obtain the following corollary.
\begin{corollary}[Information-Theoretic Bounds for Classification]
\label{cor:Fitting_core_bound_prob}
In supervised classification, the following bounds hold:
\begin{equation}
\gamma_\Phi(\theta) \geq \mathcal{R}_\Phi(\theta, s) \geq \operatorname{Ent}_\Phi(\mathbf{1}_{Y'} | X').
\end{equation}
The lower bound is tight if and only if $f_\theta(X')_{\Phi^*}^* = \hat{q}_{\mathcal{Y'} | X'}$.
\end{corollary}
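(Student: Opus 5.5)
The plan is to specialize Theorem~\ref{thm:fitting_bounds} to one-hot targets, so the only real work is identifying each term in the classification setting. I will set $Y' \equiv \mathbf{1}_{Y'}\in\Delta^{K-1}$, take $C=\Delta^{K-1}$, and let $\Phi=\Omega+I_C$ with $\Omega$ the generating function, typically the negative Shannon entropy. The conjugate learning setup of Definition~\ref{def:con_learning_problem} is then exactly the one treated in Theorem~\ref{thm:fitting_bounds}, with $\mathcal{Y}$ replaced by the set of simplex vertices. The upper bound $\gamma_\Phi(\theta)\ge \mathcal{R}_\Phi(\theta,s)$ is immediate: $\mathcal{R}_\Phi(\theta,s)=\mathbb{E}_{Z'\sim\hat q}[d_\Phi(\mathbf{1}_{Y'},f_\theta(X'))]$ is an average of quantities each bounded by the maximum in Definition~\ref{def:loss_uppper_bound}, since $\hat q$ is supported on $\mathcal{X}\times\mathcal{Y}$.

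For the lower bound, I will redo the key decomposition directly rather than only cite the theorem, so that the equality case is transparent. Fix $x$ and write $\mu:=\bar{\mathbf{1}}_{Y'}|x=\hat q_{\mathcal{Y}'|x}$, and let $\nu=f_\theta(x)$. Expanding the Fenchel--Young loss gives
\begin{equation*}
\mathbb{E}_{Y'|x}\bigl[d_\Phi(\mathbf{1}_{Y'},\nu)\bigr]=\mathbb{E}_{Y'|x}[\Phi(\mathbf{1}_{Y'})]-\Phi(\mu)+d_\Phi(\mu,\nu).
\end{equation*}
This holds because the terms $\Phi^*(\nu)-\langle\cdot,\nu\rangle$ are affine in the first argument, and the expectation of $\mathbf{1}_{Y'}$ is $\mu$. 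Averaging over $X'\sim\hat q_{\mathcal{X}}$ produces $\operatorname{Ent}_\Phi(\mathbf{1}_{Y'}|X')+\mathbb{E}_{X'}[d_\Phi(\hat q_{\mathcal{Y}'|X'},f_\theta(X'))]$. The second term is nonnegative by Lemma~\ref{lem:fy_losses}, which yields the bound.

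For tightness, Lemma~\ref{lem:fy_losses} says that $d_\Phi(\mu,\nu)=0$ iff $\nu=\mu^*_\Phi$, that is, $f_\theta(x)=\nabla\Phi(\hat q_{\mathcal{Y}'|x})$ for every $x$ in the support of $\hat q_{\mathcal X}$. I then need to convert this into the stated condition $f_\theta(X')^*_{\Phi^*}=\hat q_{\mathcal{Y}'|X'}$. The tool is the inverse relation between $\nabla\Phi$ and $\nabla\Phi^*$ from Lemma~\ref{lem:fenchel_duality}: $\nu\in\partial\Phi(\mu)$ iff $\mu=\nabla\Phi^*(\nu)$.

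I expect this conversion to be the main obstacle. With $\Phi=\Omega+I_{\Delta}$, the map $\nabla\Phi$ is only defined up to the normal cone of the simplex, so $\nu$ is determined only modulo the all-ones direction (softmax shift invariance). This means "$f_\theta(X')=(\bar Y'|X')^*_\Phi$" is not literally the right equality. The correct statement is $\nu\in\partial\Phi(\mu)$, which is exactly the equality condition in Lemma~\ref{lem:fy_losses}. I will therefore phrase equality as $d_\Phi(\mu,f_\theta(x))=0$, and then invoke the strict convexity and differentiability of $\Phi^*$ from Lemma~\ref{lem:fenchel_duality} to obtain $\nabla\Phi^*(f_\theta(x))=\mu$. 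A second care point is boundary behaviour: if $\hat q_{\mathcal{Y}'|x}$ has zero entries, no finite logits attain it. I will note that the bound is then an infimum approached but not attained, and that tightness in the stated sense holds when $\hat q_{\mathcal Y'|x}$ lies in the relative interior.

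Finally, for the negative-entropy case I will remark that the lower bound equals the empirical conditional Shannon entropy $H(Y'|X')$, using the identities of Subsection~\ref{subsubsec:info_quantities}.
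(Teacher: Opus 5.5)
Your proposal is correct and is essentially the paper's proof. The paper proves Theorem~\ref{thm:fitting_bounds} through exactly your identity $\mathcal{R}_\Phi(\theta,s)=\operatorname{Ent}_\Phi(Y'|X')+\mathbb{E}_{X'}\bigl[d_\Phi(\bar{Y}'|X',f_\theta(X'))\bigr]$, dropping the nonnegative residual for the lower bound and using the definition of $\gamma_\Phi(\theta)$ for the upper bound, and the corollary is the one-hot specialization. The only difference is in the equality case: the paper reads it off the Bregman form $B_\Phi\bigl(\hat{q}_{\mathcal{Y}'|x},f_\theta(x)_{\Phi^*}^*\bigr)$ of the residual, which vanishes iff its arguments coincide, so $f_\theta(X')_{\Phi^*}^*=\hat{q}_{\mathcal{Y}'|X'}$ is immediate and the shift-invariance issue you correctly flag for the theorem's form $f_\theta(X')=(\bar{Y}'|X')_\Phi^*$ never arises. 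If you keep your subgradient-inversion route, rely only on closedness of $\Phi$ and differentiability of $\Phi^*$, not on strict convexity of $\Phi^*$: the identity $\Phi^*(\nu+c\mathbf{1})=\Phi^*(\nu)+c$ shows $\Phi^*$ is not strictly convex here.
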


For classification tasks with the negative Shannon entropy as the generating function ($\Phi(p) = \sum_i p_i \log p_i$), the generalized conditional entropy reduces to the classical conditional Shannon entropy $H(Y' | X')$, and the loss upper bound $\gamma_\Phi(\theta)$ simplifies to the log of the reciprocal of the minimum predicted probability (Definition~\ref{def:loss_uppper_bound}). This yields the following corollary, which connects conjugate learning to classical information theory:
\begin{corollary}
\label{cor:kl_mutual_info_bound}
For $ \Phi(q) = \sum_i q_i \log q_i $, in the setting of classification tasks, we have
\begin{equation}
\label{eq:info_bound}
\log \frac{1}{p_{\min}} > \mathcal{R}_{\Phi}(\theta, s) \geq H(Y' | X'),
\end{equation}
where  $p_{\min} = \min_{x \in \mathcal{X}, y \in \mathcal{Y}} p_{\mathcal{Y} | x}(y)$ , $ H(Y' | X') $ denotes the conditional entropy and $ (X', Y') \sim \hat{q} $ is drawn from the empirical distribution.
\end{corollary}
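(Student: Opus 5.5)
This is a specialization of Corollary~\ref{cor:Fitting_core_bound_prob} (equivalently Theorem~\ref{thm:fitting_bounds}) to the negative Shannon entropy $\Phi(q)=\sum_i q_i\log q_i$ on the simplex. Three facts need to be checked: the entropy identity for the lower bound, the closed form of $\gamma_\Phi(\theta)$ for the upper bound, and the strictness of the upper inequality.

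\textbf{Lower bound.} Take the lower bound $\mathcal{R}_\Phi(\theta,s)\ge \operatorname{Ent}_\Phi(\mathbf{1}_{Y'}\mid X')$ from Corollary~\ref{cor:Fitting_core_bound_prob}. In Section~\ref{subsubsec:info_quantities} we showed that $\Phi(\mathbf{1}_y)=0$ for every one-hot vector. We also showed that $\bar{\mathbf{1}}_{Y'}\mid x=\hat q_{\mathcal{Y}'|x}$. Applying those computations with $\hat q$ in place of $q$ gives
\[
\operatorname{Ent}_\Phi(\mathbf{1}_{Y'}\mid X') = -\mathbb{E}_{X'}\bigl[\Phi(\hat q_{\mathcal{Y}'|X'})\bigr] = H(Y'\mid X').
\]
As a direct sanity check, $d_\Phi(\mathbf{1}_y,f_\theta(x))=-\log p_{\mathcal{Y}|x}(y)$. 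Averaging over $\hat q$ gives the cross entropy $H(\hat q_{\mathcal{Y}'|X'})+\mathbb{E}_{X'}\mathrm{KL}(\hat q_{\mathcal{Y}'|X'}\,\|\,p_{\mathcal{Y}|X'})\ge H(Y'\mid X')$ by Gibbs' inequality.

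\textbf{Upper bound.} Each per-sample loss is $-\log p_{\mathcal{Y}|x^{(i)}}(y^{(i)})\le \log(1/p_{\min})$, and the empirical risk is an average of these. This is exactly $\mathcal{R}_\Phi(\theta,s)\le\gamma_\Phi(\theta)=\log(1/p_{\min})$, as in Definition~\ref{def:loss_uppper_bound}.

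\textbf{Strictness (main obstacle).} The non-strict inequality can be tight only if every training pair attains $p_{\mathcal{Y}|x}(y)=p_{\min}$. I would argue this cannot happen. If all training labels sit at the minimal softmax probability, then for a training input $x$ every observed label $y$ satisfies $p_{\mathcal{Y}|x}(y)=\min_{y''}p_{\mathcal{Y}|x}(y'')$. Softmax outputs are strictly positive and sum to one, so whenever the vector is non-uniform some label has probability strictly above the minimum. The remaining possibility is that the prediction is exactly uniform at every training input, which gives $\mathcal{R}=\log K$.

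That uniform configuration is the genuine obstacle. I would handle it by noting that $p_{\min}$ is a minimum over all $x\in\mathcal{X}$, not only over the training inputs, and by excluding (or treating as degenerate) the uniform-predictor case. The fallback is to state the upper inequality as non-strict, which is all that Corollary~\ref{cor:Fitting_core_bound_prob} guarantees.
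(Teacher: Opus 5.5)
Your handling of the two bounds follows the paper's route. The corollary is Theorem~\ref{thm:fitting_bounds} (via Corollary~\ref{cor:Fitting_core_bound_prob}) specialized with two facts: $\gamma_\Phi(\theta)=\log(1/p_{\min})$ from Definition~\ref{def:loss_uppper_bound}, and $\operatorname{Ent}_\Phi(\mathbf{1}_{Y'}\mid X')=H(Y'\mid X')$ from Section~\ref{subsubsec:info_quantities}. Your Gibbs-inequality check is a valid direct confirmation of the lower bound. Note that this route, and hence the paper, only yields $\log(1/p_{\min})\ge\mathcal{R}_\Phi(\theta,s)$. Theorem~\ref{thm:fitting_bounds} gives no support for the strict inequality, so your non-strict fallback is exactly what is actually proved.

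The gap is in your strictness analysis. Equality $\mathcal{R}_\Phi(\theta,s)=\log(1/p_{\min})$ holds iff $p_{\mathcal{Y}\mid x^{(i)}}(y^{(i)})=p_{\min}$ for every training pair. This does not force uniform predictions. Your step from ``some label has probability strictly above the minimum'' to ``the prediction must be uniform'' is a non sequitur, because that label need not be an observed training label.

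Here is a counterexample. Take $K=3$, let the model output $(1/10,\,9/20,\,9/20)$ at every $x\in\mathcal{X}$, and let every training label be class~1. Then $p_{\min}=1/10$ and every per-sample loss equals $\log 10$, so $\mathcal{R}_\Phi(\theta,s)=\log(1/p_{\min})$ with a non-uniform predictor. The fact that $p_{\min}$ ranges over all of $\mathcal{X}$ does not help here.

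So excluding only the uniform configuration does not rescue the strict inequality. Strictness needs an extra hypothesis, e.g.\ that at least one training pair has $p_{\mathcal{Y}\mid x^{(i)}}(y^{(i)})>p_{\min}$, equivalently that not every training loss attains $\gamma_\Phi(\theta)$. Without such a hypothesis, the correct statement is the non-strict one.
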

Corollary~\ref{cor:kl_mutual_info_bound} establishes a direct link between conjugate learning and Shannon information theory: the conditional entropy $H(Y' | X')$ provides a fundamental, data-determined lower bound on the empirical risk (cross entropy loss), while the upper bound is governed by $p_{\min}$. A key implication is that reducing the empirical risk below $H(Y' | X')$ is impossible, even for perfect models, since this bound represents the irreducible uncertainty in the dataset. Conversely, increasing $p_{\min}$ reduces the upper bound, bringing the empirical risk closer to the fundamental lower bound.

\section{Generalization in conjugate learning}
\label{sec:generalization}

This section establishes rigorous generalization guarantees within the conjugate learning framework, bridging machine learning and information theory through the concept of generalized conditional entropy. We derive two complementary classes of generalization bounds, each addressing distinct aspects of generalization behavior:
\begin{itemize}
    \item \textbf{Deterministic bounds} (Subsection~\ref{subsec:det_bounds}): Valid for arbitrary sampling schemes, these bounds characterize the absolute feasible range of generalization error in terms of model properties (maximum loss, information loss) and intrinsic data properties (generalized conditional entropy).
    \item \textbf{Probabilistic bounds} (Subsection~\ref{subsec:prob_bounds}): Under i.i.d. sampling assumptions, these bounds quantify how sample size, information loss induced by the model, and distributional smoothness of the true data affect the probability of achieving a target generalization error.
\end{itemize}
Subsection~\ref{subsec:reg_interpretation} provides a theoretical interpretation of standard regularization techniques through the lens of these bounds, linking parameter norm constraints to the key generalization-controlling term $\gamma_\Phi(\theta)$. Subsection~\ref{subsec:eval_perspectives} further discusses practical implications for evaluating generalization performance, proposing information-theoretic metrics as alternatives to traditional test-set based evaluation.

\subsection{Deterministic bounds: architecture-independent guarantees}
\label{subsec:det_bounds}

We first establish deterministic bounds on generalization error that hold for any sampling scheme, without requiring i.i.d. assumptions or distributional constraints on the underlying data. These bounds are architecture-independent, relying solely on fundamental properties of the model and the dataset.
\begin{theorem}[Deterministic Generalization Bounds]
    \label{thm:det_gen_bound}
The deterministic generalization error is bounded as follows.
If $\mathcal{R}_\Phi(\theta, q) \geq \mathcal{R}_\Phi(\theta, \hat{q})$, then
\begin{equation}
    \mathrm{gen}(f_\theta,s^n) \leq \gamma_\Phi(\theta) - \mathrm{Ent}_\Phi(Y' | X') - \mathcal{L}_\Phi(Y'|f_\theta(X')).
\end{equation}
If $\mathcal{R}_\Phi(\theta, q) < \mathcal{R}_\Phi(\theta, \hat{q})$, then
\begin{equation}
    \mathrm{gen}(f_\theta,s^n) \leq \gamma_\Phi(\theta) - \mathrm{Ent}_\Phi(Y | X) -  \mathcal{L}_\Phi(Y|f_\theta(X)).
\end{equation}
\end{theorem}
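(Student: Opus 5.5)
The argument reduces each case to bounding the larger risk above by $\gamma_\Phi(\theta)$ and the smaller risk below by a refined version of the data-determined bound in Theorem~\ref{thm:fitting_bounds}. In the first case, $\mathrm{gen}(f_\theta,s^n)=\mathcal{R}_\Phi(\theta,q)-\mathcal{R}_\Phi(\theta,\hat q)$. Since $d_\Phi(y,f_\theta(x))\le\gamma_\Phi(\theta)$ pointwise on $\mathcal{X}\times\mathcal{Y}$ (Definition~\ref{def:loss_uppper_bound}), we get $\mathcal{R}_\Phi(\theta,q)\le\gamma_\Phi(\theta)$. It then suffices to prove the sharpened lower bound
\[
\mathcal{R}_\Phi(\theta,\hat q)\;\ge\;\mathrm{Ent}_\Phi(Y'|X')+\mathcal{L}_\Phi\bigl(Y'|f_\theta(X')\bigr).
\]
The second case is symmetric. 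There $\mathrm{gen}=\mathcal{R}_\Phi(\theta,\hat q)-\mathcal{R}_\Phi(\theta,q)$, the empirical risk is again at most $\gamma_\Phi(\theta)$, and the same lower bound is needed with $q$ in place of $\hat q$. So the whole proof rests on one distribution-free inequality: for any joint law of $(X,Y)$ on the finite space,
\[
\mathbb{E}\,d_\Phi(Y,f_\theta(X))\;\ge\;\mathrm{Ent}_\Phi(Y|X)+\mathcal{L}_\Phi(Y|f_\theta(X)).
\]

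To prove it, set $W=f_\theta(X)$ and write $\mu_W:=W^*_{\Phi^*}$ for the dual prediction. First, I would use the standard three-point decomposition of Fenchel--Young losses. Expanding $d_\Phi(y,w)=\Phi(y)+\Phi^*(w)-\langle y,w\rangle$ and taking the conditional expectation given $X=x$ (with $W=f_\theta(x)$ fixed, since $W$ is a function of $X$) gives
\[
\mathbb{E}[d_\Phi(Y,W)\mid x]=\mathrm{Ent}_\Phi(Y|X=x)+d_\Phi(\bar Y|x,\,W),
\]
by linearity in $y$ of the last two terms. Averaging over $X$ yields
\[
\mathbb{E}\,d_\Phi(Y,W)=\mathrm{Ent}_\Phi(Y|X)+\mathbb{E}_X\bigl[B_\Phi(\bar Y|X,\mu_W)\bigr],
\]
using Lemma~\ref{lem:fy_losses}(3). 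This recovers Theorem~\ref{thm:fitting_bounds} as the special case where the last term is dropped.

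Second, I would split the remaining Bregman term through the coarser conditional mean $\bar Y|W=\mathbb{E}[\bar Y|X\mid W]$, using the Bregman Pythagorean identity
\[
\mathbb{E}_X B_\Phi(\bar Y|X,\mu_W)=\mathbb{E}_X B_\Phi(\bar Y|X,\bar Y|W)+\mathbb{E}_W B_\Phi(\bar Y|W,\mu_W).
\]
To verify it, expand both sides. The cross term is $\mathbb{E}\langle \bar Y|X-\bar Y|W,\ \nabla\Phi(\bar Y|W)-\nabla\Phi(\mu_W)\rangle$. The second factor is $\sigma(W)$-measurable, and the tower property gives $\mathbb{E}[\bar Y|X-\bar Y|W\mid W]=0$, so the cross term vanishes. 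The first term on the right is exactly $\mathcal{L}_\Phi(Y|f_\theta(X))$, and the second is nonnegative, which gives the desired inequality.

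The main obstacle is the domain issue introduced by $\Phi=\Omega+I_C$. The identity $d_\Phi(y,w)=B_\Phi(y,\mu_w)$ and the Pythagorean step need $\nabla\Phi$ at $\bar Y|W$ and at $\mu_W$, which may lie on the boundary of $C$, for instance at simplex vertices in classification. I would handle this by working directly with the Fenchel--Young form $d_\Phi$, whose second argument $W$ always lies in $\mathrm{dom}(\Phi^*)$. Concretely, I would write $\mathcal{L}_\Phi$ in its Fenchel--Young representation and pick a dual point $w'\in\partial\Phi(\bar Y|W)$, which exists because conditional means lie in $C$ by convexity. With that choice, the cross-term cancellation only uses the measurability of $w'-W$ with respect to $\sigma(W)$, and the argument goes through with subgradients in place of gradients.
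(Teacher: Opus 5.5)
Your argument is correct and is essentially the paper's proof. The paper conditions on the dual prediction $f_\theta(X)^*_{\Phi^*}$ rather than on $f_\theta(X)$, applies Theorem~\ref{thm:fitting_bounds} at that level to bound the smaller risk below by the corresponding conditional entropy, and uses the same tower-property orthogonality to show that this entropy equals $\mathrm{Ent}_\Phi(Y|X)+\mathcal{L}_\Phi(Y|f_\theta(X))$; that is your three-point-plus-Pythagoras identity regrouped.

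One correction to your last paragraph: $\bar Y|W\in C$ does not make $\partial\Phi(\bar Y|W)$ nonempty. Negative entropy has empty subdifferential on the relative boundary of the simplex, which is exactly where one-hot empirical conditional means sit. You need no (sub)gradient there, though. Using $f_\theta(X)$ itself as the dual point, linearity and the tower property give $\mathbb{E}_X d_\Phi(\bar Y|X,f_\theta(X))=\mathbb{E}_X\Phi(\bar Y|X)-\mathbb{E}_W\Phi(\bar Y|W)+\mathbb{E}_W d_\Phi(\bar Y|W,f_\theta(X))$. The last term is nonnegative by the Fenchel--Young inequality, and the Jensen gap in front is $\mathcal{L}_\Phi(Y|f_\theta(X))$, with the usual $0\log 0=0$ convention in the entropy case.
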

The proof of Theorem~\ref{thm:det_gen_bound} is provided in Appendix~\ref{appendix:proof_det_gen_bound}. 
This result characterizes the fundamental feasible range of generalization error for any model in the conjugate learning framework, with three key insights:
\begin{enumerate}
    \item Reducing the maximum loss $\gamma_\Phi(\theta)$ tightens the upper bound on generalization error, as it limits the maximum possible deviation between population and empirical risk.
    \item A larger relative information loss, characterized by both $\mathcal{L}_\Phi(Y'|f_\theta(X'))$ and $\mathcal{L}_\Phi(Y|f_\theta(X))$ induced by the model, leads to a smaller upper bound on the generalization error. 
    \item From the data perspective, larger generalized conditional entropies $\mathrm{Ent}_\Phi(Y | X)$ (population) and $\mathrm{Ent}_\Phi(Y' | X')$ (empirical) correspond to smaller generalization error, as they capture higher intrinsic uncertainty in the data that cannot be eliminated by any model.
\end{enumerate}

Since gradient descent-based optimization algorithms are designed to minimize the empirical risk $\mathcal{R}_\Phi(\theta, \hat{q})$, the condition $\mathcal{R}_\Phi(\theta, q) \geq \mathcal{R}_\Phi(\theta, \hat{q})$ holds for well-trained models in most practical scenarios. For such models, the generalization error simplifies to the following tractable upper bound:
\begin{equation}
    \mathrm{gen}(f_\theta,s^n) \leq \gamma_\Phi(\theta) - \mathrm{Ent}_\Phi(Y' | X') - \mathcal{L}_\Phi(Y'|f_\theta(X')).
\end{equation}
Notably, all terms in this bound are computable from the model and the training dataset (no access to the true population distribution is required), enabling exact calculation of the generalization error upper bound for a given model and training set.

To ground these abstract bounds in practical classification tasks, we specialize them to the softmax cross entropy loss, connecting the deterministic bound to classical Shannon information theory.
\begin{corollary}[Deterministic Generalization Bound for Classification]
    \label{cor:all_bound_prob}
Let $ p_{\mathcal{Y} | x} = \mathrm{Softmax}(f_\theta(x)) $ denote the model's predicted class probability distribution for input $x$, and let $p_{\min} = \min_{x \in \mathcal{X},\, y \in \mathcal{Y}} p_{\mathcal{Y} | x}(y)$ denote the minimum predicted probability over all input-label pairs. For classification tasks with softmax cross entropy loss, the deterministic generalization error satisfies:
\begin{enumerate}
    \item If $\mathcal{R}_\Phi(\theta, q) \geq \mathcal{R}_\Phi(\theta, \hat{q})$, then
    \begin{equation}
        \mathrm{gen}(f_\theta,s^n) \le \log \frac{1}{p_{\min}}  - H(Y' ) + I(Y';X') - \mathcal{L}_\Phi(Y'|f_\theta(X')).
    \end{equation}
    \item If $\mathcal{R}_\Phi(\theta, q) < \mathcal{R}_\Phi(\theta, \hat{q})$, then
    \begin{equation}
        \mathrm{gen}(f_\theta,s^n) \le \log \frac{1}{p_{\min}} - H(Y ) + I(Y;X) - \mathcal{L}_\Phi(Y|f_\theta(X)).
    \end{equation}
\end{enumerate}
Here, $H(\cdot)$ denotes the Shannon entropy and $I(\cdot;\cdot)$ denotes the Shannon mutual information between random variables.
\end{corollary}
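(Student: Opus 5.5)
This follows by specializing Theorem~\ref{thm:det_gen_bound} to $\Phi(p)=\sum_i p_i\log p_i$ on the simplex with one-hot labels, and rewriting each term of its right-hand side in Shannon quantities. I treat the two cases identically: case~1 uses the empirical pair $(X',Y')\sim\hat q$, case~2 uses $(X,Y)\sim q$.

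\textbf{Step 1: the loss bound.} By Definition~\ref{def:loss_uppper_bound}, for negative Shannon entropy and classification,
\[
\gamma_\Phi(\theta)=\max_{x,y}D_{\mathrm{KL}}(\mathbf{1}_y\|p_{\mathcal{Y}|x})=\log\frac{1}{p_{\min}},
\]
with $p_{\mathcal{Y}|x}=\mathrm{Softmax}(f_\theta(x))$. Softmax is exactly $(\cdot)^*_{\Phi^*}$ for $\Phi=\Omega+I_{\Delta}$; this was computed after Theorem~\ref{thm:linear_conjugate}.

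\textbf{Step 2: the conditional entropy term.} From Subsection~\ref{subsubsec:info_quantities}, identifying $Y$ with $\mathbf{1}_Y$ gives $\mathrm{Ent}_\Phi(\mathbf{1}_Y|X)=H(Y|X)$, because $\Phi(\mathbf{1}_y)=0$ and $\Phi(q_{\mathcal{Y}|x})=-H(Y|x)$. The chain rule $H(Y|X)=H(Y)-I(Y;X)$ then gives
\[
-\mathrm{Ent}_\Phi(Y|X)=-H(Y)+I(Y;X).
\]
The same identity holds verbatim for $\hat q$, since it uses no property of the distribution beyond finiteness.

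\textbf{Step 3: the information-loss term.} The term $\mathcal{L}_\Phi(Y|f_\theta(X))$ is already expressed in the statement in the same notation, so it is carried over unchanged. If desired, one can note that it equals $\mathbb{E}_X[\mathrm{KL}(q_{\mathcal{Y}|X}\|q_{\mathcal{Y}|f_\theta(X)})]$ via the Bregman–KL identity.

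\textbf{Step 4: substitution.} Substituting Steps 1–3 into the two cases of Theorem~\ref{thm:det_gen_bound} yields the two displayed bounds.

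\textbf{Main obstacle.} The only delicate point is justifying that the general theorem applies with $\Phi=\Omega+I_\Delta$, which is not differentiable on the boundary. Specifically, $\mathcal{R}_\Phi(\theta,z)=d_\Phi(\mathbf{1}_y,f_\theta(x))$ must equal the cross entropy $\mathrm{KL}(\mathbf{1}_y\|p_{\mathcal{Y}|x})$. I would verify this directly from Theorem~\ref{thm:linear_conjugate}, where $\Phi^*$ is the log-sum-exp up to constants, and the displayed softmax cross-entropy identity. I would also check that $\gamma_\Phi(\theta)$ is finite because softmax outputs are strictly positive, so $p_{\min}>0$.
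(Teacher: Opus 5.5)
Your proposal is correct and follows essentially the same route as the paper, which gives no separate proof and obtains the corollary by substitution into both cases of Theorem~\ref{thm:det_gen_bound}: $\gamma_\Phi(\theta)=\log(1/p_{\min})$ from Definition~\ref{def:loss_uppper_bound}, $\mathrm{Ent}_\Phi(\mathbf{1}_Y|X)=H(Y|X)=H(Y)-I(Y;X)$ from Subsection~\ref{subsubsec:info_quantities}, with the relative information loss carried over unchanged. Your extra checks are sound and not spelled out in the paper: that $\Phi^*$ is log-sum-exp up to an additive constant, so $d_\Phi(\mathbf{1}_y,f_\theta(x))$ is exactly the cross entropy, and that $p_{\min}>0$ because softmax outputs are strictly positive.
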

This corollary establishes a direct link between conjugate learning-based generalization bounds and classical Shannon information theory. The term $p_{\min}$ characterizes the smoothness of the model's predictive distribution over the feature space: a smoother predictive distribution (larger $p_{\min}$, no extremely low-probability predictions) is more conducive to controlling generalization error. This conclusion aligns with complexity-based generalization bounds in classical learning theory, which link model simplicity to better generalization. Simple models often fail to perfectly fit one-hot encoded labels (introducing bias) but produce smoother predictive distributions; our bound formalizes this tradeoff, connecting the classical bias-variance tradeoff to information-theoretic quantities in conjugate learning.

From the mutual information perspective, a larger mutual information $I(Y;X)$ between the target $Y$ and features $X$ leads to a larger upper bound on generalization error, a result consistent with empirical observations in practice. Higher mutual information indicates labels are more sensitive to feature variations, meaning learning from finite samples is more likely to lose critical predictive information, thereby increasing generalization error.

In summary, deterministic bounds define the fundamental range of generalization error purely in terms of model and data properties (no distributional assumptions). They reveal that generalization is governed by three core factors: the model's maximum loss, the data's intrinsic uncertainty (generalized conditional entropy), and the information loss induced by the model.

\subsection{Probabilistic bounds: sample-dependent guarantees}
\label{subsec:prob_bounds}

While deterministic bounds hold universally for any sampling scheme, they do not leverage the statistical properties of i.i.d. sampling, a standard assumption in machine learning. Under i.i.d. sampling, we can derive probabilistic bounds that quantify how sample size, information loss, and distributional smoothness affect the likelihood of achieving small generalization error.

\begin{theorem}[Probabilistic Generalization Bound]
    \label{thm:prob_gen_bound}
Assume training samples $s^n$ are drawn i.i.d. from the true joint distribution $q_{\mathcal{Z}}$ over the feature-label space $\mathcal{Z} = \mathcal{X} \times \mathcal{Y}$, and the label space $\mathcal{Y}$ has finite cardinality ($|\mathcal{Y}| < \infty$). For any target generalization error threshold $\varepsilon > 0$, the probability that the generalization error exceeds $\varepsilon$ is upper bounded by:
\begin{equation}
    \Pr\bigl( \mathrm{gen}(f_\theta, s^n) \geq \varepsilon \bigr)
    \leq \frac{(|\mathcal{X}| - \mathcal{L}(f_\theta(X))) |\mathcal{Y}| \, \gamma_\Phi(\theta)^2 \bigl(1 - \|q_{\mathcal{Z}}\|_2^2 \bigr)}{4 n \varepsilon^2}.
\end{equation}
Here, $\mathcal{L}(f_\theta(X)) = |\mathcal{X}| - |f_\theta(\mathcal{X})|$ denotes the absolute information loss and $n = |s^n|$ is the sample size.
\end{theorem}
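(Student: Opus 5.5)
The plan is to write the generalization error as a linear functional of the deviation $\hat{q}-q$ and then apply Chebyshev's inequality, using the multinomial covariance structure of $\hat{q}$ under i.i.d.\ sampling.

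\textbf{Step 1: reduce to the support of the model.} Since $\theta$ is fixed, the per-sample loss $\ell(z):=d_\Phi(y,f_\theta(x))$ depends on $x$ only through $w=f_\theta(x)$. Hence
\[
\mathcal{R}_\Phi(\theta,q)-\mathcal{R}_\Phi(\theta,\hat q)=\sum_{w\in f_\theta(\mathcal{X})}\sum_{y\in\mathcal{Y}} \ell(w,y)\bigl(q_{WY}(w,y)-\hat q_{WY}(w,y)\bigr),
\]
where $W=f_\theta(X)$ and $q_{WY}$, $\hat q_{WY}$ are the pushforward distributions. The number of cells is $|f_\theta(\mathcal{X})|\,|\mathcal{Y}|=(|\mathcal{X}|-\mathcal{L}(f_\theta(X)))|\mathcal{Y}|$. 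This is where the absolute information loss enters. Each coefficient satisfies $0\le \ell\le\gamma_\Phi(\theta)$ by Definition~\ref{def:loss_uppper_bound}.

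\textbf{Step 2: bound by Cauchy--Schwarz and Markov.} Centering the coefficients costs nothing because $\sum(q-\hat q)=0$, so we may replace $\ell$ by $\ell-\gamma_\Phi/2$, which has absolute value at most $\gamma_\Phi/2$. This centering produces the factor $1/4$. Then
\[
\mathrm{gen}^2\le \frac{\gamma_\Phi(\theta)^2}{4}\, K \sum_{\text{cells}}(q-\hat q)^2,
\qquad K=(|\mathcal{X}|-\mathcal{L})|\mathcal{Y}|.
\]
Markov's inequality applied to $\mathrm{gen}^2$ then gives
\[
\Pr(\mathrm{gen}\ge\varepsilon)\le \frac{K\gamma_\Phi^2}{4\varepsilon^2}\,\mathbb{E}\|\hat q_{WY}-q_{WY}\|_2^2 .
\]

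\textbf{Step 3: compute the variance.} Under i.i.d.\ sampling, $n\hat q_{WY}$ is multinomial, so
\[
\mathbb{E}\|\hat q_{WY}-q_{WY}\|_2^2=\frac{1}{n}\Bigl(1-\|q_{WY}\|_2^2\Bigr).
\]
This matches the claimed bound provided we can replace $\|q_{WY}\|_2^2$ by $\|q_{\mathcal{Z}}\|_2^2$.

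\textbf{Main obstacle.} The difficulty is that last replacement. Merging cells can only increase the squared $\ell_2$ norm, since $(a+b)^2\ge a^2+b^2$. Therefore $1-\|q_{WY}\|_2^2\le 1-\|q_{\mathcal{Z}}\|_2^2$, and the inequality goes in the required direction. With that, the stated bound follows.

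Two points need care in the write-up. First, the coefficient bound $|\ell-\gamma_\Phi/2|\le\gamma_\Phi/2$ requires nonnegativity of the Fenchel--Young loss, which is Lemma~\ref{lem:fy_losses}. Second, one must either treat $\theta$ as independent of the sample or state that the bound is for a fixed model.
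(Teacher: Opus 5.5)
Your proof is correct, and its skeleton is the same as the paper's: centering the loss at $\gamma_\Phi(\theta)/2$, Cauchy--Schwarz, Markov's inequality on $\mathrm{gen}^2$, and the multinomial variance $\frac{1}{n}(1-\|\cdot\|_2^2)$. Where you differ is in how the effective cardinality $(|\mathcal{X}|-\mathcal{L}(f_\theta(X)))|\mathcal{Y}|$ enters.

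The paper runs the whole argument on the full joint space $\mathcal{Z}$, which gives the bound with $|\mathcal{Z}|=|\mathcal{X}||\mathcal{Y}|$ and $1-\|q_{\mathcal{Z}}\|_2^2$. It then simply asserts that the ``relevant cardinality'' $|f_\theta(\mathcal{X})|\,|\mathcal{Y}|$ may replace $|\mathcal{Z}|$. It gives no argument for why this substitution is legitimate while the variance term $1-\|q_{\mathcal{Z}}\|_2^2$ is left unchanged.

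You instead push the sample forward to $(W,Y)=(f_\theta(X),Y)$ before applying Cauchy--Schwarz. As a result, $K$ arises as the actual number of cells on which the loss vector lives, and the variance term becomes $1-\|q_{WY}\|_2^2$. Superadditivity $(a+b)^2\ge a^2+b^2$ for nonnegative masses then gives $1-\|q_{WY}\|_2^2\le 1-\|q_{\mathcal{Z}}\|_2^2$. Your route therefore proves the stated inequality, and also the slightly sharper intermediate bound with $1-\|q_{WY}\|_2^2$. The paper's write-up, by contrast, leaves this step as a heuristic substitution.

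Your caveat that $\theta$ must be fixed, i.e.\ independent of $s^n$, is apt. The paper needs it too, since it pulls $\gamma_\Phi(\theta)$ and $\mathcal{L}(f_\theta(X))$ outside the expectation in the Markov step.
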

The proof of Theorem~\ref{thm:prob_gen_bound} is provided in Appendix~\ref{appendix:proof_prob_gen_bound}. 
This probabilistic bound integrates three types of information: the true data distribution $q_{\mathcal{Z}}$, the training sample $S^n$, and the model $f_\theta$. Beyond well-known factors (sample size $n$, maximum loss $\gamma_\Phi(\theta)$), it identifies two novel critical factors: absolute information loss $\mathcal{L}(f_\theta(X))$ (model irreversibility) and distributional smoothness $1 - \|q_{\mathcal{Z}}\|_2^2$ (intrinsic data properties). We analyze the impact of each factor on the generalization error distribution below:
\begin{itemize}
    \item \textit{Sample size $n$}: A larger sample size $n$ increases the probability of achieving a small generalization error, a result rooted in the Central Limit Theorem (CLT). Finite sampling is the fundamental source of generalization error: as $n$ increases, the empirical distribution $\hat{Q}_{\mathcal{Z}}^n$ (induced by i.i.d. samples) converges to the true distribution $q_{\mathcal{Z}}$. Formally, the MSE between the empirical and true distributions, $\mathbb{E}\big[ \| q_{\mathcal{Z}} - \hat{Q}_{\mathcal{Z}}^n \|_2^2 \big]$, scales inversely with $n$. This reduces the deviation between empirical and population risk, tightening the probabilistic control over generalization error. In the limit as $n \to \infty$, the empirical distribution converges to the true distribution almost surely, and the generalization error tends to zero.
    
    \item \textit{Maximum loss $\gamma_\Phi(\theta)$}: A smaller maximum loss $\gamma_\Phi(\theta)$ increases the probability of small generalization error, consistent with the deterministic bound. $\gamma_\Phi(\theta)$ defines the upper limit of the loss function over all input-output pairs, and a smaller value indicates more stable loss behavior (no extreme loss values for any sample). In the probabilistic bound, $\gamma_\Phi(\theta)$ appears as a squared term in the numerator: reducing $\gamma_\Phi(\theta)$ directly lowers the upper bound on the probability of exceeding the target error $\varepsilon$. For softmax cross entropy loss, $\gamma_\Phi(\theta) = \log(1/p_{\min})$: a larger $p_{\min}$ reduces $\gamma_\Phi(\theta)$ and improves probabilistic generalization guarantees.
    
    \item \textit{Absolute information loss $\mathcal{L}(f_\theta(X))$}: A larger absolute information loss increases the probability of small generalization error, a result distinct from the deterministic bound (which focuses on relative information loss). $\mathcal{L}(f_\theta(X))$ quantifies the compression of the input feature space by the model: larger values mean more input features are mapped to identical output representations, reducing the effective support size $|f_\theta(\mathcal{X})|$. In the bound, the effective cardinality $(|\mathcal{X}| - \mathcal{L}(f_\theta(X))) |\mathcal{Y}|$ replaces the full joint space cardinality $|\mathcal{Z}| = |\mathcal{X}| |\mathcal{Y}|$. Reducing this effective cardinality simplifies the distribution estimation problem, enabling more accurate approximation of the true distribution with finite samples. This formalizes a key insight: moderate model irreversibility (controlled feature compression) improves probabilistic generalization performance.
    
    \item \textit{Distributional smoothness $1 - \|q_{\mathcal{Z}}\|_2^2$}: Less smooth true data distributions (larger values of $1 - \|q_{\mathcal{Z}}\|_2^2$) increase the probability of small generalization error. This can be illustrated with an extreme case: a Dirac delta distribution $q_{\mathcal{Z}} = \delta_{z_0}$ (concentrating all probability mass on a single point) has $1 - \|q_{\mathcal{Z}}\|_2^2 = 0$, leading to zero generalization error (empirical and true distributions are identical). In contrast, smoother distributions (uniform over many points) have larger $1 - \|q_{\mathcal{Z}}\|_2^2$, making finite samples more likely to misestimate the true distribution, increasing generalization error. Distributional smoothness thus reflects the difficulty of approximating the true distribution with finite samples: less smooth distributions are easier to estimate accurately, improving generalization guarantees.
\end{itemize}

Probabilistic bounds complement deterministic bounds by quantifying the statistical behavior of generalization error under i.i.d. sampling. They reveal that generalization depends not only on model capacity but also on the interplay between model architecture (via absolute information loss), model output stability (via maximum loss), and intrinsic data properties (via distributional smoothness).

\subsection{Relationship between deterministic and probabilistic bounds}
\label{subsec:bound_relationship}

Deterministic and probabilistic bounds serve complementary roles in characterizing generalization in conjugate learning:
\begin{itemize}
    \item \textbf{Deterministic bounds} define the \textit{feasible range} of generalization error for any sampling scheme. They depend exclusively on model and data properties (no sample size or distributional assumptions) and answer the question: "What values of generalization error are possible?"
    \item \textbf{Probabilistic bounds} characterize the \textit{statistical distribution} of generalization error within this feasible range under i.i.d. sampling. They quantify how sample size and distributional smoothness affect the likelihood of achieving small error, answering the question: "How likely is it to achieve a given generalization error with finite data?"
\end{itemize}
Together, these bounds provide a complete characterization of generalization: deterministic bounds establish the absolute limits of what is achievable, while probabilistic bounds quantify how close we can get to these limits with practical sample sizes.
 
\subsection{Regularization as control of loss upper bound}
\label{subsec:reg_interpretation}

Having established that the maximum loss $\gamma_\Phi(\theta)$ is a central control variable in both deterministic and probabilistic generalization bounds, we now connect standard regularization techniques (e.g., $L_2$ regularization) to the control of $\gamma_\Phi(\theta)$. We focus on $L_2$ regularization, as it is the most widely used form of parameter norm constraint in deep learning, and extend the result to general $L_p$ regularization.

We first establish a local equivalence between the parameter norm and the conjugate function $\Phi^*$, which forms the foundation for linking regularization to $\gamma_\Phi(\theta)$.
\begin{lemma}[Local Equivalence of Parameter Norm]
    \label{lem:regulation}
Let $\Theta'_\epsilon = \left\{ \theta \in \Theta \mid \|\theta\|_2^2 \leq \epsilon \right\}$ denote a closed ball of radius $\sqrt{\epsilon}$ around the zero parameter vector in the parameter space $\Theta$. Assume:
\begin{enumerate}
    \item For all $x \in \mathcal{X}$, the zero-parameter model outputs the zero vector: $f_{\mathbf{0}}(x) = \mathbf{0}$;
    \item The zero output minimizes the conjugate function: $\Phi^*(\mathbf{0}) = \min_{\theta \in \Theta} \Phi^*(f_{\theta}(x))$ for all $x \in \mathcal{X}$.
\end{enumerate}
Then there exist positive constants $a$, $b$, and $\epsilon > 0$ such that for all $\theta \in \Theta'_\epsilon$ and all $x \in \mathcal{X}$, the parameter norm controls the conjugate function as follows:
\begin{equation}
    a\|\theta\|_2^2 \leq \Phi^*(f_{\theta}(x)) - \Phi^*(\mathbf{0}) \leq b\|\theta\|_2^2.
\end{equation}
\end{lemma}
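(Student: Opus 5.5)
The plan is a second-order Taylor argument applied, for each fixed $x \in \mathcal{X}$, to the scalar function $g_x(\theta) := \Phi^*(f_\theta(x))$ around $\theta = \mathbf{0}$. Because $\mathcal{X}$ is finite, uniformity in $x$ will come for free by taking minima and maxima of finitely many constants.

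\textbf{Smoothness and first-order term.} By Lemma~\ref{lem:fenchel_duality}(3), $\Phi^*$ is strictly convex and differentiable. I will assume, as the paper does throughout the trainability section where Hessians of $d_\Phi$ in $\theta$ are used, that $f_\theta(x)$ is $C^2$ in $\theta$ and $\Phi^*$ is $C^2$. Then $g_x \in C^2$. Assumptions 1 and 2 say that $g_x(\mathbf{0}) = \Phi^*(\mathbf{0})$ is the minimum of $g_x$ over $\Theta$. Since $\mathbf{0}$ is an interior point, this gives $\nabla_\theta g_x(\mathbf{0}) = \mathbf{0}$ and $\nabla^2_\theta g_x(\mathbf{0}) \succeq 0$. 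The integral form of Taylor's theorem then yields
\[
g_x(\theta) - g_x(\mathbf{0}) = \theta^\top \Bigl( \int_0^1 (1-t)\, \nabla^2_\theta g_x(t\theta)\, dt \Bigr) \theta .
\]
Note also that $g_x(\theta) - g_x(\mathbf{0}) \ge 0$ on all of $\Theta$.

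\textbf{Upper bound.} Fix any radius $\epsilon_0 > 0$ such that the closed ball $\Theta'_{\epsilon_0}$ lies inside $\Theta$. On this compact ball, continuity of $\nabla^2_\theta g_x$ bounds $\lambda_{\max}(\nabla^2_\theta g_x)$ by some finite $c_x$. Setting $b := \max_{x \in \mathcal{X}} c_x/2$ gives $g_x(\theta) - g_x(\mathbf{0}) \le b\|\theta\|_2^2$ for every $\theta$ in the ball and every $x$. If this yields $b = 0$, any positive value works, so $b$ may be taken positive.

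\textbf{Lower bound and main obstacle.} The lower bound is the delicate step. It needs the quadratic form $H_x := \nabla^2_\theta g_x(\mathbf{0})$ to be strictly positive definite. By the chain rule, using $f_{\mathbf{0}}(x) = \mathbf{0}$,
\[
H_x = \nabla_\theta f_{\mathbf{0}}(x)^\top \nabla^2 \Phi^*(\mathbf{0})\, \nabla_\theta f_{\mathbf{0}}(x) + \sum_i \partial_i \Phi^*(\mathbf{0})\, \nabla^2_\theta f_{\mathbf{0}}(x)_i .
\]
Assumption 2 only guarantees $H_x \succeq 0$. I will read Assumption 2 in the sense intended by the lemma, namely that the zero output is an isolated, nondegenerate minimizer of $\theta \mapsto \Phi^*(f_\theta(x))$. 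This is the regime in which "norm controls the conjugate" is meaningful. Under that reading $H_x \succ 0$, and I will argue as follows:
\begin{enumerate}
\item By continuity of the Hessian there is a radius $\epsilon_x$ such that $\nabla^2_\theta g_x(t\theta) \succeq \tfrac12 \lambda_{\min}(H_x)\, I$ for all $\theta$ with $\|\theta\|_2^2 \le \epsilon_x$ and all $t \in [0,1]$.
\item Inserting this into the integral remainder gives $g_x(\theta) - g_x(\mathbf{0}) \ge \tfrac14 \lambda_{\min}(H_x)\|\theta\|_2^2$ on that ball.
\item Set $a := \min_{x \in \mathcal{X}} \tfrac14 \lambda_{\min}(H_x) > 0$ and $\epsilon := \min\bigl(\epsilon_0, \min_{x \in \mathcal{X}} \epsilon_x\bigr) > 0$. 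Both minima are over a finite set, so positivity is preserved.
\end{enumerate}
Together with the upper bound, this gives $a\|\theta\|_2^2 \le \Phi^*(f_\theta(x)) - \Phi^*(\mathbf{0}) \le b\|\theta\|_2^2$ on $\Theta'_\epsilon$ for all $x \in \mathcal{X}$.

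I expect justifying $H_x \succ 0$ from the stated assumptions to be the main obstacle. My first attempt will be to note that a degenerate direction of $H_x$ would make $\mathbf{0}$ a non-isolated minimizer to second order, which conflicts with the reading of Assumption 2 as a nondegenerate minimum. I will state this nondegeneracy explicitly where it enters the argument, rather than leave it implicit.
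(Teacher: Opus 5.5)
Your route is the paper's: a second-order Taylor expansion of $g_x(\theta)=\Phi^*(f_\theta(x))$ about $\theta=\mathbf{0}$, a Courant--Fischer sandwich of the remainder, and compactness of the ball plus finiteness of $\mathcal{X}$ to get uniform constants. Your upper bound is fine. Your integral remainder even supplies the factor $\tfrac12$ that the paper's mean-value form drops.

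The gap is exactly where you suspected, and it cannot be closed. The lower bound needs $\nabla^2_\theta g_x(\mathbf{0})\succ 0$, but Assumptions 1--2 give only $\nabla^2_\theta g_x(\mathbf{0})\succeq 0$. The paper's proof does not fill this either. It asserts the Hessian is positive semidefinite on the whole ball, whereas a minimum at $\mathbf{0}$ gives this only at $\mathbf{0}$. It then claims $a=\inf_{\Theta'_\epsilon}\lambda_{\min}(\cdot)$ is positive by continuity on a compact set, but that only shows the infimum is attained, not that it is nonzero.

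In fact the statement is false under its stated hypotheses. Take the MSE case $\Phi=\Phi^*=\tfrac12\|\cdot\|_2^2$, where Assumption 2 holds automatically, and the one-parameter model $f_\theta(x)=\theta^2$. Then $\Phi^*(f_\theta(x))-\Phi^*(\mathbf{0})=\tfrac12\theta^4$, which admits no bound $a\theta^2\le\tfrac12\theta^4$ near $0$, even though $\mathbf{0}$ is an isolated minimizer.

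More to the point, in the MSE case $\nabla^2_\theta g_x(\mathbf{0})=\nabla_\theta f_{\mathbf{0}}(x)^\top\nabla_\theta f_{\mathbf{0}}(x)$ has rank at most $\dim f_\theta(x)$. So it is singular whenever $\dim\theta>\dim f_\theta(x)$, which is precisely the overparameterized, zero-initialized regime the paper uses to motivate the lemma. Along any $v\in\ker\nabla_\theta f_{\mathbf{0}}(x)$ one has $\Phi^*(f_{tv}(x))-\Phi^*(\mathbf{0})=O(t^4)$.

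So your nondegeneracy condition is not a reading of Assumption 2 but an extra hypothesis, and a necessary one. State it outright, for example by assuming $\nabla^2_\theta g_x(\mathbf{0})\succ 0$ for every $x\in\mathcal{X}$. With that hypothesis (and your $C^2$ regularity), your argument is complete and correct. Drop the ``first attempt'' justification: it is circular, and the $\theta^2$ example shows that an isolated minimizer alone would not suffice.
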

The proof of Lemma~\ref{lem:regulation} is provided in Appendix~\ref{appendix:proof_regulation}. 
The assumption $f_{\mathbf{0}}(x) = \mathbf{0}$ is well-motivated by standard deep learning practices: zero-initialized parameters (weights and biases in fully connected layers, kernels in convolutional layers, attention weights in transformers) result in zero outputs for most modern neural network architectures. This makes the assumption both theoretically consistent and practically relevant to real-world model design.

Using this lemma, we formalize the connection between parameter regularization and the maximum loss $\gamma_\Phi(\theta)$ for classification tasks:
\begin{proposition}[Regularization Reduces Maximum Loss]
    \label{prop:generalize_explain}
For classification tasks with softmax cross entropy loss, assume:
\begin{enumerate}
    \item $f_{\mathbf{0}}(x) = \mathbf{0}$ for all $x \in \mathcal{X}$;
    \item The conjugate dual of the zero vector is the uniform distribution over the label space: $\mathbf{0}_{\Phi^*} = u$, where $u(y) = 1/|\mathcal{Y}|$ for all $y \in \mathcal{Y}$.
\end{enumerate}
Then reducing the squared $L_2$ parameter norm $\|\theta\|_2^2$ is equivalent to reducing the maximum loss $\gamma_\Phi(\theta)$.
\end{proposition}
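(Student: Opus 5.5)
The argument is to combine Lemma~\ref{lem:regulation} with an explicit formula for $\gamma_\Phi(\theta)$ in the softmax cross-entropy case. There, $\Phi$ is the negative Shannon entropy restricted to the simplex, so $\Phi^*(\nu)=\log\sum_i e^{\nu_i}$ up to an additive constant. For a one-hot target, $\Phi(\mathbf{1}_y)=0$, which gives
\begin{equation*}
d_\Phi(\mathbf{1}_y, f_\theta(x)) = \Phi^*(f_\theta(x)) - f_\theta(x)_y .
\end{equation*}
The first step is to write $\gamma_\Phi(\theta)=\max_{x,y}\bigl[\Phi^*(f_\theta(x))-f_\theta(x)_y\bigr] = \log\frac{1}{p_{\min}}$, consistent with Definition~\ref{def:loss_uppper_bound}. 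Assumption 2 says $\mathbf{0}_{\Phi^*}^*=u$, so $\gamma_\Phi(\mathbf{0})=\log|\mathcal{Y}|=\Phi^*(\mathbf{0})$. Moreover, $\mathbf{0}$ minimizes $\Phi^*$ over $\{\nu:\sum_i\nu_i=0\}$, because its gradient there is $u$, which is orthogonal to that subspace. This is exactly hypothesis 2 of Lemma~\ref{lem:regulation}, after using the shift invariance $\Phi^*(\nu+c\mathbf{1})=\Phi^*(\nu)+c$ to restrict to centered logits; I would note explicitly that centering the logits changes neither the softmax output nor the loss.

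The second step is to decompose the excess maximum loss around the zero model:
\begin{equation*}
\gamma_\Phi(\theta)-\gamma_\Phi(\mathbf{0}) = \max_{x,y}\Bigl[\bigl(\Phi^*(f_\theta(x))-\Phi^*(\mathbf{0})\bigr) - f_\theta(x)_y\Bigr].
\end{equation*}
Lemma~\ref{lem:regulation} controls the first bracket between $a\|\theta\|_2^2$ and $b\|\theta\|_2^2$ on $\Theta'_\epsilon$. For the second term I would use that the logits are centered, so $\min_y f_\theta(x)_y\le 0$ and $-\min_y f_\theta(x)_y\ge 0$. This immediately gives the lower bound $\gamma_\Phi(\theta)-\gamma_\Phi(\mathbf{0})\ge a\|\theta\|_2^2$.

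For the upper bound, I would bound $|f_\theta(x)_y|\le\|f_\theta(x)\|_2$. By local Lipschitzness of $f$ near $\mathbf{0}$ together with $f_{\mathbf{0}}=\mathbf{0}$, this is at most $c\|\theta\|_2$. Alternatively, the quadratic behaviour of $\Phi^*$ near its minimizer gives $\|f_\theta(x)\|_2^2\lesssim \Phi^*(f_\theta(x))-\Phi^*(\mathbf{0})\le b\|\theta\|_2^2$. Either route yields
\begin{equation*}
a\|\theta\|_2^2 \le \gamma_\Phi(\theta)-\log|\mathcal{Y}| \le b\|\theta\|_2^2 + c'\|\theta\|_2 .
\end{equation*}
Both sides are increasing functions of $\|\theta\|_2$ that vanish at $\theta=\mathbf{0}$. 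I would read ``equivalent'' in this two-sided sense: $\gamma_\Phi(\theta)\to\log|\mathcal{Y}|$, its minimum possible value, if and only if $\|\theta\|_2^2\to 0$, and shrinking one quantity forces the other down.

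The main obstacle is the linear term $-f_\theta(x)_y$. It is of order $\|\theta\|_2$ rather than $\|\theta\|_2^2$, so the upper comparison is not purely quadratic, and the equivalence only holds locally on $\Theta'_\epsilon$. I would handle this in two ways. First, I would state the result as a two-sided monotone comparison rather than a strict proportionality. Second, I would give the alternative formulation $\log\frac{1}{p_{\min}}\le \log|\mathcal{Y}| + 2\max_x\|f_\theta(x)\|_\infty$, which follows because softmax probabilities are bounded below by $e^{-2\|\nu\|_\infty}/|\mathcal{Y}|$. Combined with $\|f_\theta(x)\|_\infty\le c\|\theta\|_2$, this makes the direction ``smaller norm $\Rightarrow$ smaller $\gamma_\Phi$'' hold cleanly, while Lemma~\ref{lem:regulation} supplies the converse.
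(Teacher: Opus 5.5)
Your argument is correct, given Lemma~\ref{lem:regulation}, on which the paper's proof also rests. It follows a different route from the paper's.

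\textbf{The paper's route.} The paper never writes $d_\Phi(\mathbf{1}_y, f_\theta(x))$ out explicitly. Instead it:
\begin{enumerate}
\item introduces an auxiliary label $Y'$ that is uniform given $X=x$;
\item uses the decomposition $\mathbb{E}_{Y'}[d_\Phi(\mathbf{1}_{Y'}, f_\theta(x))] = \mathrm{Ent}_\Phi(\mathbf{1}_{Y'}) + d_\Phi(u, f_\theta(x))$;
\item sandwiches the maximum over $y$ between this uniform average and $|\mathcal{Y}|$ times it;
\item identifies $d_\Phi(u, f_\theta(x))$ with $\Phi^*(f_\theta(x)) - \Phi^*(\mathbf{0})$ and invokes the lemma.
\end{enumerate}
This is more structural: it uses only that $\Phi(\mathbf{1}_y)$ is constant and that $d_\Phi \ge 0$. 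But it is lossy. The upper sandwich equals $|\mathcal{Y}|\log|\mathcal{Y}|$ at $\theta=\mathbf{0}$, so it never shows $\gamma_\Phi(\theta)\to\log|\mathcal{Y}|$. Also, the identification in the last step silently needs $\langle u, f_\theta(x)\rangle = 0$, which is exactly the centering you make explicit.

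\textbf{What your route buys.} You split the loss directly into the lemma-controlled term plus the linear term $-f_\theta(x)_y$. This gives the sharper, anchored comparison $a\|\theta\|_2^2 \le \gamma_\Phi(\theta)-\log|\mathcal{Y}| \le b\|\theta\|_2^2 + c'\|\theta\|_2$. It also exposes something the paper's word ``equivalent'' glosses over: the gap between the maximum and the uniform average, $\langle u, f_\theta(x)\rangle - \min_y f_\theta(x)_y$, is first order in $\|f_\theta(x)\|$, not quadratic.

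\textbf{A caveat for both proofs.} The direction ``smaller $\gamma_\Phi$ forces smaller $\|\theta\|_2$'' rests entirely on the lemma's constant $a>0$. That constant fails to exist whenever some nonzero parameter direction near $\mathbf{0}$ leaves every output unchanged, for example $f_\theta(x) = W_2\sigma(W_1x)$ at $\theta = (W_1, 0)$. Your lemma-free bound $\log\frac{1}{p_{\min}} \le \log|\mathcal{Y}| + 2\max_x\|f_\theta(x)\|_\infty$ is therefore the robust half of the statement.
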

The proof of Proposition~\ref{prop:generalize_explain} is provided in Appendix~\ref{appendix:proof_generalize_explain}. 
This result directly links $L_2$ regularization to tighter generalization bounds: by constraining the parameter norm, $L_2$ regularization reduces $\gamma_\Phi(\theta)$, which in turn tightens both deterministic and probabilistic upper bounds on generalization error. For finite-dimensional parameter spaces, all $L_p$-norms are equivalent (there exist positive constants $c, d > 0$ such that $c\|x\|_2 \leq \|x\|_p \leq d\|x\|_2$ for all $x \in \mathbb{R}^d$). This equivalence implies that $L_p$-norm regularization (for any $p \geq 1$) also constrains the generalization error upper bound by reducing $\gamma_\Phi(\theta)$, providing a unified theoretical interpretation for all norm-based regularization techniques.

\subsection{Beyond test sets: information-theoretic evaluation metrics}
\label{subsec:eval_perspectives}

In practical machine learning applications, evaluating generalization ability is a core challenge: the exact generalization error requires knowledge of the true data distribution $q$, which is typically unknown and unobservable. Traditional evaluation methods rely on partitioning a "full dataset" into training and test sets, defining generalization error as the difference in model performance (e.g., loss, accuracy) between the test set and training set. Formally, this approach approximates the population risk with the test-set risk, assuming the test set is representative of the true distribution.

However, this paradigm suffers from a fundamental limitation: in real-world open-world scenarios, the "full dataset" (and its distribution) is inherently unknown, we only have access to a finite training sample. For a given training set, multiple distinct true distributions could generate the same sample, meaning the "true generalization error" is not uniquely defined. The test-set based approach implicitly assumes the test set distribution matches the true distribution, an assumption that often fails in practice (e.g., distribution shift, out-of-distribution data). This mismatch is a key reason why models that perform well in controlled laboratory settings (via cross-validation) often degrade significantly when deployed in real-world applications.

Instead of focusing on heuristic test-set performance, the generalization bounds derived in this section suggest an alternative, theoretically grounded approach: monitor the \textit{controllable generalization factors} that appear explicitly in our bounds, rather than estimating generalization error indirectly via test sets. These factors are:
\begin{itemize}
    \item \textbf{Maximum loss $\gamma_\Phi(\theta)$}: Reflects the stability and calibration of the model's predictions. Smaller values indicate more consistent loss behavior and tighter generalization bounds.
    \item \textbf{Absolute information loss $\mathcal{L}(f_\theta(X))$}: Measures the compression of the input feature space by the model. Moderate compression (merging inputs with identical conditional target means) improves probabilistic generalization guarantees.
    \item \textbf{Relative information loss $\mathcal{L}_\Phi(Y'|f_\theta(X'))$}: Quantifies the preservation of predictive information about the target $Y'$ through the model. Larger values (up to a theoretical maximum) indicate beneficial compression that tightens deterministic generalization bounds.
\end{itemize}

These information-theoretic metrics offer three critical advantages over traditional test-set evaluation:
\begin{enumerate}
    \item \textbf{Computable during training}: They depend only on the model and training data, enabling real-time monitoring of generalization progress without requiring a held-out test set.
    \item \textbf{Theoretically grounded}: They directly appear in our generalization bounds, unlike heuristic metrics (e.g., test accuracy) which lack formal connection to generalization theory.
    \item \textbf{Distribution-free}: They do not assume access to the true data distribution or a representative test set, making them robust to distribution shift and open-world scenarios.
\end{enumerate}
Given their rigorous theoretical foundation and practical computability, these metrics serve as a valuable supplement to test-set based evaluation, particularly in high-stakes domains (e.g., medical diagnosis, safety-critical systems) where reliable generalization assessment is essential. By monitoring these factors during training, practitioners can proactively control generalization behavior, rather than relying on post-hoc test-set evaluation.

\section{Empirical validation}
\label{sec:experiments}

This section empirically validates the core theoretical claims of this work, with a focus on trainability, the proposed mechanism linking structure matrix eigenvalues and gradient energy requires rigorous verification across diverse architectures and tasks. Subsection~\ref{subsec:exp_design} details the experimental setup, including datasets, model architectures, and evaluation metrics. Subsection~\ref{subsec:exp_trainability} presents results validating the theoretical risk bounds and the role of gradient energy in optimization. Subsection~\ref{subsec:exp_structure} investigates how architectural design choices modulate the properties of the structure matrix.

\subsection{Experimental design}
\label{subsec:exp_design}

The primary objective of these experiments is to verify whether models with distinct architectures adhere to the theoretical predictions of this work during training, rather than pursuing state-of-the-art performance or direct comparisons against existing algorithms and architectures. Accordingly, we do not focus on final convergence accuracy; instead, we analyze whether the dynamic behavior of models during initialization and training aligns with our theoretical claims. Below, we describe the datasets, model architectures, loss functions, and hyperparameter settings used in all experiments.

\paragraph{Dataset}

We evaluate our theoretical framework on four widely adopted benchmark datasets: \texttt{MNIST}~\cite{LeCun1998GradientbasedLA}, \texttt{FashionMNIST}~\cite{DBLP:journals/corr/abs-1708-07747}, \texttt{CIFAR-10}, and \texttt{CIFAR-100}~\cite{krizhevsky2009learning}. To enable clear tracking of the evolution of key metrics throughout training, we construct two reduced-scale datasets: \emph{mini CIFAR-10} and \emph{mini CIFAR-100}, generated by randomly sampling 16 examples from the full CIFAR-10 and CIFAR-100 datasets, respectively. Importantly, for a fixed model architecture, training on these small-scale datasets corresponds to the same non-convex optimization problem as training on the full datasets. This design eliminates confounding factors such as model capacity, allowing us to isolate and clearly observe the core metrics of interest, thereby providing reliable validation for the proposed framework.

\paragraph{Models}
\begin{figure}[htbp]
\begin{center}
\centerline{\includegraphics[width=0.6\columnwidth]{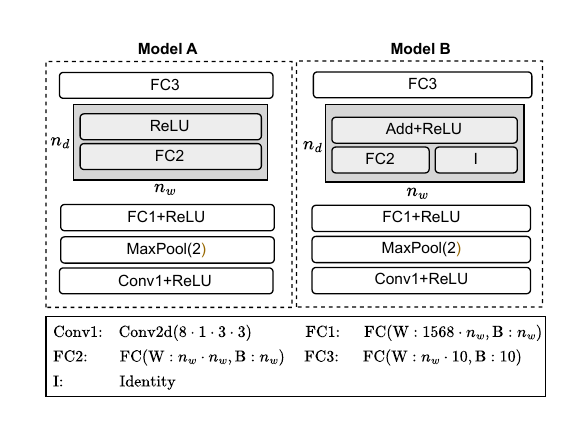}}
\caption{Custom-designed model architectures and configuration parameters. Gray blocks represent components where the number of repetitions can be adjusted via the parameter $n_d$, and model width can be tuned via the parameter $n_w$. Model B is a modified variant of Model A with additional skip connections. The symbol $I$ denotes an identity transformation, which preserves the dimensionality of feature maps in skip connection pathways.}
\label{fig:model_increase}
\end{center}
\end{figure}
\FloatBarrier
We employ two categories of models to verify our theoretical properties: custom-designed models (for controlled architectural ablation) and widely used classical architectures (for generalizability). The architecture and configuration parameters of our custom models are illustrated in Fig.~\ref{fig:model_increase}. Model B extends Model A by incorporating skip connections, where $n_d$ denotes the number of repeated blocks along the depth dimension (e.g., $n_d=2$ indicates two identical blocks cascaded sequentially) and $n_w$ denotes the number of repeated blocks along the width dimension. Increasing $n_d$ increases the number of cascaded blocks (and thus the total number of trainable parameters), while $n_w$ controls the model width (varied in steps of 16 in our experiments). These custom models are evaluated on MNIST and Fashion-MNIST to analyze how depth and width affect the structure matrix at initialization.

To validate the proposed trainability theory across standard architectures, we adopt three representative deep learning models: LeNet~\cite{5265772} (convolutional networks), ResNet18~\cite{He2015DeepRL} (networks with skip connections), and Vision Transformers (ViT)~\cite{dosovitskiy2021an} (transformer-based architectures). Detailed architectural specifications can be found in \citet{yoshioka2024visiontransformers}. These classical models are evaluated on CIFAR-10 and CIFAR-100. We use softmax cross entropy loss and MSE loss as test cases. To ensure consistent parameter and output behavior between training and inference, we disable techniques such as Batch Normalization and Dropout (which exhibit different behavior during training and testing). To enable fast convergence without Batch Normalization, we replace Batch Normalization with Layer Normalization in all models.

\paragraph{Environment and hyperparameters}

All experiments are conducted in the following environment: Python 3.7, PyTorch 2.2.2, and an NVIDIA GeForce RTX 2080 Ti GPU. The training configuration is consistent across all models: optimization is performed using SGD with momentum 0.9 and weight decay $5\times 10^{-4}$, and the learning rate is scheduled using Cosine Annealing. All models are initialized with the default PyTorch initialization scheme.

The learning rates and training epochs for each model-dataset pair are summarized in Table~\ref{tab:training_params}. For models trained on mini CIFAR-10 and mini CIFAR-100, we set the number of training epochs to 60 and the batch size to 2. For the full CIFAR-10 and CIFAR-100 datasets, we use 20 training epochs and a batch size of 32.
\begin{table}[h]
\centering
\caption{Training hyperparameters for different models, losses, and datasets.}
\label{tab:training_params}
\begin{tabular}{p{3cm} llp{6cm}}
\toprule
\textbf{Model} & \textbf{Loss} & \textbf{Dataset} & \textbf{lr}\\
\midrule
\multirow{8}{*}{\textbf{LeNet}}
& \multirow{4}{*}{\textbf{softmax cross entropy}}& mini CIFAR-10   & 0.005\\
& & mini CIFAR-100 & 0.01\\
& & CIFAR-10       & 0.001\\
& & CIFAR-100      & 0.002\\
& \multirow{4}{*}{\textbf{MSE}}
& mini CIFAR-10   & 0.1\\
& & mini CIFAR-100 & 1\\
& & CIFAR-10       & 0.1\\
& & CIFAR-100      & 1\\
\multirow{8}{*}{\textbf{ResNet}}
& \multirow{4}{*}{\textbf{softmax cross entropy}}
& mini CIFAR-10   & 0.01\\
& & mini CIFAR-100 & 0.01\\
& & CIFAR-10       & 0.1\\
& & CIFAR-100      & 0.1\\
& \multirow{4}{*}{\textbf{MSE}}
& mini CIFAR-10   & 0.002\\
& & mini CIFAR-100 & 0.04\\
& & CIFAR-10       & 0.01\\
& & CIFAR-100      & 0.1\\
\multirow{8}{*}{\textbf{ViT}}
& \multirow{4}{*}{\textbf{softmax cross entropy}}
& mini CIFAR-10   & 0.005\\
& & mini CIFAR-100 & 0.005\\
& & CIFAR-10       & 0.01\\
& & CIFAR-100      & 0.001\\
& \multirow{4}{*}{\textbf{MSE}}
& mini CIFAR-10   & 0.001\\
& & mini CIFAR-100 & 0.02\\
& & CIFAR-10       & 0.01\\
& & CIFAR-100      & 0.2\\
\bottomrule
\end{tabular}
\end{table}

\subsubsection{Evaluation metrics}

To avoid scaling issues across different loss functions and facilitate comparison and trend visualization, we use $\log_2 \mathcal{R}^{\circ}_\Phi(z)$ to measure fitting performance under the Fenchel--Young loss induced by $\Phi$. Specifically, for softmax cross entropy loss, we monitor $\log_2 \|y - \mathrm{Softmax}(f_\theta(x))\|_2^2$, and for MSE loss, we monitor $\log_2 \|y - f_\theta(x)\|_2^2$. According to Theorem~\ref{thm:base_bound}, the upper and lower bounds of $\log_2 \mathcal{R}^{\circ}_\Phi(z)$ are defined as:
\begin{equation}
    \begin{aligned}
        Ub(z) &= \log_2 \bigl\| \nabla_\theta \mathcal{R}_\Phi(\theta, z) \bigr\|_2^2 - \log_2 \lambda_{\min}(A_x),\\
        Lb(z) &= \log_2 \bigl\| \nabla_\theta \mathcal{R}_\Phi(\theta, z) \bigr\|_2^2 - \log_2 \lambda_{\max}(A_x).
    \end{aligned}
\end{equation}

A core claim of the conjugate learning framework for DNN trainability is that fitting of learning objectives during training is achieved by controlling the upper and lower bounds of the loss, which are composed of gradient energy and the extremal eigenvalues of the structure matrix. Model depth, width, and skip connections modulate the structure matrix, and thus these bounds. To verify this claim, we focus on three key experimental objectives:
\begin{enumerate}
    \item Whether the upper and lower bounds described in Theorem~\ref{thm:base_bound} and its corollaries hold strictly throughout training.
    \item Whether the loss ($\log_2 \mathcal{R}^{\circ}_\Phi(z)$) evolves consistently with its upper and lower bounds. This is verified by computing dynamic Pearson correlation coefficients between the loss and its bounds using a sliding window approach (window length = 4). The Pearson correlation coefficient ranges from -1 to 1, where -1 indicates perfect linear negative correlation, 1 indicates perfect linear positive correlation, and 0 indicates no linear correlation. We additionally compute the dynamic Pearson correlation coefficient between the loss and gradient energy to confirm that stability of the structure matrix's extremal eigenvalues is a prerequisite for controlling the loss via gradient energy.
    \item Whether varying model depth, width, and skip connection usage aligns with theoretical predictions: increasing parameter count and reducing parameter dependencies help control the structure matrix eigenvalues, and skip connections effectively suppress the decrease in extremal eigenvalues caused by increasing model depth.
\end{enumerate}

\subsection{Validation of trainability mechanisms}
\label{subsec:exp_trainability}

We first validate that the theoretical empirical risk bounds hold in practice and that gradient energy controls the optimization trajectory of deep neural networks.

\subsubsection{Experimental results on classical models}

\paragraph{LeNet results}
\begin{figure}[htbp]
	\centering
	\begin{minipage}{1.0\linewidth}
		\centering
            \centerline{\includegraphics[width=\columnwidth]{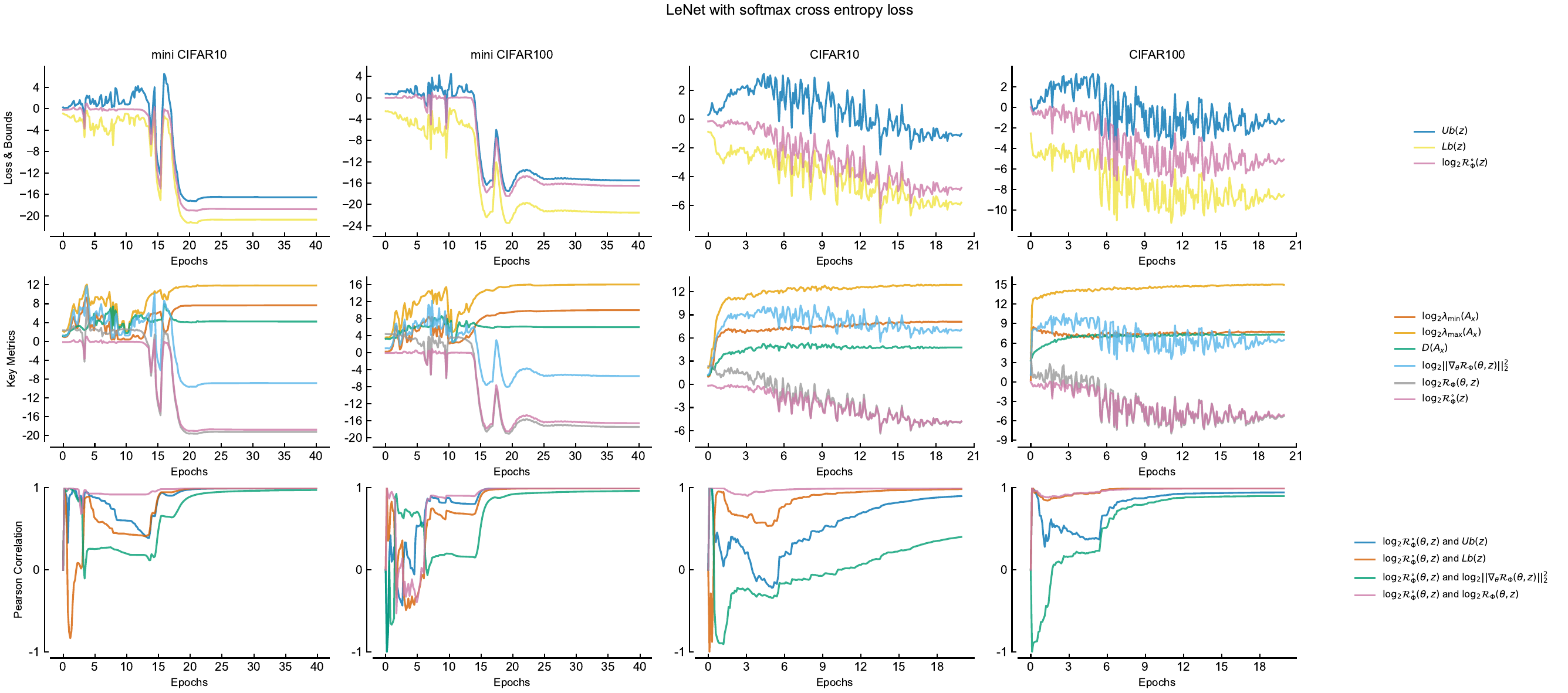}}
            \caption{Training dynamics of LeNet with softmax cross entropy loss. }
            \label{fig:lenet_sfce}
            \vspace{5mm}
	\end{minipage}
	\begin{minipage}{1.0\linewidth}
		\centering
        \centerline{\includegraphics[width=1.0\linewidth]{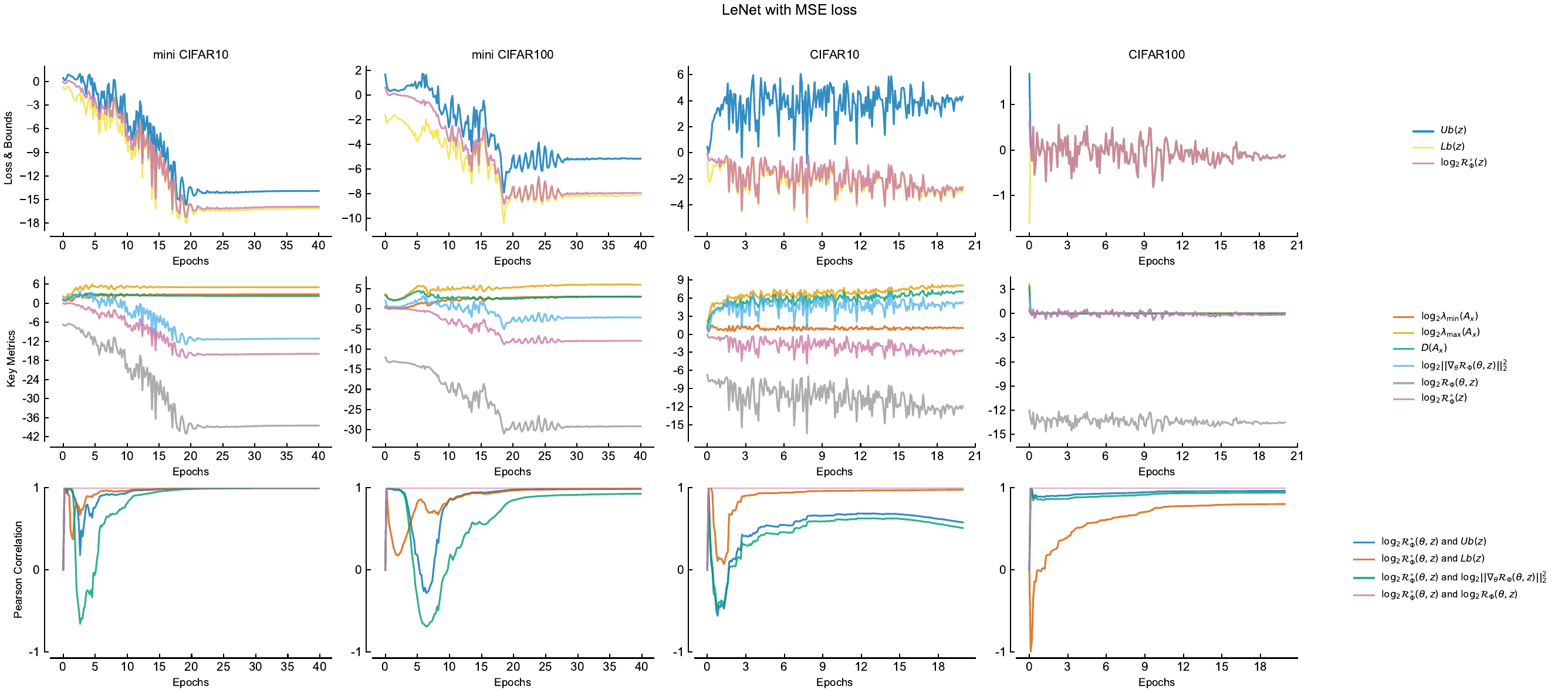}}
        \caption{Training dynamics of LeNet with MSE loss.}
        \label{fig:lenet_mse}
	\end{minipage}
    \caption{Training dynamics of LeNet across datasets and loss functions. The first row shows the evolution of standardized empirical risk and its upper/lower bounds during training. The second row depicts the changes in key metric terms over training iterations. The third row presents dynamic Pearson correlation coefficients between different terms.}
    \label{fig:lenet}
\end{figure}
Figure~\ref{fig:lenet} presents the training dynamics of LeNet across datasets and loss functions. The top row confirms that $Ub(z)$ and $Lb(z)$ consistently bound $\log_2\mathcal{R}^{\circ}_\Phi(z)$ throughout training, validating Theorem~\ref{thm:base_bound}. The bottom row shows that dynamic Pearson correlation coefficients between the loss and its bounds approach 1 after the initial training epochs, indicating that loss evolution is tightly coupled with the theoretical bounds.

\paragraph{ResNet18 results}
\begin{figure}[htbp]
	\centering
	\begin{minipage}{1.0\linewidth}
		\centering
            \centerline{\includegraphics[width=\columnwidth]{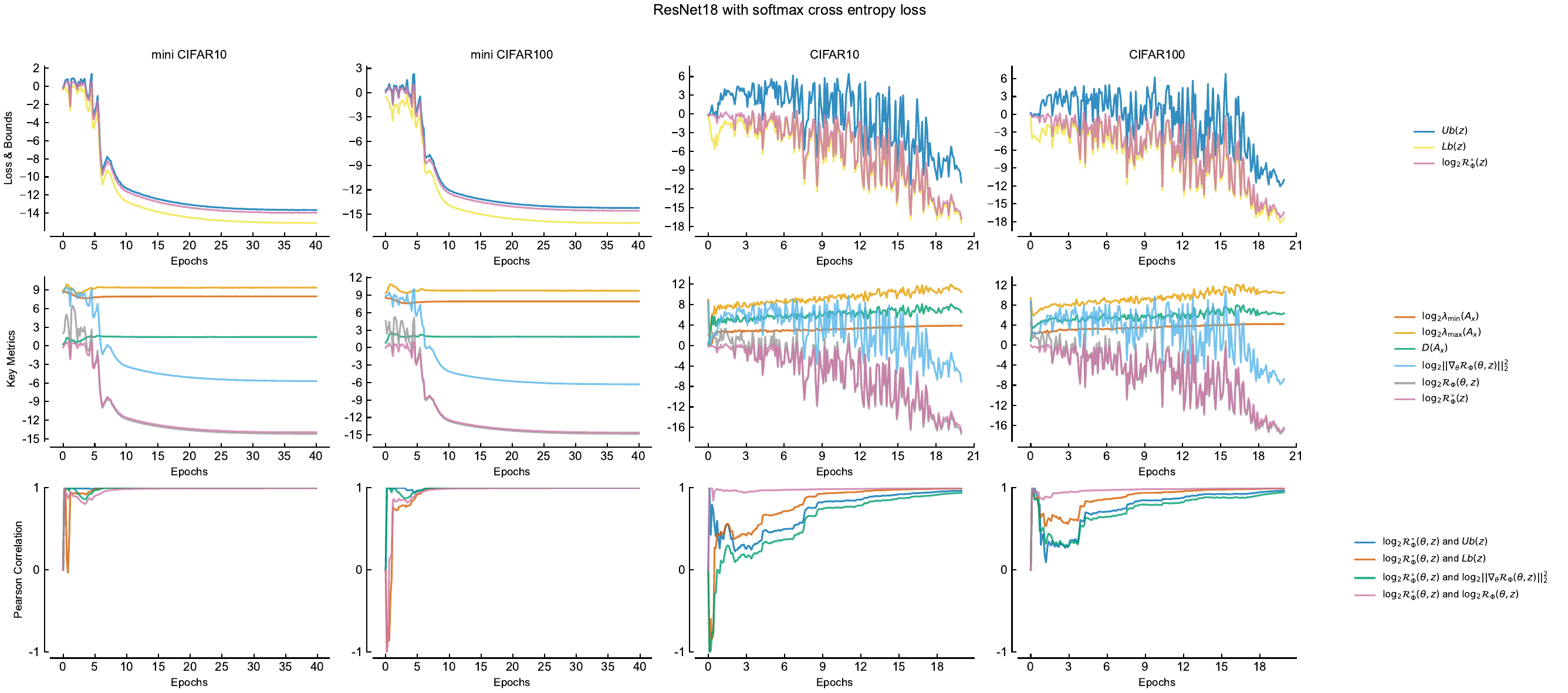}}
            \caption{Training dynamics of ResNet18 with softmax cross entropy loss.}
            \label{fig:resnet18_sfce}
            \vspace{5mm}
	\end{minipage}
	\begin{minipage}{1.0\linewidth}
		\centering
        \centerline{\includegraphics[width=1.0\linewidth]{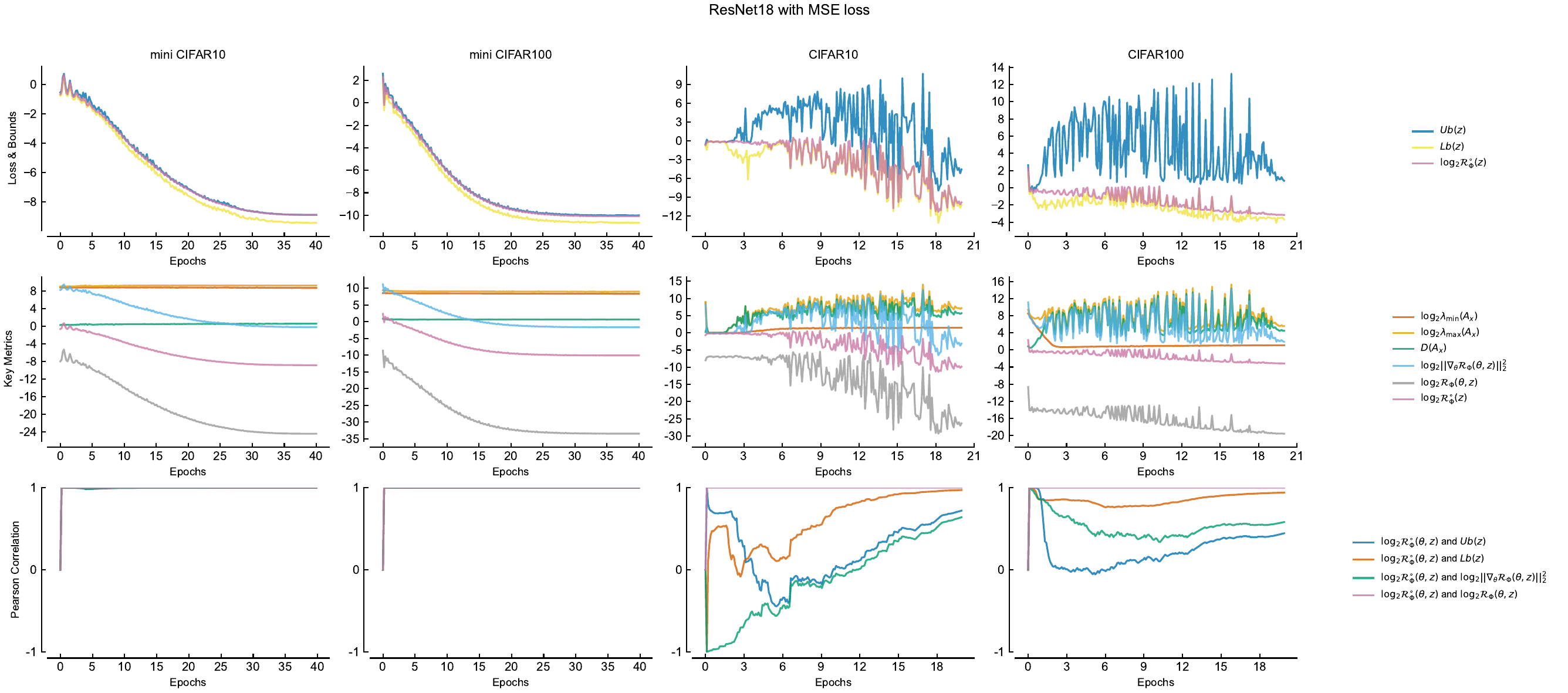}}
        \caption{Training dynamics of ResNet18 with MSE loss.}
        \label{fig:resnet18_mse}
	\end{minipage}
    \caption{Training dynamics of ResNet18 across datasets and loss functions. The first row demonstrates the evolution of standardized empirical risk and its upper/lower bounds during training. The second row shows the progression of key metrics throughout training. The third row plots dynamic Pearson correlation coefficients between different metrics over training iterations.}
    \label{fig:resnet18}
\end{figure}

ResNet18 (Fig.~\ref{fig:resnet18}) exhibits similar behavior to LeNet. Notably, on mini-datasets (where models have sufficient capacity to fit the data), the correlation between gradient energy and standardized risk approaches 1 after the structure matrix eigenvalues stabilize. This confirms that gradient energy controls risk only when the structure matrix is well-conditioned, as predicted by our theory.

\paragraph{ViT results}
\begin{figure}[htbp]
	\centering
	\begin{minipage}{1.0\linewidth}
		\centering
            \centerline{\includegraphics[width=\columnwidth]{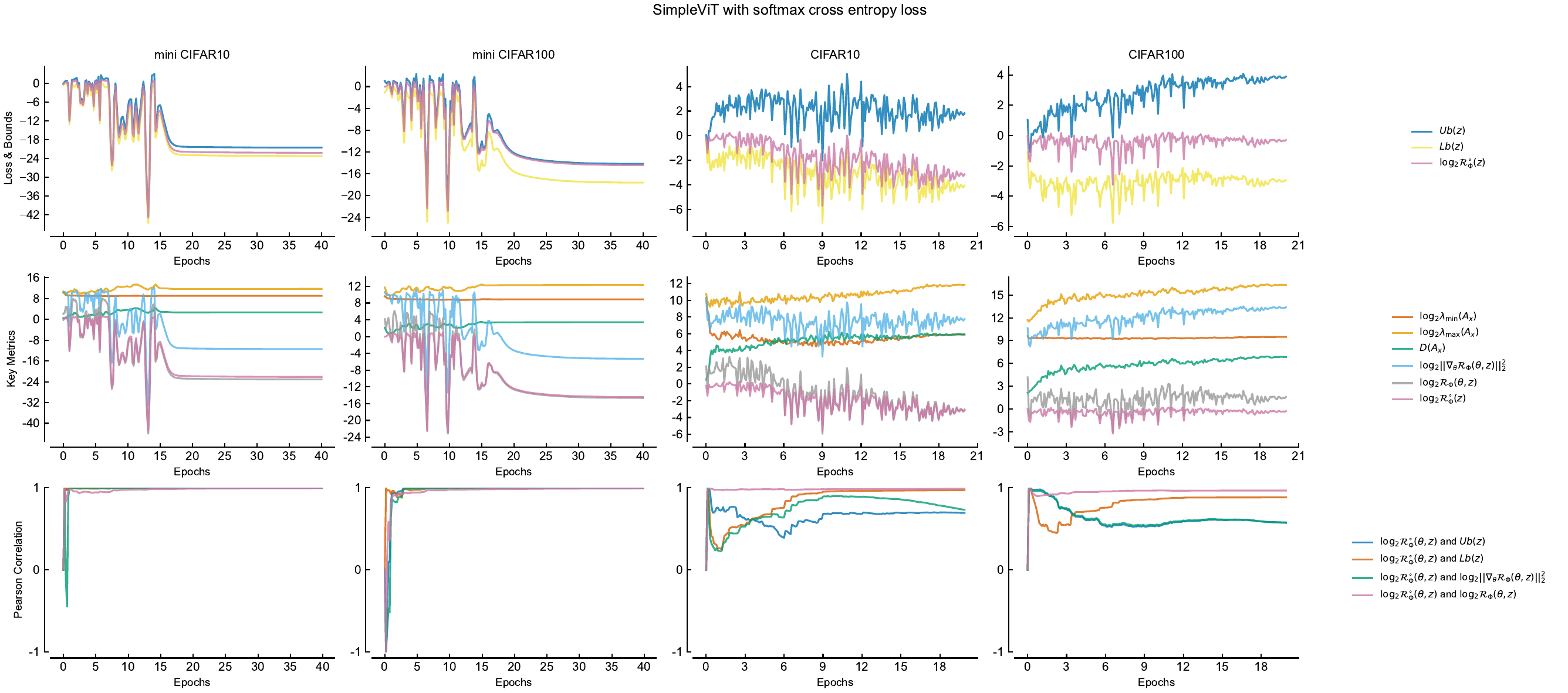}}
            \caption{Training dynamics of ViT with softmax cross entropy loss.}
            \label{fig:vit_sfce}
            \vspace{5mm}
	\end{minipage}
	\begin{minipage}{1.0\linewidth}
		\centering
        \centerline{\includegraphics[width=1.0\linewidth]{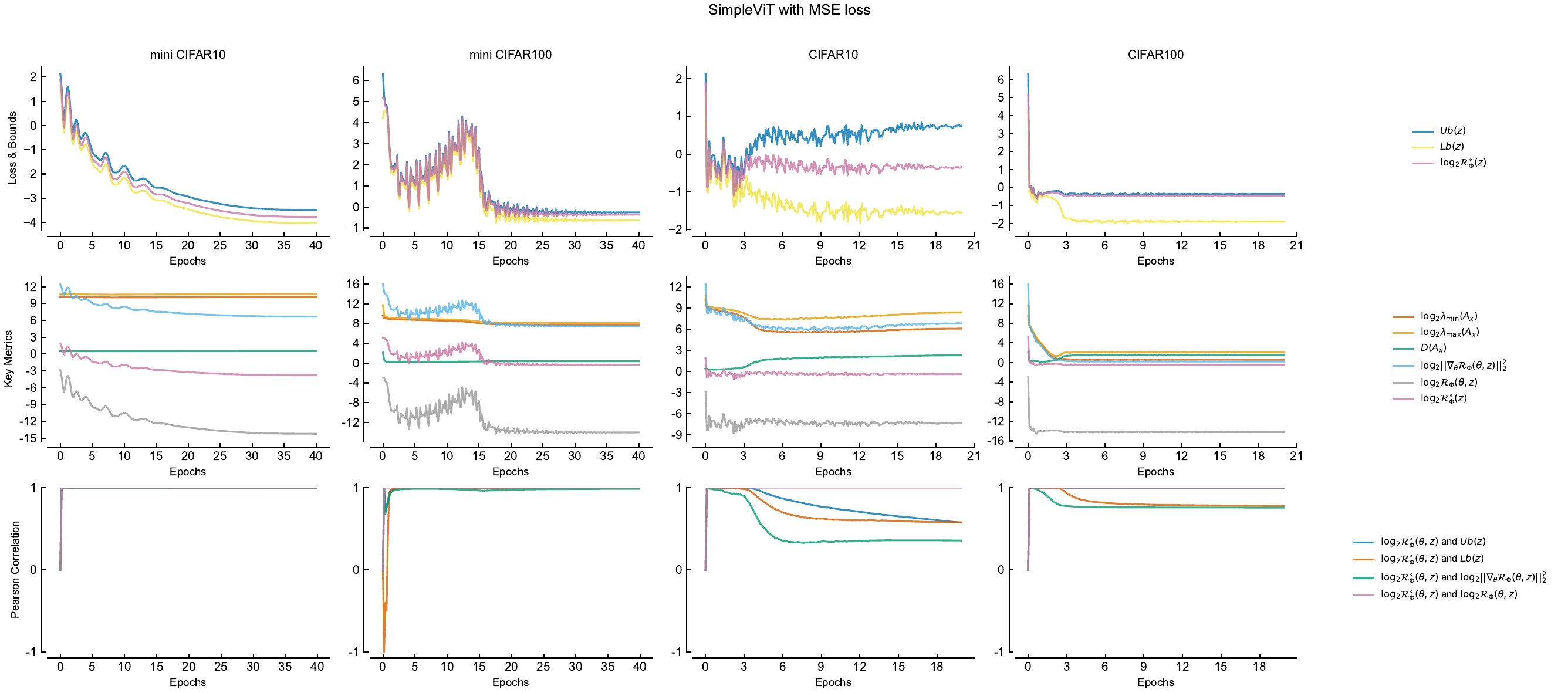}}
        \caption{Training dynamics of ViT with MSE loss.}
        \label{fig:vit_mse}
	\end{minipage}
    \caption{Training dynamics of ViT across datasets and loss functions. The first row exhibits the evolution of standardized empirical risk and its upper/lower bounds during training. The second row illustrates the variation of key metric terms over training iterations. The third row shows dynamic Pearson correlation coefficients between different terms throughout training.}
    \label{fig:vit}
\end{figure}
\FloatBarrier
Vision Transformers (Fig.~\ref{fig:vit}) exhibit the same pattern as convolutional networks, demonstrating the universality of the proposed trainability mechanism across diverse architectural paradigms.

\subsubsection{Summary of trainability results}

As shown in the first row of each subplot in Figs.~\ref{fig:lenet}, \ref{fig:resnet18}, and \ref{fig:vit}, $Ub(z)$ and $Lb(z)$ form valid upper and lower bounds for the standardized empirical risk $\log_2 \mathcal{R}^{\circ}_\Phi(z)$ across all datasets, loss functions, and model architectures tested. This result verifies the inequality relationships proposed in Theorem~\ref{thm:base_bound}.

We use dynamic Pearson correlation coefficients between the bounds and the standardized empirical risk to quantify the degree of control exerted by the bounds over the risk. A correlation coefficient close to 1 indicates that the standardized empirical risk evolves in complete alignment with its upper and lower bounds, implying that reducing the empirical risk can be achieved by tightening these bounds. This phenomenon is confirmed in the third row of each subplot: the dynamic Pearson correlation coefficients between the bounds and the standardized empirical risk quickly approach 1 after the start of training, and the correlation between the standardized empirical risk and the raw empirical risk simultaneously converges to 1.
Moreover, the dynamic Pearson correlation coefficient between the empirical risk and the standardized empirical risk remains close to 1 throughout training, demonstrating their full equivalence. \textit{Experimental conclusions for the standardized empirical risk therefore extend directly to the empirical risk.}

Based on Theorem~\ref{thm:sgd_convergence}, mini-batch SGD guarantees a reduction in gradient energy during training. Our experimental results thus demonstrate that fitting of learning objectives in DNN training is achieved by optimizing gradient energy, which in turn tightens the upper and lower bounds of the standardized empirical risk. This effect is particularly pronounced when the model has sufficient fitting capacity: on small-scale datasets (mini CIFAR-10 and mini CIFAR-100), after the extremal eigenvalues of the structure matrix stabilize, the dynamic Pearson correlation coefficients between gradient energy and the standardized empirical risk consistently approach 1 across all models, loss functions, and datasets. Notably, the correlation between gradient energy and the standardized empirical risk is smaller than the correlation between the bounds and the standardized empirical risk. This observation confirms that the optimization trajectory of the standardized empirical risk is governed by a two-dimensional "control surface" spanned by gradient energy and the extremal eigenvalues of the structure matrix. Only when the structure matrix eigenvalues stabilize can the standardized empirical risk be fully controlled by gradient energy.

In summary, our experimental results demonstrate that the training process of DNNs is fundamentally driven by joint control of the structure matrix's extremal eigenvalues and gradient energy, which together tighten the theoretical upper and lower bounds of the empirical risk. This mechanism exhibits high consistency across the CIFAR-10/CIFAR-100 datasets and the LeNet/ResNet18/ViT architectures, validating the broad applicability and robustness of the proposed conjugate learning framework and providing empirical evidence for its core claims.

\subsection{Experimental results on structure matrix properties}
\label{subsec:exp_structure}

We conduct two sets of controlled experiments on Model A and Model B (custom architectures) using the MNIST and Fashion-MNIST datasets to investigate the relationship between model architecture and structure matrix properties:
\begin{enumerate}
    \item Fix the width parameter $n_w=64$ and incrementally increase the depth parameter $n_d$ from 1 to 100.
    \item Fix the depth parameter $n_d=1$ and incrementally increase the width parameter $n_w$ from 16 to 1600.
\end{enumerate}
For each experimental setting, we observe the evolution of the structure matrix eigenvalues.

\begin{figure}[htbp]
	\centering
	\begin{minipage}{1.0\linewidth}
		\centering
            \centerline{\includegraphics[width=0.9\columnwidth]{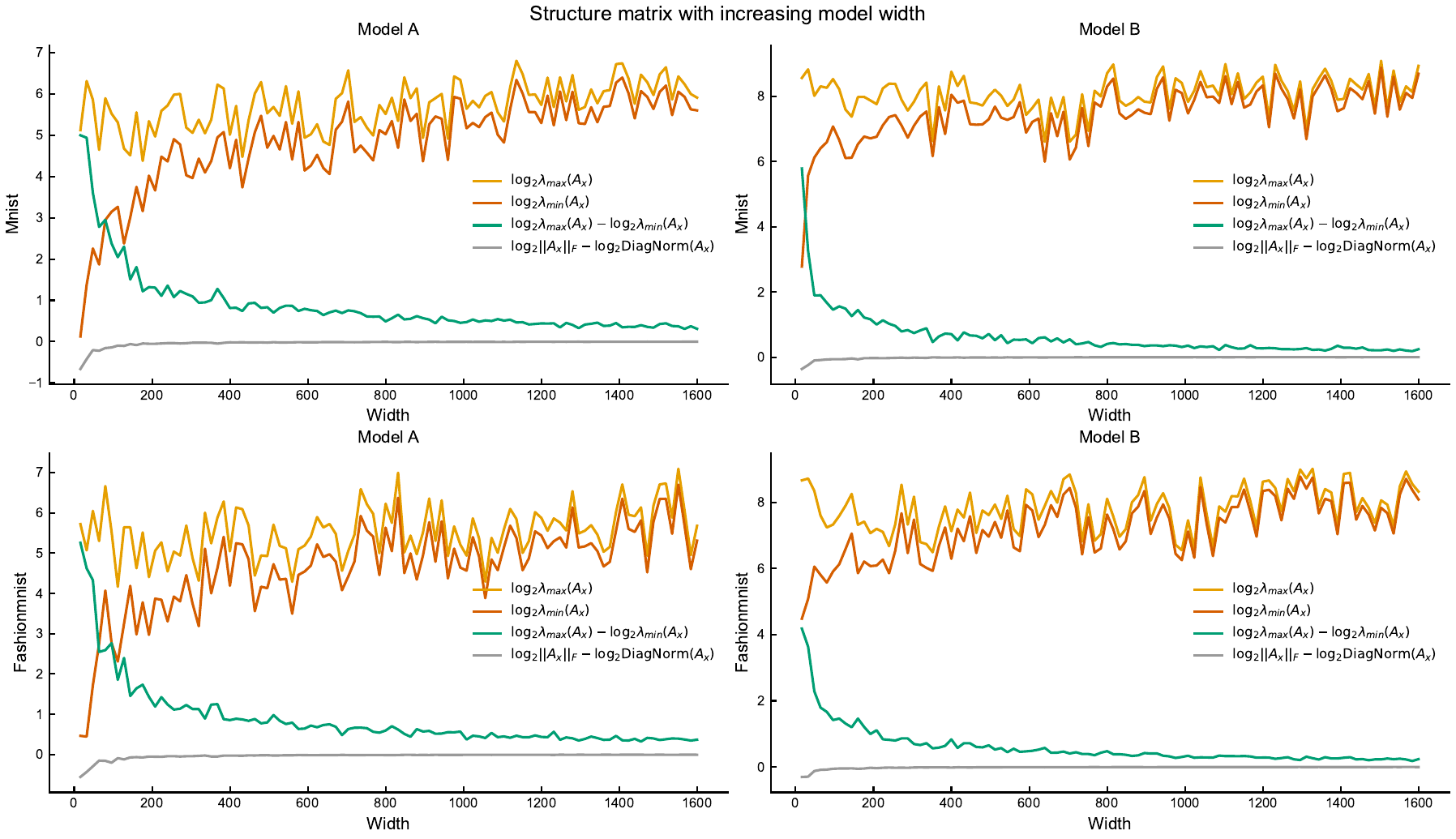}}
            \caption{Variation of the structure matrix with increasing model width.}
            \label{fig:width_init}
            \vspace{5mm}
	\end{minipage}
	\begin{minipage}{1.0\linewidth}
		\centering
        \centerline{\includegraphics[width=0.9\linewidth]{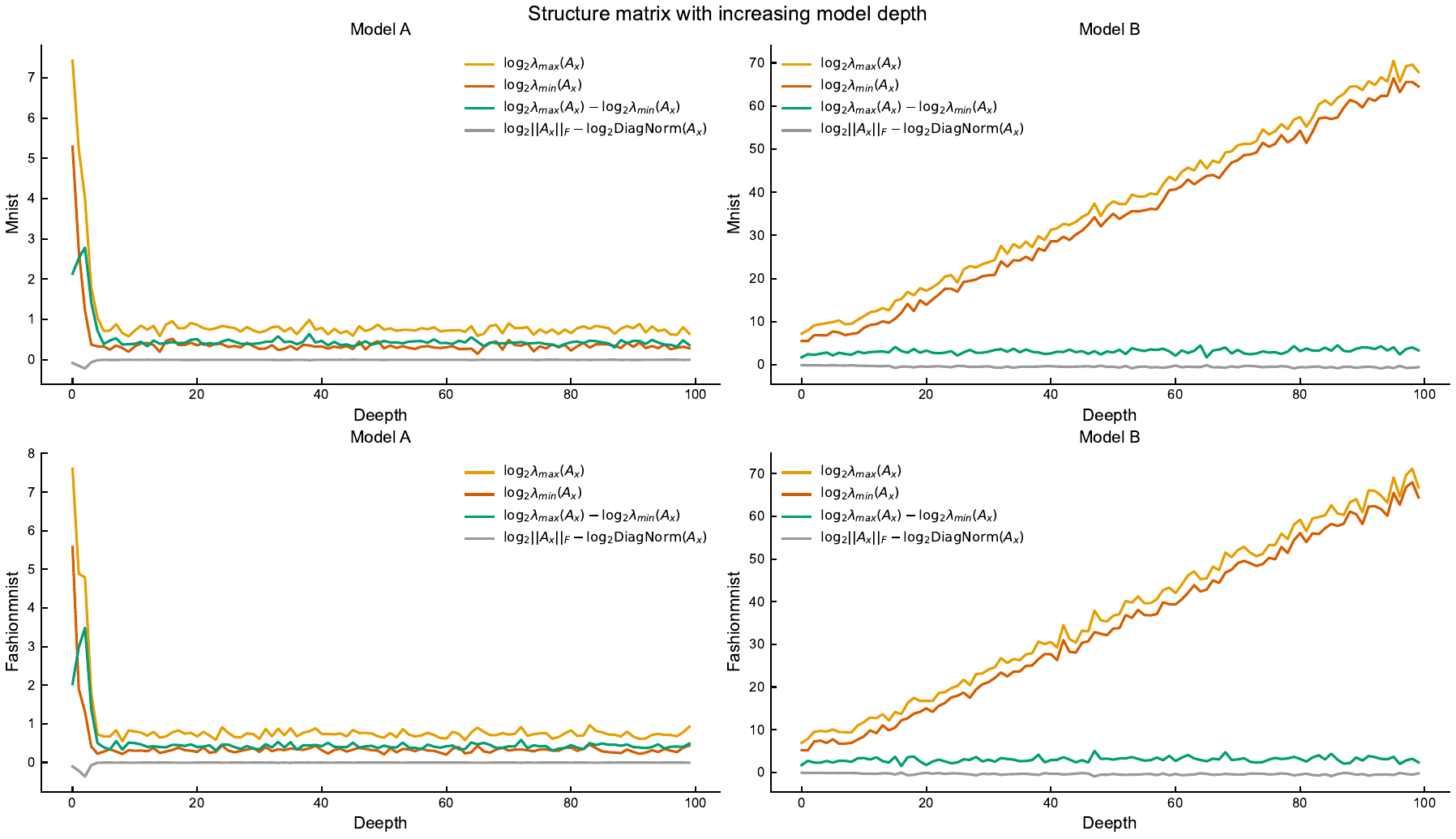}}
        \caption{Variation of the structure matrix with increasing model depth.}
        \label{fig:depth_init}
	\end{minipage}
    \caption{Relationship between the structure matrix and model depth/width at random initialization. $\|A_x\|_F$ denotes the Frobenius norm of the structure matrix $A_x$, and $\mathrm{DiagNorm}(A_x)$ represents the norm of the diagonal vector of $A_x$. The quantity $\log_2\|A_x\|_F - \log_2\mathrm{DiagNorm}(A_x)$ quantifies the diagonal dominance of $A_x$: a value of 0 indicates that $A_x$ is a diagonal matrix.}
    \label{fig:init}
\end{figure}
\FloatBarrier
\paragraph{Parameter Count and Spectral Gap}

As shown in Fig.~\ref{fig:init}, the spectral gap $\log_2 \lambda_{\max}(A_x) - \log_2 \lambda_{\min}(A_x)$ decreases as both model width and depth increase. Increasing model depth or width expands the total number of trainable parameters, validating Theorem~\ref{thm:number_para}: under the GAI assumption, increasing parameter count reduces the ratio between the extremal eigenvalues of the structure matrix. The GAI assumption is a prerequisite for Theorem~\ref{thm:number_para} to hold; at random initialization, model parameters are approximately independent, and their corresponding gradients are thus also approximately independent, satisfying the GAI assumption for our experimental settings.

\paragraph{Width and Eigenvalue Stabilization}

As shown in Fig.~\ref{fig:init}, increasing model width leads to an oscillatory increasing trend in the extremal eigenvalues $\log_2 \lambda_{\max}(A_x)$ and $\log_2 \lambda_{\min}(A_x)$ of both Model A and Model B, with the eigenvalues gradually stabilizing. If parameters change minimally during training (remaining close to their initialized values), the extremal eigenvalues of the structure matrix for wider models remain constant. In this scenario, the upper and lower bounds of the empirical risk are solely controlled by gradient energy, which can then be effectively optimized via mini-batch SGD. This observation aligns with the Lazy Regime perspective in NTK theory; however, our theoretical framework does not require the infinite width assumption (a key limitation of NTK-based approaches) and further provides explicitly computable upper and lower bounds for generalization error.

\paragraph{Depth and Skip Connections}

As shown in Fig.~\ref{fig:init}, increasing model depth leads to drastically different behavior in the structure matrix eigenvalues of Model A and Model B. The extremal eigenvalues of Model A (without skip connections) decrease rapidly with depth, while the logarithm of the extremal eigenvalues of Model B (with skip connections) increases linearly with depth. This result aligns with the predictions of Equations~\eqref{eq:noskip} and \eqref{eq:skip}: the structure matrix can be approximated as the product of Jacobian matrices of local blocks. For models without skip connections, random initialization results in local block Jacobian matrix elements that are much smaller than 1; the multiplicative effect causes the structure matrix eigenvalues to decrease rapidly with depth. In contrast, skip connections increase the magnitude of elements in the local block Jacobian matrices (formalized in Equation~\eqref{eq:skip}), leading to exponential growth of the extremal eigenvalues of Model B with depth. Larger extremal eigenvalues of the structure matrix lead to tighter upper and lower bounds of the empirical risk. Thus, Fig.~\ref{fig:depth_init} validates Proposition~\ref{prop:skip_con}: skip connections cause the extremal eigenvalues of the structure matrix to increase with depth, which in turn helps reduce $Ub(z)$ and $Lb(z)$.

\section{Related work}
\label{sec:related_work}

Before reviewing the existing literature on trainability and generalization of deep neural networks (DNNs), we first recall the standard formalization of a supervised machine learning task as established in \citep{Berner2021TheMM}. Let $\mathcal{Z} = \mathcal{X} \times \mathcal{Y}$ denote the product space of input features $\mathcal{X}$ and output targets $\mathcal{Y}$, and let $q$ represent the unknown true data-generating distribution over $\mathcal{Z}$. Given a training dataset $s^n = \{z^{(i)}\}_{i=1}^n$ sampled independently and identically (i.i.d.) from $q$, and a predefined loss function $\ell: \mathcal{F} \times \mathcal{Z} \to \mathbb{R}$ where $\mathcal{F}$ is a hypothesis class of functions mapping $\mathcal{X}$ to $\mathcal{Y}$, the core objective of machine learning is to learn a hypothesis $f \in \mathcal{F}$ that minimizes the expected risk, also known as the population risk, defined as $\mathcal{R}(f) := \mathbb{E}_{Z \sim q}[\ell(f, Z)]$. During practical training, the empirical risk $\widehat{\mathcal{R}}(f) := \frac{1}{n}\sum_{i=1}^n \ell(f, z^{(i)})$ serves as a computationally tractable proxy for the expected risk, since the true distribution $q$ is unknown. Within this foundational framework, the excess risk, defined as the difference between the expected risk of the learned hypothesis and the minimal achievable expected risk, can be decomposed into three distinct components: optimization error, generalization error, and approximation error~\citep{Berner2021TheMM}. Optimization error refers to the gap between the empirical risk achieved by the optimization algorithm and the minimal empirical risk over the hypothesis class $\mathcal{F}$, which directly reflects the trainability of the model. Generalization error describes the discrepancy between the expected risk and the empirical risk, which arises from learning from finite training samples rather than the full true distribution. Approximation error represents the difference between the minimal expected risk over $\mathcal{F}$ and the Bayes risk, which is determined by the expressive capacity of the chosen hypothesis class. Consequently, understanding the trainability of DNNs amounts to characterizing the mechanisms that govern optimization error, while understanding generalization corresponds to analyzing the behaviors and bounds of generalization error.

\subsection{Trainability}

Classical non-convex optimization theory provides only weak theoretical guarantees for general non-convex problems, typically ensuring convergence to stationary points (points where the gradient of the loss function is zero) rather than global minima~\citep{nesterov2008advance}. This limitation means classical theory fails to explain the consistent empirical success of stochastic gradient descent (SGD) and its variants in navigating the highly complex, non-convex loss landscapes of over-parameterized DNNs to achieve low empirical risk. A prevailing consensus in the literature holds that over-parameterization plays a pivotal role in enabling effective optimization in deep learning~\cite{Du2018GradientDP,Arjevani2022AnnihilationOS}.

\subsubsection{NTK theory}
\label{subsubsec:related_ntk}

Building on the foundational observation that infinitely wide DNNs with random initialization converge to Gaussian processes in the limit~\cite{Neal1995BayesianLF}, the NTK framework proposed by \citep{Jacot2018NeuralTK} demonstrates that in the infinite-width regime, DNNs behave as linear models parameterized by a fixed kernel matrix (the NTK) that remains constant during training. This linearization enables rigorous global convergence guarantees for gradient-based optimization algorithms, even for non-convex DNN architectures. The NTK framework has since been extensively extended and analyzed, with recent work refining its applicability to various network architectures and training regimes~\citep{liu2024ntk}. However, mounting empirical evidence indicates that finite-width DNNs (the setting of practical interest in real-world applications) often operate outside the so-called lazy training regime (where parameter updates are negligible relative to initialization). Finite-width networks exhibit dynamic feature learning, adaptive kernel evolution, and other key behaviors that are not captured by the static NTK theory~\citep{SeleznovaK21,seleznova2022neural,Vyas2023EmpiricalLO}, limiting the practical relevance of NTK-based guarantees for real-world DNN training.

\subsubsection{Fenchel-Young loss framework}
\label{subsubsec:related_fenchel_young}
 
A powerful unifying perspective on loss function design has emerged through the Fenchel--Young mathematical framework~\citep{Blondel2019LearningWF}. This framework shows that a broad range of standard loss functions used in machine learning, including softmax cross entropy (for classification), mean squared error (MSE) loss (for regression), hinge loss (for support vector machines), and perceptron loss, can be uniformly expressed in the form $d_\Omega(y, f_\theta(x))$, where $\Omega$ is a suitable strictly convex generating function and $d_\Omega$ denotes the Fenchel--Young divergence induced by $\Omega$. This formulation generalizes classical Bregman divergences to mixed input–output spaces and has been successfully applied to diverse tasks including supervised classification, structured prediction~\citep{blondel2022learning}, and variational inference~\citep{sklaviadis2025fenchel}.

Building on this unifying loss function framework, \citet{QiGL25} proposed an interpretation of supervised classification as the estimation of the conditional label distribution given input features. They established a key theoretical connection: under a Fenchel--Young loss, the squared norm of the gradient of the loss with respect to model parameters is tightly linked to the MSE between the predicted conditional distribution and the empirical conditional distribution estimated from training data. This connection is mediated by a \textit{structure matrix} defined as $A_x = \nabla_\theta f_\theta(x) \nabla_\theta f_\theta(x)^\top$, where $\nabla_\theta f_\theta(x)$ denotes the gradient of the model's output with respect to the parameter vector $\theta$ at input $x$. Their empirical experiments further suggested that DNN training for classification implicitly minimizes the squared gradient norm while maintaining the non-degeneracy of the structure matrix $A_x$ (i.e., ensuring $A_x$ is full rank to avoid singular optimization dynamics). Despite these valuable insights into the optimization dynamics of classification tasks, this approach has three critical limitations that restrict its broader applicability:

\begin{enumerate}
    \item It does not provide a theoretical guarantee of monotonic decrease for the Fenchel--Young loss itself during optimization, creating a potential misalignment between the surrogate loss (squared gradient norm) minimized in practice and the true objective (Fenchel--Young loss) of interest.
    \item The theoretical analysis is strictly restricted to supervised classification tasks and does not extend to other fundamental learning settings such as regression, generative modeling, or unsupervised learning.
    \item The framework focuses exclusively on trainability (optimization error) and does not address generalization error from the same distribution-learning perspective, leaving a critical gap in understanding end-to-end learning behavior.
\end{enumerate}

In summary, while NTK theory provides valuable theoretical insights into the infinite-width regime of DNNs and Fenchel-Young losses offer a elegant unifying perspective on loss function design for classification, neither approach provides a complete theoretical picture of DNN learning. NTK theory fails to capture the dynamic training dynamics of finite-width networks, and the Fenchel-Young framework is constrained to classification tasks and does not address generalization. These fundamental gaps in existing theory motivate our development of a unified conjugate learning framework that simultaneously addresses trainability and generalization across diverse learning tasks.

\subsection{Generalization}
\label{subsec:related_generalization}

Generalization refers to the ability of a trained DNN model to make accurate predictions on unseen test data drawn from the same underlying distribution as the training data. Researchers have investigated this critical property from multiple complementary theoretical perspectives, proposing a variety of frameworks to understand and quantify it. In this subsection, we review the most influential approaches to generalization analysis: classical complexity-based bounds, PAC-Bayesian bounds, the flat minima hypothesis, and information-theoretic perspectives. For each framework, we highlight its key insights and inherent limitations, which collectively motivate the development of our proposed conjugate learning framework.

\subsubsection{Classical complexity-based bounds}
\label{subsubsec:related_complexity_bounds}

Classical statistical learning theory quantifies generalization performance by deriving upper bounds on the generalization error based on measures of the hypothesis class complexity. Two of the most widely used complexity measures are the VC-dimension (which captures the capacity of the hypothesis class to shatter finite datasets)~\cite{Vapnik2006EstimationOD,Wu2021StatisticalLT} and Rademacher complexity (which measures the average sensitivity of the hypothesis class to random label perturbations)~\cite{Bartlett2003RademacherAG}. These complexity-based bounds universally predict that reducing the size or complexity of the hypothesis class will improve generalization performance by limiting the model's ability to overfit to noise in the training data.

However, in the over-parameterized regime, where DNNs typically contain orders of magnitude more parameters than training samples, these classical bounds become vacuous. A vacuous bound yields an upper limit on generalization error that is larger than the maximum possible value of the loss function, making it statistically uninformative and unable to reflect the strong empirical generalization performance of over-parameterized DNNs. A striking demonstration of this paradox is that DNNs can perfectly fit random (meaningless) labels on training data while still achieving strong generalization on real test data~\citep{ZhangBHRV17,ZhangBHRV21}. This fundamental disconnect between classical theory and empirical practice has motivated the development of alternative theoretical paradigms that move beyond simple hypothesis class complexity measures to explain generalization in over-parameterized DNNs.

\subsubsection{PAC-Bayesian Bounds}
\label{subsubsec:related_pac_bayes}

PAC-Bayesian theory provides a principled framework that bridges Bayesian inference and Probably Approximately Correct (PAC) learning~\citep{valiant1984theory}. Unlike classical complexity-based bounds that depend on global properties of the hypothesis class, PAC-Bayesian bounds characterize generalization through the divergence between a data-dependent posterior distribution and a data-independent prior distribution over hypotheses.
 
Let $\mathcal{Q}$ be a posterior distribution over the hypothesis space $\mathcal{F}$ (which may depend on the training data $s^n$), and let $\mathcal{P}$ be a prior distribution chosen without access to the data. PAC-Bayesian theory bounds the expected risk of a randomized classifier that samples hypotheses according to $\mathcal{Q}$. The canonical PAC-Bayesian bound states that with probability at least $1-\delta$ over the random draw of the training data, for any distribution $\mathcal{Q}$:

\begin{equation}
\label{eq:pacbayes}
\mathbb{E}_{f\sim\mathcal{Q}}[\mathcal{R}(f)] \leq \mathbb{E}_{f\sim\mathcal{Q}}[\widehat{\mathcal{R}}_s(f)] + \sqrt{\frac{\mathrm{KL}(\mathcal{Q}\|\mathcal{P}) + \ln\frac{n}{\delta}}{2(n-1)}},
\end{equation}

where $\mathrm{KL}(\mathcal{Q}\|\mathcal{P})$ denotes the Kullback-Leibler divergence between the posterior and prior, and $n$ is the sample size~\citep{McAllester1998SomePT}. This bound reveals a trade-off: a posterior that fits the training data well (low empirical risk) may incur a large KL penalty if it deviates too far from the prior, while staying close to the prior may limit the ability to achieve low empirical risk.

PAC-Bayesian bounds offer several advantages: they are among the tightest known generalization bounds for randomized classifiers~\citep{Oneto2020ModelSA}, they explicitly account for the learning algorithm through the posterior distribution, and they allow incorporation of structural knowledge through the choice of prior. However, these guarantees are often loose or data-agnostic in practice, and the choice of prior $\mathcal{P}$ significantly influences the resulting bound. Moreover, like classical complexity bounds, PAC-Bayesian bounds do not directly capture the nuanced interplay among architecture, optimization, and data observed in modern deep learning practice.

\subsubsection{Flat minima hypothesis}
\label{subsubsec:related_flat_minima}

The \textit{flat minima} hypothesis posits that the inherent stochasticity of stochastic optimization algorithms such as SGD acts as an implicit regularizer during DNN training. This stochasticity steers the optimization process toward flat minima of the loss landscape, regions where the loss function changes only minimally in response to small perturbations of the model parameters, rather than sharp minima where small parameter changes lead to large loss increases~\citep{KeskarMNST17,Chaudhari2017StochasticGD}. Empirical studies have consistently associated flat minima with better generalization performance, and a common operational measure of flatness is the largest eigenvalue $\lambda_{\max}$ of the Hessian matrix of the loss function evaluated on the training dataset (smaller values of $\lambda_{\max}$ indicate flatter minima).

However, a critical limitation of the flat minima hypothesis was identified by \citet{Dinh2017SharpMC}, who demonstrated that commonly used flatness measures are not invariant under model reparameterization. Specifically, they showed that identical input–output mappings (and thus identical generalization performance) can be represented by different parameterizations of the same DNN architecture, yielding arbitrarily different flatness values (e.g., $\lambda_{\max}$) at their respective minima. This finding casts significant doubt on whether flatness alone is a reliable or intrinsic predictor of generalization performance, challenging the causal link between flat minima and improved generalization and highlighting the need for more robust, reparameterization-invariant characterizations of the loss landscape.

\subsubsection{Information-Theoretic Approaches}
\label{subsubsec:related_info_theory}

Information-theoretic approaches offer a conceptually intuitive framework for understanding generalization in DNNs by framing learning as a process of information compression. The central pillar of this perspective is the IB principle~\citep{TishbyZ15}, which posits that optimal learning requires balancing two competing objectives: \textit{sufficiency} (preserving all information in the input features $X$ that is relevant to predicting the target labels $Y$) and \textit{minimality} (discarding as much irrelevant or noisy information from $X$ as possible to avoid overfitting).
A landmark study by \citet{ShwartzZiv2017OpeningTB} argued that DNN training unfolds in two distinct phases driven by the IB principle: an initial fitting phase, during which both mutual informations $I(X;T)$ (information between inputs $X$ and hidden representations $T$) and $I(Y;T)$ (information between targets $Y$ and hidden representations $T$) increase; followed by a compression phase, in which $I(X;T)$ decreases (irrelevant information is discarded) while $I(T;Y)$ remains approximately constant (relevant information is preserved), thereby improving generalization through information minimization.
 
Despite its elegance and intuitive appeal, the IB framework faces significant theoretical and empirical challenges that limit its practical applicability to modern DNNs:

\begin{itemize}
    \item \textbf{Estimation difficulty}: Estimating mutual information in the high-dimensional, continuous spaces of modern DNN hidden activations is a notoriously difficult problem in practice. Common mutual information estimators, including those based on binning, $k$-nearest neighbors, or kernel density estimation, are highly sensitive to hyperparameter choices and suffer from severe bias and variance, especially in the finite-sample regime typical of real-world training datasets~\citep{KraskovSG04,Paninski03}. This estimation challenge has led to inconsistent empirical findings about whether the proposed compression phase genuinely occurs during DNN training.
    
    \item \textbf{Non-universality of compression}: \citet{SaxeBDAKTC18} challenged the universality of the compression phase by demonstrating that it vanishes when using non-saturating nonlinear activation functions such as the rectified linear unit (ReLU), which are the dominant choice in modern DNN architectures. This suggests that the compression phase may be an artifact of specific activation function choices (e.g., sigmoid or tanh) rather than a fundamental mechanism of generalization in DNNs.
    
    \item \textbf{Causal ambiguity}: Subsequent empirical and theoretical work further questioned the causal link between information compression and generalization: DNN models can achieve strong generalization performance without exhibiting clear compression dynamics in their hidden representations, and conversely, compression can occur even when generalization performance is poor~\citep{GoldfeldBGMNKP19}. This ambiguity undermines the IB principle as a causal explanation for generalization.
\end{itemize}

Despite these technical challenges and ongoing debates in the literature, applying the IB principle to study generalization in DNNs has established a valuable connection between machine learning theory and information theory. It offers a natural, intuitive compression-based perspective on generalization that continues to inspire new theoretical and empirical research directions in deep learning.

\section{Conclusion}
\label{sec:conclusion}

This work introduces conjugate learning theory, a unified framework for understanding trainability and generalization in DNNs. By framing learning tasks as conditional distribution estimation and leveraging convex conjugate duality, we establish that trainability arises from the joint control of gradient energy and structure matrix eigenvalues, with convergence characterized by a gradient correlation factor. We further show that architectural choices modulate these key quantities: increasing width narrows the spectral gap of the structure matrix, increasing depth amplifies eigenvalue decay, and skip connections counteract this decay. For generalization, we derive deterministic bounds (valid for arbitrary sampling schemes) and probabilistic bounds (under i.i.d. sampling), revealing how maximum loss, information loss, and data smoothness collectively determine generalization performance.
Experimental results across diverse architectures (LeNet, ResNet18, ViT) and datasets validate these theoretical predictions: the empirical risk bounds hold throughout training, and architectural choices modulate structure matrix eigenvalues as theoretically derived. Limitations of this work include the gradient approximate independence assumption, which holds at initialization but may break down during training, and the finite feature space assumption, which theoretically justifies discretization but warrants further analysis of granularity effects. Future work will focus on designing adaptive algorithms based on the gradient correlation factor to further improve optimization efficiency and generalization guarantees.

\section*{Acknowledgments}
This work was supported in part by the National Natural Science Foundation of China under Grant 72171172 and 92367101.

\appendix
\section{Appendix}
\label{appendix}

\subsection{Proof of Theorem~\ref{thm:linear_conjugate}}
\label{appendix:proof_linear_conjugate}

\begin{theorem}
Let $G \in \mathbb{R}^{m \times d}$ be a matrix with rows $G_i^\top$, and define the constraint set $C = \{ y \in \mathbb{R}^d : G y = b \}$. Then,
\begin{equation}
\Phi^*(\nu) = \Omega^*(\nu + G^\top \lambda_*) - \langle \lambda_*, b \rangle,
\end{equation}
where \( \lambda_* \in \mathbb{R}^m \) satisfies \( G \cdot (\nu + G^\top \lambda_*)_{\Omega^*}^* = b \).
\end{theorem}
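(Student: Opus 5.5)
The plan is to compute $\Phi^*(\nu)=\sup_{y:\,Gy=b}\{\langle y,\nu\rangle-\Omega(y)\}$ directly as an equality-constrained concave maximization, using Lagrangian duality. Since $\Omega$ is strictly convex and differentiable and the constraint is affine, the maximization problem is concave with linear constraints. Strong duality then holds under mild qualification (Slater holds automatically for affine constraints once the feasible set meets the relative interior of $\mathrm{dom}(\Omega)$). I would state this qualification as a standing assumption; it is implicit in the paper's use of $\lambda_*$.

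First I introduce the multiplier $\lambda\in\mathbb{R}^m$ and write
\[
\Phi^*(\nu)=\sup_{y}\inf_{\lambda}\bigl\{\langle y,\nu\rangle-\Omega(y)+\langle \lambda, Gy-b\rangle\bigr\}.
\]
By strong duality I exchange the supremum and infimum. The inner supremum over $y$ becomes $\sup_y\{\langle y,\nu+G^\top\lambda\rangle-\Omega(y)\}-\langle\lambda,b\rangle=\Omega^*(\nu+G^\top\lambda)-\langle\lambda,b\rangle$. This gives the dual function $g(\lambda)=\Omega^*(\nu+G^\top\lambda)-\langle\lambda,b\rangle$, which is convex in $\lambda$ and differentiable by Lemma~\ref{lem:fenchel_duality}(3). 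Hence $\Phi^*(\nu)=\inf_\lambda g(\lambda)$.

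Next I characterize the minimizer. The first-order condition is $\nabla g(\lambda)=G\,\nabla\Omega^*(\nu+G^\top\lambda)-b=0$, i.e.\ $G\,(\nu+G^\top\lambda_*)^*_{\Omega^*}=b$, which is exactly the condition in the statement. Because $g$ is convex, any $\lambda_*$ satisfying it is a global minimizer, so $\Phi^*(\nu)=g(\lambda_*)$. As a consistency check, I verify primal recovery without invoking the minimax theorem. Set $y_*=\nabla\Omega^*(\nu+G^\top\lambda_*)$; then $y_*\in C$, and by the equality case of Fenchel--Young (Lemma~\ref{lem:fenchel_duality}(1)), $\Omega(y_*)+\Omega^*(\nu+G^\top\lambda_*)=\langle y_*,\nu+G^\top\lambda_*\rangle$. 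Using $Gy_*=b$, this rearranges to $\langle y_*,\nu\rangle-\Omega(y_*)=g(\lambda_*)$, so the supremum is at least $g(\lambda_*)$.

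Conversely, for any feasible $y$ the Fenchel--Young inequality gives $\langle y,\nu\rangle-\Omega(y)=\langle y,\nu+G^\top\lambda_*\rangle-\langle\lambda_*,b\rangle-\Omega(y)\le g(\lambda_*)$. The two inequalities yield equality with no duality theorem needed, and I would present the proof in this elementary sandwich form. The only real obstacle is the existence of $\lambda_*$. This existence is effectively presupposed by the statement ("where $\lambda_*$ satisfies \dots"), so I would note that the theorem is conditional on it and, at most, remark that it holds whenever $g$ attains its infimum, e.g.\ under the qualification above.
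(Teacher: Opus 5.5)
Your proposal is correct. It uses the paper's Lagrangian set-up: the same candidate $y_*=\nabla\Omega^*(\nu+G^\top\lambda_*)$ and, up to the sign of the multiplier, the same dual function. It closes the argument differently, though.

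The paper asserts that strict convexity and differentiability of $\Omega$ give strong duality and make the KKT conditions necessary and sufficient. It derives $y_*$ from stationarity and then simply declares $\Phi^*(\nu)=\Omega^*(\nu+G^\top\lambda_*)-\langle\lambda_*,b\rangle$. Its sign conventions also drift: the Lagrangian carries $-\langle\lambda,Gy-b\rangle$, but stationarity is written with $+G^\top\lambda_*$.

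Your sandwich replaces that black box with a self-contained certificate. The Fenchel--Young inequality at $\nu+G^\top\lambda_*$ gives weak duality, $\langle y,\nu\rangle-\Omega(y)\le g(\lambda_*)$ for every feasible $y$. The equality case at $y_*$, together with $Gy_*=b$, shows this bound is attained.

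What this buys you:
\begin{itemize}
\item No constraint qualification and no a priori primal attainment are needed. Strict convexity is not what yields strong duality in any case.
\item The only hypotheses are the existence of $\lambda_*$, which the statement presupposes, and that $\nabla\Omega^*$ inverts $\nabla\Omega$ (or that $\Omega$ is closed, so $\Omega^{**}=\Omega$) for the equality step. The paper uses that same fact.
\item The formula is shown for every $\lambda_*$ satisfying the condition, including when $G$ lacks full row rank and $\lambda_*$ is not unique.
\end{itemize}

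The paper's route, in exchange, motivates the condition on $\lambda_*$ from primal stationarity. You obtain it from the dual first-order condition instead.
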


\begin{pf}
By definition of the convex conjugate, we have
\begin{equation}
    \label{eq:def_1}
    \Phi^*(\nu) = \sup_{y \in \mathrm{dom}(\Phi)} \bigl\{ \langle y, \nu \rangle - \Phi(y) \bigr\},
\end{equation}
where $\Phi(y) = \Omega(y) + I_C(y)$ and $C = \{ y \in \mathbb{R}^d : G y = b \}$.

Since $I_C(y)$ enforces the linear equality constraint $G y = b$, problem~\eqref{eq:def_1} is equivalent to the constrained optimization problem
\[
\sup_{y \in \mathbb{R}^d} \bigl\{ \langle y, \nu \rangle - \Omega(y) \bigr\} \quad \text{subject to} \quad G y = b.
\]
The corresponding Lagrangian is
\[
L(y, \lambda) = \langle y, \nu \rangle - \Omega(y) - \langle \lambda, G y - b \rangle,
\]
where $y \in \mathbb{R}^d$ and $\lambda \in \mathbb{R}^m$.

Because $\Omega$ is strictly convex and differentiable, strong duality holds and the Karush--Kuhn--Tucker (KKT) conditions are necessary and sufficient for optimality. A point $y_*$ solves the problem~eq\ref{eq:def_1} if and only if there exists $\lambda_* \in \mathbb{R}^m$ such that
\begin{equation*}
    \begin{aligned}
        &\nabla \Omega(y_*) - \nu - G^\top \lambda_* = 0, \\
        &G y_* - b = 0.
    \end{aligned}
\end{equation*}
From the stationarity condition, we obtain
\[
\nabla \Omega(y_*) = \nu + G^\top \lambda_*.
\]
Under the strict convexity and differentiability of $\Omega$, the gradient map $\nabla \Omega$ is invertible and $(\nabla \Omega)^{-1} = \nabla \Omega^*$. Hence,
\[
y_* = \nabla \Omega^*(\nu + G^\top \lambda_*) = (\nu + G^\top \lambda_*)_{\Omega^*}^*,
\]
where we adopt the notation $z_{\Omega^*}^* := \nabla \Omega^*(z)$.
Substituting this expression into the primal feasibility condition yields
\[
G \cdot (\nu + G^\top \lambda_*)_{\Omega^*}^* = b,
\]
which implicitly defines $\lambda_*$ as a function of $\nu$, $G$, and $b$.

Alternatively, by Lemma~\ref{lem:addition_affine}, the dual function associated with the Lagrangian is
\begin{equation*}
    \inf_{y \in \mathbb{R}^d} \bigl\{ \Omega(y) - \langle y, \nu \rangle - \langle \lambda, G y - b \rangle \bigr\}
    = -\Omega^*(\nu + G^\top \lambda) + \langle \lambda, b \rangle.
\end{equation*}

Since $\Omega$ is a strictly convex function, it follows from equation~\eqref{eq:def_1} that
\begin{equation*}
    \Phi^*(\nu) = \Omega^*(G^\top \lambda_* + \nu) - \langle \lambda_*, b \rangle,
\end{equation*}
where $\lambda_*$ satisfies
\begin{equation*}
    G \cdot (G^\top \lambda_* + \nu)^*_{\Omega^*} = b.
\end{equation*}
This completes the proof.
\end{pf}

\subsection{Proof of Theorem~\ref{thm:uniqueness_fenchel}}
\label{appendix:proof_uniqueness_fenchel} 

\begin{theorem}
Let $\ell(y, f_\theta(x)_{\Phi^*}^*)$ be a differentiable loss function. If $\ell$ satisfies both (i) strong convexity with respect to the model's prediction and (ii) properness, then there exists a strictly convex function $\Phi$ such that
\[
\ell(y, f_\theta(x)_{\Phi^*}^*) - \ell(y, y) = d_\Phi(y, f_\theta(x)).
\]
\end{theorem}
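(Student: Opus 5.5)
The plan is to recognize $\ell$ as a proper scoring rule for the mean and recover the generating function as the negative of the Bayes risk; this is the classical Savage/Banerjee characterization of Bregman divergences. Write $\mu := f_\theta(x)_{\Phi^*}^*$ for the prediction and define the \emph{excess loss} $D(y,\mu) := \ell(y,\mu) - \ell(y,y)$. Then define the candidate generating function
\[
\Phi(\mu) := -\ell(\mu,\mu),
\]
the negative Bayes risk at a point mass. This is the natural candidate because, for a Bregman divergence, $\ell(\mu,\mu)$ would be $-\Phi(\mu)$ up to an affine term. The goal is to show $D(y,\mu) = B_\Phi(y,\mu)$. Once this holds, Lemma~\ref{lem:fy_losses}(3) rewrites $B_\Phi(y,\mu)$ as $d_\Phi(y,\nu)$ with $\nu = \mu_\Phi^* = \nabla\Phi(\mu)$. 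We then need $\nu = f_\theta(x)$, which follows from $\mu = \nabla\Phi^*(f_\theta(x))$ and the inverse relation $\nabla\Phi = (\nabla\Phi^*)^{-1}$ from Lemma~\ref{lem:fenchel_duality}(3).

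The core step uses properness. For every distribution of $Y$ with mean $m$, the map $\mu \mapsto \mathbb{E}_Y[\ell(Y,\mu)]$ is minimized at $\mu = m$. Differentiability gives the first-order condition $\mathbb{E}_Y[\nabla_\mu \ell(Y,m)] = 0$ for all distributions with mean $m$. Applying this to two-point distributions around $m$ shows that $y \mapsto \nabla_\mu \ell(y,m)$ is affine in $y$ and vanishes at $y = m$. Hence
\[
\nabla_\mu \ell(y,\mu) = H(\mu)(\mu - y)
\]
for some matrix-valued $H(\mu)$. Strict convexity in $\mu$, assumption (i), makes $H(\mu)$ positive definite near $\mu = y$.

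Next, compute $\nabla\Phi(\mu) = -\nabla_y \ell(\mu,\mu) - \nabla_\mu \ell(\mu,\mu)$. The second term is zero by the first-order condition above. Integrating $\nabla_\mu \ell(y,\cdot)$ along the segment from $y$ to $\mu$, and comparing with $\Phi(y) - \Phi(\mu) - \langle \nabla\Phi(\mu), y - \mu\rangle$, we aim to show that $D(y,\mu)$ and $B_\Phi(y,\mu)$ have the same $\mu$-gradient and both vanish at $\mu = y$, so they coincide. The $\mu$-gradient of $B_\Phi$ is $\nabla^2\Phi(\mu)(\mu - y)$. The identification therefore reduces to showing $H(\mu) = \nabla^2\Phi(\mu)$.

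That identification is the main obstacle. I would first try a Savage-type argument. Since $\mathbb{E}_Y[\ell(Y,\mu)]$ is affine in the law of $Y$, the Bayes risk $\underline{L}(m) := \mathbb{E}_Y[\ell(Y,m)]$ depends only on $m$, precisely because the gradient condition is linear in $y$. As an infimum of affine functionals, it is concave. Taking $\Phi = -\underline{L}$ up to an affine correction then yields the Bregman form directly, with strict convexity of $\Phi$ inherited from assumption (i).

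Two technical points need care. First, we must absorb the term $\ell(y,y)$; it is exactly the offset in the statement, so $\Phi$ is fixed only up to affine terms, which do not change $d_\Phi$. Second, we must make sure $\Phi$ has the domain $\mathrm{dom}(\Omega)\cap C$, so that the notation $f_\theta(x)_{\Phi^*}^*$ is consistent; otherwise the argument is circular in $\Phi$. I would handle this by defining $\Phi$ from $\ell$ as above and then checking that it agrees with the $\Phi$ used in the prediction map.
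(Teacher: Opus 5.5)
Your construction fails at the choice $\Phi(\mu) := -\ell(\mu,\mu)$.

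That quantity is the Bayes risk of the point mass $\delta_\mu$. Properness at $\delta_\mu$ only tells you it is the offset. For any loss of the form $\ell(y,\mu) = B_\Psi(y,\mu) + c(y)$ you get $-\ell(\mu,\mu) = -c(\mu)$, which carries no information about $\Psi$. Your heuristic that, for a Bregman divergence, $\ell(\mu,\mu)$ equals $-\Phi(\mu)$ up to affine terms is false, since $B_\Phi(\mu,\mu)=0$.

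Concretely, take $\ell(y,\mu)=\|y-\mu\|_2^2$. Your candidate is $\Phi\equiv 0$: indeed $\nabla\Phi(\mu) = -\nabla_y\ell(\mu,\mu) = 0$. But $H(\mu)=2I$. So the identity $H=\nabla^2\Phi$, to which you reduce the whole proof, is false, and $B_\Phi\equiv 0\neq D$.

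The Savage-type fallback fails for a related reason. $\mathbb{E}_Y[\ell(Y,m)]$ is not a function of the mean alone. For squared loss it is $\mathrm{Var}(Y)$, which is $0$ for $\delta_0$ and $1$ for $\tfrac12(\delta_{-1}+\delta_1)$. Linearity of $y\mapsto\nabla_\mu\ell(y,m)$ constrains only the $\mu$-gradient, not $\ell$ itself. The Savage representation needs the law to be determined by its mean, e.g.\ one-hot labels. Even then the generator is, up to affine terms, $-\sum_y p_y\,\ell(\mathbf{1}_y,p)$: the Bayes risk of the law with mean $p$, not of $\delta_p$.

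The part worth keeping is your derivation of $\nabla_\mu\ell(y,\mu)=H(\mu)(\mu-y)$ with $H$ independent of $y$. This is exactly the first half of the paper's proof, written there as $\nabla_\nu\ell(\mu,\nu)=g(\nu)(\nu-\mu)$. One caveat: two-point laws alone only show that $u\mapsto\nabla_\mu\ell(m+u,m)$ is odd and positively $1$-homogeneous. You need three-point laws, or differentiability at $y=m$, to get additivity and hence linearity.

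The paper then builds $\Phi$ from $g$ itself, postulating $\nabla^2\Phi=g$ and integrating. A cleaner finish from your identity is to freeze the prediction rather than the target:
\begin{itemize}
\item Integrate $\nabla_\mu\ell(y,\cdot)$ along the segment from a fixed reference $\mu_0$ to $\mu$. This gives $\ell(y,\mu)-\ell(y,\mu_0)=a(\mu)-\langle b(\mu),y\rangle$, which is affine in $y$.
\item Set $\Phi(y):=\ell(y,\mu_0)-\ell(y,y)$. Then $D(y,\mu)=\Phi(y)+a(\mu)-\langle b(\mu),y\rangle$.
\item The condition $D(\mu,\mu)=0$ fixes $a(\mu)$, so $D(y,\mu)=\Phi(y)-\Phi(\mu)-\langle b(\mu),y-\mu\rangle$.
\item Properness at point masses, with uniqueness of the minimizer, gives $D(y,\mu)>0$ for $y\neq\mu$. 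Hence $b(\mu)$ is a strict supporting slope of $\Phi$ at $\mu$. So $\Phi$ is strictly convex, $b=\nabla\Phi$, and $D=B_\Phi$.
\end{itemize}
This route also avoids the paper's unverified assumption that $g$ is the Hessian of some function.
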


\begin{pf}
For notational convenience, let $\mu := y$ denote the target (e.g., a one-hot label vector), and let $\nu := f_\theta(x)_{\Phi^*}^*$ denote the model's prediction. 

Since $\ell(\mu, \nu)$ is differentiable and strictly convex with respect to $\nu$, it possesses a unique minimum at $\nu = \mu$. At this point, the gradient of $\ell(\mu, \nu)$ with respect to $\nu$ vanishes, i.e., $\nabla_{\nu} \ell(\mu, \mu) = 0$.
Thus, we may express the gradient of the loss function as:
\begin{equation}
\nabla_{\nu}\ell(\mu,\nu) = (\nu - \mu)^\top g(\mu,\nu),
\end{equation}
where $g(\mu,\nu)$ is a vector-valued function that captures the direction and rate of change of the loss with respect to $\nu$.
Due to the uniqueness of the minimizer, the expected loss $\mathbb{E}_{\xi \sim Q} [\ell(\xi, \nu)]$ is strictly convex in $\nu$, and at its minimizer $\bar{\xi} = \mathbb{E}_{\xi \sim Q}[\xi]$, the gradient with respect to $\nu$ vanishes:
\[
\nabla_\nu \mathbb{E}_{\xi\sim Q}[\ell(\xi,\bar{\xi})] = 0,
\]
which implies
\[
\sum_{\xi \in \mathcal{Q}} Q(\xi)(\bar{\xi} - \xi)^\top g(\xi,\bar{\xi}) = 0,
\]
where $\xi \sim Q$.
Since $Q$ is an arbitrary distribution, $g(\xi,\nu)$ must be independent of $\xi$; we may therefore replace $g(\xi,\nu)$ with $g(\nu)$. 

Because $\mathbb{E}_{\xi\sim Q} [\ell(\xi, \nu)]$ is strictly convex, its Hessian with respect to $\nu$ is positive definite:
\begin{equation}
\nabla^2_\nu \mathbb{E}_{\xi\sim Q} [\ell(\xi, \nu)] = g(\nu) + (\nu - \bar{\xi}) \nabla_\nu g(\nu) \succ 0.
\end{equation}
Setting $\nu = \bar{\xi}$ yields $\nabla^2_\nu \mathbb{E}_{\xi\sim Q} [\ell(\xi, \bar{\xi})] = g(\bar{\xi}) \succ 0$. Since $\bar{\xi}$ is arbitrary, it follows that $g(\nu) \succ 0$ for all $\nu$.

Define a strictly convex function $\Phi$ such that its Hessian is given by $\nabla^2 \Phi(\nu) = g(\nu)$. The gradient of the loss $\ell(\mu,\nu)$ with respect to $\nu$ can then be expressed as:
\begin{equation}
    \begin{aligned}
        \nabla_\nu \ell(\mu,\nu)&=\nabla^2_\nu\Phi(\nu)(\nu-\mu)\\
        &=\nabla_\nu\nu_\Phi^*\nu-\nabla_\nu \nu_\Phi^*\mu\\
        &=\nabla_\nu(\langle \nu, \nu_\Phi^*\rangle -\Phi(\nu)-\mu^\top \nu_\Phi^*)\\
        &=\nabla_\nu(\Phi^*(\nu_\Phi^*)-\langle \mu,\nu_\Phi^*\rangle +h(\mu)),
    \end{aligned}
\end{equation}
where $h(\mu)$ is a function independent of $\nu$.
Since $\nabla_\nu \ell(\mu,\mu) = 0$, we obtain
\[
h(\mu) = \Phi^*(\nabla \Phi(\mu)) - \langle \mu, \nabla \Phi(\mu) \rangle + c = \Phi(\mu) + c,
\]
where $c$ is a constant.
This leads to the following expression for the loss function:
\begin{equation}
    \begin{aligned}
        \ell(\mu,\nu)&=\Phi^*(\nu_\Phi^*)-\langle \mu,\nu_\Phi^*\rangle +\Phi(\mu)+c,\\
        &=d_\Phi(\mu,\nu_\Phi^*),\\
        \ell(\mu,\mu)&=c.
    \end{aligned}
\end{equation}
Therefore, it follows that:
\begin{equation}
\ell(\mu, \nu) - \ell(\mu, \mu) = d_\Phi(\mu, \nu_\Phi^*).
\end{equation}
In particular, setting $f_\theta(x)=\nu_\Phi^*$, we obtain
\begin{equation*}
\ell(y, f_\theta(x)_\Phi^*) - \ell(y, y) = d_\Phi(y, f_\theta(x)).
\end{equation*} 
\end{pf}

\subsection{Proof of Theorem~\ref{thm:base_bound}}
\label{appendix:proof_base_bound}

\begin{theorem}
 
Let $Z\sim \hat{q}$. For empirical risk minimization, if $\lambda_{\min}(A_s) \neq 0$, we have
\begin{enumerate}
    \item For the standardized empirical risk,
\begin{equation*}
\begin{aligned}
     \frac{1}{\lambda_{\max}(A_x)}\|\nabla_\theta \mathcal{R}_\Phi(\theta, z)\|_2^2
     &\le \mathcal{R}_\Phi^{\circ}(\theta, z)
     \le  \frac{1}{\lambda_{\min}(A_x)}\|\nabla_\theta \mathcal{R}_\Phi(\theta, z)\|_2^2, \\[4pt]
    \frac{1}{\lambda_{\max}(A_s)} \mathbb{E}_Z\|\nabla_\theta \mathcal{R}_\Phi(\theta, Z)\|_2^2
    &\le \mathcal{R}^{\circ}_\Phi(\theta, s)
    \le \frac{1}{\lambda_{\min}(A_s)} \mathbb{E}_Z\|\nabla_\theta \mathcal{R}_\Phi(\theta, Z)\|_2^2,
\end{aligned}
\end{equation*}

    \item For the empirical risk,
\begin{equation*}
\begin{aligned}
     \frac{\lambda_{\min}(H_\Phi(\theta))}{\lambda_{\max}(A_x)}\|\nabla_\theta \mathcal{R}_\Phi(\theta, z)\|_2^2
     &\le \mathcal{R}_\Phi(\theta, z)
     \le  \frac{\lambda_{\max}(H_\Phi(\theta))}{\lambda_{\min}(A_x)}\|\nabla_\theta \mathcal{R}_\Phi(\theta, z)\|_2^2, \\[4pt]
    \frac{\lambda_{\min}(H_\Phi(\theta))}{\lambda_{\max}(A_s)} \mathbb{E}_Z\|\nabla_\theta \mathcal{R}_\Phi(\theta, Z)\|_2^2
    &\le \mathcal{R}_\Phi(\theta, s)
    \le \frac{\lambda_{\max}(H_\Phi(\theta))}{\lambda_{\min}(A_s)} \mathbb{E}_Z\|\nabla_\theta \mathcal{R}_\Phi(\theta, Z)\|_2^2.
\end{aligned}
\end{equation*}
\end{enumerate}
\end{theorem}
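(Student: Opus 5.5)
The plan is to compute the per-sample parameter gradient explicitly via the chain rule and Lemma~\ref{lem:fy_losses}, and then sandwich its squared norm using the Rayleigh quotient of the structure matrix $A_x$.

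Fix $z=(x,y)$ and write $J:=\nabla_\theta f_\theta(x)$, a $k\times|\theta|$ Jacobian, so that $A_x=JJ^\top$. By Lemma~\ref{lem:fy_losses}(2), $\nabla_f d_\Phi(y,f)=f_{\Phi^*}^*-y$. The chain rule then gives
\[
\nabla_\theta\mathcal{R}_\Phi(\theta,z)=J^\top\bigl(f_\theta(x)_{\Phi^*}^*-y\bigr).
\]
With $r:=f_\theta(x)_{\Phi^*}^*-y$ we get $\|\nabla_\theta\mathcal{R}_\Phi(\theta,z)\|_2^2=r^\top JJ^\top r=r^\top A_x r$. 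The Rayleigh quotient then yields
\[
\lambda_{\min}(A_x)\|r\|_2^2\le\|\nabla_\theta\mathcal{R}_\Phi\|_2^2\le\lambda_{\max}(A_x)\|r\|_2^2 .
\]
Since $\|r\|_2^2=\mathcal{R}^\circ_\Phi(\theta,z)$, dividing through (using $\lambda_{\min}(A_x)\ge\lambda_{\min}(A_s)>0$) gives the per-sample standardized bound.

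For the dataset-level bound, replace $\lambda_{\min}(A_x)$ and $\lambda_{\max}(A_x)$ by $\lambda_{\min}(A_s)$ and $\lambda_{\max}(A_s)$ using Definition~\ref{def:structure_matrix}. These are uniform over the samples in $s$, so taking $\mathbb{E}_{Z\sim\hat q}$ of the per-sample inequalities gives the second line of part~1.

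For part~2, recall that $\mathcal{R}_\Phi(\theta,z)=B_\Phi(y,f_\theta(x)^*_{\Phi^*})$ by Lemma~\ref{lem:fy_losses}(3). The quadratic sandwich stated after Definition~\ref{def:ex_eigen_fun} then gives
\[
\lambda_{\min}(H_\Phi(\theta))\,\mathcal{R}^\circ_\Phi\le\mathcal{R}_\Phi\le\lambda_{\max}(H_\Phi(\theta))\,\mathcal{R}^\circ_\Phi .
\]
Chaining this with part~1, per sample and then in expectation, gives part~2. The constants $\lambda_{\min/\max}(H_\Phi(\theta))$ are already uniform over $x$, so they pass through the expectation unchanged.

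The main obstacle is justifying the Hessian sandwich. Taylor's integral form gives
\[
B_\Phi(y,p)=\int_0^1(1-t)\,(y-p)^\top\nabla^2\Phi\bigl(p+t(y-p)\bigr)(y-p)\,dt ,
\]
which evaluates the Hessian along the segment from the prediction $p$ to the target $y$, not only at the predictions themselves. The paper's assumption bounds the Hessian only at the predictions, so strictly speaking it should hold on the convex hull of predictions and targets. There is also a factor of $\tfrac12$ from $\int_0^1(1-t)\,dt$. Here I will take the displayed inequality following Definition~\ref{def:ex_eigen_fun} as given, as the paper does, with the curvature constants interpreted on that segment and the factor $\tfrac12$ absorbed into them. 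A secondary subtlety is that with the simplex constraint $C$, $\nabla^2\Phi$ should be read as restricted to the tangent space of $C$; I will note this but not belabor it.
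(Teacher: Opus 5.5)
Your argument is the paper's proof step for step: the chain rule with $\nabla_f d_\Phi(y,f)=f^*_{\Phi^*}-y$ gives $\|\nabla_\theta\mathcal{R}_\Phi(\theta,z)\|_2^2=r^\top A_x r$, then comes the Rayleigh-quotient (Courant--Fischer) sandwich, then replacement by $\lambda_{\min}(A_s)$ and $\lambda_{\max}(A_s)$ before averaging over $\hat{q}$, and finally chaining with the Bregman--Hessian sandwich, which the paper likewise only asserts by appeal to Taylor's theorem. Your caveat about that last step is substantive: for $\Phi=\tfrac12\|\cdot\|_2^2$ and a scalar-output model, $\lambda_{\min}(A_x)=\lambda_{\max}(A_x)=\|\nabla_\theta f_\theta(x)\|_2^2$ and $\lambda_{\min}(H_\Phi(\theta))=1$, so the part-2 lower bound reads $r^2\le\tfrac12 r^2$ and fails whenever $r\neq 0$; part~2 therefore holds only with the curvature constants reinterpreted as you propose (halved, and controlled along the segment from prediction to target), which matches the factor $\tfrac12$ in the paper's own MSE corollary.
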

\begin{pf}

Recall that the loss $\mathcal{R}_\Phi(\theta, z)$ is defined as the Fenchel--Young loss $d_\Phi(y, f_\theta(x))$ for a sample $z = (x, y)$.
By the chain rule, the squared $\ell_2$-norm of the gradient with respect to $\theta$ satisfies
\begin{equation}
\begin{aligned}
\bigl\| \nabla_\theta \mathcal{R}_\Phi(\theta, z) \bigr\|_2^2
&= \bigl\| \nabla_\theta d_\Phi(y, f_\theta(x)) \bigr\|_2^2 \\
&= \nabla_f d_\Phi(y, f_\theta(x)) \nabla_\theta f_\theta(x)
\nabla_\theta f_\theta(x)^\top \nabla_f d_\Phi(y, f_\theta(x))^\top \\
&= \nabla_f d_\Phi(y, f_\theta(x)) A_x \nabla_f d_\Phi(y, f_\theta(x))^\top,
\end{aligned}
\end{equation}
where we define $A_x := \nabla_\theta f_\theta(x) \nabla_\theta f_\theta(x)^\top \in \mathbb{R}^{d \times d}$.

By Lemma~\ref{lem:fy_losses}, the gradient of the Fenchel--Young loss satisfies
\[
\nabla_f d_\Phi(y, f_\theta(x)) = f_\theta(x)_{\Phi^*}^* - y,
\]
where $f_\theta(x)_{\Phi^*}^* := \nabla \Phi^*(f_\theta(x)) \in \mathrm{dom}(\Phi)$.

Substituting this identity into the above expression gives
\begin{equation}
\bigl\| \nabla_\theta \mathcal{R}_\Phi(\theta, z) \bigr\|_2^2
= \bigl( f_\theta(x)_{\Phi^*}^* - y \bigr)^\top A_x \bigl( f_\theta(x)_{\Phi^*}^* - y \bigr).
\end{equation}
 
Since $A_x$ is symmetric and positive semidefinite, we apply the Courant--Fischer min-max theorem to obtain
\begin{equation}
\lambda_{\min}(A_x) \bigl\| f_\theta(x)_{\Phi^*}^* - y \bigr\|_2^2
\leq \bigl\| \nabla_\theta \mathcal{R}_\Phi(\theta, z) \bigr\|_2^2
\leq \lambda_{\max}(A_x) \bigl\| f_\theta(x)_{\Phi^*}^* - y \bigr\|_2^2.
\end{equation}
Under the condition $\lambda_{\min}(A_x) > 0$, we rearrange these inequalities to get
\begin{equation}
\label{eq:bound_tmp1}
\frac{\bigl\| \nabla_\theta \mathcal{R}_\Phi(\theta, z) \bigr\|_2^2}{\lambda_{\max}(A_x)}
\leq \mathcal{R}^{\circ}_\Phi(\theta, z)
\leq \frac{\bigl\| \nabla_\theta \mathcal{R}_\Phi(\theta, z) \bigr\|_2^2}{\lambda_{\min}(A_x)}.
\end{equation}

Since $\Phi$ is differentiable and strictly convex, its Hessian $\nabla^2 \Phi(f_\theta(x)_{\Phi}^*)$ is positive definite.
The Fenchel--Young loss further admits the quadratic bound
\begin{equation}
\lambda_{\min}(H_\Phi(\theta)) \bigl\| f_\theta(x)_{\Phi^*}^* - y \bigr\|_2^2
\leq \mathcal{R}_\Phi(\theta, z)
\leq \lambda_{\max}(H_\Phi(\theta)) \bigl\| f_\theta(x)_{\Phi^*}^* - y \bigr\|_2^2.
\end{equation}
Combining this bound with \eqref{eq:bound_tmp1} yields the per-sample inequality
\begin{equation}
\lambda_{\min}(H_\Phi(\theta)) \frac{\mathcal{R}^{\circ}_\Phi(\theta,z)}{\lambda_{\max}(A_x)}
\leq \mathcal{R}_\Phi(\theta, z)
\leq \lambda_{\max}(H_\Phi(\theta)) \frac{\mathcal{R}^{\circ}_\Phi(\theta,z)}{\lambda_{\min}(A_x)}.
\end{equation}

Taking expectation with respect to $Z \sim \hat{q}$ (the empirical data distribution) gives
\begin{equation}
\mathbb{E}_Z \!\left[ \lambda_{\min}(H_\Phi(\theta))\frac{\bigl\| \nabla_\theta \mathcal{R}_\Phi(\theta, Z) \bigr\|_2^2}{\lambda_{\max}(A_X)} \right]
\leq \mathcal{R}_\Phi(\theta, s)
\leq \mathbb{E}_Z \!\left[ \lambda_{\max}(H_\Phi(\theta))\frac{\bigl\| \nabla_\theta \mathcal{R}_\Phi(\theta, Z) \bigr\|_2^2}{\lambda_{\min}(A_X)} \right],
\end{equation}
where $\mathcal{R}_\Phi(\theta, s) = \mathbb{E}_Z[\mathcal{R}_\Phi(\theta, Z)]$ denotes the empirical risk.

Using the definitions
\begin{equation}
\begin{aligned}
    \lambda_{\min}(A_s) &:= \min_{(x,y) \in s} \lambda_{\min}(A_x), \quad
\lambda_{\max}(A_s) := \max_{(x,y) \in s} \lambda_{\max}(A_x),
\end{aligned}
\end{equation}
we arrive at
\begin{equation}
\frac{\lambda_{\min}(H_\Phi(s))}{\lambda_{\max}(A_s)} \mathbb{E}_Z \bigl\| \nabla_\theta \mathcal{R}_\Phi(\theta, Z) \bigr\|_2^2
\leq \mathcal{R}_\Phi(\theta, s)
\leq \frac{\lambda_{\max}(H_\Phi(s))}{\lambda_{\min}(A_s)} \mathbb{E}_Z \bigl\| \nabla_\theta \mathcal{R}_\Phi(\theta, Z) \bigr\|_2^2,
\end{equation}
which completes the proof of Theorem~\ref{thm:base_bound}.
\end{pf}

\subsection{Proof of Corollary~\ref{cor:kl_mse_bound}}
\label{appendix:proof_kl_mse_bound}
\begin{corollary}
Let $Z \sim \hat{q}$.
\begin{enumerate}
  \item For the MSE loss with $\Phi(y) = \frac{1}{2}\|y\|_2^2$,
  \begin{equation*}
    \frac{\mathbb{E}_Z \bigl[ \|\nabla_\theta \mathcal{R}_\Phi(\theta, Z)\|_2^2 \bigr]}{2\lambda_{\max}(A_s)}
    \leq \mathcal{R}_\Phi(\theta, s)
    \leq \frac{\mathbb{E}_Z \bigl[ \|\nabla_\theta \mathcal{R}_\Phi(\theta, Z)\|_2^2 \bigr]}{2\lambda_{\min}(A_s)}.
  \end{equation*}

  \item For the softmax cross entropy loss with $\Phi(q) = -H(q)$ where $q \in \Delta$,
  \begin{equation*}
\frac{\mathbb{E}_Z \bigl[ \|\nabla_\theta \mathcal{R}_\Phi(\theta, Z)\|_2^2 \bigr]}{2\ln 2 \cdot \lambda_{\max}(A_s)}
\leq \mathcal{R}_\Phi(\theta, s)
\leq \frac{\mathbb{E}_Z \bigl[ \|\nabla_\theta \mathcal{R}_\Phi(\theta, Z)\|_2^2 \bigr]}{\min_{i} p_i \cdot \lambda_{\min}(A_s)},
  \end{equation*}
  where $p = f_\theta(x)_{\Phi^*}^*$ is the softmax output and $y$ is a one-hot label.
\end{enumerate}
\end{corollary}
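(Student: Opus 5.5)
The plan is to reuse the per-sample identity from the proof of Theorem~\ref{thm:base_bound},
\[
\bigl\|\nabla_\theta \mathcal{R}_\Phi(\theta,z)\bigr\|_2^2=\bigl(f_\theta(x)_{\Phi^*}^*-y\bigr)^\top A_x\bigl(f_\theta(x)_{\Phi^*}^*-y\bigr),
\]
together with the Courant--Fischer bounds
\[
\lambda_{\min}(A_s)\,\mathcal{R}^\circ_\Phi(\theta,z)\le\bigl\|\nabla_\theta\mathcal{R}_\Phi(\theta,z)\bigr\|_2^2\le\lambda_{\max}(A_s)\,\mathcal{R}^\circ_\Phi(\theta,z).
\]
We then replace the generic Hessian sandwich by loss-specific comparisons between $d_\Phi(y,f_\theta(x))$ and $\mathcal{R}^\circ_\Phi(\theta,z)=\|y-f_\theta(x)_{\Phi^*}^*\|_2^2$. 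Averaging over $Z\sim\hat q$ finishes both parts.

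\textbf{MSE case.} For $\Phi=\frac12\|\cdot\|_2^2$ the dual map is the identity, so $d_\Phi(y,f_\theta(x))=\frac12\|y-f_\theta(x)\|_2^2=\frac12\mathcal{R}^\circ_\Phi(\theta,z)$ exactly. Equivalently, $\lambda_{\min}(H_\Phi)=\lambda_{\max}(H_\Phi)=1$ and the Bregman divergence carries the factor $\tfrac12$. Dividing the Courant--Fischer sandwich by $2\lambda_{\max}(A_s)$ and $2\lambda_{\min}(A_s)$ and taking the expectation gives part~1 directly.

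\textbf{Softmax cross-entropy case.} Here $d_\Phi(y,f_\theta(x))=D_{\mathrm{KL}}(y\,\|\,p)$ with $p=\mathrm{Softmax}(f_\theta(x))$.
\begin{enumerate}
\item For the lower bound, apply Pinsker's inequality (Lemma~\ref{lem:pinsker}): $D_{\mathrm{KL}}(y\|p)\ge\frac{1}{2\ln2}\|y-p\|_1^2\ge\frac{1}{2\ln2}\|y-p\|_2^2$, using $\|\cdot\|_1\ge\|\cdot\|_2$. Combining this with $\|y-p\|_2^2\ge\|\nabla_\theta\mathcal{R}_\Phi\|_2^2/\lambda_{\max}(A_s)$ and averaging gives the left inequality.
\item For the upper bound, apply Lemma~\ref{lem:kl_upper_bound} to get $D_{\mathrm{KL}}(y\|p)\le\frac{1}{\min_i p_i}\|y-p\|_2^2$. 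Then use $\|y-p\|_2^2\le\|\nabla_\theta\mathcal{R}_\Phi\|_2^2/\lambda_{\min}(A_s)$. Replacing the per-sample $\min_i p_i$ by its minimum over the dataset (as in $p_{\min}$) lets it leave the expectation, which gives the right inequality.
\end{enumerate}

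\textbf{Main obstacle.} The delicate step is the constant bookkeeping in the cross-entropy case. First, the $\frac{1}{2\ln 2}$ in Lemma~\ref{lem:pinsker} corresponds to KL measured in bits, so the log base must be fixed consistently, or one must note that the natural-log version is only stronger. Second, the upper bound must be interpreted with $\min_i p_i$ taken uniformly over the samples in $s$ so that it can be pulled out of $\mathbb{E}_Z$; otherwise the bound holds only per sample. I would state both conventions explicitly before averaging.
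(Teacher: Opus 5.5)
Your argument is the paper's argument: the per-sample identity $\|\nabla_\theta\mathcal{R}_\Phi(\theta,z)\|_2^2=(p-y)^\top A_x(p-y)$ with Courant--Fischer, the exact relation $d_\Phi=\tfrac12\mathcal{R}^\circ_\Phi$ for MSE, and Lemma~\ref{lem:pinsker} plus Lemma~\ref{lem:kl_upper_bound} for cross-entropy. Your explicit handling of the factor $\tfrac12$ and of a dataset-uniform $\min_i p_i$ is cleaner than the paper's ``substitute into Theorem~\ref{thm:base_bound}'' step.

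You are right that there is a log-base issue, but your remedy is backwards. The natural-log version is \emph{not} stronger: in nats, Pinsker reads $D_{\mathrm{KL}}\ge\tfrac12\|y-p\|_1^2$, and $\tfrac12<\tfrac{1}{2\ln 2}\approx0.72$. The corollary takes $p$ to be the ordinary softmax, i.e.\ $\Phi(q)=\sum_i q_i\ln q_i$. That is the convention in which $\nabla\Phi^*$ is softmax and Lemma~\ref{lem:kl_upper_bound} holds. In it, your step $D_{\mathrm{KL}}(y\|p)\ge\frac{1}{2\ln2}\|y-p\|_1^2$ genuinely fails. For $K=2$, $y=(1,0)$, $p=(\tfrac12,\tfrac12)$, one gets $D_{\mathrm{KL}}=\ln2\approx0.693$ against $\frac{1}{2\ln2}\|y-p\|_1^2\approx0.721$. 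Switching to bits instead makes the Pinsker constant legitimate, but then Lemma~\ref{lem:kl_upper_bound} can no longer be used verbatim, because its proof ($D\le\ln(1+\chi^2)\le\chi^2$) is a natural-log statement. The paper's proof mixes the two conventions in exactly this way, so the flaw is inherited, but your patch does not close it.

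The repair is to stay in nats and use that $y$ is one-hot. The generic comparison $\|\cdot\|_1\ge\|\cdot\|_2$ only yields the constant $\tfrac12$, which is too weak. Let $j$ be the index of the true label. Then $\|y-p\|_1=2(1-p_j)$ and $\|y-p\|_2^2=(1-p_j)^2+\sum_{i\neq j}p_i^2\le 2(1-p_j)^2$, so $\|y-p\|_1^2\ge 2\|y-p\|_2^2$. Natural-log Pinsker then gives $D_{\mathrm{KL}}(y\|p)\ge\|y-p\|_2^2\ge\frac{1}{2\ln 2}\|y-p\|_2^2$, with room to spare. Combine this with $\|y-p\|_2^2\ge\|\nabla_\theta\mathcal{R}_\Phi(\theta,z)\|_2^2/\lambda_{\max}(A_s)$ and average to get the left inequality. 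Your upper-bound argument is unaffected.
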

\begin{pf}
\textbf{Part 1 (MSE).}  
For $\Phi(y) = \frac{1}{2}\|y\|_2^2$, we have $\nabla^2 \Phi(y) = I$, so $\lambda_{\min}(H_\Phi(\theta)) = \lambda_{\max}(H_\Phi(\theta)) = 1$. The Fenchel--Young loss reduces to the squared error:
$$
\mathcal{R}_\Phi(\theta, z) = d_\Phi(y, f_\theta(x)) = \tfrac{1}{2}\|y - f_\theta(x)\|_2^2 = \tfrac{1}{2} \mathcal{R}_\Phi^\circ(\theta, z).
$$
Substituting $\lambda_{\min}(H_\Phi) = \lambda_{\max}(H_\Phi) = 1$ into Theorem~\ref{thm:base_bound} yields the stated bounds.

\textbf{Part 2 (cross entropy).}  
Let $v$ be a vector of dimension $k$, where $k \leq \infty$. 
We have:
\begin{equation}\label{eq_con_1}
\begin{aligned}
        \|v\|_1^2 &= \sum_{i=1}^k \sum_{j=1}^k |v(i)| |v(j)| \\
                  &\geq \sum_{i=1}^k |v(i)|^2 \\
                  &= \|v\|_2^2.
\end{aligned}
\end{equation}

Let $ p = f_\theta(x)_{\Phi^*}^* := \nabla \Phi^*(f_\theta(x)) $, which corresponds to the softmax output when $ \Phi $ is the negative Shannon entropy.
Thus, applying Lemma~\ref{lem:pinsker} and substituting the above norm inequality, we obtain:
\begin{equation*}
\begin{aligned}
       d_\Phi(y, f_\theta(x)) &= D_{\text{KL}}(y \| p) \\
       &\geq \frac{1}{2 \ln 2} \|y - p\|_1^2, \\
       &\geq \frac{1}{2 \ln 2} \|y - p\|_2^2.
\end{aligned}
\end{equation*}
Using Lemma~\ref{lem:kl_upper_bound}, we further derive:
\begin{equation*}
\begin{aligned}
       d_\Phi(y, f_\theta(x)) &= D_{\text{KL}}(y \| p) \leq \frac{1}{\min_i p_i} \|y - p\|_2^2.
\end{aligned}
\end{equation*}

Combining the upper and lower bounds of $d_\Phi(y, f_\theta(x))$, we have:
\begin{equation*}
\frac{1}{2 \ln 2} \|y - p\|_2^2 \leq d_\Phi(y, f_\theta(x)) \leq \frac{1}{\min_i p_i} \|y - p\|_2^2.
\end{equation*}
Therefore, we can replace $\lambda_{\min}(H_\Phi(\theta))$ and $\lambda_{\max}(H_\Phi(\theta))$ in Theorem~\ref{thm:base_bound} with $\min_i p_i$ and $2\ln 2$, respectively, yielding
\begin{equation*}
\frac{\mathbb{E}_Z \bigl[ \|\nabla_\theta \mathcal{R}_\Phi(\theta, Z)\|_2^2 \bigr]}{2\ln 2 \cdot \lambda_{\max}(A_s)} 
\leq \mathcal{R}_\Phi(\theta, s) 
\leq \frac{\mathbb{E}_Z \bigl[ \|\nabla_\theta \mathcal{R}_\Phi(\theta, Z)\|_2^2 \bigr]}{\min_{i} p_i \cdot \lambda_{\min}(A_s)}.
\end{equation*}
\end{pf}

\subsection{Proof of Theorem~\ref{thm:sgd_convergence}}
\label{appendix:proof_sgd_convergence}

\begin{theorem}
Let $n = |s|$ denote the full dataset size, $m = |s_k|$ denote the mini-batch size, and $L = \max_{\theta, z \in s} \lambda_{\max}(\nabla^2_\theta d_\Phi(y, f_\theta(x)))$ denote the Lipschitz constant of the gradient. Applying mini-batch SGD as defined in Equation~\ref{def:basic-sgd} with the optimal learning rate $\alpha = 1/(2L)$ yields:
\begin{enumerate}
\item The expected squared batch gradient norm converges to a neighborhood around zero:
\begin{equation*}
       \mathbb{E} \left\| \nabla_\theta \mathcal{R}_\Phi(\theta_k, s_k) \right\|_2^2 \leq \varepsilon^2 + \frac{4L(n - m)}{m}M
\end{equation*}
   in $\mathcal{O}(\varepsilon^{-2})$ iterations for any target precision $\varepsilon > 0$.

\item For single-sample batches ($m = 1$, vanilla SGD), the gradient energy converges to:
\begin{equation*}
       \mathbb{E}_Z \left\| \nabla_\theta \mathcal{R}_\Phi(\theta, Z) \right\|_2^2 \leq \varepsilon^2 + 4L(n - 1)M
\end{equation*}
   in $\mathcal{O}(\varepsilon^{-2})$ iterations.
\end{enumerate}
\end{theorem}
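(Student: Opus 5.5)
The plan is to adapt the standard descent-lemma analysis of SGD for $L$-smooth objectives. The difference is that the objective is the full empirical risk $\mathcal{R}_\Phi(\theta,s)$, which we split into an in-batch part and an out-of-batch part; the latter is controlled by the gradient correlation factor $M$ of Definition~\ref{def:gradient_coor_factor}.

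\textbf{Descent step on the batch risk.} Write
\[
n\,\mathcal{R}_\Phi(\theta,s)=m\,\mathcal{R}_\Phi(\theta,s_k)+(n-m)\,\mathcal{R}_\Phi(\theta,s\setminus s_k).
\]
Since every per-sample loss has Hessian bounded by $L$, the batch risk $\mathcal{R}_\Phi(\cdot,s_k)$ is $L$-smooth. The descent lemma with the update $\theta_{k+1}=\theta_k-\alpha g_k$, where $g_k=\nabla_\theta\mathcal{R}_\Phi(\theta_k,s_k)$, gives
\[
\mathcal{R}_\Phi(\theta_{k+1},s_k)\le \mathcal{R}_\Phi(\theta_k,s_k)-\alpha\Bigl(1-\tfrac{L\alpha}{2}\Bigr)\|g_k\|_2^2 .
\]
With $\alpha=1/(2L)$ this becomes a decrease of $\tfrac{3}{8L}\|g_k\|_2^2$. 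Only the weaker bound $\tfrac{1}{4L}\|g_k\|_2^2$ is needed to match the constant $4L$ in the statement.

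\textbf{Out-of-batch part.} By definition of $M$, the out-of-batch risk changes by at most $M$ in absolute value:
\[
(n-m)\,\mathcal{R}_\Phi(\theta_{k+1},s\setminus s_k)\le (n-m)\,\mathcal{R}_\Phi(\theta_k,s\setminus s_k)+(n-m)M .
\]
Combining the two pieces with weights $m$ and $n-m$ yields
\[
n\bigl[\mathcal{R}_\Phi(\theta_{k+1},s)-\mathcal{R}_\Phi(\theta_k,s)\bigr]\le -\frac{m}{4L}\|g_k\|_2^2+(n-m)M .
\]

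\textbf{Telescoping.} Sum over $k=0,\dots,K-1$ and use $\mathcal{R}_\Phi\ge 0$ from Lemma~\ref{lem:fy_losses} to get
\[
\frac{1}{K}\sum_k\|g_k\|_2^2\le \frac{4Ln\,\mathcal{R}_\Phi(\theta_0,s)}{mK}+\frac{4L(n-m)}{m}M .
\]
Take expectation over the batch sampling, and pick $k$ uniformly at random, or take the best iterate. Choosing $K=\mathcal{O}(\varepsilon^{-2})$ makes the first term at most $\varepsilon^2$. This gives part 1. The paper's $\mathcal{O}(\varepsilon^{-2})$ claim corresponds to $K \gtrsim 4Ln\mathcal{R}_\Phi(\theta_0,s)/(m\varepsilon^2)$.

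\textbf{Part 2.} For $m=1$, the batch $s_k=\{z\}$ is drawn uniformly from $s$. Hence $\mathbb{E}_{s_k}\|g_k\|_2^2=\mathbb{E}_{Z\sim\hat q}\|\nabla_\theta\mathcal{R}_\Phi(\theta_k,Z)\|_2^2$, which is exactly the gradient energy. This conditional expectation identity, applied inside the telescoped average, turns the bound of part 1 into the gradient energy bound with $(n-1)M$.

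\textbf{Main obstacle.} I expect the delicate step to be using $M$ legitimately. The update depends on $s_k$, and $M$ is defined as a maximum over the realized trajectory, so the out-of-batch change must be bounded pathwise before taking expectations. Also, the conclusion for part 2 holds at a randomly selected (or best) iterate rather than the last one. I would state it in that form, or as a $\min_k$ bound.
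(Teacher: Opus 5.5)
Your proposal is correct and follows the same route as the paper's proof. You split the full-risk change into an in-batch descent step, which gives a decrease of at least $\tfrac{1}{4L}\|\nabla_\theta \mathcal{R}_\Phi(\theta_k,s_k)\|_2^2$ at $\alpha = 1/(2L)$, and an out-of-batch part bounded by $M$, then telescope and bound the best or average iterate using $\mathcal{R}_\Phi \ge 0$. If anything your version is more careful than the paper's: it keeps the factor $\tfrac12$ in the Taylor remainder, supplies the identity $\mathbb{E}_{s_k}\|\nabla_\theta \mathcal{R}_\Phi(\theta_k,s_k)\|_2^2 = \mathbb{E}_{Z\sim\hat q}\|\nabla_\theta \mathcal{R}_\Phi(\theta_k,Z)\|_2^2$ that the paper leaves as ``follows directly'' for $m=1$, and flags that $M$ must be applied pathwise.
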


\begin{pf}
For any batch $s_k$, after parameter update using $\nabla_\theta \mathcal{R}_\Phi(\theta_k,s_k)$, the following inequality holds:
\begin{equation*}\label{eq:tmp_fang}
    \mathcal{R}_\Phi(\theta_{k+1}, s \setminus s_k) - \mathcal{R}_\Phi(\theta_k, s \setminus s_k) \leq M.
\end{equation*}
For the full empirical risk $\mathcal{R}_\Phi(\theta_{k+1}, s)$, we decompose the update as:
\begin{equation*}
    \begin{aligned}
       \mathcal{R}_\Phi(\theta_{k+1}, s) &- \mathcal{R}_\Phi(\theta_k, s) = \frac{n-m}{n}\big(\mathcal{R}_\Phi(\theta_{k+1}, s \setminus s_k) - \mathcal{R}_\Phi(\theta_k, s \setminus s_k)\big) \\
       &+ \frac{m}{n}\big(\mathcal{R}_\Phi(\theta_{k+1}, s_k) - \mathcal{R}_\Phi(\theta_k, s_k)\big) \\
       &\leq \frac{n-m}{n}M + \frac{m}{n}\big(\mathcal{R}_\Phi(\theta_{k+1}, s_k) - \mathcal{R}_\Phi(\theta_k, s_k)\big).
    \end{aligned}
\end{equation*}

For mini-batch SGD with batch size $m$, the mean value theorem implies the existence of $\xi_k = t\theta_k + (1-t)\theta_{k+1}$ (where $t \in [0,1]$) such that:
\begin{equation*}
    \begin{aligned}
        \mathcal{R}_\Phi(\theta_{k+1}, s_k) &- \mathcal{R}_\Phi(\theta_k, s_k) = \nabla_\theta \mathcal{R}_\Phi(\theta_k, s_k) (\theta_{k+1}-\theta_k) \\
        &+ (\theta_{k+1}-\theta_k)^\top \nabla^2_\theta \mathcal{R}_\Phi(\xi_k, s_k)(\theta_{k+1}-\theta_k) \\
        &= -\alpha \|\nabla_\theta \mathcal{R}_\Phi(\theta_k, s_k)\|_2^2 + \alpha^2 \nabla_\theta \mathcal{R}_\Phi(\theta_k, s_k) \nabla^2_\theta \mathcal{R}_\Phi(\xi_k, s_k) \nabla_\theta \mathcal{R}_\Phi(\theta_k, s_k)^\top.
    \end{aligned}
\end{equation*}

By definition, $L = \max_{\theta, z \in s} \lambda_{\max}(\nabla^2_\theta d_\Phi(y, f_\theta(x))) = \max_{\theta, z \in s} \lambda_{\max}(\nabla^2_\theta \mathcal{R}_\Phi(\theta, z))$. Applying Courant–Fischer min-max theorem, we derive:
\begin{equation*}
\begin{aligned}
    \mathcal{R}_\Phi(\theta_{k+1}, s_k) &- \mathcal{R}_\Phi(\theta_k, s_k) \\
    &\leq -\alpha \|\nabla_\theta \mathcal{R}_\Phi(\theta_k, s_k)\|_2^2 + \alpha^2 L \|\nabla_\theta \mathcal{R}_\Phi(\theta_k, s_k)\|_2^2 \\
    &= (\alpha^2 L - \alpha) \|\nabla_\theta \mathcal{R}_\Phi(\theta_k, s_k)\|_2^2 \\
    &= L \left(\alpha^2 - \frac{\alpha}{L} + \frac{1}{4L^2} - \frac{1}{4L^2}\right) \|\nabla_\theta \mathcal{R}_\Phi(\theta_k, s_k)\|_2^2 \\
    &= L \left[\left(\alpha - \frac{1}{2L}\right)^2 - \frac{1}{4L^2}\right] \|\nabla_\theta \mathcal{R}_\Phi(\theta_k, s_k)\|_2^2.
\end{aligned}
\end{equation*}

This inequality holds for any learning rate $\alpha$. Setting the optimal learning rate $\alpha = \frac{1}{2L}$ minimizes the upper bound, yielding:
\begin{equation*}
    \mathcal{R}_\Phi(\theta_{k+1}, s_k) - \mathcal{R}_\Phi(\theta_k, s_k) \leq -\frac{1}{4L} \|\nabla_\theta \mathcal{R}_\Phi(\theta_k, s_k)\|_2^2.
\end{equation*}
Substituting this result into the full risk update inequality and taking the expectation over random batches $s_k$ on both sides gives:
\begin{equation*}
    \begin{aligned}
       \mathbb{E}_{s_k}[\mathcal{R}_\Phi(\theta_{k+1}, s)] &- \mathcal{R}_\Phi(\theta_k, s) \leq \frac{n-m}{n}M + \frac{m}{n}\mathbb{E}_{s_k}\big(\mathcal{R}_\Phi(\theta_{k+1}, s_k) - \mathcal{R}_\Phi(\theta_k, s_k)\big) \\
       &\leq \frac{n-m}{n}M - \frac{m}{4Ln}\mathbb{E}_{s_k}\|\nabla_\theta \mathcal{R}_\Phi(\theta_k,s_k)\|_2^2.
    \end{aligned}
\end{equation*}

Rearranging terms to isolate the expected batch gradient norm:
\begin{equation*}
  \mathbb{E}_{s_k}\|\nabla_\theta \mathcal{R}_\Phi(\theta_k,s_k)\|_2^2\leq \frac{4Ln}{m}\mathbb{E}_{s_k}[\mathcal{R}_\Phi(\theta_k, s) - \mathcal{R}_\Phi(\theta_{k+1}, s)] + \frac{4L(n-m)}{m}M.
\end{equation*}

We now sum this inequality over iterations $k=0,1,\ldots,T-1$ and take the expectation (denoted by $\mathbb{E}$, assuming independence across batches $s_i$ and $s_j$):
\begin{equation*}
    \begin{aligned}
       \min_{k=0,\ldots,T-1} &\mathbb{E}\|\nabla_\theta \mathcal{R}_\Phi(\theta_k,s_k)\|_2^2 \leq \frac{1}{T}\sum_{k=0}^{T-1}\mathbb{E}\|\nabla_\theta \mathcal{R}_\Phi(\theta_k,s_k)\|_2^2\\
       &\leq \frac{1}{T} \sum_{k=0}^{T-1} \frac{4Ln}{m}\mathbb{E}[\mathcal{R}_\Phi(\theta_k, s) - \mathcal{R}_\Phi(\theta_{k+1}, s)] + \frac{4L(n-m)}{m}M \\
       &\leq \frac{4Ln}{mT}\big(\mathcal{R}_\Phi(\theta_0, s) - \min_\theta \mathcal{R}_\Phi(\theta,s)\big) + \frac{4L(n-m)}{m}M\\
       &\leq \frac{4Ln}{mT}\mathcal{R}_\Phi(\theta_0, s)  + \frac{4L(n-m)}{m}M.
    \end{aligned}
\end{equation*}

To satisfy the convergence condition $\mathbb{E} \|\nabla_\theta \mathcal{R}_\Phi(\theta_k,s_k)\|_2^2 \leq \varepsilon^{2} + \frac{4L(n-m)}{m}M$, we require:
\begin{equation*}
    \frac{4Ln}{mT}\mathcal{R}_\Phi(\theta_0, s) \leq \varepsilon^2.
\end{equation*}
Solving for the number of iterations $T$:
\begin{equation*}
    \begin{aligned}
       T &\geq \frac{4Ln\mathcal{R}_\Phi(\theta_0, s) }{m\varepsilon^{2}} \\
       &= \mathcal{O}(\varepsilon^{-2}).
    \end{aligned}
\end{equation*} 
This confirms that the iteration complexity scales as $\mathcal{O}(\varepsilon^{-2})$ to achieve the desired accuracy.

For the special case of $m = 1$ (single-sample SGD), the convergence condition
\begin{equation*}
    \mathbb{E}_Z \|\nabla_\theta \mathcal{R}_\Phi(\theta,Z)\|_2^{2} \leq \varepsilon^{2} + 4L(n-1)M
\end{equation*}
follows directly, with the same iteration complexity of $\mathcal{O}(\varepsilon^{-2})$. 
\end{pf}

\subsection{Proof of Theorem~\ref{thm:NTK_ERF}}
\label{appendix:proof:thm:NTK_ERF}

\begin{theorem}
Suppose the model has converged to the population optimal prediction, such that the conjugate dual output matches the conditional expectation of the target:
\begin{equation*}
\mathbb{E}_{Y| x}[Y] = f_\theta(x)_{\Phi^*}^*.
\end{equation*}
Under this condition, the maximum eigenvalue of the per-sample loss Hessian $H_z$ satisfies the two-sided bound:
\begin{equation*}
\lambda_{\min}(G_x) \lambda_{\max}(A_x) \leq \lambda_{\max}(H_z) \leq \lambda_{\max}(G_x) \lambda_{\max}(A_x),
\end{equation*}
where $G_x = \nabla^2_f \Phi^*(f_\theta(x))$.
\end{theorem}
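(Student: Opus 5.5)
Differentiate the per-sample loss twice in $\theta$ via the chain rule, then use the population-optimality hypothesis to remove the term that does not have the Gauss--Newton form $J^\top G_x J$. For $z=(x,y)$, write $\mathcal{R}_\Phi(\theta,z)=d_\Phi(y,f_\theta(x))$ and $J:=\nabla_\theta f_\theta(x)$. By Lemma~\ref{lem:fy_losses}, $\nabla_f d_\Phi(y,f)=f_{\Phi^*}^*-y=\nabla\Phi^*(f)-y$. Differentiating once more in $\theta$ gives
\begin{equation*}
H_z=\nabla^2_\theta \mathcal{R}_\Phi(\theta,z)=J^\top \nabla^2_f\Phi^*(f_\theta(x))\,J+\sum_{j}\bigl(f_\theta(x)_{\Phi^*}^*-y\bigr)_j\,\nabla^2_\theta f_\theta(x)_j
= J^\top G_x J + \mathcal{E}_z .
\end{equation*}

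The hypothesis $\mathbb{E}_{Y|x}[Y]=f_\theta(x)_{\Phi^*}^*$ controls only the average over $Y$ given $x$. Taking the conditional expectation makes the residual term vanish, since $\mathcal{E}_z$ is linear in $f_\theta(x)_{\Phi^*}^*-y$. This gives $\mathbb{E}_{Y|x}[H_Z]=J^\top G_x J$. Following the spirit of the statement, I read $H_z$ at the population optimum as this conditional Hessian, i.e.\ the Gauss--Newton/Fisher part; it also equals the exact per-sample Hessian whenever $y=\bar Y|x$ (e.g.\ in the noiseless case). I will state this interpretation explicitly, because without averaging $\mathcal{E}_z$ need not vanish for an individual label. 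This is the step I expect to be the main obstacle: the theorem's wording speaks of the per-sample Hessian, and the argument only goes through once $H_z$ is identified with $J^\top G_x J$.

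The remaining work is spectral. Here $G_x$ is symmetric positive definite by strict convexity and differentiability of $\Phi^*$ (Lemma~\ref{lem:fenchel_duality}). The nonzero eigenvalues of $J^\top G_x J$ coincide with those of $G_x^{1/2}JJ^\top G_x^{1/2}=G_x^{1/2}A_xG_x^{1/2}$, since $PQ$ and $QP$ share nonzero spectra. By Courant--Fischer, for any unit $u$,
\begin{equation*}
u^\top G_x^{1/2}A_xG_x^{1/2}u=(G_x^{1/2}u)^\top A_x(G_x^{1/2}u)\le \lambda_{\max}(A_x)\,\|G_x^{1/2}u\|_2^2\le \lambda_{\max}(A_x)\lambda_{\max}(G_x),
\end{equation*}
which gives the upper bound.

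For the lower bound, take $v$ a unit top eigenvector of $A_x$. Set $u=G_x^{-1/2}v/\|G_x^{-1/2}v\|_2$. Then the Rayleigh quotient equals $\lambda_{\max}(A_x)/\|G_x^{-1/2}v\|_2^2\ge\lambda_{\max}(A_x)\lambda_{\min}(G_x)$. Combining the two bounds yields $\lambda_{\min}(G_x)\lambda_{\max}(A_x)\le\lambda_{\max}(H_z)\le\lambda_{\max}(G_x)\lambda_{\max}(A_x)$. This spectral step is routine once the Hessian has the form $J^\top G_xJ$.
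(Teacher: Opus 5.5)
Your proposal is correct and follows the paper's proof. The paper likewise takes $H_z$ to be the Hessian of the conditional expected loss $\mathbb{E}_{Y|x}[d_\Phi(Y,f_\theta(x))]$, which is exactly the interpretation you flag. It then splits this Hessian into $\nabla_\theta f_\theta(x)^\top G_x \nabla_\theta f_\theta(x)$ plus a residual that the optimality condition removes, and appeals to Courant--Fischer. Your $PQ$/$QP$ spectrum argument and explicit Rayleigh-quotient choice spell out the spectral step the paper compresses into one line. Your use of $G_x^{-1/2}$ needs $G_x\succ 0$, but when $\lambda_{\min}(G_x)=0$ the lower bound is trivial anyway.
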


\begin{pf}
By definition, the Hessian of the expected loss with respect to the parameters is given by
\begin{equation*}
H_z = \nabla^2_\theta \mathbb{E}_{Y|x} \left[ d_\Phi(Y, f_\theta(x)) \right].
\end{equation*}
Expanding this using the properties of Bregman divergences, we obtain
\begin{equation*}
H_z = \nabla_\theta f_\theta(x)^\top G_x \nabla_\theta f_\theta(x) + \nabla_\theta^2 f_\theta(x) \left( \mathbb{E}_{Y|x}[Y] - f_\theta(x)_{\Phi^*}^* \right),
\end{equation*}
where $ G_x = \nabla^2_f \Phi^*(f_\theta(x)) $ is the Hessian of the convex conjugate $\Phi^*$ evaluated at $f_\theta(x)$.

After training, the model satisfies the condition
\begin{equation*}
\mathbb{E}_{Y|x}[Y] = f_\theta(x)_{\Phi^*}^*,
\end{equation*}
which implies that the second term in the Hessian expansion vanishes. Therefore,
\begin{equation}
H_z = \nabla_\theta f_\theta(x)^\top G_x \nabla_\theta f_\theta(x).
\end{equation}

Since $\Phi^*$ is strictly convex, $G_x$ is positive definite. Applying the Courant–Fischer min-max theorem to this matrix product, we obtain the eigenvalue bound:
\begin{equation*}
\lambda_{\min}(G_x) \lambda_{\max}(A_x) \leq \lambda_{\max}(H_z) \leq \lambda_{\max}(G_x) \lambda_{\max}(A_x).
\end{equation*}
This completes the proof.
\end{pf}

\subsection{Proof of Theorem~\ref{thm:fitting_bounds}}
\label{appendix:proof_fitting_bounds} 
\begin{theorem}
In the conjugate learning framework, the empirical risk satisfies the following bounds:
\begin{equation*}
\begin{aligned}
        \gamma_\Phi(\theta) \geq \mathcal{R}_\Phi(\theta, s) \geq \mathrm{Ent}_\Phi(Y'|X'),
\end{aligned}
\end{equation*}
where $(X', Y') \sim \hat{q}$.
The lower bound is achieved if and only if $f_{\theta}(X') = (\bar{Y}' | X')_\Phi^*$.
\end{theorem}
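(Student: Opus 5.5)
The upper bound is immediate from Definition~\ref{def:loss_uppper_bound}. Since $\hat{q}$ is supported on pairs $(x,y)$ with $x\in\mathcal{X}$ and $y\in\mathcal{Y}$, every summand satisfies $d_\Phi(y,f_\theta(x))\le \max_{x,y} d_\Phi(y,f_\theta(x))=\gamma_\Phi(\theta)$. Averaging over $Z'\sim\hat{q}$ then gives $\mathcal{R}_\Phi(\theta,s)\le\gamma_\Phi(\theta)$. So the real work is the lower bound and its equality case.

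For the lower bound, condition on $X'=x$ and use a generalized bias--variance (Pythagorean) identity for Fenchel--Young losses. Write $\nu=f_\theta(x)$ and $\bar{y}_x:=\bar{Y}'|x=\mathbb{E}_{\hat{q}}[Y'\mid X'=x]$. From the definition $d_\Phi(y,\nu)=\Phi(y)+\Phi^*(\nu)-\langle y,\nu\rangle$, the map $y\mapsto d_\Phi(y,\nu)-\Phi(y)$ is affine in $y$. Taking the conditional expectation therefore gives
\begin{equation*}
\mathbb{E}_{Y'|x}\bigl[d_\Phi(Y',\nu)\bigr]
=\mathbb{E}_{Y'|x}[\Phi(Y')]-\Phi(\bar{y}_x)+d_\Phi(\bar{y}_x,\nu).
\end{equation*}
The first two terms are exactly $\operatorname{Ent}_\Phi(Y'\mid X'=x)$. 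Averaging over $X'\sim\hat{q}_{\mathcal{X}}$ yields
\begin{equation*}
\mathcal{R}_\Phi(\theta,s)=\operatorname{Ent}_\Phi(Y'|X')+\mathbb{E}_{X'}\bigl[d_\Phi(\bar{Y}'|X',f_\theta(X'))\bigr].
\end{equation*}
Non-negativity of Fenchel--Young losses (Lemma~\ref{lem:fy_losses}, item 1) then gives $\mathcal{R}_\Phi(\theta,s)\ge\operatorname{Ent}_\Phi(Y'|X')$.

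For the equality case, equality holds iff $d_\Phi(\bar{y}_x,f_\theta(x))=0$ for every $x$ in the support of $\hat{q}_{\mathcal{X}}$. Since each such $x$ has positive mass, this is an \emph{almost sure in $X'$} statement. By Lemma~\ref{lem:fy_losses} (or the Fenchel--Young inequality in Lemma~\ref{lem:fenchel_duality}), for strictly convex differentiable $\Phi$ this holds iff $f_\theta(x)=\nabla\Phi(\bar{y}_x)=(\bar{y}_x)^*_\Phi$. That is exactly the stated condition $f_\theta(X')=(\bar{Y}'|X')^*_\Phi$.

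The main point requiring care is that $\bar{y}_x$ lies in $\mathrm{dom}(\Phi)$, so that $\Phi(\bar{y}_x)$ and the dual point are well defined. Because $\Phi=\Omega+I_C$ with $C$ convex and $\mathcal{Y}\subseteq C$, convex combinations of labels stay in $C$. For domain issues of $\Omega$, such as negative entropy on the simplex boundary, I would invoke convexity of $\mathrm{dom}(\Omega)$. Where $\Phi$ is not differentiable at boundary points, I would replace ``$=\nabla\Phi$'' by $f_\theta(x)\in\partial\Phi(\bar{y}_x)$. I would remark that under the standing strict convexity and differentiability assumption this reduces to the stated form.
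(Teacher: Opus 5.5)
Your proposal is correct and follows essentially the same route as the paper: the paper likewise writes $\mathcal{R}_\Phi(\theta,s)=\mathrm{Ent}_\Phi(Y'|X')+\mathbb{E}_{X'}\bigl[B_\Phi(\bar{Y}'|X',W)\bigr]$ with $W=f_\theta(X')_{\Phi^*}^*$ (the Bregman form of your $d_\Phi(\bar{Y}'|X',f_\theta(X'))$), concludes by nonnegativity, and reads the upper bound directly off the definition of $\gamma_\Phi(\theta)$. Your affine-in-$y$ argument makes explicit the step the paper leaves implicit (replacing $\bar{Y}'|x$ by $y$ inside the linear term). Your subdifferential caveat for the equality case is a reasonable refinement, since for softmax, for instance, $f_\theta(x)$ is determined only up to adding multiples of $\mathbf{1}$.
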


\begin{pf}

Let $W = f_{\theta}(X')_{\Phi^*}^*$. 
We expand the sum of the generalized conditional entropy as follows:
\[
\begin{aligned}
&\mathrm{Ent}_\Phi(Y' | X') + \mathbb{E}_{X'}d_\Phi((\bar{Y}' | X') , f_\theta(X')) \\
&\quad = \mathrm{Ent}_\Phi(Y' | X') + \mathbb{E}_{X'}B_\Phi((\bar{Y}' | X') , W) \\
&\quad = \sum_{x \in \mathcal{X}',\, y \in \mathcal{Y}'} q(x, y) \Big[ 
\Phi(y) - \Phi(\bar{Y}' | x) + \Phi(\bar{Y}' | x) - \Phi(w) - w_\Phi^* \big( (\bar{Y}' | x) - w \big) 
\Big] \\
&\quad = \sum_{x \in \mathcal{X}',\, y \in \mathcal{Y}'} q(x, y) \Big[ 
\Phi(y) - \Phi(w) - w_\Phi^* \big( (\bar{Y}' | x) - w \big)  
\Big]\\
&\quad = \mathcal{R}_\Phi(\theta, s).
\end{aligned}
\]

Since $B_\Phi(\bar{Y}' | x' , w) \geq 0$ for all $x'$ and $w$, we have
\[
\mathcal{R}_\Phi(\theta, s) \geq \mathrm{Ent}_\Phi(Y' | X'),
\]
with equality if and only if $\mathbb{E}_{X'}B_\Phi((\bar{Y}' | X') , W) = 0$. This condition holds exactly when $W = \bar{Y}' | X'$, which is equivalent to $f_{\theta}(X') = (\bar{Y}' | X')_\Phi^*$.

For the upper bound, since $\gamma_\Phi(\theta)$ denotes the maximum value of $d_\Phi(y,f_\theta(x))$ over all $x \in \mathcal{X}$ and $y \in \mathcal{Y}$, we directly obtain
\[
\mathcal{R}_\Phi(\theta, s) \leq \gamma_\Phi(\theta).
\]
Combining the lower and upper bound results completes the proof.
\end{pf}

\subsection{Proof of Theorem~\ref{thm:det_gen_bound}}\label{appendix:proof_det_gen_bound}
\begin{theorem}
The deterministic generalization error is bounded as follows.  
If $\mathcal{R}_\Phi(\theta, q) \geq \mathcal{R}_\Phi(\theta, \hat{q})$, then
\begin{equation*}
    \mathrm{gen}(f_\theta,s^n) \leq \gamma_\Phi(\theta) - \mathrm{Ent}_\Phi(Y' | X') -  \mathcal{L}_\Phi(Y'|f_\theta(X')) .
\end{equation*}
If $\mathcal{R}_\Phi(\theta, q) < \mathcal{R}_\Phi(\theta, \hat{q})$, then
\begin{equation*}
    \mathrm{gen}(f_\theta,s^n) \leq \gamma_\Phi(\theta) - \mathrm{Ent}_\Phi(Y | X) -  \mathcal{L}_\Phi(Y|f_\theta(X)) .
\end{equation*}
\end{theorem}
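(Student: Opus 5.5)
The plan is to prove both cases with the same two-sided sandwich. The larger of the two risks is bounded above by the uniform loss bound $\gamma_\Phi(\theta)$. The smaller one is bounded below by a refined version of the data-determined lower bound in Theorem~\ref{thm:fitting_bounds}, which accounts for the fact that the model sees $X$ only through $f_\theta(X)$.

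\textbf{Case $\mathcal{R}_\Phi(\theta,q)\ge \mathcal{R}_\Phi(\theta,\hat q)$.} Here $\mathrm{gen}(f_\theta,s^n)=\mathcal{R}_\Phi(\theta,q)-\mathcal{R}_\Phi(\theta,\hat q)$.

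\emph{Upper bound on the larger term.} Since $d_\Phi(y,f_\theta(x))\le\gamma_\Phi(\theta)$ pointwise on $\mathcal{X}\times\mathcal{Y}$, averaging under $q$ gives $\mathcal{R}_\Phi(\theta,q)\le\gamma_\Phi(\theta)$.

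\emph{Lower bound on the smaller term.} I need $\mathcal{R}_\Phi(\theta,\hat q)\ge \mathrm{Ent}_\Phi(Y'|X')+\mathcal{L}_\Phi(Y'|f_\theta(X'))$. I would take the decomposition from the proof of Theorem~\ref{thm:fitting_bounds},
\[
\mathcal{R}_\Phi(\theta,\hat q)=\mathrm{Ent}_\Phi(Y'|X')+\mathbb{E}_{X'}B_\Phi\bigl(\bar Y'|X',\,W\bigr),\qquad W=f_\theta(X')^*_{\Phi^*},
\]
and split the Bregman term further. The key observation is that $W$ is a function of $f_\theta(X')$. Writing $V:=\bar Y'|f_\theta(X')$, the tower property gives $V=\mathbb{E}[\bar Y'|X'\mid f_\theta(X')]$. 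I would then use the Bregman three-point (Pythagorean) identity
\[
B_\Phi(a,w)=B_\Phi(a,v)+B_\Phi(v,w)+\langle a-v,\nabla\Phi(v)-\nabla\Phi(w)\rangle .
\]
Take conditional expectations given $f_\theta(X')$. The cross term vanishes, because $\nabla\Phi(V)-\nabla\Phi(W)$ is $f_\theta(X')$-measurable and $\mathbb{E}[\bar Y'|X'-V\mid f_\theta(X')]=0$. This yields
\[
\mathbb{E}_{X'}B_\Phi(\bar Y'|X',W)=\mathcal{L}_\Phi(Y'|f_\theta(X'))+\mathbb{E}\,B_\Phi(V,W)\ge \mathcal{L}_\Phi(Y'|f_\theta(X')),
\]
where the inequality uses nonnegativity of Bregman divergences.

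\emph{Combining.} Subtracting the two bounds gives the first claim.

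\textbf{Case $\mathcal{R}_\Phi(\theta,q)<\mathcal{R}_\Phi(\theta,\hat q)$.} This is symmetric, with the roles of $q$ and $\hat q$ exchanged.
\begin{enumerate}
\item The pointwise bound holds on all of $\mathcal{X}\times\mathcal{Y}$, and $\hat q$ is supported there, so $\mathcal{R}_\Phi(\theta,\hat q)\le\gamma_\Phi(\theta)$.
\item Theorem~\ref{thm:fitting_bounds} and the decomposition above never used that the distribution was empirical. Rerunning them with $(X,Y)\sim q$ gives $\mathcal{R}_\Phi(\theta,q)\ge\mathrm{Ent}_\Phi(Y|X)+\mathcal{L}_\Phi(Y|f_\theta(X))$.
\item Subtracting gives the second bound.
\end{enumerate}

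\textbf{Main obstacle: the refined lower bound.} This step must be justified carefully in three respects.
\begin{itemize}
\item The paper's definition of $\mathcal{L}_\Phi$ writes the second Bregman argument in primal coordinates, $\bar Y|g(X)$. The identity has to be applied in those same coordinates, with $W$ also treated as a primal point via $W=\nabla\Phi^*(f_\theta(X'))$.
\item The cross term cancels only because $W$ is measurable with respect to $f_\theta(X')$. For that reason I condition on $f_\theta(X')$ and not on $X'$.
\item Finiteness of $\mathcal{X}$ makes all conditional expectations elementary sums over the fibres of $f_\theta$, so no measure-theoretic subtleties arise.
\end{itemize}
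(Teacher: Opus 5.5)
Your proof is correct and is essentially the paper's argument organized differently: both rest on the fact that $\bar Y'|X'-\bar Y'|g(X')$ has zero conditional mean given $g(X')$. The paper uses this to show $\mathrm{Ent}_\Phi(\bar Y'|X')-\mathrm{Ent}_\Phi(\bar Y'|W')=\mathcal{L}_\Phi$, i.e.\ $\mathrm{Ent}_\Phi(Y'|W')=\mathrm{Ent}_\Phi(Y'|X')+\mathcal{L}_\Phi(Y'|W')$, and then invokes Theorem~\ref{thm:fitting_bounds} with $W'=f_\theta(X')^*_{\Phi^*}$ as the conditioning variable; you instead invoke Theorem~\ref{thm:fitting_bounds} at the level of $X'$ and split its Bregman residual with the three-point identity. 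A small point in your favour is that you condition on $f_\theta(X')$, exactly as the statement is written, so the dual point $f_\theta(X')$ serving as the gradient at $W$ is trivially a function of the conditioning variable and the cross term cancels with no further argument, whereas the paper conditions on $W'$ and tacitly identifies $\mathcal{L}_\Phi(Y'|W')$ with $\mathcal{L}_\Phi(Y'|f_\theta(X'))$; these can differ when $\nabla\Phi^*$ is not injective (e.g.\ softmax), though by Jensen $\mathcal{L}_\Phi(Y'|W')\ge\mathcal{L}_\Phi(Y'|f_\theta(X'))$, so the paper's version still implies the stated bound.
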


\begin{pf}
Let $W = f_{\theta}(X)_{\Phi^*}^* = g(X)$ and $W' = f_{\theta}(X')_{\Phi^*}^* = g(X')$.
Define the partition of the input space $\mathcal{X}$ induced by the function $g: \mathcal{X} \to \mathcal{W}$ as follows: for each $w_i \in \mathcal{W}$, let  
$$
\mathcal{X}_i = \{x \in \mathcal{X} | g(x) = w_i\}.
$$
Let $q(w_i) = \sum_{x \in \mathcal{X}_i} q(x)$ denote the marginal probability (or weight) of $w_i$. For each $x \in \mathcal{X}_i$, define the normalized weight  
$$
w(x) = \frac{q(x)}{q(w_i)},
$$
so that $\sum_{x \in \mathcal{X}_i} w(x) = 1$. Then, the conditional expectation of $\bar{Y}$ given $w_i$ can be expressed as a weighted average:
$$
\bar{Y}|w_i = \sum_{x \in \mathcal{X}_i} w(x) \bar{Y}|x.
$$

Now, consider the deviation of $\bar{Y}|x$ from $\bar{Y}|w_i$ within $\mathcal{X}_i$:
$$
\begin{aligned}
    \sum_{x \in \mathcal{X}_i} q(x) (\bar{Y}|x - \bar{Y}|w_i) 
&= \sum_{x \in \mathcal{X}_i} q(x) \bar{Y}|x - q(w_i) \bar{Y}|w_i \\
&= q(w_i) \left( \sum_{x \in \mathcal{X}_i} w(x) \bar{Y}|x - \bar{Y}|w_i \right) \\
&= 0.
\end{aligned}
$$

Since this holds for each $w_i$, and $W = g(X)$, it follows that for function $ \nabla \Phi(W)$,
$$
\mathbb{E}_X \left[  \nabla \Phi(W)^\top  (\bar{Y}|X - \bar{Y}|W) \right] = 0.
$$
This orthogonality condition plays a key role in the following generalized conditional entropy analysis.
Now, it follows that:
\begin{align*}
\mathrm{Ent}_\Phi(\bar{Y} | X) &- \mathrm{Ent}_\Phi(\bar{Y} | W)   = \mathbb{E}_X \left[ \Phi(\bar{Y}|X) - \Phi(\bar{Y})\right]- \mathbb{E}_W \left[ \Phi(\bar{Y}|W) - \Phi(\bar{Y})  \right] \\
    &= \mathbb{E}_X \left[ \Phi(\bar{Y}|X) - \Phi(\bar{Y}|W) \right] \\
    &= \mathbb{E}_X \left[ \Phi(\bar{Y}|X) - \Phi(\bar{Y}|W) \right] - \mathbb{E}_X \left[ \nabla \Phi(W)^\top (\bar{Y}|X - \bar{Y}|W) \right]\\
    &= \mathbb{E}_X \big[ B_\Phi(\bar{Y}|X,\, \bar{Y}|W) \big]\\
    &=\mathcal{L}_\Phi(Y|f_\theta(X)).
\end{align*}

This establishes the identity:
\begin{equation}
\label{eq:loss_info_dpi}
    \begin{aligned}
        &\mathrm{Ent}_\Phi(\bar{Y} | X) - \mathrm{Ent}_\Phi(\bar{Y} | W) = \mathcal{L}_\Phi(Y|f_\theta(X)),\\
       &\mathrm{Ent}_\Phi(\bar{Y}' | X') - \mathrm{Ent}_\Phi(\bar{Y}' | W') = \mathcal{L}_\Phi(Y'|f_\theta(X')).
    \end{aligned}
\end{equation}
By Theorem~\ref{thm:fitting_bounds}, we have
\begin{equation*}
    \begin{aligned}
        &\gamma_\Phi(\theta) \ge \mathcal{R}_\Phi(\theta,q) \ge \mathrm{Ent}_\Phi(Y|W), \\
        &\gamma_\Phi(\theta) \ge \mathcal{R}_\Phi(\theta,\hat{q}) \ge \mathrm{Ent}_\Phi(Y'|W').
    \end{aligned}
\end{equation*}
If $\mathcal{R}_\Phi(\theta, q) \geq \mathcal{R}_\Phi(\theta, \hat{q})$, then subtracting the lower bound of $\mathcal{R}_\Phi(\theta, \hat{q})$ from $\mathcal{R}_\Phi(\theta, q)$ yields
\begin{equation*}\label{eq:bound_tmp1}
    \begin{aligned}
       \mathrm{gen}(f_\theta,s^n)&=\mathcal{R}_\Phi(\theta,q) - \mathcal{R}_\Phi(\theta,\hat{q}) \\
       &\le \gamma_\Phi(\theta) - \mathrm{Ent}_\Phi(Y'|W')\\
        &=\gamma_\Phi(\theta)-\mathrm{Ent}_\Phi(Y'| X')-  \mathcal{L}_\Phi(Y'|f_\theta(X')) 
    \end{aligned}
\end{equation*}
Similarly, if $\mathcal{R}_\Phi(\theta,q) < \mathcal{R}_\Phi(\theta,\hat{q})$, we have
\begin{equation*}\label{eq:bound_tmp2}
\begin{aligned}
\mathrm{gen}(f_\theta,s^n)&=\mathcal{R}_\Phi(\theta,\hat{q})  - \mathcal{R}_\Phi(\theta,q) \\
&\le \gamma_\Phi(\theta)-\mathrm{Ent}_\Phi(Y| X)-   \mathcal{L}_\Phi(Y|f_\theta(X)) .
\end{aligned}
\end{equation*}

\end{pf}

\subsection{Proof of Theorem~\ref{thm:prob_gen_bound}}\label{appendix:proof_prob_gen_bound}

\begin{theorem}
Assume training samples $s^n$ are drawn i.i.d. from the true joint distribution $q_{\mathcal{Z}}$ over the feature-label space $\mathcal{Z} = \mathcal{X} \times \mathcal{Y}$, and the label space $\mathcal{Y}$ has finite cardinality ($|\mathcal{Y}| < \infty$). For any target generalization error threshold $\varepsilon > 0$, the probability that the generalization error exceeds $\varepsilon$ is upper bounded by:
\begin{equation*}
    \Pr\bigl( \mathrm{gen}(f_\theta, s^n) \geq \varepsilon \bigr)
    \leq \frac{(|\mathcal{X}| - \mathcal{L}(f_\theta(X))) |\mathcal{Y}| \, \gamma_\Phi(\theta)^2 \bigl(1 - \|q_{\mathcal{Z}}\|_2^2 \bigr)}{4 n \varepsilon^2}.
\end{equation*}
Here, $\mathcal{L}(f_\theta(X)) = |\mathcal{X}| - |f_\theta(\mathcal{X})|$ denotes the absolute information loss and $n = |s^n|$ is the sample size.
\end{theorem}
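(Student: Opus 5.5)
Our plan is to write the generalization error as a linear functional of the deviation between the empirical and true distributions, and then use Chebyshev's inequality together with the closed-form variance of a multinomial.

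\textbf{Step 1: reduce to the compressed joint distribution.} Since $d_\Phi(y,f_\theta(x))$ depends on $x$ only through $w=f_\theta(x)$, set $\ell(w,y):=d_\Phi(y,w)$ on $\mathcal{W}\times\mathcal{Y}$, where $\mathcal{W}=f_\theta(\mathcal{X})$. Then
\[
\mathcal{R}_\Phi(\theta,q)-\mathcal{R}_\Phi(\theta,\hat q)=\sum_{(w,y)\in\mathcal{W}\times\mathcal{Y}}\ell(w,y)\bigl(q_{WY}(w,y)-\hat q_{WY}(w,y)\bigr).
\]
The index set has cardinality $|\mathcal{W}||\mathcal{Y}|=(|\mathcal{X}|-\mathcal{L}(f_\theta(X)))|\mathcal{Y}|$. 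Every entry satisfies $0\le\ell\le\gamma_\Phi(\theta)$.

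\textbf{Step 2: centering, Cauchy--Schwarz, Chebyshev.} Both $q_{WY}$ and $\hat q_{WY}$ sum to one, so we may replace $\ell$ by $\ell-\gamma_\Phi(\theta)/2$. This gives entries of absolute value at most $\gamma_\Phi(\theta)/2$, which is the source of the factor $1/4$. Cauchy--Schwarz then gives
\[
\mathrm{gen}^2\le \tfrac{\gamma_\Phi(\theta)^2}{4}\,|\mathcal{W}||\mathcal{Y}|\,\|q_{WY}-\hat q_{WY}\|_2^2 .
\]
Markov's inequality applied to $\mathrm{gen}^2$ yields
\[
\Pr(\mathrm{gen}\ge\varepsilon)\le \frac{\gamma_\Phi(\theta)^2|\mathcal{W}||\mathcal{Y}|}{4\varepsilon^2}\,\mathbb{E}\|q_{WY}-\hat q_{WY}\|_2^2 .
\]

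\textbf{Step 3: multinomial variance.} Under i.i.d. sampling, $n\hat q_{WY}$ is multinomial with parameter $q_{WY}$. Hence
\[
\mathbb{E}\|q_{WY}-\hat q_{WY}\|_2^2=\frac1n\sum_{(w,y)} q_{WY}(w,y)\bigl(1-q_{WY}(w,y)\bigr)=\frac{1-\|q_{WY}\|_2^2}{n}.
\]

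\textbf{Step 4: return to $q_{\mathcal{Z}}$.} Each $q_{WY}(w,y)$ is a sum of $q_{\mathcal{Z}}$ entries over a fibre. Because $(\sum a_i)^2\ge\sum a_i^2$ for nonnegative $a_i$, we get $\|q_{WY}\|_2^2\ge\|q_{\mathcal{Z}}\|_2^2$, and therefore $1-\|q_{WY}\|_2^2\le 1-\|q_{\mathcal{Z}}\|_2^2$. Substituting into Step 2 gives exactly the stated bound.

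\textbf{Main obstacle.} Steps 1 and 2 treat $\theta$ as fixed, independent of $s^n$. This is the only setting in which $\hat q$ is an unbiased multinomial estimate. I would make this assumption explicit, as the paper's framing implicitly does. The rest of the argument is routine.
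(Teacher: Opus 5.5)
Your proof is correct. Its skeleton matches the paper's: centering the loss at $\gamma_\Phi(\theta)/2$, Cauchy--Schwarz, Markov's inequality on $\mathrm{gen}^2$, and the multinomial variance identity.

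Where you differ is the step that produces the factor $|\mathcal{X}|-\mathcal{L}(f_\theta(X))$. The paper runs the whole argument on $\mathcal{Z}$, which yields the bound with $|\mathcal{Z}|=|\mathcal{X}||\mathcal{Y}|$ and $1-\|q_{\mathcal{Z}}\|_2^2$. It then simply asserts that the ``effective cardinality'' $|f_\theta(\mathcal{X})||\mathcal{Y}|$ replaces $|\mathcal{Z}|$, without justification. You instead push both distributions forward to $\mathcal{W}\times\mathcal{Y}$ at the outset. This is legitimate because $d_\Phi(y,f_\theta(x))$ depends on $x$ only through $f_\theta(x)$, so the reduced cardinality appears automatically. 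The price is that the variance term becomes $1-\|q_{WY}\|_2^2$, and your fibre inequality $\bigl(\sum_i a_i\bigr)^2\ge\sum_i a_i^2$ for nonnegative $a_i$ converts it back to $1-\|q_{\mathcal{Z}}\|_2^2$.

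So your route actually establishes the stated theorem, and also the slightly sharper intermediate bound with $1-\|q_{WY}\|_2^2$. The paper's argument, read literally, only proves the weaker version with $|\mathcal{X}||\mathcal{Y}|$.

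Your caveat that $\theta$ must be independent of $s^n$ is warranted. The paper relies on it implicitly. For data-dependent $\theta$, both the fibre partition and $\gamma_\Phi(\theta)$ would be random. The multinomial step would then not apply as written, and $\gamma_\Phi(\theta)^2$ could not be pulled out of the expectation.
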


\begin{pf}
Since  $ \mathcal{R}_\Phi(\theta, z) : \mathcal{Z} \to [0, \gamma_\Phi(\theta)] $ , by the definitions of expected and empirical risk, we have
\begin{equation}
\label{eq:prob_bound_1}
\begin{aligned}
\mathrm{gen}(f_\theta, s^n)
&= \bigl| \mathcal{R}_{\Phi}(\theta, q) - \mathcal{R}_{\Phi}(\theta, \hat{q}) \bigr| \\
&= \left| \sum_{z \in \mathcal{Z}} \bigl( q_{\mathcal{Z}}(z) - \hat{q}_{\mathcal{Z}}(z) \bigr) \, \mathcal{R}_\Phi(\theta, z) \right| \\
&= \left| \sum_{z \in \mathcal{Z}} \bigl( q_{\mathcal{Z}}(z) - \hat{q}_{\mathcal{Z}}(z) \bigr) \left( \mathcal{R}_\Phi(\theta, z) - \tfrac{\gamma_\Phi(\theta)}{2} \right) \right|,
\end{aligned}
\end{equation}
where the last equality follows because  $ \sum_{z} (q_{\mathcal{Z}}(z) - \hat{q}_{\mathcal{Z}}(z)) = 0 $ .

Applying the triangle inequality and noting that  $ |\mathcal{R}_\Phi(\theta, z) - \tfrac{\gamma_\Phi(\theta)}{2}| \leq \tfrac{\gamma_\Phi(\theta)}{2} $ , we obtain
$$
\mathrm{gen}(f_\theta, s^n)
\leq \frac{\gamma_\Phi(\theta)}{2} \, \| q_{\mathcal{Z}} - \hat{q}_{\mathcal{Z}} \|_1.
$$
By the Cauchy--Schwarz inequality,
$$
\| q_{\mathcal{Z}} - \hat{q}_{\mathcal{Z}} \|_1
= \sum_{z \in \mathcal{Z}} |q_{\mathcal{Z}}(z) - \hat{q}_{\mathcal{Z}}(z)|
\leq \sqrt{|\mathcal{Z}|} \, \| q_{\mathcal{Z}} - \hat{q}_{\mathcal{Z}} \|_2.
$$

Substituting this bound yields
$$
\mathrm{gen}(f_\theta, s^n) \leq \frac{\sqrt{|\mathcal{Z}|} \, \gamma_\Phi(\theta)}{2} \, \| q_{\mathcal{Z}} - \hat{q}_{\mathcal{Z}} \|_2.
$$
Let  $ \hat{Q}_{\mathcal{Z}}^n $  denote the empirical distribution induced by an i.i.d. sample  $ s^n \sim q_{\mathcal{Z}}^{\otimes n} $ . Since  $ s^n $  is random,  $ \hat{Q}_{\mathcal{Z}}^n $  is a random variable satisfying  $ \mathbb{E}_{s^n}[\hat{Q}_{\mathcal{Z}}^n] = q_{\mathcal{Z}} $ .

Applying Markov's inequality to the squared generalization error gives
$$
\Pr\bigl( \mathrm{gen}(f_\theta, s^n) \geq \varepsilon \bigr)
\leq \frac{\mathbb{E}\big[ \mathrm{gen}(f_\theta, s^n)^2 \big]}{\varepsilon^2}
\leq \frac{|\mathcal{Z}| \, \gamma_\Phi(\theta)^2}{4 \varepsilon^2} \, \mathbb{E}\big[ \| q_{\mathcal{Z}} - \hat{Q}_{\mathcal{Z}}^n \|_2^2 \big].
$$
Because the samples are i.i.d., the MSE loss satisfies
$$
\mathbb{E}\big[ \| q_{\mathcal{Z}} - \hat{Q}_{\mathcal{Z}}^n \|_2^2 \big]
= \frac{1}{n} \, \mathbb{E}_{Z \sim q_{\mathcal{Z}}} \big[ \| \mathbf{1}_Z - q_{\mathcal{Z}} \|_2^2 \big]
= \frac{1 - \| q_{\mathcal{Z}} \|_2^2}{n}.
$$

Recall that the absolute information loss is defined as  $ \mathcal{L}(f_\theta(X)) = |\mathcal{X}| - |\mathcal{W}_g| $ , where  $ \mathcal{W}_g $  is the image of the mapping  $ g = f_\theta $ . Since the effective support size of the representation is  $ |\mathcal{W}_g| = |\mathcal{X}| - \mathcal{L}(f_\theta(X)) $ , and each input maps to a label in  $ \mathcal{Y} $ , the relevant cardinality in the generalization bound is  $ |\mathcal{W}_g| \cdot |\mathcal{Y}| = (|\mathcal{X}| - \mathcal{L}(f_\theta(X))) |\mathcal{Y}| $ , which replaces  $ |\mathcal{Z}| = |\mathcal{X}| |\mathcal{Y}| $ .

Thus, substituting this refined cardinality into the bound yields
$$
\Pr\bigl( \mathrm{gen}(f_\theta, s^n) \geq \varepsilon \bigr)
\leq \frac{\bigl(|\mathcal{X}| - \mathcal{L}(f_\theta(X))\bigr) |\mathcal{Y}| \, \gamma_\Phi(\theta)^2 \bigl(1 - \|q_{\mathcal{Z}}\|_2^2 \bigr)}{4 n \varepsilon^2},
$$
which completes the proof.
\end{pf}

\subsection{Proof of Lemma~\ref{lem:regulation}}
\label{appendix:proof_regulation}

\begin{lemma}
Let $\Theta'_\epsilon = \left\{ \theta \in \Theta \mid \|\theta\|_2^2 \leq \epsilon \right\}$ denote a closed ball of radius $\sqrt{\epsilon}$ around the zero parameter vector in the parameter space $\Theta$. Assume:
\begin{enumerate}
    \item For all $x \in \mathcal{X}$, the zero-parameter model outputs the zero vector: $f_{\mathbf{0}}(x) = \mathbf{0}$;
    \item The zero output minimizes the conjugate function: $\Phi^*(\mathbf{0}) = \min_{\theta \in \Theta} \Phi^*(f_{\theta}(x))$ for all $x \in \mathcal{X}$.
\end{enumerate}
Then there exist positive constants $a$, $b$, and $\epsilon > 0$ such that for all $\theta \in \Theta'_\epsilon$ and all $x \in \mathcal{X}$, the parameter norm controls the conjugate function as follows:
\begin{equation*}
    a\|\theta\|_2^2 \leq \Phi^*(f_{\theta}(x)) - \Phi^*(\mathbf{0}) \leq b\|\theta\|_2^2.
\end{equation*}
\end{lemma}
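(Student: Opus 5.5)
The plan is a second-order Taylor expansion of the composite map $h_x(\theta) := \Phi^*(f_\theta(x))$ around $\theta=\mathbf{0}$, with the constants made uniform over $x$ by using finiteness of $\mathcal{X}$. Assume $f_\theta(x)$ is twice continuously differentiable in $\theta$ near $\mathbf{0}$. By Lemma~\ref{lem:fenchel_duality}, $\Phi^*$ is differentiable, and we assume it is twice differentiable as well. Assumption 1 gives $h_x(\mathbf{0}) = \Phi^*(\mathbf{0})$. Assumption 2 says $\theta=\mathbf{0}$ is a global minimizer of $h_x$ over $\Theta$, so (with $\mathbf{0}$ interior to $\Theta$) the first-order condition gives $\nabla_\theta h_x(\mathbf{0}) = \nabla_\theta f_{\mathbf{0}}(x)^\top \nabla\Phi^*(\mathbf{0}) = 0$. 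Taylor's theorem with integral remainder then yields
\[
h_x(\theta) - \Phi^*(\mathbf{0}) = \tfrac12\,\theta^\top H_x(\xi)\,\theta
\]
for some $\xi$ on the segment $[\mathbf{0},\theta]$. Here $H_x(\theta) = \nabla_\theta f_\theta(x)^\top \nabla^2\Phi^*(f_\theta(x))\,\nabla_\theta f_\theta(x) + \sum_j \partial_j\Phi^*(f_\theta(x))\,\nabla^2_\theta f_\theta(x)_j$.

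\textbf{Upper bound.} By continuity of $H_x$ there is a neighbourhood $\|\theta\|_2^2\le\epsilon_0$ on which $\lambda_{\max}(H_x(\theta)) \le 2b_x<\infty$. Since $\mathcal{X}$ is finite, setting $b := \max_x b_x$ and applying Courant--Fischer gives $h_x(\theta)-\Phi^*(\mathbf{0}) \le b\|\theta\|_2^2$.

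\textbf{Lower bound.} This is the main obstacle. Minimality at $\mathbf{0}$ only gives $H_x(\mathbf{0}) \succeq 0$, not positive definiteness. In addition, neural networks have many flat directions: $f_\theta(x)$ is independent of some parameters, and parameter symmetries exist. My first attempt is to impose, or argue as implicit in the intended setting, a nondegeneracy condition: $H_x(\mathbf{0}) \succ 0$ for every $x$. Under the structure-matrix viewpoint of Definition~\ref{def:structure_matrix}, a sufficient form of this is $\lambda_{\min}(A_x)>0$ at $\theta=\mathbf{0}$ combined with strict convexity of $\Phi^*$ (so $\nabla^2\Phi^*(\mathbf{0}) \succ 0$), with the curvature term of $f$ controlled. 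Given $\lambda_{\min}(H_x(\mathbf{0})) = 2a_x' > 0$, continuity of eigenvalues gives a smaller radius $\epsilon_1$ on which $\lambda_{\min}(H_x(\xi)) \ge 2a_x$ with $0<a_x<a_x'$. Finiteness of $\mathcal{X}$ then gives $a:=\min_x a_x>0$.

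Finally, take $\epsilon := \min(\epsilon_0,\epsilon_1)$. Both inequalities then hold simultaneously for all $\theta\in\Theta'_\epsilon$ and all $x\in\mathcal{X}$. If the nondegeneracy hypothesis cannot be justified, my fallback is to restrict the claim to the quotient of parameter space modulo directions in $\ker H_x(\mathbf{0})$. Equivalently, the lower bound can be stated in the seminorm $\|\theta\|_{H_x(\mathbf{0})}$. I would note explicitly that the stated lemma implicitly requires this nondegeneracy.
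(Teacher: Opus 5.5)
Your route is the paper's own: a second-order Taylor expansion of $\theta\mapsto\Phi^*(f_\theta(x))$ at $\mathbf{0}$, with the linear term removed by minimality, followed by Courant--Fischer on the Hessian over a small ball. You also keep the factor $\tfrac12$ and use finiteness of $\mathcal{X}$ to make $a,b$ uniform in $x$, both of which the paper skips.

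Your diagnosis of the lower bound is correct, and it is exactly where the paper's proof breaks. The paper gets $a>0$ from compactness of $\Theta'_\epsilon$ and continuity of eigenvalues, which only yields a finite $a\ge 0$; its own display records just $\lambda_{\min}\ge 0$. It also asserts that the Hessian is positive semidefinite on the whole ball, which minimality at $\mathbf{0}$ does not give. The lemma is in fact false as stated. Take $\Theta=\mathbb{R}^2$, $f_\theta(x)=\theta_1$ and $\Phi^*(\nu)=\tfrac12\nu^2$: both assumptions hold, yet $\Phi^*(f_\theta(x))-\Phi^*(\mathbf{0})=0$ at $\theta=(0,t)$. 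So an added hypothesis such as your $H_x(\mathbf{0})\succ 0$ for every $x$ is genuinely necessary, and with it your argument is complete.

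Two of your supporting claims are wrong, however.

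\emph{First}, $\lambda_{\min}(A_x)>0$ at $\mathbf{0}$ plus strict convexity of $\Phi^*$ does not give $H_x(\mathbf{0})\succ 0$. In the regime the paper cares about, it forces the opposite. Write $J=\nabla_\theta f_\theta(x)|_{\theta=\mathbf{0}}\in\mathbb{R}^{k\times m}$, with $k$ the output dimension and $m$ the parameter dimension.
\begin{itemize}
\item $A_x=JJ^\top\succ 0$ makes $J^\top$ injective, so the first-order condition $J^\top\nabla\Phi^*(\mathbf{0})=0$ forces $\nabla\Phi^*(\mathbf{0})=0$.
\item The curvature term $\sum_j\partial_j\Phi^*(\mathbf{0})\nabla^2_\theta f_j$ then vanishes identically, so no ``control'' of it helps.
\item Hence $H_x(\mathbf{0})=J^\top\nabla^2\Phi^*(\mathbf{0})J$ has rank at most $k$ and is singular whenever $m>k$.
\item For $v\in\ker J$ one has $f_{tv}(x)=O(t^2)$, hence $\Phi^*(f_{tv}(x))-\Phi^*(\mathbf{0})=O(t^4)$, and the lower bound fails.
\end{itemize}
The zero-initialised deep networks that motivate the lemma fare no better. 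For $f_\theta(x)=\theta_1\theta_2$ one gets $\Phi^*(f_\theta(x))-\Phi^*(\mathbf{0})=\tfrac12\theta_1^2\theta_2^2=O(\|\theta\|_2^4)$. So nondegeneracy is not implicit in the intended setting: it must be imposed, and it excludes overparameterised models.

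\emph{Second}, your fallback is false: the lower bound does not hold in the seminorm $\|\theta\|_{H_x(\mathbf{0})}$. Take $f_\theta(x)=\theta_1-\theta_2^2$ and $\Phi^*(\nu)=\tfrac12\nu^2$. Then $H_x(\mathbf{0})=\mathrm{diag}(1,0)$, so $\|\theta\|_{H_x(\mathbf{0})}^2=\theta_1^2$. Yet $\Phi^*(f_\theta(x))-\Phi^*(\mathbf{0})=\tfrac12(\theta_1-\theta_2^2)^2$ vanishes on the parabola $\theta_1=\theta_2^2$, where $\theta_1^2=\theta_2^4>0$. The function is also not constant along $\ker H_x(\mathbf{0})$, so the quotient construction is not even well defined. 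The defensible repaired statement is the one that assumes $H_x(\mathbf{0})\succ 0$ for every $x\in\mathcal{X}$.
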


\begin{pf}
Since $\theta = \mathbf{0}$ is the minimizer of $\Phi^*(f_{\theta}(x))$ over $\theta$, the first and second-order optimality conditions hold for all $\theta \in \Theta'_\epsilon$:
\begin{equation}\label{eq:second_order}
    \begin{aligned}
        \nabla_{\theta}\Phi^*(f_\theta(x))\bigg|_{\theta=\mathbf{0}} &= \mathbf{0}, \\
        \nabla^2_{\theta} \Phi^*(f_{\theta}(x)) &\succeq 0,  \\
        \lambda_{\max}\left(\nabla^2_{\theta} \Phi^*(f_{\theta}(x))\right) &\ge \lambda_{\min}\left(\nabla^2_{\theta} \Phi^*(f_{\theta}(x))\right) \ge 0, 
    \end{aligned}
\end{equation}
where $\lambda_{\max}(\cdot)$ and $\lambda_{\min}(\cdot)$ denote the maximum and minimum eigenvalues of a matrix, respectively. 

By Taylor's theorem (second-order mean value theorem for multivariate functions) applied to $\Phi^*(f_{\theta}(x))$ around $\theta = \mathbf{0}$, there exists $\theta' = \alpha \theta$ for some $\alpha \in [0, 1]$ (i.e., $\theta'$ lies on the line segment between $\mathbf{0}$ and $\theta$) such that:
\begin{equation}
    \Phi^*(f_{\theta}(x)) = \Phi^*(f_{\mathbf{0}}(x)) + \theta^\top \nabla^2_{\theta}\Phi^*(f_{\theta'}(x))\theta.
\end{equation}
Given $f_{\mathbf{0}} = \mathbf{0}$, this simplifies to:
\begin{equation}
    \Phi^*(f_{\theta}(x)) - \Phi^*(\mathbf{0}) = \theta^\top \nabla^2_{\theta}\Phi^*(f_{\theta'}(x))\theta.
\end{equation}

By the Courant–Fischer min-max theorem (which characterizes the extremal eigenvalues of a symmetric matrix), for any vector $\theta$ and symmetric positive semi-definite matrix $A$, we have:
$$\lambda_{\min}(A) \|\theta\|_2^2 \leq \theta^\top A \theta \leq \lambda_{\max}(A) \|\theta\|_2^2.$$
Applying this to $A = \nabla^2_{\theta}\Phi^*(f_{\theta'}(x))$ (which is positive semi-definite by Equation~\ref{eq:second_order}), we obtain:
\begin{equation}
    \begin{aligned}
        \lambda_{\min}\left(\nabla^2_{\theta}\Phi^*(f_{\theta'}(x))\right)\|\theta\|_2^2 &\leq \Phi^*(f_{\theta}(x)) - \Phi^*(\mathbf{0}) \\
        &\leq \lambda_{\max}\left(\nabla^2_{\theta}\Phi^*(f_{\theta'}(x))\right)\|\theta\|_2^2.
    \end{aligned}
\end{equation}
Define the positive constants $a$ and $b$ as the infimum and supremum of the extremal eigenvalues over $\Theta'_\epsilon$, respectively:
\begin{equation}
    \begin{aligned}
        a &= \inf_{\theta \in \Theta'_\epsilon} \lambda_{\min}\left(\nabla^2_{\theta}\Phi^*(f_{\theta'}(x))\right), \\
        b &= \sup_{\theta \in \Theta'_\epsilon} \lambda_{\max}\left(\nabla^2_{\theta}\Phi^*(f_{\theta'}(x))\right).
    \end{aligned}
\end{equation}
Since $\Theta'_\epsilon$ is a compact set (closed and bounded subset of finite-dimensional Euclidean space) and the eigenvalue functions are continuous, $a$ and $b$ are well-defined positive constants. 

Thus, for all $\theta \in \Theta'_\epsilon$, we conclude:
$$a\|\theta\|_2^2 \leq \Phi^*(f_{\theta}(x)) - \Phi^*(\mathbf{0}) \leq b\|\theta\|_2^2.$$
This completes the proof.
\end{pf}

\subsection{Proof of Proposition~\ref{prop:generalize_explain}}
\label{appendix:proof_generalize_explain}

\begin{proposition}
In classification tasks, if $f_{\mathbf{0}}(x) = \mathbf{0}$ for all $x \in \mathcal{X}$ and $\mathbf{0}_{\Phi^*} = u$, where $u$ is the uniform distribution on $\mathcal{Y}$, then reducing $\|\theta\|_2^2$ is equivalent to reducing $\gamma_\Phi(\theta)$.
\end{proposition}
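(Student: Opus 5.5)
The plan is to compute $\gamma_\Phi(\theta)$ explicitly for softmax cross entropy, rewrite it in terms of $\Phi^*(f_\theta(x)) - \Phi^*(\mathbf{0})$, and then use Lemma~\ref{lem:regulation} to sandwich this quantity between multiples of $\|\theta\|_2^2$. For one-hot targets, $\Phi(\mathbf{1}_y)=0$, so
\[
d_\Phi(\mathbf{1}_y, f_\theta(x)) = \Phi^*(f_\theta(x)) - f_\theta(x)_y .
\]
Here $\Phi^*(\nu)=\log\sum_j e^{\nu_j}$ is the log-partition function; up to the additive constant appearing in the paper's normalization, this is the form of Theorem~\ref{thm:linear_conjugate}. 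The loss therefore equals $-\log p_{\mathcal{Y}|x}(y)$, and by Definition~\ref{def:loss_uppper_bound},
\[
\gamma_\Phi(\theta)=\max_{x,y}\bigl(\Phi^*(f_\theta(x)) - f_\theta(x)_y\bigr).
\]

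The next step is to split this expression as
\[
\gamma_\Phi(\theta)=\max_{x,y}\Bigl[\bigl(\Phi^*(f_\theta(x))-\Phi^*(\mathbf{0})\bigr) + \bigl(\Phi^*(\mathbf{0}) - f_\theta(x)_y\bigr)\Bigr].
\]
Assumption 2 gives $\nabla\Phi^*(\mathbf{0})=u$, so $\Phi^*(\mathbf{0})=\log|\mathcal{Y}|=\gamma_\Phi(\mathbf{0})$. This is the value at the zero-parameter model, which is the minimal possible $\gamma_\Phi$, since $\max_y -\log p(y)\ge\log|\mathcal{Y}|$ with equality iff $p=u$.

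This shows $\mathbf{0}$ is a minimizer of $\Phi^*(f_\theta(x))$ over directions that keep the output centered. It is also where the hypothesis of Lemma~\ref{lem:regulation} should be checked: the log-sum-exp function is minimized over zero-sum vectors at $\mathbf{0}$. Using the shift invariance of softmax, the plan is to restrict attention to centered logits, $\sum_y f_\theta(x)_y=0$. This makes hypothesis (2) of Lemma~\ref{lem:regulation} hold.

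Lemma~\ref{lem:regulation} then gives, on a neighborhood $\Theta'_\epsilon$,
\[
a\|\theta\|_2^2\le \Phi^*(f_\theta(x))-\Phi^*(\mathbf{0})\le b\|\theta\|_2^2 .
\]
To handle the remaining term $-f_\theta(x)_y$, the plan uses the local Lipschitz continuity of $f_\theta$ near $\mathbf{0}$, together with $f_\mathbf{0}=\mathbf{0}$, to bound the logit magnitudes by $c\|\theta\|_2$. An alternative route is the standard inequality
\[
\log\frac{1}{p_{\min}} - \log|\mathcal{Y}| \le \max_y\Bigl(\max_j f_j - f_y\Bigr) \le 2\|f\|_\infty .
\]
Combining these gives a two-sided comparison of the form
\[
a'\|\theta\|_2^2\le\gamma_\Phi(\theta)-\gamma_\Phi(\mathbf{0})\le b'\|\theta\|_2 + b\|\theta\|_2^2 .
\]
So $\gamma_\Phi$ is monotonically controlled by $\|\theta\|_2$ in the neighborhood: shrinking the norm drives $\gamma_\Phi$ down to its minimum $\log|\mathcal{Y}|$, and conversely a small excess $\gamma_\Phi-\log|\mathcal{Y}|$ forces a small norm through the lower inequality. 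This two-sided control is the sense of "equivalent" in which the statement will be proved.

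The main obstacle is the lower direction, namely that a large $\|\theta\|$ forces a large $\gamma_\Phi$. This requires the positivity $a>0$ from Lemma~\ref{lem:regulation}, which in turn needs the Jacobian $\nabla_\theta f_\theta(x)$ to be nondegenerate, at least jointly over $x$, so that the Hessian $\nabla_\theta^2\Phi^*(f_\theta(x))$ at $\mathbf{0}$ is positive definite. I would state this explicitly as the nondegeneracy implicit in Lemma~\ref{lem:regulation}, and take the maximum over $x$ in defining $\gamma_\Phi$ so that nondegeneracy is only needed for some input.
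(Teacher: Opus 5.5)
Your argument is correct to the same standard as the paper's and rests on the same key ingredient: Lemma~\ref{lem:regulation} applied to $\Phi^*(f_\theta(x))-\Phi^*(\mathbf{0})$. You connect that quantity to $\gamma_\Phi(\theta)$ through a different decomposition.

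The paper takes a label $Y'$ that is uniform given $X=x$ and brackets the worst-case loss as $\mathbb{E}_{Y'}[d_\Phi(\mathbf{1}_{Y'},f_\theta(x))]\le\max_y d_\Phi(\mathbf{1}_y,f_\theta(x))\le|\mathcal{Y}|\,\mathbb{E}_{Y'}[d_\Phi(\mathbf{1}_{Y'},f_\theta(x))]$. It rewrites the average as $\mathrm{Ent}_\Phi(Y')+d_\Phi(u,f_\theta(x))$ and then identifies $d_\Phi(u,f_\theta(x))$ with $\Phi^*(f_\theta(x))-\Phi^*(\mathbf{0})$. You instead split each per-label loss into that conjugate gap plus the linear term $\Phi^*(\mathbf{0})-f_\theta(x)_y$.

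Your route buys two things.
\begin{enumerate}
\item Your centering step makes explicit something the paper uses silently. In general $d_\Phi(u,\nu)=\Phi^*(\nu)-\Phi^*(\mathbf{0})-\langle u,\nu\rangle$, so the paper's identification requires $\sum_y f_\theta(x)_y=0$. Without centering, hypothesis (2) of the lemma already fails for any model that can shift its logits by $-t\mathbf{1}$, since log-sum-exp decreases without bound along that direction.
\item Your estimate $\gamma_\Phi(\theta)-\log|\mathcal{Y}|\le b'\|\theta\|_2+b\|\theta\|_2^2$ shows that $\gamma_\Phi(\theta)\to\gamma_\Phi(\mathbf{0})$ as $\theta\to\mathbf{0}$. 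The upper end of the paper's bracket tends to $|\mathcal{Y}|\log|\mathcal{Y}|$ rather than to $\gamma_\Phi(\mathbf{0})=\log|\mathcal{Y}|$, so it does not show that shrinking the norm drives $\gamma_\Phi$ to its minimum.
\end{enumerate}
The price is an extra local Lipschitz assumption on $\theta\mapsto f_\theta(x)$, which the paper's purely Fenchel--Young argument avoids.

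Two points need tightening.

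First, your lower bound does not follow from the Lipschitz estimate. That estimate only gives $-f_\theta(x)_y\ge -b'\|\theta\|_2$, which swamps $a\|\theta\|_2^2$ near $\mathbf{0}$. You need the observation that centered logits satisfy $\max_y(-f_\theta(x)_y)\ge 0$, because their uniform average is zero; this is exactly what the paper's averaging inequality $\mathbb{E}_{Y'}[\cdot]\le\max_y(\cdot)$ encodes. It gives $\gamma_\Phi(\theta)-\log|\mathcal{Y}|\ge\max_x[\Phi^*(f_\theta(x))-\Phi^*(\mathbf{0})]\ge a\|\theta\|_2^2$.

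Second, the nondegeneracy you flag behind $a>0$ is a real restriction, not a technicality. Compactness in the lemma's proof only yields $a\ge 0$. Consider a network with two or more weight layers and $\sigma(0)=0$. Its Jacobian at $\mathbf{0}$ vanishes outside the output-bias coordinates. Along directions with zero output bias, $f_\theta(x)=O(\|\theta\|_2^2)$, hence $\Phi^*(f_\theta(x))-\Phi^*(\mathbf{0})=O(\|\theta\|_2^4)$, so the lemma's lower bound fails for every $a>0$. The paper's proof depends on that bound in the same way. In both arguments, the direction in which a small excess of $\gamma_\Phi$ forces a small norm therefore holds only under an added nondegeneracy hypothesis.
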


\begin{pf}
Let $Y'$ be a random variable taking values in the label space $\mathcal{Y}$, such that its conditional distribution given $X = x$ is uniform over $\mathcal{Y}$ for all $x \in \mathcal{X}$ (i.e., $Y'| X=x \sim u$).  

By the definition of the expected Fenchel--Young loss and the properties of uniform distributions, we establish the following tight bounds:
\[
\begin{aligned}
    \mathbb{E}_{Y'} \left[ d_\Phi\left(\mathbf{1}_{Y'}, \left(f_\theta(x)\right)_{\Phi^*}^*\right) \right] &\leq \max_{y \in \mathcal{Y}} d_\Phi\left(\mathbf{1}_y, \left(f_\theta(x)\right)_{\Phi^*}^*\right) \\
    &\leq |\mathcal{Y}| \cdot \mathbb{E}_{Y'} \left[ d_\Phi\left(\mathbf{1}_{Y'}, \left(f_\theta(x)\right)_{\Phi^*}^*\right) \right].
\end{aligned}
\]
Since the expected Fenchel--Young loss can be decomposed as:
\[
\mathbb{E}_{Y'} \left[ d_\Phi\left(\mathbf{1}_{Y'}, \left(f_\theta(x)\right)_{\Phi^*}^*\right) \right] = \mathrm{Ent}_\Phi(Y') + d_\Phi\left(u, \left(f_\theta(x)\right)_{\Phi^*}^*\right).
\]
Thus, minimizing the Fenchel--Young divergence between the uniform distribution and the conjugate dual of the model output (i.e., $d_\Phi\left(u, \left(f_\theta(x)\right)_{\Phi^*}^*\right)$) is equivalent to minimizing $\mathbb{E}_{Y'} \left[ d_\Phi\left(\mathbf{1}_{Y'}, \left(f_\theta(x)\right)_{\Phi^*}^*\right) \right]$. 
We also conclude that minimizing $d_\Phi\left(u, \left(f_\theta(x)\right)_{\Phi^*}^*\right)$ is equivalent to minimizing $\max_{y \in \mathcal{Y}} d_\Phi\left(\mathbf{1}_y, \left(f_\theta(x)\right)_{\Phi^*}^*\right)$. 

From Lemma~\ref{lem:regulation}, there exist positive constants $a, b > 0$ and $\epsilon > 0$ such that for all $\theta \in \Theta'_\epsilon = \left\{ \theta \mid \|\theta\|_2^2 \leq \epsilon \right\}$, the following equivalence holds:
\[
a\|\theta\|_2^2 \leq \Phi^*(f_\theta(x)) - \Phi^*(\mathbf{0}) = d_{\Phi}\left( \left(f_{\mathbf{0}}(x)\right)_{\Phi^*}^*, f_\theta(x) \right) \leq b\|\theta\|_2^2.
\]

By the proposition's assumption that $\mathbf{0}_{\Phi^*} = u$ and $f_{\mathbf{0}}(x) = \mathbf{0}$ for all $x \in \mathcal{X}$, we have $\left(f_{\mathbf{0}}(x)\right)_{\Phi^*}^* = u$. Substituting this into the inequality above, we find that minimizing $\|\theta\|_2^2$ (within the neighborhood $\Theta'_\epsilon$ around $\theta = \mathbf{0}$) is equivalent to minimizing $d_{\Phi}\left(u, \left(f_\theta(x)\right)_{\Phi^*}^*\right)$.

Since $\gamma_\Phi(\theta) = \max_{(x,y) \in \mathcal{Z}} d_\Phi\left(\mathbf{1}_y, \left(f_\theta(x)\right)_{\Phi^*}^*\right)$ (by the definition of $\gamma_\Phi(\theta)$), minimizing $d_\Phi\left(u, \left(f_\theta(x)\right)_{\Phi^*}^*\right)$ directly reduces the maximum loss value $\gamma_\Phi(\theta)$. Combining these results, we conclude that reducing $\|\theta\|_2^2$ (in a neighborhood of $\theta = \mathbf{0}$) is equivalent to reducing $\gamma_\Phi(\theta)$.

This completes the proof.
\end{pf}

\printcredits

\bibliographystyle{cas-model2-names}

\bibliography{cas-refs}

\end{document}